\documentclass[english]{article}

\usepackage{geometry}
\geometry{verbose,tmargin=1in,bmargin=1in,lmargin=1in,rmargin=1in}
\usepackage[T1]{fontenc}
\usepackage[latin9]{inputenc}
\usepackage{bm}
\usepackage{amsmath,mathtools}
\usepackage{amssymb}
\usepackage[unicode=true,
 bookmarks=false,
 breaklinks=false,pdfborder={0 0 1},colorlinks=false]
 {hyperref}
\hypersetup{
 colorlinks,citecolor=blue,filecolor=blue,linkcolor=blue,urlcolor=blue}

\makeatletter
\usepackage{amsthm}
\usepackage{cite}  
\usepackage{comment}
\usepackage{natbib}
\usepackage{booktabs}
\usepackage{graphicx}

\usepackage{float}
\usepackage{multirow}
\usepackage{dsfont}
\usepackage{tcolorbox}

\usepackage{color}
\definecolor{yxc}{RGB}{255,0,0}
\definecolor{yjc}{RGB}{232, 9, 147}
\definecolor{ytw}{RGB}{255,69,0}
\definecolor{gen}{RGB}{0,0,200}

\allowdisplaybreaks


\newcommand{\defn}{\coloneqq}

\newcommand{\mymid}{\,|\,}

\newcommand{\soft}[1]{{#1}}

\newcommand{\cX}{\mathcal{X}}
\newcommand{\cS}{\mathcal{S}}
\newcommand{\cA}{\mathcal{A}}

\newcommand{\cprime}{c_{\mathsf{ref}}}

\newcommand{\taun}{{\tau}}

\newcommand{\cp}{c_{\mathrm{p}}}
\newcommand{\ch}{c_{\mathrm{h}}}
\newcommand{\cm}{c_{\mathrm{m}}}

\newcommand{\cbone}{c_{\mathrm{b},1}}
\newcommand{\cbtwo}{c_{\mathrm{b},2}}

\newcommand{\sprimary}{\mathcal{S}_{\mathsf{primary}}}
\newcommand{\sadj}{\mathcal{S}_{\mathsf{adj}}}

\newcommand{\Qpi}{Q^{\pi}}

\newcommand{\czero}{c_{0}}
\newcommand{\tlow}{t_{\overline{s-1}}(\tau_s) }
\newcommand{\tprime}{t_{\mathsf{ref}}}
\newcommand{\that}{t_{\mathsf{tran}}}
\newcommand{\ttilde}{\widetilde{t}}




\title{Softmax Policy Gradient Methods \\ Can Take Exponential Time to Converge\footnotetext{This work was presented in part at the Conference on Learning Theory (COLT) 2021.}}

\author{Gen Li\thanks{Department of Statistics and Data Science, The Wharton School, University of Pennsylvania, Philadelphia, PA 19104, USA.} \\
UPenn    \\
	\and
	Yuting Wei\footnotemark[1]\\ UPenn
	\and
	Yuejie Chi\thanks{Department of Electrical and Computer Engineering, Carnegie Mellon University, Pittsburgh, PA 15213, USA.}\\
	CMU\\
	\and
	Yuxin Chen\footnotemark[1] \\
 UPenn  \\
	}

\date{February 2021;~~Revised: December 2022}

\makeatother

\begin{document}

\theoremstyle{plain} \newtheorem{lemma}{\textbf{Lemma}}\newtheorem{proposition}{\textbf{Proposition}}\newtheorem{theorem}{\textbf{Theorem}}

\theoremstyle{remark}\newtheorem{remark}{\textbf{Remark}}

\maketitle

\begin{abstract}
	The softmax policy gradient (PG) method, which performs gradient ascent under softmax policy parameterization, is arguably one of the de facto implementations of policy optimization in modern reinforcement learning. For $\gamma$-discounted infinite-horizon tabular Markov decision processes (MDPs), remarkable progress has recently been achieved towards establishing global convergence of softmax PG methods in finding a near-optimal policy. However, prior results fall short of delineating clear dependencies of convergence rates on salient parameters such as the cardinality of the state space  $\mathcal{S}$ and the effective horizon $\frac{1}{1-\gamma}$, both of which could be excessively large. In this paper, we deliver a pessimistic message regarding the iteration complexity of softmax PG methods, despite assuming access to exact gradient computation. Specifically, we demonstrate that the softmax PG method with stepsize  $\eta$ can take 
\[
	\frac{1}{\eta} |\mathcal{S}|^{2^{\Omega\big(\frac{1}{1-\gamma}\big)}} ~\text{iterations}
\]
to converge, even in the presence of a benign policy initialization and an initial state distribution amenable to exploration (so that the distribution mismatch coefficient is not exceedingly large). This is accomplished by characterizing the algorithmic dynamics over a carefully-constructed MDP containing only three actions. Our exponential lower bound hints at the necessity of carefully adjusting update rules or enforcing proper regularization in accelerating PG methods. 

%
\end{abstract}

\noindent \textbf{Keywords:} policy gradient methods, exponential lower bounds, softmax parameterization, discounted infinite-horizon MDPs

\setcounter{tocdepth}{2}
\tableofcontents

\section{Introduction}

Despite their remarkable empirical popularity in modern reinforcement learning \citep{mnih2015human,silver2016mastering},  theoretical underpinnings of policy gradient (PG) methods and their variants \citep{williams1992simple,sutton2000policy,kakade2002natural,peters2008natural,konda2000actor} remain severely obscured. Due to the nonconcave nature of value function maximization induced by complicated dynamics of the environments, it is in general highly challenging to pinpoint the computational efficacy of PG methods in finding a near-optimal policy. 
Motivated by their practical importance, a recent strand of work sought to make progress towards demystifying the effectiveness of policy gradient type methods (e.g., \citet{agarwal2019optimality,mei2020global,bhandari2019global,shani2019adaptive,bhandari2020optimization,cen2020fast,fazel2018global,zhang2020sample,zhang2020global,bhandari2020note,yang2020finding,lan2021policy,zhan2021policy,mei2021leveraging,zhang2021convergence,liu2020improved,khodadadian2021finite}), focusing primarily on canonical settings such as tabular Markov decision processes (MDPs) for discrete-state problems and linear quadratic regulators for continuous-state problems.

The current paper studies PG methods with softmax parameterization --- commonly referred to as {\em softmax policy gradient} methods --- which are among the de facto implementations of PG methods in practice. An intriguing theoretical result was recently obtained by the work \citet{agarwal2019optimality}, which established asymptotic global convergence of softmax PG methods for infinite-horizon $\gamma$-discounted tabular MDPs. Subsequently, \citet{mei2020global} strengthened the theory by demonstrating that softmax PG methods are capable of finding an $\varepsilon$-optimal policy with an iteration complexity proportional to $1/\varepsilon$ (see Table~\ref{tab:comparisons} for the precise form). While these results take an important step towards understanding the effectiveness of softmax PG methods, caution needs to be exercised before declaring fast convergence of the algorithms. 
In particular, the iteration complexity derived by \citet{mei2020global} falls short of delineating clear dependencies on important salient parameters of the MDP, such as the dimension of the state space $\cS$ and the effective horizon $1/(1-\gamma)$. These parameters are, more often than not, enormous in contemporary RL applications, and might play a pivotal role in determining the scalability of softmax PG methods.

Additionally, it is worth noting that existing literature largely concentrated on developing algorithm-dependent upper bounds on the iteration complexities. Nevertheless, we recommend caution when directly comparing computational upper bounds for distinct algorithms: the superiority of the computational upper bound for one algorithm does not necessarily imply this algorithm outperforms others, unless we can certify the tightness of all upper bounds being compared. As a more concrete example, it is of practical interest to benchmark softmax PG methods against natural policy gradient (NPG) methods with softmax parameterization, the latter of which is a variant of policy optimization lying underneath several mainstream RL algorithms such as {\em proximal policy optimization} (PPO) \citep{schulman2017proximal} and  {\em trust region policy optimization} (TRPO) \citep{schulman2015trust}. While it is tempting to claim superiority of NPG methods over softmax PG methods --- given the appealing convergence properties of NPG methods \citep{agarwal2019optimality} (see Table~\ref{tab:comparisons}) --- existing theory fell short to reach such a conclusion, due to the absence of convergence lower bounds for softmax PG methods in prior literature.

The above considerations thus lead to a natural question that we aim to address in the present paper: 
\begin{center}
{\em Can we develop a lower bound on the iteration complexity of softmax PG methods that reflects \\ explicit dependency on salient parameters of the MDP of interest?}
\end{center}


\subsection{Main result}

As an attempt to address the question posed above, our investigation delivers a somewhat surprising message that can be described in words as follows:
\begin{center}
	{\em Softmax PG methods can take (super-)exponential time to converge, even in the presence of a benign initialization and an initial state distribution amenable to exploration.   }
\end{center}
Our finding, which is concerned with a discounted infinite-horizon tabular MDP, is formally stated in the following theorem. Here and throughout, $|\cS|$ denotes the size of the state space $\cS$, $0<\gamma<1$ stands for the discount factor, $V^{\star}$ indicates the optimal value function, $\eta>0$ is the learning rate or stepsize, whereas  $V^{(t)}$ represents the value function estimate of softmax PG methods in the $t$-th iteration. All immediate rewards are assumed to fall within $[-1,1]$. See Section~\ref{sec:backgrounds} for formal descriptions. 
%
\begin{theorem} 
\label{thm:unregularized-main}
Assume that the softmax PG method adopts a uniform initial state distribution, a uniform policy initialization, and has access to exact gradient computation. Suppose that $0<\eta < (1-\gamma)^2/5$, then 
there exist universal constants  $c_1, c_2, c_3>0$ such that: for any $0.96< \gamma < 1$ and $ |\cS| \ge c_3 (1-\gamma)^{-6}$, one can find a $\gamma$-discounted MDP with state space $\cS$ that takes the softmax PG method at least
\begin{align} 
	\label{eq:t-bound-main}
	\frac{c_1}{\eta} |\cS| ^{ 2^{ \frac{c_2}{1-\gamma}}} ~\text{iterations}
\end{align}
to reach 
\begin{align} \label{eq:mean-error}
	\frac{1}{|\mathcal{S}|}\sum_{s \in  \mathcal{S}} \big[ V^{\star}(s)-V^{(t)}(s) \big]\leq 0.07.
\end{align} 
\end{theorem}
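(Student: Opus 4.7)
My plan is to establish the lower bound through an explicit construction of a hard MDP coupled with a careful dynamical analysis of the softmax PG iterates on that instance. The MDP would have $\Theta(|\cS|)$ primary states arranged along a chain, with three actions per state as suggested by the abstract: a ``stay/reward-denying'' action, a ``trap'' action that is locally attractive but globally poor, and the globally optimal ``advance'' action that only pays off after propagating value through many downstream states. Rewards and transitions are designed so that under the uniform policy initialization, the advantage function assigns a positive advantage to the trap action at every chain state, causing the softmax update to steadily drain probability mass away from the optimal action. Crucially, by tuning the reward gaps relative to $\gamma$, I would ensure a strict hierarchy: state $s_k$ cannot begin recovering toward its optimal action until the states further down the chain have themselves become nearly optimal, since $A^{(t)}(s_k,\cdot)$ is determined by downstream values.

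The core technical step is to control the parameter evolution through the identity
\[
	\frac{\partial V^{(t)}(\rho)}{\partial \theta_{s,a}} = \frac{1}{1-\gamma}\, d^{(t)}_{\rho}(s)\,\pi^{(t)}(a\mymid s)\, A^{(t)}(s,a),
\]
which exposes a self-reinforcing trap: once $\pi^{(t)}(a^{\star}\mymid s)$ drops below a small threshold $\delta$, the $\pi^{(t)}(a^{\star}\mymid s)$ prefactor makes the update essentially inert, so escape takes super-polynomially many iterations. I would formalize this as an invariant of the form: during a long window of iterations, $\pi^{(t)}(a^{\star}\mymid s_k)$ stays below some $\delta_k$ while $\pi^{(t)}(a^{\star}\mymid s_j) \lesssim \delta_k^{\Theta(1)}$ for $j>k$, yielding a recursion $T_{k+1}\gtrsim T_k \cdot (1/\delta_k)^{\Theta(1)}$ where $T_k$ is the minimum number of iterations needed for the first $k$ states to recover. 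Iterating this doubling-type recursion across the $\Theta(1/(1-\gamma))$ ``effective'' levels of the chain compounds to the $2^{\Theta(1/(1-\gamma))}$ factor in the exponent, with the base $T_1\gtrsim 1/\eta$ producing the $1/\eta$ prefactor in \eqref{eq:t-bound-main}. The hypothesis $|\cS|\gtrsim (1-\gamma)^{-6}$ would enter to guarantee the chain is long enough to support that many doubling stages without the discount factor washing out the reward signal prematurely.

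The main obstacle I anticipate is the delicate three-way coupling between the discounted visitation distribution $d^{(t)}_{\rho}$, the advantage $A^{(t)}$, and the parameter trajectory: all three evolve in lockstep and must be controlled simultaneously across the entire chain and across the full iteration window. A single-state analysis is inadequate because the average suboptimality criterion \eqref{eq:mean-error} could in principle be satisfied by fortuitous improvement at a small subset of states; ruling this out requires showing that an $\Omega(1)$ fraction of the states remain $\Omega(1)$-suboptimal throughout the stated window. In addition, since the statement insists on a uniform initial state distribution (a regime with a benign distribution mismatch coefficient), one must engineer the transition kernel so that $d^{(t)}_{\rho}(s)$ stays roughly balanced across $s$ and hence cannot accidentally accelerate convergence at the trapped coordinates. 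Carefully implementing these invariants while preserving the chain hierarchy is where the bulk of the technical work will lie.
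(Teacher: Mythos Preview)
Your proposal captures the right qualitative picture --- a chain along which the optimal action's probability is driven down by a locally-attractive competitor, with recovery time compounding from state to state --- but it has a genuine gap in how $|\cS|$ enters the bound. You write that the base case is $T_1\gtrsim 1/\eta$ and that iterating across $\Theta(1/(1-\gamma))$ levels produces the $2^{\Theta(1/(1-\gamma))}$ exponent. Taken at face value this yields at best $(1/\eta)\cdot C^{2^{\Theta(1/(1-\gamma))}}$ for some absolute constant $C$, which is \emph{independent of} $|\cS|$ and therefore falls short of the claimed $\frac{1}{\eta}|\cS|^{2^{\Omega(1/(1-\gamma))}}$. The assumption $|\cS|\gtrsim(1-\gamma)^{-6}$ is not merely there to make the chain ``long enough'': the chain of primary states in the paper's construction has only $H=\Theta(1/(1-\gamma))$ nodes, not $\Theta(|\cS|)$.

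The mechanism you are missing is that the paper \emph{spends the bulk of the state budget} on replicated ``buffer'' states $\cS_1,\cS_2$ (each of size $\Theta((1-\gamma)|\cS|)$) sitting at the start of the chain. Under the uniform initial distribution this forces $d_\mu^{(t)}(s)\asymp 1/|\cS|$ for each individual buffer state, so the gradient there is $\Theta(1/|\cS|)$ and the first crossing times obey $t_1,t_2\asymp |\cS|/\eta$ rather than $1/\eta$. This is exactly the opposite of your stated goal of keeping $d_\mu^{(t)}$ ``roughly balanced across $s$''; the construction deliberately \emph{suppresses} visitation at the buffer states to inject $|\cS|$ into the base of the recursion. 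The recursion itself is then of power type, $t_s\gtrsim \big(\eta(1-\gamma)^2\big)^{1/2}\big(t_{s-2}\big)^{3/2}$, so after $H\asymp 1/(1-\gamma)$ steps one obtains $t_H\gtrsim |\cS|^{(3/2)^{\Theta(H)}}/\eta = |\cS|^{2^{\Omega(1/(1-\gamma))}}/\eta$. Your multiplicative recursion $T_{k+1}\gtrsim T_k\cdot(1/\delta_k)^{\Theta(1)}$ only becomes this once you identify $\delta_k\asymp T_{k-2}^{-\alpha}$ for some $\alpha>0$, which requires a careful argument (the paper's Lemmas~\ref{lem:induc-theta-t-s-2}--\ref{lem:grand-final-victory}) tracking how far $\pi^{(t)}(a_1\mid s)$ is driven down while waiting for state $s-2$ to recover. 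Finally, for the \emph{averaged} criterion \eqref{eq:mean-error} you need a further device: the $H$ primary states are a vanishing fraction of $\cS$, so the paper augments each primary/adjoint state with $\Theta((1-\gamma)|\cS|)$ ``booster'' states (now given two actions) whose values inherit the suboptimality of the primary state they feed into, ensuring an $\Omega(1)$ fraction of all states remain $\Omega(1)$-suboptimal.
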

%
\begin{remark}[Action space]
	The MDP we construct contains at most three actions for each state.
\end{remark}
\begin{remark}[Stepsize range] 
	Our lower bound  operates under the assumption that $\eta < (1-\gamma)^2/5$. 
	In comparison, prior convergence guarantees for PG-type methods with softmax parameterization (e.g., \citet[Theorem~5.1]{agarwal2019optimality} and \citet[Theorem~6]{mei2020global}) required $\eta<(1-\gamma)^3/8$, 
	a range of stepsizes fully covered by our theorem.  In fact, prior works could only guarantee monotonicity of softmax PG methods (in terms of the value function) within the range $\eta<(1-\gamma)^2/5$ (see \citet[Lemma C.2]{agarwal2019optimality}). 
\end{remark}
\begin{remark}
	While we can also provide explicit numbers for the constants $c_1, c_2, c_3>0$, 
	these numbers are not informative, and hence we omit explicit numbers here to streamline the proof a bit. 
\end{remark}

For simplicity of presentation, Theorem~\ref{thm:unregularized} is stated for the long-effective-horizon regime where $\gamma > 0.96$; it continues to hold when $\gamma > c_0$ for some smaller constant $c_0>0$. Our result is obtained by exhibiting a hard MDP instance --- which is a properly augmented chain-like MDP --- for which softmax PG methods converge extremely slowly even when perfect model specification is available. Several remarks and implications of our result are in order. 
%

%


\newcommand{\topsepremove}{\aboverulesep = 0mm \belowrulesep = 0mm} \topsepremove

\begin{table}[t]

\begin{center}
	{
\begin{tabular}{c|c|c}
\toprule \hline
algorithm   &   iteration complexity &  reference    \tabularnewline 
\hline 
\multirow{2}{*}{softmax PG upper bound} 	 &   \multirow{2}{*}{asymptotic}  & \multirow{2}{*}{\citet[Thm.~5.1]{agarwal2019optimality}}  \tabularnewline
	 &  &      \tabularnewline
\hline 
	\multirow{2}{*}{softmax PG upper bound} 	 &   \multirow{2}{*}{$O \Big( {\color{red} \mathcal{C}^2_{\mathsf{spg}}(\mathcal{M})} \Big\|\frac{d_{\mu}^{\pi^{\star}}}{\mu} \Big\|_{\infty}^2\Big\|\frac{ 1}{\mu} \Big\|_{\infty} \frac{|\cS|}{  (1-\gamma)^6 \varepsilon} \Big) $}  & \multirow{2}{*}{\citet[Thm.~4]{mei2020global}}  \tabularnewline
	 &  &      \tabularnewline
\hline 
\multirow{2}{*}{softmax NPG upper bound}    &  \multirow{2}{*}{$O\big(\frac{1}{(1-\gamma)^{2}\varepsilon}  \big)$  } & \multirow{2}{*}{\citet[Thm.~5.3]{agarwal2019optimality}} \tabularnewline
	 &  &        \tabularnewline
\hline 
	\multirow{2}{*}{softmax PG lower bound}   & \multirow{2}{*}{$ \frac{(1-\gamma)^5 \Delta_{\star}^2}{ 12 \varepsilon}   $}   & \multirow{2}{*}{\citet[Thm.~10]{mei2020global}} \tabularnewline
	&  &          \tabularnewline
\hline
	\multirow{2}{*}{softmax PG lower bound} & \multirow{2}{*}{$ |\cS|^{2^{\Omega(\frac{1}{1-\gamma})} } $}   & \multirow{2}{*}{\textbf{this work}} \tabularnewline
	&  &         \tabularnewline
\toprule \hline
\end{tabular}
	}
\end{center}
	\caption{Upper and lower bounds on the iteration complexities of PG and NPG methods with softmax parameterization in finding an $\varepsilon$-optimal policy obeying $\|V^{\star}-V^{(t)}\|_{\infty}\leq\varepsilon \leq 0.15$.  
	We assume exact gradient evaluation, and hide the dependencies that are logarithmic on problem parameters. Here, $\mu$ denotes the initial state distribution, $\big\|{d_{\mu}^{\pi^{\star}}}/{\mu} \big\|_{\infty}$ is the distribution mismatch coefficient, $a^{\star}(s)$ is the optimal action at state $s$ according to $\pi^{\star}$, $\mathcal{C}_{\mathsf{spg}}(\mathcal{M})\coloneqq  \big( \inf_{s\in \cS }\inf_{t\geq 1} \pi^{(t)}(a^{\star}(s) \mymid s) \big)^{-1}$ is a quantity depending on both the PG trajectory and salient MDP parameters, whereas $\Delta_{\star} : = \min_{s\in \cS, a\neq a^{\star}(s) } \big\{ Q^{\star}(s, a^{\star}(s)) - Q^{\star}(s,a) \big\} $ is the optimality gap w.r.t.~the optimal Q-function $Q^{\star}$.  \label{tab:comparisons}  }

\end{table}

\paragraph{Comparisons with prior results.} Table~\ref{tab:comparisons} provides an extensive comparison of the iteration complexities --- including both upper and lower bounds --- of PG and NPG methods under softmax parameterization. 
As suggested by our result,  the iteration complexity $O(\mathcal{C}^2_{\mathsf{spg}}(\mathcal{M}) \frac{1}{\varepsilon})$ derived in \citet{mei2020global} (see Table~\ref{tab:comparisons}) might not be as rosy as it seems for problems with large state space and long effective horizon; in fact, the crucial quantity $\mathcal{C}_{\mathsf{spg}}(\mathcal{M})$ therein   could scale in a prohibitive manner with both $|\cS|$ and $\frac{1}{1-\gamma}$. 
\citet{mei2020global} also developed a lower bound on the iteration complexity of softmax PG methods, which falls short of capturing the influence of the state space dimension and might become smaller than $1$ unless $\varepsilon$ is very small (e.g., $\varepsilon\lesssim (1-\gamma)^3$) for problems with long effective horizons. 
In addition, \citet{mei2020escaping} provided some interesting evidence that a poorly-initialized softmax PG algorithm can get stuck at suboptimal policies for a single-state MDP (i.e., the bandit problem). This result, however, fell short of providing a complete runtime analysis and did not look into the influence of a large state space. By contrast, our theory reveals that softmax PG methods can take exponential time to reach even a moderate accuracy level.

\paragraph{Slow convergence even with benign distribution mismatch.}
Existing computational complexities for policy gradient type methods (e.g., \citet{agarwal2019optimality,mei2020global}) typically scale polynomially 
	in the so-called {\em distribution mismatch coefficient}\footnote{Here and throughout, the division of two vectors represents
	componentwise division.} $\big\|\frac{d^{\pi}_{\rho}}{\mu} \big\|_{\infty} $, where $d^{\pi}_{\rho}$ stands for a certain discounted state
	visitation distribution (see \eqref{eq:defn-d-rho} in Section~\ref{sec:backgrounds}), and $\mu$ denotes the distribution over initial states. 
It is thus natural to wonder whether the exponential lower bound in Theorem~\ref{thm:unregularized-main} is a consequence of an exceedingly large distribution mismatch coefficient. 
This, however, is not the case; in fact,  our theory chooses $\mu$ to be a benign uniform distribution so that  $\|\frac{d^{\pi}_{\rho}}{\mu}\|_{\infty} \le \|\frac{1}{\mu}\|_{\infty} \le |\mathcal{S}|$, which scales at most linearly in $|\mathcal{S}|$. 

\paragraph{Benchmarking with softmax NPG methods.} Our algorithm-specific lower bound suggests that softmax PG methods --- in their vanilla form --- might take  a prohibitively long time to converge when the state space and effective horizon are large. This is in stark contrast to the convergence rate of NPG type methods, 
whose iteration complexity is dimension-free and scales only polynomially with the effective horizon \citep{agarwal2019optimality,cen2020fast}. Consequently, our results shed light on the practical superiority of NPG-based algorithms such as PPO \citep{schulman2017proximal} and TRPO \citep{schulman2015trust}.

\paragraph{Crux of our design.} As we shall elucidate momentarily in Section~\ref{sec:mdp-construction}, our exponential lower bound is obtained  through analyzing the trajectory of softmax PG methods on a carefully-designed MDP instance with no more than 3 actions per state, when a uniform initialization scheme and a uniform initial state distribution are adopted. Our construction underscores the critical challenge of {\em credit assignments} \citep{sutton1984temporal} in RL compounded by the presence of delayed rewards, long horizon, and intertwined interactions across states. 
While it is difficult to elucidate the source of exponential lower bound without presenting our MDP construction, 
we take a moment to point out some critical properties that underlie our designs. To be specific, we seek to design a chain-like MDP containing $H=O\big( \frac{1}{1-\gamma} \big)$ key primary states $\{1,\cdots,H\}$ (each coupled with many auxiliary states), for which the softmax PG method satisfies the following properties. 
\begin{itemize}
	\item For the two key primary states, we have
		\begin{align}
			\min\big\{  \mathsf{convergence}\text{-}\mathsf{time}\text{(\,state 1)\,}, \, 
			\mathsf{convergence}\text{-}\mathsf{time}\text{(\,state 2)\,} \big\} \geq \frac{|\cS|}{\eta} .
			\label{eq:intuition-jump-start}
		\end{align}

	\item {\em (A blowing-up phenomenon)} For each key primary state $3\leq s\leq H=O\big(\frac{1}{1-\gamma}\big)$, one has
		\begin{align}
			\mathsf{convergence}\text{-}\mathsf{time}\text{(\,state }s\,) 
			\gtrsim 
			\big( \mathsf{convergence}\text{-}\mathsf{time}\text{(\,state }s-2\,) \big)^{1.5}, \qquad 3\leq s\leq H. 
			\label{eq:intuition-blowing-up}
		\end{align}
\end{itemize}
Here, it is understood that $\mathsf{convergence}\text{-}\mathsf{time}\text{(\,state }s\,)$ represents informally the time taken for the value function of state $s$ to be sufficiently close to its optimal value.  The blowing-up phenomenon described above is precisely the source of our (super)-exponential lower bound.

\subsection{Other related works}


\paragraph{Non-asymptotic analysis of (natural) policy gradient methods.} Moving beyond tabular MDPs, finite-time convergence guarantees of PG\,/\,NPG methods and their variants have recently been studied for control problems (e.g., \citet{fazel2018global,jansch2020convergence,tu2019gap,zhang2019policy}), regularized MDPs (e.g., \citet{lan2021policy,cen2020fast,zhan2021policy}),  constrained MDPs (e.g., \citet{xu2020primal,ding2020natural}), robust MDPs (e.g., \citet{zhang2021robust,li2022first}), MDPs with function approximation (e.g., \citet{agarwal2019optimality,cai2019provably,wang2019neural,liu2019neural,agazzi2020global,lan2022policy}), Markov games (e.g., \citet{daskalakis2020independent,wei2021last,zhao2021provably,cen2021fast,xie2020provable}), and their use in actor-critic methods (e.g., \citet{wu2020finite,xu2020non,alacaoglu2022natural,cen2022faster}).

\paragraph{Other policy parameterizations.} In addition to softmax parameterization, several other policy parameterization schemes have also been investigated in the context of policy optimization and reinforcement learning at large. For example, \citet{agarwal2019optimality,zhang2020variational,lan2021policy, zhan2021policy} studied the convergence of projected PG methods and policy mirror descent with direct parameterization, \citet{asadi2017alternative} introduced the so-called mallow parameterization, while \citet{mei2020escaping} studied the escort parameterization.  Part of these parameterizations were proposed in response to the ineffectiveness of softmax parameterization observed in practice.


 
\paragraph{Lower bounds.} Establishing information-theoretic or algorithmic-specific lower bounds on the statistical and computational complexities of RL algorithms --- often achieved by constructing hard MDP instances --- plays an instrumental role in understanding the bottlenecks of RL algorithms. To give a few examples,
\citet{azar2013minimax,domingues2021episodic,li2022settling,yan2022model} developed information-theoretic lower bounds on the sample complexity of 
RL under multiple sampling mechanisms (e.g., sampling with a generative model, online RL, and offline/batch RL), 
\citet{li2021q} established an algorithm-dependent lower bound on the sample complexity of Q-learning, 
whereas \citet{khamaru2020temporal,pananjady2020instance} developed instance-dependent lower bounds for policy evaluation. Additionally, \citet{agarwal2019optimality} constructed a chain-like MDP whose value function under direct parameterization  might contain very flat saddle points under a certain initial state distribution, highlighting the role of distribution mismatch coefficients in policy optimization.  Finally, exponential-time convergence of gradient descent has been observed in other nonconvex problems as well (e.g., \citet{du2017gradient}) despite its asymptotic convergence \citep{lee2016gradient}, although the context and analysis therein are drastically different from what happens in RL settings. 





\subsection{Paper organization}

The rest of this paper is organized as follows. In Section~\ref{sec:backgrounds}, we introduce the basics of Markov decision processes, and describe the softmax policy gradient method along with several key functions/quantities. Section~\ref{sec:mdp-construction} constructs a chain-like MDP,  which is the hard MDP  instance underlying our computational lower bound for PG methods.   In Section~\ref{sec:analysis}, we outline the proof of Theorem~\ref{thm:unregularized-main}, starting with the proof of a weaker version before establishing Theorem~\ref{thm:unregularized-main}. The proof of all technical lemmas are deferred to the appendix. We conclude the paper in Section~\ref{sec:discussion} with a summary of our findings.

\section{Background}
\label{sec:backgrounds}

In this section, we introduce the basics of MDPs, and formally describe the softmax PG method.  
Here and throughout, we denote by $\Delta(\cX)$  the probability simplex over a set $\cX$, and let $|\cX|$ represent the cardinality of the set $\cX$. 
Given two probability distributions $p$ and $q$ over $\cS$, we adopt the notation $\big\| \frac{p}{q} \big\|_{\infty} = \max_{s\in \cS} \frac{p(s)}{q(s)}$ and $\big\| \frac{1}{q} \big\|_{\infty} = \max_{s\in \cS} \frac{1}{q(s)}$. Throughout this paper, 
the notation $f(\mathcal{M})\gtrsim g(\mathcal{M})$ (resp.~$f(\mathcal{M})\lesssim g(\mathcal{M})$) means there exist some universal constants $c>0$ independent of the parameters of the MDP $\mathcal{M}$ such that $f(\mathcal{M})\geq c g(\mathcal{M})$ (resp.~$f(\mathcal{M})\leq c g(\mathcal{M})$), 
while the notation $f(\mathcal{M})\asymp g(\mathcal{M})$ means that $f(\mathcal{M})\gtrsim g(\mathcal{M})$ and $f(\mathcal{M})\lesssim g(\mathcal{M})$ hold simultaneously. 

\paragraph{Infinite-horizon discounted MDP.} 

Let $\mathcal{M} = (\cS, \{\cA_s \}_{s\in \cS}, P, r,\gamma)$ be an infinite-horizon discounted MDP.
Here, $\cS$ represents the state space, $\cA_s$ denotes the action space associated with state $s\in \cS$, $\gamma\in (0,1)$ indicates the discount factor,
$P$ is the probability transition kernel (namely, for each state-action pair $(s,a)$,   $P(\cdot \mymid {s,a})\in \Delta(\cS)$ denotes the transition probability  from state $s$ to the next state when action $a$ is taken),  and $r$ stands for a deterministic reward function (namely, $r(s,a)$ is the immediate reward received in state $s$ upon executing action $a$). Throughout this paper, we assume normalized rewards such that $-1 \leq r(s,a)\leq 1$ for any state-action pair $(s,a)$. 
In addition, we concentrate on the scenario where $\gamma$ is quite close to 1, and often refer to $\frac{1}{1-\gamma}$ as the effective horizon of the MDP.


\paragraph{Policy, value function, Q-function and advantage function.} 

The agent operates by adopting a policy $\pi$, which is a  (randomized) action selection rule based solely on the current state of the MDP. More precisely, for any state $s\in \cS$, we use $\pi(\cdot \mymid s)\in \Delta(\cA_s)$ to specify a probability distribution, with $\pi(a \mymid s)$ denoting the probability of executing action $a\in \cA_s$ when in state $s$. The value function $V^{\pi}: \cS \rightarrow \mathbb{R}$ of a policy $\pi$ --- which indicates the expected discounted cumulative reward induced by policy $\pi$ --- is defined as
\begin{align}
	\label{defn:value-function}
	\forall s\in \cS: \qquad V^{\pi}(s) \coloneqq \mathbb{E}
	\left[ \sum_{k=0}^{\infty} \gamma^k r(s^k,a^k ) \,\big|\, s^0 =s \right] .
\end{align} 
Here, the expectation is taken over the randomness of the MDP trajectory $\{(s^k,a^k)\}_{k\geq 0}$ and the policy, 
where $s^0=s$ and, for all $k\geq 0$, $a^k \sim \pi(\cdot \mymid s^k)$ follows the policy $\pi$ and $s^{k+1}\sim P(\cdot \mymid s^k, a^k)$ is generated by the transition kernel $P$.
Analogously, we shall also define the value function $V^{\pi}(\mu)$ of a policy $\pi$ when the initial state is drawn from a distribution $\mu$ over $\cS$, namely,
	\begin{align}
		\label{defn:V-pi-rho}
		V^\pi(\mu) := \mathbb{E}_{s\sim \mu}\big[ V^{\pi}(s) \big].
	\end{align}
Additionally, the Q-function $Q^{\pi}$ of a policy $\pi$ --- namely, the expected discounted cumulative reward under policy $\pi$ given an initial state-action pair $(s^0,a^0)=(s,a)$ --- is formally defined by 
\begin{equation}
	\label{defn:Q-function}
 	\forall (s,a)\in \cS \times \cA: \qquad Q^{\pi}(s,a) \coloneqq \mathbb{E} \left[ \sum_{k=0}^{\infty} \gamma^k r(s^k,a^k ) \,\big|\, s^0 =s, a^0 = a \right],
\end{equation} 
where the expectation is again over the randomness of the MDP trajectory $\{(s^k,a^k)\}_{k\geq 1}$ when policy $\pi$ is adopted. 
In addition, the advantage function of policy $\pi$ is defined as
\begin{align}
	\label{defn:A-pi-definition}
	A^{\pi}(s,a) \coloneqq Q^{\pi}(s, a) - V^{\pi}(s)
\end{align}
for every state-action pair $(s,a)$.

A major goal is to find a policy that optimizes the value function and the Q-function. 
Throughout this paper, we denote respectively by $V^{\star}$ and $Q^{\star}$ the optimal value function and optimal Q-function, namely,
\begin{align}
	V^{\star}(s) \coloneqq \max_\pi V^{\pi}(s), \qquad  Q^{\star}(s,a) \coloneqq \max_\pi Q^{\pi}(s,a), \qquad \text{for all }s\in \cS\text{ and } a\in \cA_s.
\end{align}

\paragraph{Softmax parameterization and policy gradient methods.} 

The family of policy optimization algorithms attempts to identify optimal policies by resorting to optimization-based algorithms. 
To facilitate differentiable optimization, a widely adopted scheme is to parameterize policies using softmax mappings. Specifically, for any real-valued parameter $\theta=[\theta(s,a)]_{s\in \cS, a\in \cA_s}$, the corresponding softmax policy $\pi_\theta\coloneqq \mathsf{softmax}(\theta)$ is defined such that
	\begin{align} \label{eq:definition_softmax}
		\forall s\in \cS\text{ and }a \in \cA_s: ~~ \pi_\theta(a \mymid s) \coloneqq \frac{ \exp\big(\theta(s,a) \big) }{ \sum_{a'\in \cA_s} \exp\big( \theta(s,a') \big)  } .
	\end{align}
With the aim of maximizing the value function under softmax parameterization, namely, 
\begin{equation} \label{eq:value_max}
	\text{maximize}_{\theta} \quad V^{\pi_\theta}(\mu), 
\end{equation} 
the softmax PG method proceeds by adopting gradient ascent update rules w.r.t.~$\theta$: 
\begin{subequations}
\label{eq:PG-update-all}
 \begin{align} \label{eq:PG-update}
	\theta^{(t+1)} =  \theta^{(t)} + \eta  \nabla_{\theta} {V}^{(t)} (\mu) , \qquad t=0,1,\cdots.
\end{align}
Here and throughout, we let ${V}^{(t)}  = V^{\pi^{(t)}}$ and ${Q}^{(t)}  = Q^{\pi^{(t)}}$ abbreviate respectively the value function and Q-function of the policy iterate $\pi^{(t)}\coloneqq \pi_{\theta^{(t)}}$ in the $t$-th iteration, and $\eta>0$ denotes the stepsize or learning rate. 
Interestingly,  the gradient $ \nabla_{\theta} {V}^{\pi_{\theta}}$ under softmax parameterization \eqref{eq:definition_softmax} admits a closed-form expression \citep{agarwal2019optimality}, that is, for any state-action pair $(s,a)$, 
\begin{equation}
	\label{eq:policy-grad-softmax-expression}
	\frac{\partial {V}^{\pi_{\theta}} (\mu) }{\partial \theta(s,a)} 
	= \frac{1}{1-\gamma} d_{\mu}^{\pi_\theta}(s) \, \pi_\theta(a \mymid s) \, A^{\pi_\theta}(s,a) .
\end{equation}
\end{subequations}
Here, $d_{\mu}^{\pi_\theta}(s)$ represents the {\em discounted state visitation distribution} of a policy $\pi$ given the initial state $s^0\sim \mu$:
	\begin{equation}
		\label{eq:defn-d-rho}
		\forall s\in \cS: \qquad 
		d_{\mu}^{\pi}(s)  
		\coloneqq (1-\gamma) \mathop{\mathbb{E}}_{s^0\sim \mu} \Bigg[ \sum_{k=0}^{\infty} \gamma^k  \mathbb{P}(s^k = s \mymid s^0) \Bigg], 
	\end{equation}
with the expectation taken over the randomness of the MDP trajectory $\{(s^k,a^k)\}_{k\geq 0}$ under the policy $\pi$ and the initial state distribution $\mu$. In words, $d_{\mu}^{\pi}(s)$ measures --- starting from an initial distribution $\mu$ --- how frequently state $s$ will be visited in a properly discounted fashion. 
Throughout this paper, we shall denote ${A}^{(t)}  \coloneqq A^{\pi^{(t)}}$ and $d_{\mu}^{(t)}(s) \coloneqq d_{\mu}^{\pi^{(t)}}(s)$ for notation simplicity.

\section{Construction of a hard MDP}
\label{sec:mdp-construction}

This section constructs a discounted infinite-horizon MDP $\mathcal{M}=\{\cS, \{\cA_s\}_{s\in \cS}, r, P, \gamma\}$, as depicted in Fig.~\ref{fig:MDP},
which forms the basis of the exponential lower bound claimed in this paper. 
In addition to the basic notation already introduced in Section~\ref{sec:backgrounds}, we remark on the action space as follows.  
\begin{itemize}
	\item {\em For each state $s\in \cS$}, we have $\mathcal{A}_s \subseteq \{a_0,a_1, a_2\}$. 
		For convenience of presentation, we allow the action space to vary with $s\in \cS$, but it always comprises no more than 3 actions.  
\end{itemize}

\begin{figure}[t]
\centering
\includegraphics[width=0.75\textwidth]{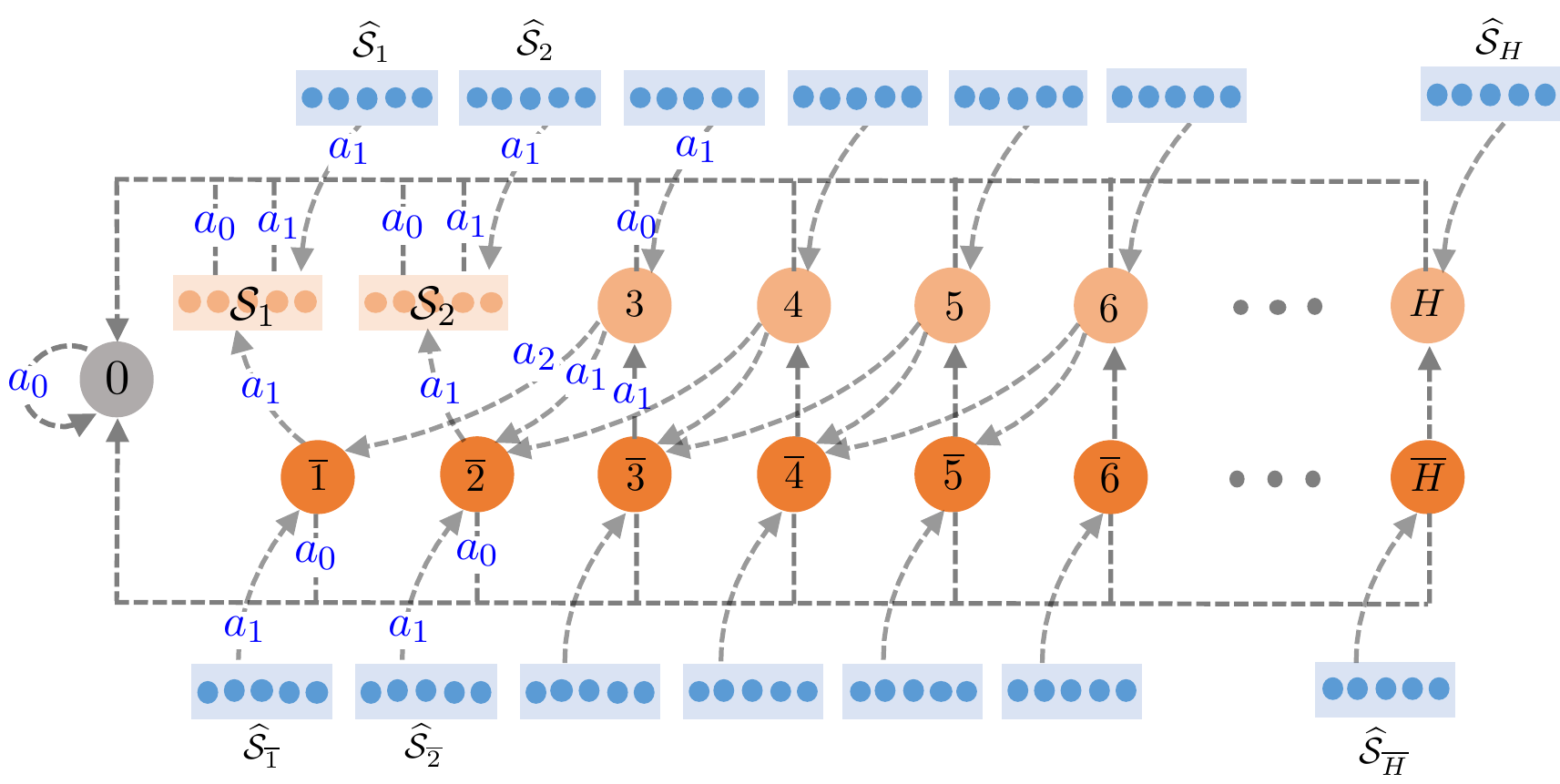}
\caption{An illustration of the constructed MDP. \vspace{-0.05in}} \label{fig:MDP}
\end{figure}

\paragraph{State space partitioning.} 
The states of our MDP exhibit certain group structure. To be precise, we partition the state space $\mathcal{S}$ into a few {\em disjoint} subsets
\begin{align}
	\cS = \{0\} \,\cup\, \sprimary \,\cup\, \sadj \,\cup\,  \mathcal{S}_1 \,\cup\, \mathcal{S}_2 \,\cup
	\, \widehat{S}_1 \, \cup\, \cdots\, \cup\, \widehat{S}_H \,\cup\, \widehat{S}_{\overline{1}}\, \cup\, \cdots \, \cup\, \widehat{S}_{\overline{H}},
	\label{eq:state-partitioning}
\end{align}
which entails:
\begin{itemize}
\item state 0 (an absorbing state);
\item two key ``buffer'' state subsets $\mathcal{S}_1$ and $\mathcal{S}_2$;
\item a set of $H-2$ key primary states $\sprimary \coloneqq \{3, \cdots, H \}$;\footnote{While we do not include states 1 and 2 here, any state in $\cS_1$ (resp.~$\cS_2$) can essentially be viewed as a (replicated) copy of state 1 (resp.~state 2).}
\item a set of $H$ key adjoint states $\sadj \coloneqq \{ \overline{1}, \overline{2}, \cdots, \overline{H} \}$; 
\item $2H$ ``booster'' state subsets $\widehat{\mathcal{S}}_1,  \cdots, \widehat{\cS}_H$, $\widehat{\cS}_{\overline{1}},  \cdots, \widehat{\cS}_{\overline{H}}$.
\end{itemize}
\begin{remark} Our subsequent analysis largely concentrates on the subsets $\cS_1$, $\cS_2$,  $\sprimary$ and $\sadj$. 
	In particular, each state $s\in \{3,\cdots, H\}$ is paired with what we call an adjoint state $\overline{s}$, whose role will be elucidated shortly. 
	In addition, state $\overline{1}$ (resp.~state $\overline{2}$) can be viewed as the adjoint state of the set $\cS_1$ (resp.~$\cS_2$). 
The sets $\sprimary$ and $\sadj$ comprise a total number of $2H-2$ states; in comparison, $\cS_1$ and $\cS_2$ are chosen to be much larger and contain a number of replicated states, a crucial design component that helps ensure the property \eqref{eq:intuition-jump-start} under a uniform initial state distribution.  
As we shall make clear momentarily, 
	the ``booster'' state sets are introduced mainly to help boost the discounted visitation distribution of the states in $\cS_1$, $\cS_2$,  $\sprimary$, and $\sadj$ at the initial stage. 
\end{remark}

We shall also specify below the size of these state subsets as well as some key parameters, where the choices of the quantities $\ch,\cbone,\cbtwo,\cm \asymp 1$ will be made clear in the analysis (cf.~\eqref{eq:assumptions-constants}). 
\begin{itemize}
\item  $H$ is taken to be on the same order as the ``effective horizon'' of this discounted MDP, namely, 
\begin{align}
	H = \frac{\ch}{1-\gamma}.
	\label{eq:H-size}
\end{align}
\item The two buffer state subsets $\cS_1$ and $\cS_2$ have size 
\begin{align}
	|\cS_1| = \cbone (1-\gamma)|\cS| \qquad \text{and}\qquad |\cS_2| = \cbtwo (1-\gamma)|\cS|.
	\label{eq:S1-S2-equal-sizes}
\end{align}

\item The booster state sets are of the same size, namely, 
\begin{align}
	|\widehat{\cS}_1|=  \cdots |\widehat{\cS}_H| = |\widehat{\cS}_{\overline{1}}|=  \cdots=| \widehat{\cS}_{\overline{H}}| = \cm (1-\gamma)|\cS|.
	\label{eq:hatS1-hatSH-equal-sizes}
\end{align}
\end{itemize}


\paragraph{Probability transition kernel and reward function.} 
We now describe the probability transition kernel and the reward function for each state subset. 
Before continuing, we find it helpful to isolate a few key parameters that will be used frequently in our construction:
\begin{subequations}
\label{eq:key-parameters-defn}
\begin{align}
\tau_s &\defn 0.5\gamma^{\frac{2s}{3}},   \label{defn:tau-s}\\
p &\defn \cp (1-\gamma),  \label{defn:p-param}\\
r_s &\defn 0.5 \gamma^{\frac{2s}{3}+\frac{5}{6}},  \label{defn:r-s}
\end{align}
\end{subequations}
where $s\in\{1,2,\cdots,H\}$, 
and $\cp >0$ is some small constant that shall be specified later (see \eqref{eq:assumptions-constants}). 
To facilitate understanding, we shall often treat $\tau_s$ and $r_s$ ($s\leq H$) as quantities that are all fairly close to $0.5$ (which would happen if $\gamma$ is close to 1 and $H=\frac{\ch}{1-\gamma}$ for $\ch$ sufficiently small).  

We are now positioned to make precise descriptions of both $P$ and $r$ as follows.
\begin{itemize}
\item {\em Absorbing state $0$: } singleton action space $\{a_0\}$,
\begin{equation}
	 P(0 \mymid 0, a_0)=1, \qquad\qquad r(0,a_0)=0.
	\label{eq:P-r-state0}
\end{equation}
This is an absorbing state, namely, the MDP will stay in this state permanently once entered. 
As we shall see below, taking action $a_0$ in an arbitrary state will enter state $0$ immediately. 

\item {\em Key primary states $s\in \{ 3,\cdots, H \}$: } action space $\{a_0,a_1, a_2\}$,
\begin{subequations}
\label{eq:P-r-primary-3-H}
\begin{align}
 P(0\mymid s,a_0) &=  1, 			&& r(s,a_0)=r_{s}+\gamma^{2}p\tau_{s-2},\\
 P\big(\,\overline{s-1}\mymid s,a_1\big) &= 1,  	&& r(s,a_1)=0, \\
  P(0\mymid s,a_2) &=  1-p, 			&& r(s,a_2)=r_{s},\\
 P\big(\,\overline{s-2}\mymid s,a_2\big) &= p,
\end{align}
\end{subequations}
where $p$, $\tau_s$ and $r_s$ are all defined in \eqref{eq:key-parameters-defn}.

\item {\em Key adjoint states  $\overline{s} \in \{ \overline{3},\cdots, \overline{H} \}$:} action space $\{a_0, a_1\}$,
\begin{subequations}
\label{eq:P-r-adjoint-states}
\begin{align}
  &P(0 \mymid \overline{s}, a_0) = 1,    && r( \overline{s}, a_0 ) = \gamma \tau_s, \\
  &P(s \mymid \overline{s}, a_1) = 1,    && r( \overline{s}, a_1 ) = 0, 
\end{align}
\end{subequations}
where $\tau_s$ is defined in \eqref{defn:tau-s}.

\item {\em Key buffer state subsets $\cS_1$ and $\cS_2$: } action space $\{a_0,a_1\}$,
\begin{subequations}
\label{eq:P-r-state-S1S2}
\begin{align}
\forall s_{1}\in\mathcal{S}_{1}:\qquad  
  P(0\mymid s_{1}, a_{0}) &=1,    &&r(s_{1},a_{0}) = -\gamma^{2}, \\
  P(0 \mymid s_{1}, a_{1}) &=1,    &&r(s_{1},a_{1}) = \gamma^{2}, &
\end{align}
%
%
\begin{align}
  \forall s_{2}\in\mathcal{S}_{2}:\qquad  
  P(0 \mymid s_{2},a_{0}) &=1,    && r(s_{2},a_{0}) = -\gamma^{4}, \\
  P(0 \mymid s_{2},a_{1}) &=1,    && r(s_{2},a_{1}) = \gamma^{4}. &
\end{align}
\end{subequations}
Given the homogeneity of the states in $\cS_1$ (resp.~$\cS_2$), 
we shall often use the shorthand notation $P(\cdot \mymid 1, a)$ (resp.~$P(\cdot \mymid 2, a)$) to abbreviate $P(\cdot \mymid s_{1}, a)$ (resp.~$P(\cdot \mymid s_{2}, a)$) for any $s_1\in \cS_1$  (resp.~$s_2\in\cS_2$) for the sake of convenience.

\item {\em Other adjoint states $\overline{1}$ and $\overline{2}$:} action space $\{a_0,a_1\}$,
\begin{subequations}
\label{eq:P-r-adjoint-12}
\begin{align}
&P(0\mymid\overline{1},a_0)=  1, &&r(\overline{1}, a_0) =\gamma \tau_1, &P(s_{1}\mymid\overline{1},a_1)=\frac{1}{|\mathcal{S}_{1}|}, ~\forall s_{1}\in\mathcal{S}_{1}, && r(\overline{1}, a_1) = 0, 
 \\
&P(0\mymid\overline{2},a_0)=  1, &&r(\overline{2}, a_0) =\gamma \tau_2, &P(s_{2}\mymid\overline{2},a_1)=\frac{1}{|\mathcal{S}_{2}|}, ~\forall s_{2}\in\mathcal{S}_{2}, && r(\overline{2}, a_1) = 0,
\end{align}
\end{subequations}
where $\tau_1$ and $\tau_2$ are defined in \eqref{defn:tau-s}.

\item {\em Booster state subsets $\widehat{\cS}_1$, $\cdots$, $\widehat{\cS}_H$, $\widehat{\cS}_{\overline{1}}$, $\cdots$, $\widehat{\cS}_{\overline{H}}$:  } singleton action space $\{a_1\}$,
\begin{subequations}
\label{eq:merging-states-definition-P}
\begin{align}
	&\forall s'\in\widehat{\mathcal{S}}_{1}, ~s\in\mathcal{S}_{1}:\qquad P(s\mymid s',a_{1})  = 1/ |\mathcal{S}_{1}|,\\
	&\forall s'\in\widehat{\mathcal{S}}_{2}, ~s\in\mathcal{S}_{2}:\qquad P(s\mymid s',a_{1})  = 1/ |\mathcal{S}_{2}|;
\end{align}
for any $s\in \{3,\cdots, H\}$, 
\begin{align}
\forall s'\in\widehat{\mathcal{S}}_{s},:\qquad & P(s\mymid s',a_{1})=1,
\end{align}
and for any $\overline{s} \in \{\overline{1},\cdots,\overline{H}\}$,
\begin{align}
\forall s'\in\widehat{\mathcal{S}}_{\overline{s}},:\qquad & P(\overline{s}\mymid s',a_{1})=1.
\end{align}
\end{subequations}
The rewards in all these cases are set to be 0 (in fact, they will not even appear in the analysis). 
In addition, any transition probability that has not been specified is equal to zero. 
\end{itemize}

\paragraph{Convenient notation for buffer state subsets $\cS_1$ and $\cS_2$.} 
By construction, it is easily seen that the states in $\cS_1$ (resp.~$\cS_2$) have identical characteristics; in fact, all states in $\cS_1$ (resp.~$\cS_2$) share exactly the same value functions and Q-functions throughout the execution of the softmax PG method.  As a result, we introduce the following convenient notation whenever it is clear from the context: 
\begin{subequations}
\label{eq:convenient-notation-V1-V2-Q1-Q2}
\begin{align}
	V^{\pi}(s_1) \eqqcolon V^{\pi}(1), \qquad Q^{\pi}(s_1,a) \eqqcolon Q^{\pi}(1,a), \qquad 	
	A^{\pi}(s_1) \eqqcolon A^{\pi}(1)  \qquad \text{for all }s_1\in \cS_1; \\
	V^{\pi}(s_2) \eqqcolon V^{\pi}(2), \qquad Q^{\pi}(s_2,a) \eqqcolon Q^{\pi}(2,a), \qquad A^{\pi}(s_2) \eqqcolon A^{\pi}(2)
	\qquad \text{for all }s_2\in \cS_2; \\		
	d^{\pi}_{\mu}(s_1) \eqqcolon d^{\pi}_{\mu}(1), \qquad\quad \pi(a\mymid s_1) \eqqcolon \pi(a\mymid 1),
	\qquad \theta(s_1,a) \eqqcolon \theta(1,a) \qquad \text{for all }s_1\in \cS_1; \\
	d^{\pi}_{\mu}(s_2) \eqqcolon d^{\pi}_{\mu}(2), \qquad\quad
	\pi(a\mymid s_2) \eqqcolon \pi(a\mymid 2),
	\qquad \theta(s_2,a) \eqqcolon \theta(2,a) \qquad \text{for all }s_2\in \cS_2.
\end{align}
\end{subequations}
%

\paragraph{Optimal values and optimal actions of the constructed MDP.} 
Before concluding this section, 
we find it convenient to determine the optimal value functions and the optimal actions of the constructed MDP, 
which would be particularly instrumental when presenting our analysis. 
This is summarized in the lemma below, whose proof  can be found in Appendix~\ref{sec:proof-lemma:basic-properties-MDP-Vstar}.

\begin{lemma}
	\label{lem:basic-properties-MDP-Vstar}
	Suppose that $\gamma^{2H}\geq 2/3$ and $H\geq 2$.
	Then one has 
\begin{subequations}
\begin{align}
	V^{\star}(0)  =0,\qquad V^{\star}(s) & =Q^{\star}(s,a_1) = \gamma^{2s},\quad ~~~ 1\leq s\leq H,\\
	 \qquad\quad\, V^{\star}(\overline{s}) &=Q^{\star}(\overline{s},a_1) =\gamma^{2s+1},\quad1\leq s\leq H,
\end{align}
\end{subequations}
and the optimal policy is to take action $a_1$ in all non-absorbing states. 
In addition, for any policy $\pi$ and any state-action pair $(s,a)$, one has $Q^{\pi}(s,a) \geq - \gamma^2$.
\end{lemma}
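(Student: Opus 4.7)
The plan is to identify the optimal policy explicitly, verify optimality via the Bellman equation, and then handle the universal $Q$-function lower bound by direct inspection. Let $\pi_1$ denote the stationary policy that deterministically selects $a_1$ whenever $a_1$ is in the action space (and the unique available action otherwise). I will compute $V^{\pi_1}$ in closed form, show that $V^{\pi_1}$ is a fixed point of the Bellman optimality operator, and thereby conclude $V^{\pi_1}=V^\star$ and the optimality of $\pi_1$; the stated identity $Q^\star(s,a_1)=V^\star(s)$ then follows from one-step Bellman consistency.

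Computing $V^{\pi_1}$ is straightforward once the $\pi_1$-induced chain is traced. From any primary state $s\in\{3,\ldots,H\}$, the chain follows the deterministic path $s\to\overline{s-1}\to s-1\to\overline{s-2}\to\cdots\to 3\to\overline{2}\to s_2\to 0$, where $s_2$ is drawn uniformly from $\cS_2$ at the penultimate step. Every intermediate reward is zero and only the reward $r(s_2,a_1)=\gamma^4$ at time $2s-4$ is nontrivial, giving $V^{\pi_1}(s)=\gamma^{2s-4}\cdot\gamma^4=\gamma^{2s}$. Direct substitution in \eqref{eq:P-r-state-S1S2} yields $V^{\pi_1}(s_1)=\gamma^2$ and $V^{\pi_1}(s_2)=\gamma^4$, and the one-step identity $V^{\pi_1}(\overline{s})=\gamma V^{\pi_1}(s)$ gives $V^{\pi_1}(\overline{s})=\gamma^{2s+1}$; finally $V^{\pi_1}(0)=0$ is trivial.

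For the Bellman-optimality verification I would show, at each non-absorbing state $s$ and each action $a\neq a_1$, that $r(s,a)+\gamma\sum_{s'}P(s'\mymid s,a)V^{\pi_1}(s')\le V^{\pi_1}(s)$. Substituting the definitions in \eqref{eq:key-parameters-defn} reduces the primary-state check at $s\in\{3,\ldots,H\}$ to
\begin{align*}
0.5\,\gamma^{2s/3+5/6}+0.5\,\cp(1-\gamma)\,\gamma^{2s/3+2/3} &\le \gamma^{2s} \quad(\text{action }a_0),\\
0.5\,\gamma^{2s/3+5/6}+\cp(1-\gamma)\,\gamma^{2s-2} &\le \gamma^{2s} \quad(\text{action }a_2),
\end{align*}
reduces the adjoint-state check to $0.5\,\gamma^{2s/3+1}\le\gamma^{2s+1}$, and turns the $\cS_1,\cS_2,\overline{1},\overline{2}$ checks into elementary sign comparisons. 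The common mechanism is the estimate $\gamma^{4s/3}\ge\gamma^{4H/3}=(\gamma^{2H})^{2/3}\ge(2/3)^{2/3}>0.5$, which forces the dominant ``$0.5\,\gamma^{2s/3}$''-type term strictly below $\gamma^{2s}$ with a uniform constant gap; the residual terms carrying the factor $p=\cp(1-\gamma)$ are absorbed by choosing $\cp$ to be a sufficiently small universal constant as prescribed in \eqref{eq:assumptions-constants}. The universal lower bound $Q^\pi(s,a)\ge -\gamma^2$ then follows from the observation that the only strictly negative rewards the MDP can emit are $r(s_1,a_0)=-\gamma^2$ and $r(s_2,a_0)=-\gamma^4$, and since every action available in $\cS_1\cup\cS_2$ sends the chain with probability one into the absorbing state $0$, any trajectory under any policy crosses $\cS_1\cup\cS_2$ at most once and accumulates at most one negative summand of magnitude $\le\gamma^2$, while every remaining reward is nonnegative.

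The main obstacle is the algebraic verification at the primary states: the three $\gamma$-sequences $\gamma^{2s/3+5/6}$, $\gamma^{2s/3+2/3}$, and $\gamma^{2s-2}$ must be simultaneously dominated by $\gamma^{2s}$ uniformly in $3\le s\le H$, with $H$ as large as $\Theta((1-\gamma)^{-1})$. My strategy is to split each inequality into a leading geometric piece controlled by the hypothesis $\gamma^{2H}\ge 2/3$ and a perturbative piece controlled by the smallness of $p=\cp(1-\gamma)$, calibrating $\cp$ once and for all via \eqref{eq:assumptions-constants} so that a uniform constant slack remains after both pieces have been accounted for.
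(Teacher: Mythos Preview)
Your proposal is correct and takes essentially the same route as the paper: both compute the candidate value function and then verify the Bellman optimality inequalities action by action, and both handle the universal lower bound $Q^\pi\ge-\gamma^2$ by the same trajectory argument (at most one visit to $\cS_1\cup\cS_2$, after which the chain absorbs). The only cosmetic difference is that you first compute $V^{\pi_1}$ by tracing the deterministic $a_1$-path and then check that $V^{\pi_1}$ is a Bellman-optimality fixed point, whereas the paper runs backward induction computing $V^\star$ directly from the Bellman equation; the actual inequalities to be checked (and the use of $\gamma^{2H}\ge 2/3$ together with the smallness of $p=\cp(1-\gamma)$) coincide.
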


Lemma~\ref{lem:basic-properties-MDP-Vstar} tells us that for this MDP, the optimal policy for all non-absorbing states takes a simple form:  sticking to action $a_{1}$. 
In particular, when $\gamma \approx 1$ and $\gamma^H\approx 1$, Lemma~\ref{lem:basic-properties-MDP-Vstar} reveals that the optimal values of all non-absorbing major states are fairly close to 1, namely,  
\begin{align}
	V^{\star}(s)\approx 1 \qquad \text{for all } s\in \{1,\cdots, H\} \cup \{\overline{1},\cdots, \overline{H}\}.
	\label{eq:optimal-values-approx-1}
\end{align}
Additionally, the above lemma directly implies that the Q-function  (and hence the value function) is always bounded below by $-1$, a property that will be used several times in our analysis.

\section{Analysis: proof outline}
\label{sec:analysis}

In this section, we present the main steps for establishing our computational lower bound in  Theorem~\ref{thm:unregularized-main}. 
Before doing so, we find it convenient to start by presenting and proving a weaker version as follows.
\begin{theorem} 
	\label{thm:unregularized}
	Consider the MDP $\mathcal{M}$ constructed in Section~\ref{sec:mdp-construction} (and Fig.~\ref{fig:MDP}). 
	Assume that the softmax PG method adopts a uniform initial state distribution, a uniform policy initialization, and has access to exact gradient computation. Suppose that $0<\eta<(1-\gamma)^2/5$. There exist universal constants  $c_1, c_2, c_3>0$ such that: for any $0.96< \gamma < 1$ and $ |\cS| \ge c_3 (1-\gamma)^{-6}$, one has
\begin{align}
V^{\star}(s)-V^{(t)}(s) > 0.15,\qquad\text{for all primary states } 0.1H < s < H, 
\end{align}
provided that the iteration number satisfies
\begin{align} 
	\label{eq:t-bound}
	t < \frac{c_1}{\eta} |\cS| ^{ 2^{ \frac{c_2}{1-\gamma}}}.
\end{align}
\end{theorem}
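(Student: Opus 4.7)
The plan is to prove Theorem~\ref{thm:unregularized} by induction along the chain of key primary states, mirroring the two-level heuristic summarized in equations~\eqref{eq:intuition-jump-start} and~\eqref{eq:intuition-blowing-up}. For each $s\in\{1,2,\dots,H\}$, let $T_s$ denote the first iteration at which $\pi^{(t)}(a_1\mymid s)$ has risen above some fixed constant threshold close to $1$ (chosen so that $V^{(t)}(s)\ge V^\star(s)-0.15$ forces $\pi^{(t)}(a_1\mymid s)$ to exceed this threshold, a conversion that is transparent for $s\in\{1,2\}$ from the reward structure~\eqref{eq:P-r-state-S1S2} and follows from Lemma~\ref{lem:basic-properties-MDP-Vstar} together with the fact that $a_0$ transitions to the absorbing $0$-reward state for $s\ge 3$). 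The theorem will follow from a base estimate $T_1,T_2\gtrsim |\cS|/\eta$ combined with a blowing-up recursion $T_s\gtrsim T_{s-2}^{1.5}$.

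For the base case, I would use the closed-form gradient~\eqref{eq:policy-grad-softmax-expression} to bound the per-iteration change of $\theta^{(t)}(s_1,a_1)-\theta^{(t)}(s_1,a_0)$ for $s_1\in\cS_1$. Under the uniform initial distribution $\mu$, direct expansion of~\eqref{eq:defn-d-rho} shows $d_\mu^{(t)}(s_1)\lesssim 1/|\cS|$: the $k{=}0$ contribution is $(1-\gamma)/|\cS|$, while the $k{\ge}1$ contributions arise only from $\widehat\cS_1$ and $\overline 1$, whose sizes are calibrated in~\eqref{eq:S1-S2-equal-sizes}–\eqref{eq:hatS1-hatSH-equal-sizes} so that each such ancestor state feeds a mass of order $(1-\gamma)/|\cS|$ to $s_1$. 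Since $|A^{(t)}(1,a)|\lesssim 1$ and $\pi^{(t)}(a\mymid 1)\le 1$, each update shifts the logit gap by at most $O(\eta/((1-\gamma)|\cS|))$; the required shift of $\Omega(\log(1/(1-\gamma)))$ to drive $\pi^{(t)}(a_1\mymid 1)$ above the threshold (noting that the monotonicity of~$V^{(t)}(1)$ under the stepsize $\eta<(1-\gamma)^2/5$ rules out a cancelling overshoot) yields $T_1\gtrsim |\cS|/\eta$, and similarly for $T_2$. The role of the booster states $\widehat\cS_1,\widehat\cS_2$ is precisely to keep $d_\mu^{(t)}(s_1),d_\mu^{(t)}(s_2)$ bounded below by roughly $1/|\cS|$ throughout the trajectory, so that the distribution mismatch coefficient remains $O(|\cS|)$ and this lower bound remains valid.

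For the inductive step $s\ge 3$, I would carefully track $Q^{(t)}(s,a_0)$, $Q^{(t)}(s,a_1)$ and $Q^{(t)}(s,a_2)$ using the rewards in~\eqref{eq:P-r-primary-3-H}. Plugging in the definitions~\eqref{defn:tau-s}–\eqref{defn:r-s} and using Bellman expansion through $\overline{s-1}$ and $\overline{s-2}$, I would show that for all $t\le T_{s-2}$ one has $Q^{(t)}(s,a_0)\ge Q^{(t)}(s,a_1)$: the bonus term $\gamma^2 p\tau_{s-2}$ attached to $a_0$ compensates for the gap $r_s-\gamma^{2s}$, and because $V^{(t)}(s-2)$ is still far below $V^\star(s-2)$, the route $s\to\overline{s-1}\to s-1$ underlying $Q^{(t)}(s,a_1)$ is degraded accordingly. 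Action $a_2$ is a ``decoy'' in the sense that $\pi^{(t)}(a_2\mymid s)$ actively grows during the waiting period, draining probability from $a_1$ and forcing $\theta^{(t)}(s,a_1)-\theta^{(t)}(s,a_0)$ to stagnate or drift backward. Once $t>T_{s-2}$ and $V^{(t)}(s-2)\approx V^\star(s-2)$, a positive advantage $A^{(t)}(s,a_1)=\Theta((1-\gamma)^2)$ is finally unlocked; but by that time $\pi^{(t)}(a_1\mymid s)$ has been pushed down to a very small value (polynomially small in $T_{s-2}$), so that the gradient magnitude $d_\mu^{(t)}(s)\pi^{(t)}(a_1\mymid s)A^{(t)}(s,a_1)$ is minuscule. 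Quantifying the decay of $\pi^{(t)}(a_1\mymid s)$ during $[0,T_{s-2}]$ and then integrating the softmax dynamics forward gives $T_s-T_{s-2}\gtrsim T_{s-2}^{1.5}$, hence $T_s\gtrsim T_{s-2}^{1.5}$.

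The main obstacle will be the quantitative control in the inductive step: one has to show, simultaneously and uniformly in $t$, that (i)~$\pi^{(t)}(a_2\mymid s)$ grows fast enough to suppress $\pi^{(t)}(a_1\mymid s)$ to an appropriate polynomial level in $T_{s-2}$ without ever first causing the monotone value sequence to stall, and (ii)~the intermediate adjoint-state values $V^{(t)}(\overline{s-1})$ and $V^{(t)}(\overline{s-2})$ track the primary-state values closely enough that the Q-value comparison really does flip only after $T_{s-2}$. Both require tight two-sided estimates on $\pi^{(t)}(\,\cdot\mymid s)$ and on the discounted visitations $d_\mu^{(t)}(\overline s)$, and crucially rely on the specific choices $\tau_s=0.5\gamma^{2s/3}$, $r_s=0.5\gamma^{2s/3+5/6}$, $p=\cp(1-\gamma)$ made in~\eqref{eq:key-parameters-defn}. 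Granting the blowing-up recursion, iterating $\Theta(H)=\Theta(1/(1-\gamma))$ times from the base $T_1\gtrsim |\cS|/\eta$ yields $T_s\gtrsim (|\cS|/\eta)^{1.5^{(s-1)/2}}$ for $0.1H<s<H$, which exceeds $|\cS|^{2^{c_2/(1-\gamma)}}/\eta$ for a suitable constant $c_2$, completing the proof after invoking the bound $V^{(t)}(s)\le V^\star(s)-0.15$ guaranteed by $t<T_s$.
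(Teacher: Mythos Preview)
Your high-level scaffold matches the paper's: define crossing times, establish the base case $T_1,T_2\gtrsim|\cS|/\eta$ via the bound $d_\mu^{(t)}(s_1)\lesssim 1/|\cS|$ on the buffer states, and then propagate a super-linear recursion across the chain. The base-case sketch and the final iteration are essentially correct. The substantive gaps are in the inductive step.

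First, a mis-identification: $Q^{(t)}(s,a_1)=\gamma V^{(t)}(\overline{s-1})$ depends on state $\overline{s-1}$ (hence $s-1$), not on $s-2$. State $s-2$ enters only through $Q^{(t)}(s,a_2)=r_s+\gamma p\, V^{(t)}(\overline{s-2})$. So ``because $V^{(t)}(s-2)$ is still far below $V^\star(s-2)$, the route $s\to\overline{s-1}\to s-1$ is degraded'' has the dependency backwards. The paper tracks the thresholds $t_{s-2}(\tau_{s-2})$ and $t_{\overline{s-1}}(\tau_s)$ separately (your ``obstacle (ii)''), and relating them is genuine work (Lemmas~\ref{lem:sweets-for-new-year} and~\ref{lem:hotpot-for-new-year}); the intermediate stage between them is where $\theta^{(t)}(s,a_1)$ continues to fall, and this must be controlled before one can say the advantage has ``flipped.'' Incidentally, once it flips the advantage is $\Theta(1-\gamma)$, not $\Theta((1-\gamma)^2)$.

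Second, and more seriously, the exponent $1.5$ is asserted but never derived, and your description of the mechanism would not produce it. During $[0,t_{s-2}(\tau_{s-2})]$ the paper shows $\theta^{(t)}(s,a_1)\le -\tfrac{1}{2}\log\bigl(c\,\eta(1-\gamma)^2 t\bigr)$, so at the end of the decay $e^{\theta(s,a_1)}\asymp t_{s-2}^{-1/2}$. If only a two-action competition $a_0$ vs.\ $a_1$ were at play, the recovery time would scale like $\pi(a_1\mymid s)^{-1}\asymp e^{\theta(a_0)-\theta(a_1)}=e^{-2\theta(a_1)}\asymp t_{s-2}$, giving no blow-up. What creates the $1.5$ is the third action: the paper identifies a reference time $\tprime\in[t_{\overline{s-1}}(\tau_s),\,t_s(\tau_s))$ at which $\theta(s,a_2)$ has become dominant \emph{and} $\theta(s,a_0)$ has been driven down to within $O(\log\tfrac{1}{1-\gamma})$ of $\theta(s,a_1)$ (Lemma~\ref{lem:grand-final-victory}, properties~\eqref{eqn:jay-concert-1}--\eqref{eqn:jay-concert-2}). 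Using $\sum_a\theta(s,a)=0$, this yields
\[
\pi^{(\tprime)}(a_1\mymid s)\le e^{\theta(a_1)-\theta(a_2)}=e^{2\theta(a_1)+\theta(a_0)}\lesssim \tfrac{1}{1-\gamma}\,e^{3\theta(a_1)}\asymp t_{s-2}^{-3/2},
\]
and then the recovery time is $\gtrsim\pi(a_1\mymid s)^{-1}\gtrsim t_{s-2}^{3/2}$. Your proposal calls $a_2$ a ``decoy'' that drains probability from $a_1$ during the waiting period, but that is not the point: during the initial stage $a_0$ is the largest and $a_2$ is secondary (Lemma~\ref{lem:induc-theta-t-s-2}); $a_2$ matters because \emph{after} the switch it becomes the dominant competitor against which $\pi(a_1\mymid s)$ must climb, while $a_0$ has by then collapsed toward $a_1$. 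Without pinning down this three-way interplay at $\tprime$, the $1.5$ exponent does not appear.
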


In what follows, we shall concentrate on establishing Theorem~\ref{thm:unregularized}, on the basis of the MDP instance constructed in Section~\ref{sec:mdp-construction}. 
Once this theorem is established, we shall revisit Theorem~\ref{thm:unregularized-main} (towards the end of Section~\ref{sec:proof-outline-weak-thm}) and describe how the proof of Theorem~\ref{thm:unregularized} can be adapted to prove Theorem~\ref{thm:unregularized-main}. 


\subsection{Preparation: crossing times and choice of constants}

\paragraph{Crossing times.} 
To investigate how long it takes for softmax PG methods to converge to the optimal policy, 
 we shall pay particular attention to a family of key quantities:    
the number of iterations needed for  $V^{(t)}(s)$ to surpass a prescribed threshold $\tau$ ($\tau<1$) {\em before} it reaches its optimal value. 
To be precise,  for each $s \in \{3,\cdots,H\} \cup \{\overline{1},\cdots,\overline{H}\}$ and any given threshold $\tau>0$, 
we introduce the following {\em crossing time}:
\begin{align}
	\label{eqn:approx-optimal-time}
	t_s(\tau) := \arg\min \big\{t \mymid V^{(t)}(s) \geq \tau \big\}.	
\end{align}
When it comes to the buffer state subsets $\cS_1$ and $\cS_2$, we define the crossing times analogously as follows
\begin{align}
\label{eqn:approx-optimal-time-buffer}
	t_1(\tau) \coloneqq \arg\min \big\{t \mymid V^{(t)}(1) \geq \tau \big\}
	\qquad \text{and}\qquad 
	t_2(\tau) \coloneqq \arg\min \big\{t \mymid V^{(t)}(2) \geq \tau \big\},
\end{align}
where we recall the notation $V^{(t)}(1)$ and $V^{(t)}(2)$ introduced in \eqref{eq:convenient-notation-V1-V2-Q1-Q2}.

\paragraph{Monotonicity of crossing times.}
Recalling the definition \eqref{eqn:approx-optimal-time} of the crossing time $t_s(\cdot)$, we know that 
\begin{align}
	V^{(t)}(s) < \taun_s \qquad \text{for all }t< t_s(\taun_s),
	\label{eq:V-t-sprime-3-H-s}
\end{align}
with $\tau_{s}$ defined in expression~\eqref{defn:tau-s}. We immediately make note of the following crucial monotonicity property that will be justified later in Remark~\ref{remark:monotonicity-crossing-times}:
\begin{align}
	t_2(\taun_2)\leq t_3(\taun_3)\leq  \cdots \leq t_H(\taun_H). 
	\label{eq:monotonicity-t-s-tau-s}
\end{align}
It will also be shown in Lemma~\ref{lem:init-step} that $t_1(\taun_1)\leq  t_2(\taun_2)$ when the constants $\cbone, \cbtwo$ and $\cm$ are properly chosen. 
\begin{remark}
As we shall see shortly (i.e., Part (iii) of Lemma~\ref{lem:basic-properties-MDP-pi}), one has $t_{\overline{s}}(\gamma \tau_s) = t_{s}(\tau_s)$ for any $\overline{s}\in \{\overline{1},\cdots,\overline{H}\}$, which combined with \eqref{eq:monotonicity-t-s-tau-s} leads to
\begin{align}
	t_1(\taun_1)=t_{\overline{1}}(\gamma \tau_1)\leq t_2(\taun_2)=t_{\overline{1}}(\gamma \tau_1)\leq t_3(\taun_3)=t_{\overline{3}}(\gamma \tau_3)\leq  \cdots \leq t_H(\taun_H)=t_{\overline{H}}(\gamma \tau_H). 
	\label{eq:monotonicity-t-s-sbar}
\end{align}
\end{remark}
%


\paragraph{Choice of parameters.} We assume the following choice of parameters throughout the proof:
\begin{align}
\label{eq:assumptions-constants}
\gamma > 0.96, \; \cm<1, \;  \ch <&~0.19, \;  \eta < \frac{(1-\gamma)^2}{5}, \; 
\frac{\cbone}{\cm} \le ~\frac{1}{79776}, \; 8\le \frac{\cbtwo}{\cm} \le15 , \; \cp <\frac{1}{2016}.
\end{align}
%
In the sequel, we outline the key steps that underlie the proof of our main results, with the proofs of the key lemmas postponed to the appendix.

\subsection{A high-level picture}

 While our proof is highly technical, it is prudent to point out some key features that help paint a high-level picture about the slow convergence of the algorithm. 
 Recall that $a_1$ is the optimal action in the constructed MDP. 
 The chain-like structure of our MDP underscores a sort of {\em sequential dependency}:  the dynamic of any primary state $s\in \{3,\cdots,H\}$ depends heavily on what happens in those states prior to $s$ --- particularly state $s-1$, state $s-2$ as well as the associated adjoint states. By carefully designing the immediate rewards, we can ensure that for any $s\in \{3,\cdots,H\}$, the iterate $\pi^{(t)}(a_1\mymid s)$ corresponding to the optimal action $a_1$ keeps decreasing before  $\pi^{(t)}(a_1\mymid s-2)$ gets reasonably close to 1. As illustrated in Figure~\ref{fig:pi-trend}, this feature implies that the time taken for $\pi^{(t)}(a_1\mymid s)$ to get close to 1 grows (at least) geometrically as $s$ increases, as will be formalized in \eqref{eq:blow-up-illustration}.

\begin{figure}[ht]
\centering
\includegraphics[width=0.6\textwidth]{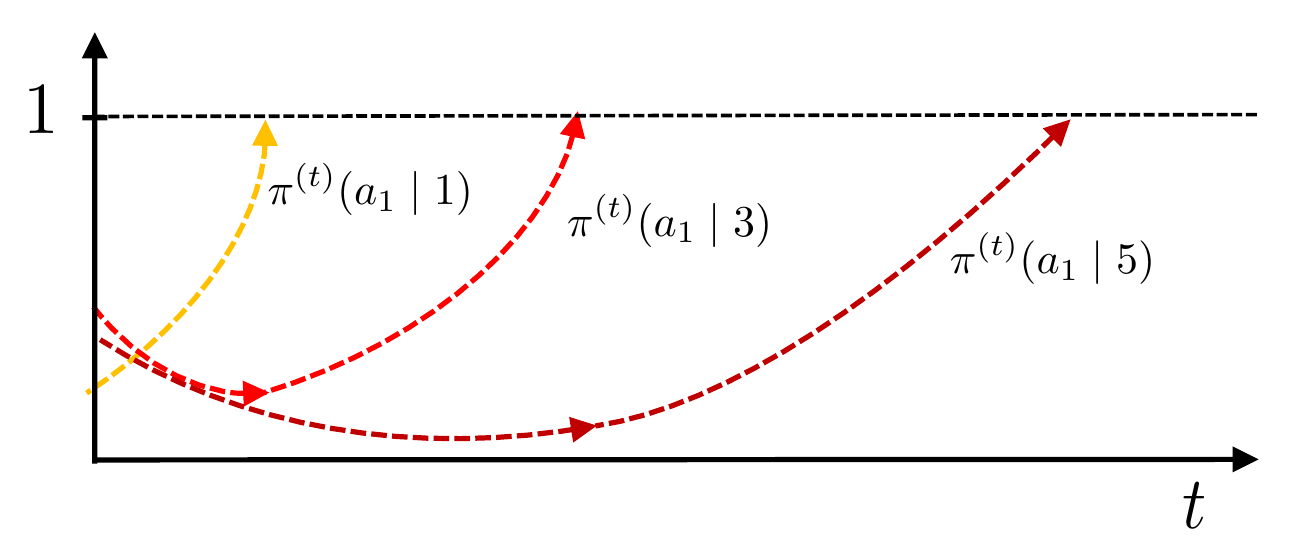}
\caption{An illustration of the dynamics of $\pi^{(t)}(a_1 \mymid s)$ vs.~the iteration count $t$. 
	The yellow line, the middle red line, and the dark red line illustrate the dynamics of $\pi^{(t)}(a_1 \mymid 1)$, 
	$\pi^{(t)}(a_1 \mymid 3)$ and $\pi^{(t)}(a_1 \mymid 5)$, respectively. 
	\vspace{-0.05in}} \label{fig:pi-trend}
\end{figure}

Furthermore, we summarize below the typical dynamics of the iterates $\theta^{(t)}(s,a)$ before they converge, which are helpful for the reader to understand the proof. We start with the key buffer state sets $\cS_1$ and $\cS_2$, which are the easiest to describe. 
\medskip

\begin{tcolorbox}
	\textbf{Dynamics of $\theta^{(t)}(s,a)$ (for key buffer state sets $\cS_1$ and $\cS_2$):} 
\begin{enumerate}
	\item Initialization: $\theta^{(0)}(1,a_0) = \theta^{(0)}(1,a_1)  = 0$ and 
		$\theta^{(0)}(2,a_0) = \theta^{(0)}(2,a_1)  = 0$
	\item All iterations (Lemma~\ref{lem:init-step}): 
	\begin{itemize}
	\item $\theta^{(t)}(1,a_1)$ and $\theta^{(t)}(2,a_1)$ keep increasing and remains the largest
	\item $\theta^{(t)}(1,a_0)$ and $\theta^{(t)}(2,a_0)$ keep decreasing and remains the smallest
	
	\end{itemize}
\end{enumerate}
\end{tcolorbox}

Next, the dynamics of $\theta^{(t)}(s,a)$ for the key primary states $3\leq s \leq H$ are much more complicated, and rely heavily on the status of several prior states $s-1$, $s-2$ and $\overline{s-1}$.  This motivates us to divide the dynamics into several stages based on the crossing times of these prior states,  which are illustrated in Figure~\ref{fig:dynamics-theta} as well.  Here, we remind the reader of the definition of $\tau_s$ in \eqref{eq:key-parameters-defn}. 

\medskip

\begin{tcolorbox}
\textbf{Dynamics of $\theta^{(t)}(s,a)$ (for key primary states $3\leq s \leq H$):} 
\begin{enumerate}
	\item Initialization: $\theta^{(0)}(s,a_0) = \theta^{(0)}(s,a_1) = \theta^{(0)}(s,a_2) = 0$
	\item Initial stage: $t< t_{s-2}(\tau_{s-2})$ (Lemma~\ref{lem:induc-theta-t-s-2})
	\begin{itemize}
	\item $\theta^{(t)}(s,a_1)$ keeps decreasing and remains the smallest
	\item $\theta^{(t)}(s,a_0)$ keeps increasing and remains the largest
	\item $\theta^{(t)}(s,a_2)$ keeps increasing
	\end{itemize}
\item Intermediate stage: $t_{s-2}(\tau_{s-2}) \leq t\le t_{\overline{s-1}}(\tau_{s})$ (Lemma~\ref{lem:inter})
	\begin{itemize}
	\item $\theta^{(t)}(s,a_1)$ keeps decreasing and remains the smallest
	\item $\theta^{(t)}(s,a_2)$ keeps increasing
	\end{itemize}
	\item Final stage (part 1): $t_{\overline{s-1}}(\tau_{s}) < t < \tprime$ (Lemma~\ref{lem:grand-final-victory})
	\begin{itemize}
	\item $\theta^{(t)}(s,a_1)$ increases a little
	\item $\theta^{(t)}(s,a_0)$ keeps decreasing and approaches $\theta^{(t)}(s,a_1)$
	\item $\theta^{(t)}(s,a_2)$ keeps increasing and becomes the largest
	\end{itemize}
	\item Final stage (part 2): $t \ge \tprime$ (Lemma~\ref{lem:grand-final-victory})
	\begin{itemize}
	\item $\theta^{(t)}(s,a_1)$ keeps increasing and becomes the largest
	\item $\theta^{(t)}(s,a_2)$ decreases a lot
	\end{itemize}
\end{enumerate}
\end{tcolorbox}

\begin{figure}
	\begin{center}
	\includegraphics[width=0.6\textwidth]{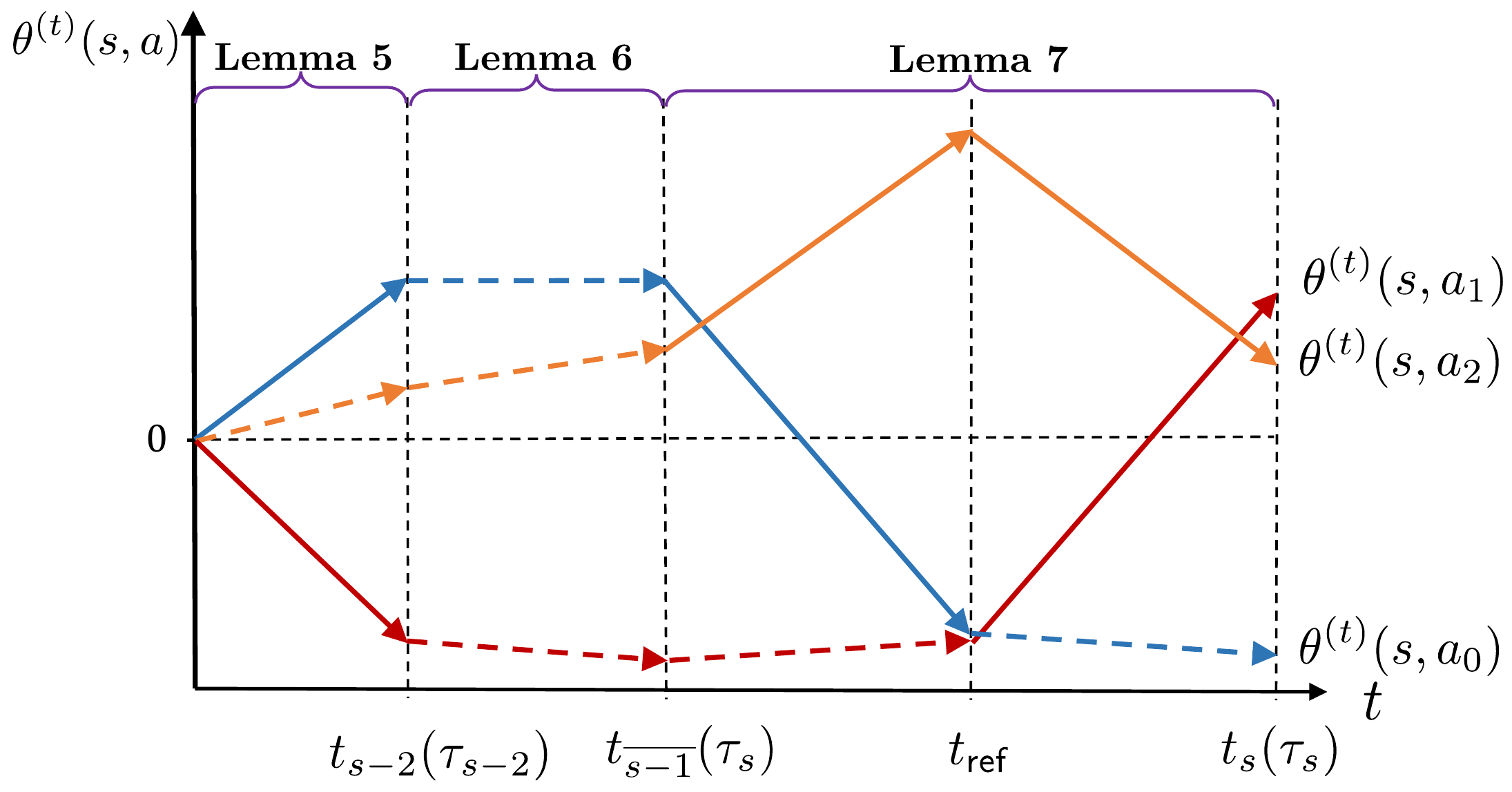}
	\end{center}
	\caption{An illustration of the dynamics of $\{\theta^{(t)}(s,a)\}_{a\in \{a_0,a_1,a_2\}}$ vs.~iteration number $t$. 
	The blue, red and yellow lines represent the dynamics of $\theta^{(t)}(s,a_0)$, $\theta^{(t)}(s,a_1)$ and $\theta^{(t)}(s,a_2)$, respectively.   
	Here, we use solid lines to emphasize the time ranges for which the dynamics of $\theta^{(t)}(s,a)$ play the most crucial roles in our lower bound analysis. \label{fig:dynamics-theta}}
\end{figure}



\subsection{Proof outline}
\label{sec:proof-outline-weak-thm}

We are now in a position to outline the main steps of the proof of Theorem~\ref{thm:unregularized-main} and Theorem~\ref{thm:unregularized}, with details deferred to the appendix. 
In the following, Steps 1-6 are devoted to analyzing the dynamics of softmax PG methods when applied to the constructed MDP~$\mathcal{M}$, which in turn establish Theorem~\ref{thm:unregularized}.  Step 7 describes how these can be easily adapated to prove Theorem~\ref{thm:unregularized-main}, by slightly modifying the MDP construction.

\subsection*{Step 1: bounding the discounted state visitation distributions}

In view of the PG update rule \eqref{eq:PG-update-all}, 
the size of the policy gradient relies heavily on the discounted state visitation distribution $d_{\mu}^{(t)}(s)$. 
In light of this observation, this step aims to quantify the magnitudes of $d_{\mu}^{(t)}(s)$, for which we start 
with several universal lower bounds regardless of the policy in use.
\begin{lemma}
\label{lem:facts-d-pi-s-LB}
For any policy $\pi$,  the following lower bounds hold true: 
\begin{subequations}
\begin{align}
	d_{\mu}^{\pi}(s) & \geq\cm\gamma(1-\gamma)^{2}, &&\text{if }s\in\{3,\cdots,H\}, 
		\label{eq:dmu-t-s-order-range-LB}\\
	d_{\mu}^{\pi}(\overline{s}) & \geq\cm\gamma(1-\gamma)^{2}, &&\text{if }\overline{s}\in\{\overline{1},\cdots,\overline{H}\}, 
		\label{eq:dmu-t-s-bar-order-range-LB}\\
	d_{\mu}^{\pi}(1) & \geq  \frac{\cm\gamma(1-\gamma)^2}{|\mathcal{S}_1|} = \gamma(1-\gamma)\frac{\cm}{\cbone}\cdot\frac{1}{|\mathcal{S}|} , 
		\label{eq:dmu-t-s-12-order-range-LB} \\
	d_{\mu}^{\pi}(2) & \geq  \frac{\cm\gamma(1-\gamma)^2}{|\mathcal{S}_2|} = \gamma(1-\gamma)\frac{\cm}{\cbtwo}\cdot\frac{1}{|\mathcal{S}|} .
		\label{eq:dmu-t-s-12-order-range-LB-S2}
\end{align}
\end{subequations}
\end{lemma}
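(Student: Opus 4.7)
The plan is to lower bound $d_\mu^\pi$ term by term using only the $k=1$ contribution from a well-chosen subset of initial states, namely the booster set that feeds into the target state. The key observation is that every booster state has singleton action space $\{a_1\}$, so the relevant one-step transition probabilities are \emph{policy-independent}, which is what makes the bound hold for every $\pi$.

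First, recall from \eqref{eq:defn-d-rho} that
\begin{equation*}
d_\mu^\pi(s) \;=\; (1-\gamma)\sum_{k=0}^\infty \gamma^k \sum_{s^0\in\mathcal{S}} \mu(s^0)\,\mathbb{P}\big(s^k=s \,\big|\, s^0\big),
\end{equation*}
and that $\mu$ is uniform so $\mu(s^0)=1/|\mathcal{S}|$. Discarding all terms except $k=1$ and restricting the sum over $s^0$ to the appropriate booster subset yields
\begin{equation*}
d_\mu^\pi(s) \;\geq\; (1-\gamma)\,\gamma\cdot\frac{1}{|\mathcal{S}|}\sum_{s'\in\widehat{\mathcal{S}}_{\mathsf{feed}(s)}} \mathbb{P}\big(s^1=s \,\big|\, s^0=s'\big).
\end{equation*}

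Next, I would handle each of the four claims in turn by specifying the feeding booster set and invoking \eqref{eq:merging-states-definition-P}. For $s\in\{3,\dots,H\}$, the set $\widehat{\mathcal{S}}_s$ has size $\cm(1-\gamma)|\mathcal{S}|$ and transitions deterministically to $s$ under the only available action $a_1$; hence the sum above equals $|\widehat{\mathcal{S}}_s|$, giving
\begin{equation*}
d_\mu^\pi(s) \;\geq\; (1-\gamma)\gamma\cdot\frac{\cm(1-\gamma)|\mathcal{S}|}{|\mathcal{S}|} \;=\; \cm\gamma(1-\gamma)^2.
\end{equation*}
The same reasoning with $\widehat{\mathcal{S}}_{\overline{s}}$ establishes \eqref{eq:dmu-t-s-bar-order-range-LB}. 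For a fixed $s_1\in\mathcal{S}_1$, the booster set $\widehat{\mathcal{S}}_1$ transitions uniformly into $\mathcal{S}_1$, so each $s'\in\widehat{\mathcal{S}}_1$ contributes $1/|\mathcal{S}_1|$, yielding
\begin{equation*}
d_\mu^\pi(s_1) \;\geq\; (1-\gamma)\gamma\cdot\frac{|\widehat{\mathcal{S}}_1|}{|\mathcal{S}|}\cdot\frac{1}{|\mathcal{S}_1|} \;=\; \frac{\cm\gamma(1-\gamma)^2}{|\mathcal{S}_1|},
\end{equation*}
and substituting $|\mathcal{S}_1|=\cbone(1-\gamma)|\mathcal{S}|$ produces the second form in \eqref{eq:dmu-t-s-12-order-range-LB}. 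The case of $s_2\in\mathcal{S}_2$ is identical mutatis mutandis.

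I do not anticipate a serious obstacle: the entire argument is a one-step visitation count whose strength comes from the construction of the boosters (their size $\cm(1-\gamma)|\mathcal{S}|$ is precisely what contributes the extra $(1-\gamma)$ factor) and from their singleton action spaces (which make the transitions deterministic in $\pi$). The only minor point to verify carefully is that in \eqref{eq:dmu-t-s-12-order-range-LB} and \eqref{eq:dmu-t-s-12-order-range-LB-S2} the claim is about each individual state in $\mathcal{S}_1,\mathcal{S}_2$ (as the notation $d_\mu^\pi(1),d_\mu^\pi(2)$ from \eqref{eq:convenient-notation-V1-V2-Q1-Q2} indicates), not about the sum over those sets, which is exactly what the uniform transition $1/|\mathcal{S}_1|$ from each booster yields.
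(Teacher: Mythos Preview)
Your proposal is correct and follows essentially the same approach as the paper's proof: keep only the $k=1$ term in the expansion of $d_\mu^\pi$, restrict the initial state to the relevant booster subset, and use the policy-independent (singleton action space) transitions from boosters together with $|\widehat{\mathcal{S}}_s|=\cm(1-\gamma)|\mathcal{S}|$ to conclude. Your observation about the $1/|\mathcal{S}_1|$ factor for individual buffer states matches the paper's treatment as well.
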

As it turns out, the above lower bounds are order-wise tight estimates prior to certain crucial crossing times. 
This is formalized in the following lemma, where we recall the definition of $\tau_s$ in \eqref{eq:key-parameters-defn}. 
\begin{lemma}
\label{lem:facts-d-pi-s}

Under the assumption \eqref{eq:assumptions-constants}, the following results hold:
\begin{subequations}
\begin{align}
	\forall 3 \le s \le H, \; t \leq t_s(\taun_s): \qquad d_{\mu}^{(t)}(s) & \leq 14 \cm (1-\gamma)^{2}, \label{eq:dmu-t-s-order-range} \\
	\forall 2 \le s \le H, \; t \leq t_s(\taun_s): \qquad d_{\mu}^{(t)}(\overline{s}) & \leq 14 \cm (1-\gamma)^{2}, \label{eq:dmu-t-s-order-range-bar}\\ 
	\forall  t \leq t_2(\taun_2): \qquad d_{\mu}^{(t)}(2) & \leq\frac{1-\gamma}{|\cS|}\left(1+\frac{8\cm}{\cbtwo}\right), \label{eq:dmu-t-s-2-order-range} \\
	\forall t \leq t_2(\taun_2) : \qquad d_{\mu}^{(t)}(\overline{1}) &\leq 14 \cm (1-\gamma)^{2}, \label{eq:dmu-t-s-1-order-range-bar} \\
	\forall t \leq \min\{t_1(\taun_1), t_2(\taun_2)\}: \qquad d_{\mu}^{(t)}(1) & \leq\frac{1-\gamma}{|\cS|}\left(1+\frac{17\cm}{\cbone}\right).\label{eq:dmu-t-s-1-order-range}
\end{align}
\end{subequations}
 
%
\end{lemma}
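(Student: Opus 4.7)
My plan is to specialize the flow-balance identity
\begin{equation*}
d_{\mu}^{\pi}(s) \;=\; (1-\gamma)\mu(s) \;+\; \gamma \sum_{s',a}\, d_{\mu}^{\pi}(s')\,\pi(a\mymid s')\,P(s\mymid s',a)
\end{equation*}
to the constructed MDP. Because no state has incoming transitions from outside its associated booster subset and the initial distribution itself, every booster state $s'$ carries mass $d_{\mu}^{\pi}(s')=(1-\gamma)/|\cS|$ exactly. Combined with $\mu\equiv 1/|\cS|$ and the sizes in \eqref{eq:hatS1-hatSH-equal-sizes}, this reduces the flow equations at the key primary and adjoint states to a single linear chain descending in the state index:
\begin{align*}
d_{\mu}^{\pi}(s) &= \tfrac{1-\gamma}{|\cS|} + \cm\gamma(1-\gamma)^{2} + \gamma\,\pi(a_{1}\mymid\overline{s})\,d_{\mu}^{\pi}(\overline{s}),\\
d_{\mu}^{\pi}(\overline{s}) &= \tfrac{1-\gamma}{|\cS|} + \cm\gamma(1-\gamma)^{2} + \gamma\,\pi(a_{1}\mymid s{+}1)\,d_{\mu}^{\pi}(s{+}1) + \gamma p\,\pi(a_{2}\mymid s{+}2)\,d_{\mu}^{\pi}(s{+}2),
\end{align*}
for $s\in\{3,\ldots,H\}$ with the obvious truncations at $s=H-1,H$; the adjoints $\overline{1},\overline{2}$ are handled similarly (noting that $\overline{1}$ is fed only by state~$3$ via $a_{2}$ with weight $p$, whereas $\overline{2}$ is fed by state~$3$ via $a_{1}$ and by state~$4$ via $a_{2}$ with weight $p$). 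For buffer states $s_{1}\in\cS_{1}$ and $s_{2}\in\cS_{2}$, the uniform spreading of the outflow of $\widehat{\cS}_{1}$ and $\overline{1}$ (resp.\ $\widehat{\cS}_{2}$ and $\overline{2}$) over the $|\cS_{1}|=\cbone(1-\gamma)|\cS|$ (resp.\ $|\cS_{2}|=\cbtwo(1-\gamma)|\cS|$) replicas produces exactly the $\tfrac{1-\gamma}{|\cS|}(1+\Theta(\cm/\cbone))$ (resp.\ $\tfrac{1-\gamma}{|\cS|}(1+\Theta(\cm/\cbtwo))$) scaling appearing in \eqref{eq:dmu-t-s-1-order-range} and \eqref{eq:dmu-t-s-2-order-range}.

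\textbf{Closing the recursion.} Under the standing assumption $|\cS|\ge c_{3}(1-\gamma)^{-6}$ built into \eqref{eq:assumptions-constants}, the term $\tfrac{1-\gamma}{|\cS|}$ is negligible compared with $\cm(1-\gamma)^{2}$, so each recursive inequality has the essentially geometric form $x_{s}\le \cm(1-\gamma)^{2}+\gamma\alpha_{s}x_{s'}$ with $\alpha_{s}\in[0,1]$. Since $d_{\mu}^{(t)}(\overline{H})=\tfrac{1-\gamma}{|\cS|}+\cm\gamma(1-\gamma)^{2}$ exactly (no chain inflow at the top), I would perform a backward induction on the state index from $s=H$ downward. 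At each inductive step the accumulated amplification is controlled by the softmax probabilities $\pi^{(t)}(a_{1}\mymid s''),\,\pi^{(t)}(a_{1}\mymid\overline{s''}),\,\pi^{(t)}(a_{2}\mymid s'')$ along the chain. The time restriction $t\le t_{s}(\taun_{s})$ together with the monotonicity \eqref{eq:monotonicity-t-s-tau-s} places us in the regime where $V^{(t)}(s'')<\taun_{s''}$ for every $s''\ge s$; this is precisely the ``initial/intermediate stage'' regime for $\theta^{(t)}(s'',\cdot)$ sketched in Section~\ref{sec:proof-outline-weak-thm}, from which one extracts uniform upper bounds on these softmax probabilities. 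Combined with $\gamma<1$ and the smallness $p=\cp(1-\gamma)$ with $\cp<1/2016$ (which makes the $a_{2}$-branch subleading), the total amplification stays bounded by an absolute constant, yielding the coefficient $14$ in \eqref{eq:dmu-t-s-order-range}--\eqref{eq:dmu-t-s-1-order-range-bar}. The buffer-state bounds \eqref{eq:dmu-t-s-2-order-range} and \eqref{eq:dmu-t-s-1-order-range} then follow from one more application of the flow equation at $s_{1}$ and $s_{2}$ after substituting the just-established bounds on $d_{\mu}^{(t)}(\overline{1})$ and $d_{\mu}^{(t)}(\overline{2})$, with the explicit numerical constants $8$ and $17$ tracked from the parameter ratios in \eqref{eq:assumptions-constants}.

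\textbf{Main obstacle.} The genuine difficulty is the policy-control step. Differentiating the softmax parameterization yields $\theta^{(t+1)}(s'',a)-\theta^{(t)}(s'',a)=\tfrac{\eta}{1-\gamma}\,d_{\mu}^{(t)}(s'')\,\pi^{(t)}(a\mymid s'')\,A^{(t)}(s'',a)$, so the policy bounds needed in the previous paragraph depend on the very visitation bounds we are trying to prove. I would break this circularity via a joint induction on the pair $(t,s'')$: visitation bounds inherited from earlier iterations sign- and size-control the gradient at iteration $t$, producing updated policy bounds at~$t$, which are in turn fed back into the chain recursion for the same iteration. The stepsize restriction $\eta<(1-\gamma)^{2}/5$ guarantees that the per-step perturbation of $\theta^{(t)}$ is small enough to preserve the induction hypothesis, and the parameter choices in \eqref{eq:assumptions-constants} are calibrated precisely so that the numerical constants in the conclusion come out as stated. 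The base cases $d_{\mu}^{(t)}(\overline{H})$ and the adjoints $\overline{1},\overline{2}$ (whose only higher-state inflow is weighted by the small factor $p$) serve as convenient starting points for the induction.
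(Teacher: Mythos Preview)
Your flow-equation formulation is correct and equivalent to the paper's direct expansion $d_\mu^{(t)}(s)=(1-\gamma)\sum_k\gamma^k\mathbb{P}(s^k=s)$: both yield the same backward recursion down the chain, and your computation of the booster contribution $\cm\gamma(1-\gamma)^{2}$ and of which states feed $\overline{1},\overline{2}$ is accurate.

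The substantive difference is in how the policy bounds are obtained. You correctly flag a potential circularity and propose a joint $(t,s'')$ induction. The paper instead decouples the two steps entirely. It first proves a self-contained order-preservation lemma (Lemma~\ref{lem:monotone-relation-pi-q}): if $Q^{(t)}(s,a_j)\ge Q^{(t)}(s,a_i)$ and $\pi^{(t-1)}(a_i\mymid s)\le\pi^{(t-1)}(a_j\mymid s)$, then $\pi^{(t)}(a_i\mymid s)\le\pi^{(t)}(a_j\mymid s)$. The crucial point is that this argument uses only the \emph{sign} of the advantage together with the trivial bound $d_\mu^{(t-1)}(s)\le 1$, $|A^{(t-1)}|\le 2$ and $\eta\le(1-\gamma)/2$; it never touches the $O((1-\gamma)^{2})$ bound being proved. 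The Q-ordering hypothesis is verified directly from the Bellman identities in Lemma~\ref{lem:basic-properties-MDP-pi} under the crossing-time condition $V^{(t)}(s'')<\tau_{s''}$. Inducting on $t$ alone then gives $\pi^{(t)}(a_1\mymid\overline{s''})\le 1/2$, $\pi^{(t)}(a_1\mymid s''{+}1)\le 1/3$, $\pi^{(t)}(a_2\mymid s''{+}2)\le 1/2$ (Lemma~\ref{lem:pi-t-UB-monotone}) \emph{before} any visitation analysis, after which the chain recursion contracts with combined factor $\tfrac{1}{3}+\tfrac{1}{2}=\tfrac{5}{6}$ and the geometric sum produces the constant $14$.

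Your joint induction would probably close, but it interleaves two arguments that can be cleanly separated. Note also that your appeal to the ``initial/intermediate stage'' dynamics of Section~\ref{sec:proof-outline-weak-thm} is precisely what creates the circularity: Lemmas~\ref{lem:induc-theta-t-s-2} and~\ref{lem:inter} sit logically downstream of the present lemma and already consume its conclusion, so they cannot be invoked here. The simpler observation you are missing is that only the \emph{ordering} of the $\theta^{(t)}(s'',\cdot)$ is needed, not their quantitative sizes, and that ordering is preserved by the sign structure of the advantage alone. (A minor aside: the large-$|\cS|$ assumption you invoke is not part of \eqref{eq:assumptions-constants}; the paper only needs $1/|\cS|\le\cm(1-\gamma)$ to absorb the $\mu$-term into the booster term.)
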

\begin{remark}
	As will be demonstrated in Lemma~\ref{lem:init-step}, one has $t_1(\taun_1)\leq  t_2(\taun_2)$ for properly chosen constants $\cbone, \cbtwo$ and $\cm$. 
	Therefore, we shall bear in mind that the properties \eqref{eq:dmu-t-s-1-order-range-bar} and \eqref{eq:dmu-t-s-1-order-range} hold for any $t \leq t_1(\taun_1)$. 
\end{remark}
The proofs of these two lemmas are deferred to Appendix~\ref{sec:estimate-state-visitation}. The sets of booster states, whose cardinality is controlled by $\cm$, play an important role in sandwiching the initial distribution of the states in $\cS_1$, $\cS_2$,  $\sprimary$, and $\sadj$. 
Combining these bounds, we uncover the following properties happening before $V^{(t)}(s)$ exceeds $\taun_s$:
\begin{itemize}
	\item For any key primary state $s\in \{3,\cdots,H\} $ or any adjoint state $s\in \{\overline{1},\cdots, \overline{H}\} $, one has
		\[
			d_{\mu}^{(t)}(s) \asymp (1-\gamma)^2. 
		\]
	\item For any state $s$ contained in the buffer state subsets $\cS_1$ and $\cS_2$, we have
		\[
			d_{\mu}^{(t)}(1) \asymp \frac{(1-\gamma)^2}{|\cS_1|} 
			\qquad \text{and} \qquad
			d_{\mu}^{(t)}(2) \asymp \frac{(1-\gamma)^2}{|\cS_2|},
		\]
		where we recall the size of $\cS_1$ and $\cS_2$ in \eqref{eq:S1-S2-equal-sizes}. 
		In other words, the discounted state visitation probability of any buffer state is substantially smaller than 
		that of any key primary state $3,\cdots,H$ or adjoint state.  
		In principle,  the size of each buffer state subset plays a crucial role in determining the associated $d_{\mu}^{(t)}(s)$ 
		--- the larger the size of the buffer state subset, the smaller the resulting state visitation probability.

	\item Further, the aggregate discounted state visitation probability of the above states is no more than the order of
		\[
			(1-\gamma)^2  \cdot H \asymp 1-\gamma = o(1),
		\]
		which is vanishingly small. 
		In fact, state 0 and the booster states account for the dominant fraction of state visitations 
		at the initial stage of the algorithm. 
\end{itemize}

\subsection*{Step 2: characterizing the crossing times for the first few states ($\cS_1$, $\cS_2$, and $\overline{1}$)}

Armed with the bounds on $d_{\mu}^{(t)}$ developed in Step 1, we can move forward to study the crossing times for the key states. 
In this step, we pay attention to the crossing times for the buffer states $\cS_1, \cS_2$ as well as the first adjoint state $\overline{1}$,
which forms a crucial starting point towards understanding the behavior of the subsequent states. 
Specifically, the following lemma develops lower and upper bounds regarding these quantities,  
whose proof can be found in Appendix~\ref{sec:proof-lem:init-step}. 
\begin{lemma}
	\label{lem:init-step}
	Suppose that \eqref{eq:assumptions-constants} holds. 
	If $|\cS| \geq 1/(1-\gamma)^2$, then the crossing times satisfy
	\begin{subequations}
	\begin{align}
		\label{eqn:t1-t2-scaling} 
		\frac{\log3}{1+17\cm/\cbone} \, \frac{|\cS|}{\eta}  \,\leq\,  t_{1}(\taun_1)  \,\leq\,  t_{1}\big(\gamma^{2}-1/4\big)
		\,\leq\,   t_{2}(\taun_2) \,\leq\,  t_{2}\big(\gamma^{4}-1/4\big)  \,\leq\,  \frac{15\cbtwo}{\cm} \, \frac{|\cS|}{\eta}.
	\end{align}
	In addition, if $|\cS| \geq \frac{320\gamma^3}{\cm(1-\gamma)^{2}}$, then one has
	\begin{align}
		\label{eqn:t1-t2-bar-scaling}
		t_{2}(\taun_2) > t_{\overline{1}}\big(\gamma^3-1/4\big).
	\end{align}
	\end{subequations}
\end{lemma}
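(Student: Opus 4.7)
The plan is to exploit that every state in $\cS_1$ and $\cS_2$ --- and, after suitable conditioning, the adjoint state $\overline{1}$ --- transitions immediately to the absorbing state $0$, so the softmax PG dynamics at each such state collapse to a one-dimensional recursion in the ``log-odds'' $\delta^{(t)}_s := \theta^{(t)}(s,a_1)-\theta^{(t)}(s,a_0)$. Concretely, for any $s_1\in\cS_1$, \eqref{eq:P-r-state-S1S2} gives $V^{(t)}(1)=\gamma^{2}\bigl(2\pi^{(t)}(a_1\mymid 1)-1\bigr)=\gamma^{2}\tanh(\delta^{(t)}_1/2)$, and the PG formula \eqref{eq:policy-grad-softmax-expression} simplifies (after subtracting the two components) to
\begin{align*}
\delta^{(t+1)}_1-\delta^{(t)}_1 = \frac{4\eta\gamma^{2}}{1-\gamma}\,d^{(t)}_{\mu}(1)\,\pi^{(t)}(a_0\mymid 1)\,\pi^{(t)}(a_1\mymid 1),
\end{align*}
with an analogous recursion for $\delta^{(t)}_2$ (swap $\gamma^{2}\leftrightarrow\gamma^{4}$ and $d^{(t)}_{\mu}(1)\leftrightarrow d^{(t)}_{\mu}(2)$). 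Translating the four thresholds in the lemma --- using $\gamma>0.96$ --- yields explicit targets for $\delta^{(t)}_1,\delta^{(t)}_2$ essentially equal to $\log 3$ (for $V\geq\tau_s$) and $\log 7$ (for $V\geq\gamma^{2k}-\tfrac14$).

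For the lower bound on $t_1(\tau_1)$, I would feed the upper estimate \eqref{eq:dmu-t-s-1-order-range} on $d^{(t)}_{\mu}(1)$ into the scalar recursion together with $\pi(a_0)\pi(a_1)\leq\tfrac14$; this caps the per-iteration growth of $\delta^{(t)}_1$ at $\eta\gamma^{2}(1+17c_m/c_{b,1})/|\cS|$, and summing over $t$ together with the necessary condition $\delta^{(t)}_1\geq\log 3$ yields the first inequality of \eqref{eqn:t1-t2-scaling} (the extra $\gamma^{2}\leq 1$ is absorbed into the constant). The ``bookend'' inequalities $t_1(\tau_1)\leq t_1(\gamma^{2}-\tfrac14)$ and $t_2(\tau_2)\leq t_2(\gamma^{4}-\tfrac14)$ are free from monotonicity of crossing times in the threshold.

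For the middle comparison $t_1(\gamma^{2}-\tfrac14)\leq t_2(\tau_2)$ I would couple the two scalar recursions: under \eqref{eq:assumptions-constants}, the lower bound \eqref{eq:dmu-t-s-12-order-range-LB} on $d^{(t)}_{\mu}(1)$ exceeds the upper bound \eqref{eq:dmu-t-s-2-order-range} on $d^{(t)}_{\mu}(2)$ by a factor of order $c_m/c_{b,1}\geq 79776$, while the target thresholds on $\delta^{(t)}_1,\delta^{(t)}_2$ differ only by a bounded ratio, so $\delta^{(t)}_1$ must reach $\log 7$ strictly before $\delta^{(t)}_2$ reaches $\log 3$. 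For the right-most inequality $t_2(\gamma^{4}-\tfrac14)\leq 15(c_{b,2}/c_m)|\cS|/\eta$, I would use the lower bound \eqref{eq:dmu-t-s-12-order-range-LB-S2}, partition $[0,\log 7]$ into finitely many sub-intervals on each of which $\pi(a_0)\pi(a_1)$ is bounded below by a numerical constant (at least $7/64$ at the worst endpoint), and sum the per-segment iteration counts; the arithmetic yields the claimed constant $15$ thanks to the range $c_{b,2}/c_m\in[8,15]$.

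The last claim $t_{\overline{1}}(\gamma^{3}-\tfrac14)<t_2(\tau_2)$ is expected to be the most delicate, because the dynamics at $\overline{1}$ couple to state $1$ through $Q^{(t)}(\overline{1},a_1)=\gamma V^{(t)}(1)$: while $V^{(t)}(1)$ is small, the constant $Q^{(t)}(\overline{1},a_0)=0.5\gamma^{5/3}$ dominates and $\delta^{(t)}_{\overline{1}}$ actually drifts in the wrong direction. I would handle this in two stages. Stage~(i): use the upper bound \eqref{eq:dmu-t-s-1-order-range-bar} on $d^{(t)}_{\mu}(\overline{1})$ together with the extremely small value of $t_1(\tau_1)$ (of order $c_{b,1}/c_m\cdot |\cS|/\eta$) to show that the wrong-direction drift accumulated while $V^{(t)}(1)<\tau_1$ is bounded by a vanishingly small constant, so $\delta^{(t)}_{\overline{1}}$ is essentially still at $0$ by time $t_1(\tau_1)$. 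Stage~(ii): once $V^{(t)}(1)$ passes $\gamma^{2}-\tfrac14$ --- which occurs before $t_2(\tau_2)$ by the previous paragraph --- the Q-gap $\gamma V^{(t)}(1)-0.5\gamma^{5/3}$ is bounded below by a positive constant (a short arithmetic check using \eqref{eq:assumptions-constants} gives something on the order of $0.17$), and the lower bound \eqref{eq:dmu-t-s-bar-order-range-LB} forces a per-step growth of order $\eta c_m(1-\gamma)\,\pi(a_0\mymid\overline{1})\pi(a_1\mymid\overline{1})$; integrating this over the sigmoid profile of $\delta^{(t)}_{\overline{1}}$ bounds the catch-up time by $O\bigl(1/[\eta c_m(1-\gamma)]\bigr)$, which, under $|\cS|\geq 320\gamma^{3}/[c_m(1-\gamma)^{2}]$, is strictly less than the lower bound $\gtrsim|\cS|/\eta$ on $t_2(\tau_2)$ obtained by combining \eqref{eq:dmu-t-s-2-order-range} with the scalar recursion for $\delta^{(t)}_2$. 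The subtle piece is verifying stage~(ii) rigorously, namely that no residual feedback between state $1$'s ongoing dynamics and state $\overline{1}$'s ascent stalls the catch-up.
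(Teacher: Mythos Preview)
Your treatment of \eqref{eqn:t1-t2-scaling} is essentially the paper's: you track $\theta^{(t)}(s,a_1)$ (your $\delta^{(t)}_s/2$) for $s\in\{1,2\}$, plug in the $d_\mu^{(t)}$ bounds from Lemmas~\ref{lem:facts-d-pi-s-LB}--\ref{lem:facts-d-pi-s}, and observe that the thresholds $\tau_s$ and $\gamma^{2s}-1/4$ translate to $\delta\approx\log 3$ and $\log 7$. Two minor remarks. First, the partition of $[0,\log 7]$ is unnecessary: for all $t<t_2(\gamma^4-1/4)$ one has $1/8\le\pi^{(t)}(a_0\mymid 2)\le 7/8$, so $\pi^{(t)}(a_0\mymid 2)\pi^{(t)}(a_1\mymid 2)\ge 7/64$ uniformly, and a single linear bound gives the constant $15$. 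Second, the upper bound \eqref{eq:dmu-t-s-1-order-range} on $d_\mu^{(t)}(1)$ is only stated for $t\le\min\{t_1(\tau_1),t_2(\tau_2)\}$; you must therefore establish $t_1(\tau_1)\le t_2(\tau_2)$ \emph{before} invoking it for the lower bound on $t_1(\tau_1)$. The paper handles this by first deriving the upper bound on $t_1(\gamma^2-1/4)$ (which uses only the universal lower bound on $d_\mu^{(t)}(1)$) and the lower bound on $t_2(\tau_2)$, and then comparing.

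Your Stage~(i) for \eqref{eqn:t1-t2-bar-scaling} contains a genuine gap. The cumulative wrong-direction drift in $\delta_{\overline{1}}^{(t)}$ over $[0,t_1(\tau_1))$ is bounded by
\[
\sum_{t<t_1(\tau_1)}\frac{2\eta}{1-\gamma}\,d_\mu^{(t)}(\overline{1})\cdot\tfrac14\cdot\tfrac12
\;\le\; \tfrac{7}{2}\eta\,\cm(1-\gamma)\cdot t_1(\tau_1)
\;\le\; \tfrac{105}{2}\,\cbone(1-\gamma)\,|\cS|,
\]
which is \emph{not} vanishingly small: the lemma places no upper bound on $|\cS|$, and under the main theorem's regime $|\cS|\gtrsim(1-\gamma)^{-6}$ this quantity diverges. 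Hence $\pi^{(t_1(\tau_1))}(a_1\mymid\overline{1})$ can be far from $1/2$, and your Stage~(ii) catch-up estimate $O\bigl(1/[\eta\cm(1-\gamma)]\bigr)$---which tacitly assumes the sigmoid starts near its midpoint---does not follow.

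The paper's fix is to track $\pi^{(t)}(a_1\mymid\overline{1})$ rather than $\delta_{\overline{1}}^{(t)}$. Because the per-step decrement is at most proportional to $[\pi^{(t)}(a_1\mymid\overline{1})]^2$ (use $\pi(a_0)\pi(a_1)\le\pi(a_1)$ and Lemma~\ref{lem:init-pi-t-diff}), one gets the $1/t$-type lower bound
\[
\pi^{(t)}(a_1\mymid\overline{1})\;\ge\;\frac{1}{28\eta\cm(1-\gamma)t+2}\qquad\text{for all }t\le t_1(\tau_1),
\]
which persists through Stage~2 by monotonicity. The catch-up time for $\pi^{(t)}(a_1\mymid\overline{1})$ to climb back to $1/2$ is then \emph{proportional to $t_1(\gamma^2-1/4)$} (the paper obtains $t'\le 1121\,t_1(\gamma^2-1/4)/\gamma$ via Lemma~\ref{lem:opt-lemma}(iii)), not $O(1/[\eta\cm(1-\gamma)])$. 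After that a second $1/t$-decay argument on $\pi^{(t)}(a_0\mymid\overline{1})$ gives the additional $O\bigl(1/[\eta\cm(1-\gamma)^2]\bigr)$ term. The comparison with $t_2(\tau_2)$ then succeeds because $t_1(\gamma^2-1/4)\le\frac{15\cbone}{\cm}\frac{|\cS|}{\eta}$ with $\cbone/\cm\le 1/79776$, so the $O(t_1)$ piece is still a tiny fraction of $t_2(\tau_2)\gtrsim|\cS|/\eta$, while the second piece is handled precisely by the hypothesis $|\cS|\ge 320\gamma^3/[\cm(1-\gamma)^2]$.
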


For properly chosen constants $\cbone$, $\cbtwo$ and $\cm$, Lemma~\ref{lem:init-step} delivers the following important messages: 
\begin{itemize}
	\item The cross times of these first few states are already fairly large; for instance,
	\begin{equation}
		t_1(\tau_1) \,\asymp\, t_2(\tau_2)  \,\asymp\, \frac{ |\cS| }{ \eta }, 
	\end{equation}
	which scale linearly with the state space dimension. 
	As we shall see momentarily, while $t_1(\tau_1)$ and $t_2(\tau_2)$ remain polynomially large, 
	these play a pivotal role in ensuring rapid explosion of the crossing times of the states that follow (namely, the states $\{3,\cdots,H\}$).

	\item We can guarantee a strict ordering such that  the crossing time of state 2 is at least as large as that of both state 1 and state $\overline{1}$. This property is helpful as well for subsequent analysis.

\end{itemize}


\subsection*{Step 3: understanding the dynamics of $\theta^{(t)}(s, a)$ before $t_{s-2}(\taun_{s-2})$ }

With the above characterization of the crossing times for the first few states, 
we are ready to investigate the dynamics of $\theta^{(t)}(s, a)$ ($3 \le s \le H$) at the initial stage, 
that is, the duration prior to the threshold $t_{s-2}(\taun_{s-2})$. 
Our finding for this stage is summarized in the following lemma, with the proof deferred to Appendix \ref{sec:analysis-parameter-crossing}.

\begin{lemma}
\label{lem:induc-theta-t-s-2}
Suppose that \eqref{eq:assumptions-constants} holds. 
For any $3 \le s \leq H$ and any $0 \le t \le t_{s-2}(\taun_{s-2})$, one has
\begin{align}
	& \theta^{(t)}(s, a_1) \leq -\frac{1}{2}\log\Big(1+\frac{\cm\gamma}{35} \eta(1-\gamma)^2t \Big)  \label{eqn:theta-t-relation-a1} \\
	& \quad  \text{and} \qquad \theta^{(t)}(s, a_0) \geq \theta^{(t)}(s, a_2) \geq 0. \label{eqn:theta-a2-positive}
\end{align}
%
\end{lemma}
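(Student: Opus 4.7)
My plan is to proceed by strong induction on $t$, simultaneously maintaining the logarithmic upper bound on $\theta^{(t)}(s,a_1)$ and the two-sided ordering $\theta^{(t)}(s,a_0) \geq \theta^{(t)}(s,a_2) \geq 0$. The base case $t=0$ is immediate since all parameters vanish at initialization and $-\tfrac{1}{2}\log 1 = 0$.

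For the inductive step, I would first collect structural facts valid throughout $t \leq t_{s-2}(\tau_{s-2})$. By the monotonicity of crossing times---specifically $t_{s-2}(\tau_{s-2}) \leq t_{\overline{s-1}}(\gamma\tau_{s-1})$ and $t_{\overline{s-2}}(\gamma\tau_{s-2}) = t_{s-2}(\tau_{s-2})$---one obtains $V^{(t)}(\overline{s-1}) < \gamma\tau_{s-1}$ and $V^{(t)}(\overline{s-2}) < \gamma\tau_{s-2}$. Plugging into the reward/transition structure in \eqref{eq:P-r-primary-3-H} and invoking the elementary identity $r_s - \gamma^2\tau_{s-1} = 0.5\gamma^{2s/3+5/6}(1-\gamma^{1/2}) > 0$ yields the strict $Q$-ordering $Q^{(t)}(s,a_0) > Q^{(t)}(s,a_2) > Q^{(t)}(s,a_1)$, and consequently $A^{(t)}(s,a_0) \geq 0$ and $A^{(t)}(s,a_1) \leq 0$. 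To preserve $\theta^{(t+1)}(s,a_0) \geq \theta^{(t+1)}(s,a_2)$, I would combine the inductive assumption $\pi^{(t)}(a_0 \mymid s) \geq \pi^{(t)}(a_2 \mymid s)$ with $Q^{(t)}(s,a_0) \geq Q^{(t)}(s,a_2)$ and case-split on the sign of $A^{(t)}(s,a_2)$ to deduce $\pi^{(t)}(a_0 \mymid s) A^{(t)}(s,a_0) \geq \pi^{(t)}(a_2 \mymid s) A^{(t)}(s,a_2)$; the softmax gradient formula \eqref{eq:policy-grad-softmax-expression} then transfers this ordering to the parameters. For $\theta^{(t+1)}(s,a_2) \geq 0$, I would invoke the parameter-conservation identity $\sum_a \theta^{(t)}(s,a) = 0$ (a consequence of $\sum_a \pi^{(t)}(a\mymid s) A^{(t)}(s,a) = 0$ applied to the gradient update) and establish $A^{(t)}(s,a_2) \geq 0$ by exploiting that $Q^{(t)}(s,a_0) - Q^{(t)}(s,a_2) = \gamma p(\gamma\tau_{s-2} - V^{(t)}(\overline{s-2})) = O(c_p(1-\gamma))$ is made vanishingly smaller than the $\Omega(1-\gamma)$-sized gap $Q^{(t)}(s,a_2) - Q^{(t)}(s,a_1)$ by the tiny choice of $c_p$ in \eqref{eq:assumptions-constants}.

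For the logarithmic bound on $\theta^{(t)}(s,a_1)$, I would work with the potential $w_t := e^{2\theta^{(t)}(s,a_1)}(1+\alpha t)$ where $\alpha := c_m\gamma\eta(1-\gamma)^2/35$, and prove $w_{t+1} \leq w_t$ inductively. After elementary manipulation, this reduces to verifying the per-step decrease
\begin{align*}
\theta^{(t)}(s,a_1) - \theta^{(t+1)}(s,a_1) \;\geq\; \frac{\alpha}{2\bigl(1+\alpha(t+1)\bigr)}.
\end{align*}
Combining \eqref{eq:policy-grad-softmax-expression} with the lower bound $d_\mu^{(t)}(s) \geq c_m\gamma(1-\gamma)^2$ from Lemma~\ref{lem:facts-d-pi-s-LB}, the lower bound on $|A^{(t)}(s,a_1)|$ derived above (of order $1-\gamma$), and a lower bound on $\pi^{(t)}(a_1 \mymid s)$ obtained by bounding the softmax partition function via the inductive hypothesis $\theta^{(t)}(s,a_0) \geq \theta^{(t)}(s,a_2) \geq 0$, yields the required per-step decrease and closes the induction.

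The main technical obstacle is ensuring $\theta^{(t)}(s,a_2) \geq 0$ persists for every $t \in [0, t_{s-2}(\tau_{s-2})]$. Maintaining $A^{(t)}(s,a_2) \geq 0$ demands a delicate balance between the shrinking Q-gap $Q^{(t)}(s,a_0)-Q^{(t)}(s,a_2)$ and the co-evolution of $\pi^{(t)}(a_0 \mymid s)$ and $\pi^{(t)}(a_1 \mymid s)$ through the softmax normalization; a priori, as $\pi^{(t)}(a_1\mymid s)$ decays and $\pi^{(t)}(a_0\mymid s)$ grows, the ratio $\pi^{(t)}(a_0\mymid s)/\pi^{(t)}(a_1\mymid s)$ could threaten the inequality. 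The extreme smallness of $c_p$ mandated by \eqref{eq:assumptions-constants} is essential here, as it provides ample slack in the required ratio condition and is ultimately what makes the inductive construction go through.
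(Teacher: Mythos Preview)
Your overall inductive structure is right, and the ordering $\theta^{(t)}(s,a_0)\ge\theta^{(t)}(s,a_2)$ as well as the upper bound on $\theta^{(t)}(s,a_1)$ can indeed be handled roughly as you describe. However, the argument you give for $A^{(t)}(s,a_2)\ge 0$ (and hence for $\theta^{(t)}(s,a_2)\ge 0$) has a genuine gap.

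You propose to compare the \emph{static} bound $Q^{(t)}(s,a_0)-Q^{(t)}(s,a_2)=\gamma p\bigl(\gamma\tau_{s-2}-V^{(t)}(\overline{s-2})\bigr)=O(\cp(1-\gamma))$ against the gap $Q^{(t)}(s,a_2)-Q^{(t)}(s,a_1)=\Omega(1-\gamma)$, and to let the smallness of $\cp$ absorb the difference. But the quantity you actually need to control is
\[
\pi^{(t)}(a_0\mymid s)\bigl(Q^{(t)}(s,a_0)-Q^{(t)}(s,a_2)\bigr)\;\le\;\pi^{(t)}(a_1\mymid s)\bigl(Q^{(t)}(s,a_2)-Q^{(t)}(s,a_1)\bigr),
\]
and here the ratio $\pi^{(t)}(a_0\mymid s)/\pi^{(t)}(a_1\mymid s)$ grows \emph{without bound} in $t$ (precisely because of the very decay of $\theta^{(t)}(s,a_1)$ you are proving). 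Since $t$ ranges up to $t_{s-2}(\tau_{s-2})$, which is enormous, no fixed choice of $\cp>0$ can compensate. What is actually needed --- and what the paper establishes as a separate sub-lemma --- is that $Q^{(t)}(s,a_0)-Q^{(t)}(s,a_2)$ itself decays like $O\bigl(p/((1-\gamma)t)\bigr)$, at a rate matching the decay of $\pi^{(t)}(a_1\mymid s)$. This decay comes from the dynamics at the \emph{different} state $\overline{s-2}$: one must show $\gamma\tau_{s-2}-V^{(t)}(\overline{s-2})\le\bigl(\tfrac{\cm\gamma}{2}\eta(1-\gamma)t+2\bigr)^{-1}$, which requires tracking $\pi^{(t)}(a_1\mymid\overline{s-2})$ and runs a second recursion of the same $x_{t+1}\le x_t-cx_t^2$ type. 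Only \emph{after} both $1/t$ rates are in hand does the smallness of $\cp$ finish the comparison of constants. Your outline does not contain this cross-state analysis, and without it the induction for $\theta^{(t)}(s,a_2)\ge 0$ cannot close.

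A secondary remark: the potential $w_t=e^{2\theta^{(t)}(s,a_1)}(1+\alpha t)$ need not be monotone as you assert, because the per-step decrease $\theta^{(t)}(s,a_1)-\theta^{(t+1)}(s,a_1)$ is only bounded below by a multiple of $e^{2\theta^{(t)}(s,a_1)}$, not of $1/(1+\alpha t)$. The fix is simple: derive the recursion $x_{t+1}\le x_t-c'x_t^2$ for $x_t:=e^{2\theta^{(t)}(s,a_1)}$ directly (this is essentially what the paper does via $\widehat\pi^{(t)}(a_1\mymid s)$) and apply the standard $1/t$ bound for such recursions. But note that this same recursion-based viewpoint also makes clear that you will need a matching \emph{lower} bound on $\pi^{(t)}(a_1\mymid s)$ of the form $\Omega(1/((1-\gamma)t))$ to feed into the $A^{(t)}(s,a_2)\ge 0$ comparison above --- another ingredient the paper carries along in its induction that your hypothesis set does not include.
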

Lemma~\ref{lem:induc-theta-t-s-2} makes clear the behavior of $\theta^{(t)}(s, a)$ during this initial stage:   
\begin{itemize}

	\item The iterate $\theta^{(t)}(s, a_1)$ associated with the optimal action $a_1$ keeps dropping at a rate of $\log\big(O(\frac{1}{\sqrt{t}})\big)$, and remains the smallest compared to the ones with other actions (since $\theta^{(t)}(s, a_1) \leq 0 \leq \theta^{(t)}(s, a_2) \leq \theta^{(t)}(s, a_0) $).

	\item The other two iterates $\theta^{(t)}(s, a_0)$ and $\theta^{(t)}(s, a_2)$ stay non-negative throughout this stage, with $a_0$ being perceived as more favorable than the other two actions. 

	\item In fact, a closer inspection of the proof in Appendix \ref{sec:analysis-parameter-crossing} reveals that  $\theta^{(t)}(s, a_2)$ remains increasing --- even though at a rate slower than that of $\theta^{(t)}(s, a_0)$ --- throughout this stage (see \eqref{induc-intermediate-qa2-large} and the gradient expression \eqref{eq:policy-grad-softmax-expression}).

\end{itemize}
\noindent In particular, around the threshold  $t_{s-2}(\taun_{s-2})$, the iterate $\theta^{(t)}(s, a_1)$ becomes as small as
\[
	\exp \Big(\theta^{(t)}(s, a_1) \Big) \leq O\Bigg( \frac{1}{ \sqrt{ \eta(1-\gamma)^2 t_{s-2}(\taun_{s-2}) } } \Bigg). 
\]
In fact, an inspection of the proof of this lemma reveals that
\[
	\pi^{(t)}(a_1 \mymid s) \leq O\Bigg( \frac{1}{  \eta(1-\gamma)^2 t_{s-2}(\taun_{s-2}) }  \Bigg) \qquad \text{when }t= t_{s-2}(\taun_{s-2}). 
\]
This means that  $\pi^{(t)}(a_1 \mymid s)$ becomes smaller for a larger $t_{s-2}(\taun_{s-2})$, making it more difficult to return/converge to 1 afterward.

\subsection*{Step 4: understanding the dynamics of $\theta^{(t)}(s, a)$ between $t_{s-2}(\taun_{s-2})$ and $t_{\overline{s-1}}(\tau_{s})$}

Next, we investigate, for any $3\leq s\leq H$, the behavior of the iterates during an ``intermediate'' stage, 
namely, the duration when the iteration count $t$ is between $t_{s-2}(\taun_{s-2})$ and $t_{\overline{s-1}}(\tau_{s})$. 
This is summarized in the following lemma, whose proof can be found in Appendix~\ref{sec:analysis-lem:inter}. 
\begin{lemma}
\label{lem:inter}
Consider any $3 \le s \le H$. 
Assume that \eqref{eq:assumptions-constants} holds. 
Suppose that
\begin{subequations}
\label{eqn:assmp-all-intermediate}
\begin{align}
	t_{s-1}(\taun_{s-1}) &> t_{\overline{s-2}}(\tau_{s-1}) + \frac{2444s}{\cm\gamma\eta(1-\gamma)^2}, \label{eqn:assmp-all-intermediate-s} \\
	t_{3}(\taun_3) &> t_2(\gamma^4-1/4).
\end{align}
\end{subequations}
Then one has
\begin{align}\label{eqn:phantom}
	\theta^{(t_{\overline{s-1}}(\tau_{s}))}(s, a_1) &\le \theta^{(t_{s-2}(\taun_{s-2}))}(s, a_1) 	\qquad \text{and} \qquad   \theta^{(t_{\overline{s-1}}(\tau_{s}))}(s, a_2)  \ge 0.    
\end{align} 
In particular, when $s = 3$, the results in \eqref{eqn:phantom} hold true without requiring the assumption~\eqref{eqn:assmp-all-intermediate}.
\end{lemma}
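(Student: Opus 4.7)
My plan is to analyze the sign of the advantage $A^{(t)}(s,a)$ for $a \in \{a_1, a_2\}$ throughout the interval $I \coloneqq [t_{s-2}(\tau_{s-2}),\, t_{\overline{s-1}}(\tau_s)]$ and, via the closed-form softmax-PG gradient \eqref{eq:policy-grad-softmax-expression}, to show $\theta^{(t)}(s,a_1)$ is non-increasing and $\theta^{(t)}(s,a_2)$ is non-decreasing on $I$. The first of these yields $\theta^{(t_{\overline{s-1}}(\tau_s))}(s,a_1) \le \theta^{(t_{s-2}(\tau_{s-2}))}(s,a_1)$ immediately, and the second combines with the initial condition $\theta^{(t_{s-2}(\tau_{s-2}))}(s,a_2) \ge 0$ from Lemma~\ref{lem:induc-theta-t-s-2} to give $\theta^{(t_{\overline{s-1}}(\tau_s))}(s,a_2) \ge 0$.

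\textbf{Advantage analysis.} From the transitions \eqref{eq:P-r-primary-3-H},
\[
	Q^{(t)}(s,a_0) = r_s + \gamma^2 p \tau_{s-2},\qquad Q^{(t)}(s,a_1) = \gamma V^{(t)}(\overline{s-1}),\qquad Q^{(t)}(s,a_2) = r_s + \gamma p V^{(t)}(\overline{s-2}).
\]
On $I$ the defining property of $t_{\overline{s-1}}(\tau_s)$ gives $Q^{(t)}(s,a_1) < \gamma\tau_s$. The calibration $r_s = 0.5\gamma^{2s/3+5/6}$ versus $\gamma\tau_s = 0.5\gamma^{2s/3+1}$ (so that $r_s > \gamma\tau_s$ because $\gamma^{5/6} > \gamma$), combined with $\cp < 1/2016$ from \eqref{eq:assumptions-constants} and the universal floor $V^{(t)}(\overline{s-2}) \ge -\gamma^2$ of Lemma~\ref{lem:basic-properties-MDP-Vstar}, implies $Q^{(t)}(s,a_0),\, Q^{(t)}(s,a_2) > Q^{(t)}(s,a_1)$ throughout $I$. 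Moreover, the crossing-time identity $t_{\overline{s-2}}(\gamma\tau_{s-2}) = t_{s-2}(\tau_{s-2})$ noted after \eqref{eq:monotonicity-t-s-sbar}, together with the per-state value-function monotonicity of softmax PG under $\eta < (1-\gamma)^2/5$ (Lemma~C.2 of \citet{agarwal2019optimality}, recalled in the stepsize remark), gives $V^{(t)}(\overline{s-2}) \ge \gamma\tau_{s-2}$ over all of $I$ and hence $Q^{(t)}(s,a_2) \ge Q^{(t)}(s,a_0)$. Writing $V^{(t)}(s) = \sum_a \pi^{(t)}(a \mymid s) Q^{(t)}(s,a)$ as a convex combination then forces $A^{(t)}(s,a_1) < 0$ and $A^{(t)}(s,a_2) \ge 0$ on $I$, and plugging into \eqref{eq:policy-grad-softmax-expression} delivers the two claimed monotonicities.

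\textbf{Base case and main obstacle.} The case $s = 3$ does not need \eqref{eqn:assmp-all-intermediate} because Lemma~\ref{lem:init-step} already supplies the required ordering of $t_1(\tau_1)$, $t_2(\tau_2)$ and $t_{\overline{1}}(\gamma^3-1/4)$, as well as the magnitude bounds on $V^{(t)}(\overline{1})$, making the advantage analysis above go through directly. For $s \ge 4$, the inductive step uses \eqref{eqn:assmp-all-intermediate-s} to guarantee that by time $t_{s-1}(\tau_{s-1})$ the state $\overline{s-2}$ has spent at least $\Omega\bigl(s/(\cm\gamma\eta(1-\gamma)^2)\bigr)$ iterations above the higher threshold $\tau_{s-1}$, which is the slack needed to maintain $Q^{(t)}(s,a_2) \ge Q^{(t)}(s,a_0)$ with a uniform positive margin throughout $I$. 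The main technical obstacle I expect is the per-iteration telescoping that verifies this buffer is large enough to dominate $O(\eta(1-\gamma))$-size fluctuations in $V^{(t)}(\overline{s-2})$ and to calibrate the gap $Q^{(t)}(s,a_2)-Q^{(t)}(s,a_0) = \gamma p(V^{(t)}(\overline{s-2})-\gamma\tau_{s-2})$, which scales only as $(1-\gamma)$; this amounts to a careful accounting analogous to the one carried out inside Lemma~\ref{lem:induc-theta-t-s-2}.
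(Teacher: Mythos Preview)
Your core argument—tracking the signs of $A^{(t)}(s,a_1)$ and $A^{(t)}(s,a_2)$ on $I$ and reading off the monotonicity of $\theta^{(t)}(s,a_1)$, $\theta^{(t)}(s,a_2)$ via \eqref{eq:policy-grad-softmax-expression}—is exactly what the paper does, and your $Q$-function computations are correct.

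The gap is in your last paragraph, where you misidentify the role of assumption~\eqref{eqn:assmp-all-intermediate}. You already established $Q^{(t)}(s,a_2)\ge Q^{(t)}(s,a_0)$ on $I$ cleanly: once $t\ge t_{s-2}(\tau_{s-2})=t_{\overline{s-2}}(\gamma\tau_{s-2})$, the pointwise monotonicity of $V^{(t)}$ (Lemma~\ref{lem:ascent-lemma-PG}) forces $V^{(t)}(\overline{s-2})\ge\gamma\tau_{s-2}$ for all subsequent iterations. There are no ``$O(\eta(1-\gamma))$-size fluctuations'' in $V^{(t)}(\overline{s-2})$ to dominate, no buffer is required, and a non-strict inequality is all that is needed for the sign of $A^{(t)}(s,a_2)$. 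Your ``main technical obstacle'' is a phantom.

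What you actually leave unaddressed is why the interval $I=[t_{s-2}(\tau_{s-2}),\,t_{\overline{s-1}}(\tau_s)]$ is non-degenerate. If $t_{\overline{s-1}}(\tau_s)<t_{s-2}(\tau_{s-2})$, monotonicity on $I$ tells you nothing about the first inequality in~\eqref{eqn:phantom}. The paper handles this through two auxiliary lemmas (Lemmas~\ref{lem:sweets-for-new-year} and~\ref{lem:hotpot-for-new-year}) that compare crossing times across the chain; assumption~\eqref{eqn:assmp-all-intermediate} feeds into that machinery to yield $t_{\overline{s-1}}(\tau_s)\ge t_{\overline{s-2}}(\gamma^{2s-3}-1/4)\ge t_{s-2}(\tau_{s-2})$ (see~\eqref{eqn:horn}), and for $s=3$ the same ordering comes directly from Lemma~\ref{lem:init-step}. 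That crossing-time ordering—not any quantitative margin on $Q^{(t)}(s,a_2)-Q^{(t)}(s,a_0)$—is where the hypothesis is actually spent.
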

\begin{remark}
	Condition \eqref{eqn:assmp-all-intermediate-s} only requires $t_{s-1}(\taun_{s-1})$ to be slightly larger than $t_{\overline{s-2}}(\tau_{s-1})$, which will be justified using an induction argument when proving the main theorem. 
\end{remark}
As revealed by the claim \eqref{eqn:phantom} of Lemma~\ref{lem:inter},  
the iterate $\theta^{(t)}(s, a_2)$ remains sufficiently large during this intermediate stage.
In the meantime, Lemma~\ref{lem:inter}  guarantees that during this stage, $\theta^{(t)}(s, a_1)$  
lies below the level of $\theta^{(t_{s-2}(\taun_{s-2}))}(s, a_1)$ that has been pinned down in Lemma~\ref{lem:induc-theta-t-s-2} (which has been shown to be quite small). Both of these properties make clear that the iterates $\theta^{(t)}(s, a) $ remain far from optimal at the end of this intermediate stage.

\subsection*{Step 5: establishing a blowing-up phenomenon }

The next lemma, which plays a pivotal role in developing the desired exponential convergence lower bound, demonstrates that the cross times explode at a super fast rate. The proof is postponed to Appendix~\ref{sec:proof-lem-grand-final}. 
\begin{lemma}
\label{lem:grand-final-victory} 
Consider any $3 \le s \le H$. 
Suppose that \eqref{eq:assumptions-constants} holds and
\begin{align}
	t_{s-2}(\taun_{s-2}) \geq 	
	\Big( \frac{6300 e}{\cp(1-\gamma)} \Big)^4\frac{1}{\frac{\cm\gamma}{35} \eta(1-\gamma)^2}.
	\label{eq:lower-bound-t-sminus2-Stage1}
\end{align}
%
%
Then there exists a time instance $\tprime$ obeying $t_{\overline{s-1}}(\tau_s) \le \tprime < t_{s}(\taun_s)$ such that 
\begin{subequations}
\begin{align}
\label{eqn:jay-concert-1}
\theta^{(\tprime)}(s, a_0) \le&~\theta^{(\tprime)}(s, a_1) - \log\Big(\frac{\cp}{16128}(1-\gamma)\Big), \\
\label{eqn:jay-concert-2}
	\theta^{(\tprime)}(s, a_1) \le&~-\frac{1}{2}\log\Big(1+\frac{\cm\gamma}{35} \eta(1-\gamma)^2t_{s-2}(\taun_{s-2}) \Big)+1, 
\end{align}
and at the same time, 
\begin{align}
	\label{eqn:jay-concert-3}
	t_{s}(\taun_s) - \tprime \ge 10^{-10}\cp\cm^{0.5}\eta^{0.5}(1-\gamma)^2\Big(t_{s-2}(\taun_{s-2}) \Big)^{1.5}.
\end{align}
\end{subequations}
\end{lemma}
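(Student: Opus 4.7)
The plan is to identify $\tprime$ as the boundary between two regimes of the softmax dynamics at state $s$. During $t_{\overline{s-1}}(\taun_s) \le t \le \tprime$, the pattern inherited from Lemma~\ref{lem:inter} persists and strengthens: $\theta^{(t)}(s, a_2)$ continues to grow because $V^{(t)}(\overline{s-2}) \gtrsim \taun_{s-2}$ keeps $Q^{(t)}(s, a_2) > V^{(t)}(s)$; $\theta^{(t)}(s, a_0)$ keeps falling because $Q^{(t)}(s, a_0) = r_s + \gamma^{2} p \, \taun_{s-2}$ has been overtaken by $Q^{(t)}(s, a_2) \ge r_s + \gamma p \, V^{(t)}(\overline{s-2})$, yielding $A^{(t)}(s, a_0) \le -\Omega(p)$; and $\theta^{(t)}(s, a_1)$ only crawls upward from the tiny value left by Lemma~\ref{lem:induc-theta-t-s-2}. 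Concretely, I would take $\tprime$ to be the first iteration $t \ge t_{\overline{s-1}}(\taun_s)$ at which $\theta^{(t)}(s, a_1)$ reaches the level on the right-hand side of~\eqref{eqn:jay-concert-2}; the existence of such a $\tprime$ strictly before $t_s(\taun_s)$ will be justified a posteriori from the time estimate~\eqref{eqn:jay-concert-3}, together with the observation that $V^{(t)}(s) \ge \taun_s$ requires $\pi^{(t)}(a_1 \mymid s)$ to be large.

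With this choice, \eqref{eqn:jay-concert-2} holds by definition. To establish~\eqref{eqn:jay-concert-1}, I would integrate the PG update~\eqref{eq:PG-update} for $\theta(s, a_0)$ over $[t_{\overline{s-1}}(\taun_s), \tprime]$: using the gradient formula~\eqref{eq:policy-grad-softmax-expression} together with the lower bound $d_{\mu}^{(t)}(s) \gtrsim (1-\gamma)^2$ from Lemma~\ref{lem:facts-d-pi-s-LB}, the estimate $A^{(t)}(s, a_0) \le -\Omega(\cp(1-\gamma))$ derived above, and a uniform lower bound on $\pi^{(t)}(a_0 \mymid s)$ (because $\theta^{(t)}(s, a_0)$ started at $0$ and can only have decreased moderately), the cumulative drop in $\theta^{(t)}(s, a_0)$ is at least $\Omega(\log(1/(\cp(1-\gamma))))$; the hypothesis~\eqref{eq:lower-bound-t-sminus2-Stage1} supplies enough iterations for this accumulation to take place. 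Combining this with~\eqref{eqn:jay-concert-2} delivers~\eqref{eqn:jay-concert-1}.

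For the blow-up estimate~\eqref{eqn:jay-concert-3}, the observations~\eqref{eqn:jay-concert-1}--\eqref{eqn:jay-concert-2} together with $\theta^{(\tprime)}(s, a_2) \ge 0$ force $\pi^{(\tprime)}(a_1 \mymid s) \lesssim 1/\sqrt{\eta(1-\gamma)^2 \, t_{s-2}(\taun_{s-2})}$. For $t > \tprime$ the PG update gives $\theta^{(t+1)}(s, a_1) - \theta^{(t)}(s, a_1) \lesssim \eta (1-\gamma)\, \pi^{(t)}(a_1 \mymid s)$ since $d_{\mu}^{(t)}(s) \lesssim (1-\gamma)^2$ and $|A^{(t)}(s, a_1)| = O(1)$, which is a discrete analogue of the Bernoulli ODE $\dot x \lesssim \eta(1-\gamma) x$ on $x = \pi^{(t)}(a_1 \mymid s)$. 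Integrating from the tiny initial value up to the $\Theta(1)$ level required by $V^{(t)}(s) \ge \taun_s$, and absorbing the multiplicative constants $\cp \cm^{0.5}$ from the earlier bounds, yields an elapsed time of at least $\Omega\!\left(\eta^{0.5}(1-\gamma)^2 (t_{s-2}(\taun_{s-2}))^{1.5}\right)$, which is precisely~\eqref{eqn:jay-concert-3}.

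The main obstacle will be the final step: proving the $1.5$-power blow-up rigorously requires a coupled analysis of the three parameters $\theta(s, a_0), \theta(s, a_1), \theta(s, a_2)$, because after $\tprime$ the iterate $\theta^{(t)}(s, a_2)$ begins to decrease and one must show that its decrease is slow enough that the softmax denominator remains dominated by $\exp(\theta^{(t)}(s, a_2))$, keeping $\pi^{(t)}(a_1 \mymid s)$ small. A related subtlety is that $d_{\mu}^{(t)}(s)$ drifts as the policy evolves, so uniform control over the entire interval $[\tprime, t_s(\taun_s)]$ is needed (and not only up to $t_s(\taun_s)$ as in Lemma~\ref{lem:facts-d-pi-s}); the booster state subsets and the calibrated choices of $r_s, \taun_s, p$ in Section~\ref{sec:mdp-construction} are precisely what make these bounds go through.
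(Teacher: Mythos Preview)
Your choice of $\tprime$ differs from the paper's and, while not fatal in itself, it shifts the difficulty in a way that creates a genuine gap in your argument for \eqref{eqn:jay-concert-3}. The paper takes $\tprime$ to be the first $t\in[t_{\overline{s-1}}(\tau_s),t_s(\tau_s))$ with $\pi^{(t)}(a_1\mymid s)\ge \cprime(1-\gamma)\pi^{(t)}(a_0\mymid s)$ for $\cprime=\cp/16128$; this makes \eqref{eqn:jay-concert-1} hold \emph{by definition}, and the work goes into proving that $\theta^{(t)}(s,a_1)$ rises by at most $1$ over $[t_{\overline{s-1}}(\tau_s),\tprime]$, giving \eqref{eqn:jay-concert-2}. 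Your reversal---fixing $\tprime$ via \eqref{eqn:jay-concert-2} and then trying to integrate the decay of $\theta(s,a_0)$ to recover \eqref{eqn:jay-concert-1}---is delicate because that decay is self-limiting: once $\pi^{(t)}(a_0\mymid s)$ gets small the gradient $\partial V/\partial\theta(s,a_0)$ shrinks with it, so the ``uniform lower bound on $\pi^{(t)}(a_0\mymid s)$'' you invoke is exactly what you cannot assume without additional structure.

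The fatal gap is in the blow-up step. Your bound $\pi^{(\tprime)}(a_1\mymid s)\lesssim T^{-1/2}$ with $T\coloneqq \eta(1-\gamma)^2 t_{s-2}(\tau_{s-2})$ is too weak by a full power of $T$, and your ODE model is wrong. The correct recursion for $\widehat\pi^{(t)}(a_1\mymid s)$ after $\tprime$ is quadratic, $\widehat\pi^{(t+1)}-\widehat\pi^{(t)}\lesssim \cm\eta(1-\gamma)^2[\widehat\pi^{(t)}]^2$ (not the linear $\dot x\lesssim \eta(1-\gamma)x$ you wrote), and Lemma~\ref{lem:opt-lemma} then gives $t_s(\tau_s)-\tprime\gtrsim 1/(\cm\eta(1-\gamma)^2\,\widehat\pi^{(\tprime)})$. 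With $\widehat\pi^{(\tprime)}\sim T^{-1/2}$ this yields only $(t_{s-2})^{0.5}$, not $(t_{s-2})^{1.5}$. The missing ingredient is the zero-sum constraint $\sum_a\theta^{(\tprime)}(s,a)=0$ (Lemma~\ref{lem:basic-properties-MDP-pi}(vii)): combined with \eqref{eqn:jay-concert-1} and \eqref{eqn:jay-concert-2} it forces $\theta^{(\tprime)}(s,a_2)=-\theta^{(\tprime)}(s,a_0)-\theta^{(\tprime)}(s,a_1)\gtrsim \log T$, far stronger than the $\theta^{(\tprime)}(s,a_2)\ge 0$ you use. This gives
\[
\widehat\pi^{(\tprime)}(a_1\mymid s)=\exp\big(2\theta^{(\tprime)}(s,a_1)+\theta^{(\tprime)}(s,a_0)\big)\lesssim \frac{1}{(1-\gamma)}\exp\big(3\theta^{(\tprime)}(s,a_1)\big)\lesssim \frac{T^{-3/2}}{1-\gamma},
\]
and it is this $T^{-3/2}$ (not $T^{-1/2}$) that produces the $1.5$-power blow-up. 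Your final paragraph anticipates difficulties with the coupled $a_0/a_1/a_2$ dynamics after $\tprime$, but in fact the paper's post-$\tprime$ analysis is short: the only thing needed is the quadratic upper bound on the increment of $\widehat\pi(a_1\mymid s)$, and Lemma~\ref{lem:facts-d-pi-s} already supplies $d_\mu^{(t)}(s)\lesssim (1-\gamma)^2$ for all $t\le t_s(\tau_s)$, which is exactly the interval you need.
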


The most important message of Lemma~\ref{lem:grand-final-victory} lies in  property~\eqref{eqn:jay-concert-3}.
In a nutshell, this property uncovers that the crossing time $t_{s}(\taun_s)$ is substantially larger than $t_{s-2}(\taun_{s-2})$, namely,
\begin{equation}\label{eq:blow-up-illustration}
	t_{s}(\taun_s)  \gtrsim \eta^{0.5}(1-\gamma)^2\Big(t_{s-2}(\taun_{s-2}) \Big)^{1.5}, 
\end{equation}
thus leading to explosion at a super-linear rate.  By contrast, the other two properties unveil some important features happening between $\tlow$ and $t_{s}(\taun_s)$ that in turn lead to property~\eqref{eqn:jay-concert-3}. 
In words, property \eqref{eqn:jay-concert-1} requires $\theta^{(\tprime)}(s, a_0)$ to be not much larger than $\theta^{(\tprime)}(s, a_1)$;   
property \eqref{eqn:jay-concert-2} indicates that: when $t_{s-2}(\taun_{s-2})$ is large, 
 both $\theta^{(\tprime)}(s, a_1)$ and $\theta^{(\tprime)}(s, a_0)$ are fairly small, with $\theta^{(\tprime)}(s, a_2)$ being the dominant one (due to the fact $\sum_a \theta^{(\tprime)}(s, a) = 0$ as will be seen in Part  (vii) of Lemma~\ref{lem:basic-properties-MDP-pi}).

The reader might naturally wonder what the above results imply about $\pi^{(\tprime)}(a_1 \mymid s)$ (as opposed to $\theta^{(\tprime)}(s, a_1 )$). Towards this end,  
we make the observation that
 \begin{align}
	 \pi^{(\tprime)}(a_1 \mymid s) &= \frac{\exp\left( \theta^{(\tprime)}(s,a_{1})\right)}{\sum_{a}\exp\left(\theta^{(\tprime)}(s,a)\right)} 
	 \le \frac{\exp\big(\theta^{(\tprime)}(s,a_{1})\big)}{\exp\big(\theta^{(\tprime)}(s,a_{2})\big)} 
	 \overset{(\mathrm{i})}{=} \exp\big(2\theta^{(\tprime)}(s,a_{1}) + \theta^{(\tprime)}(s,a_{0}) \big) \notag \\
	 & \overset{(\mathrm{ii})}{\lesssim}  \frac{1}{(1-\gamma)\Big(  \eta(1-\gamma)^2t_{s-2}(\taun_{s-2}) \Big)^{1.5}}
	 \asymp \frac{1}{\eta^{1.5} (1-\gamma)^4\big(   t_{s-2}(\taun_{s-2}) \big)^{1.5}} ,
 \end{align}
where (i) holds true since $\sum_a \theta^{(\tprime)}(s,a)=0$ (a well-known property of policy gradient methods as recorded in Lemma~\ref{lem:basic-properties-MDP-pi}(vii)), 
and (ii) follows from the properties \eqref{eqn:jay-concert-1} and \eqref{eqn:jay-concert-2}.
In other words, $\pi^{(\tprime)}(s,a_{1})$ is inversely proportional to $\big( t_{s-2}(\taun_{s-2}) \big)^{3/2}$. 
As we shall see, the time taken for $\pi^{(\tprime)}(a_1 \mymid s)$ to converge to 1 is proportional to the inverse policy iterate $\big(\pi^{(t)}(s,a_{1})\big)^{-1}$, meaning that it is expected to take an order of $\big(t_{s-2}(\taun_{s-2})\big)^{3/2}$ iterations to increase from $\pi^{(\tprime)}(s,a_{1})$ to 1.


\subsection*{Step 6: putting all this together to establish Theorem~\ref{thm:unregularized}}

With the above steps in place, we are ready to combine them to establish the following result. 
As can be easily seen, Theorem~\ref{thm:unregularized} is an immediate consequence of Theorem~\ref{thm:is-it-the-final-result}. 
\begin{theorem} 
\label{thm:is-it-the-final-result}
Suppose that \eqref{eq:assumptions-constants} holds. There exist some universal constants $c_1, c_2, c_3 > 0$ such that 
\begin{align} \label{eq:ts-bound}
t_{s}(0.5) \ge c_1\frac{|\cS|^{\frac{2}{3}}}{\eta}\Big(c_2|\cS| \Big)^{\frac{1}{3}\cdot 1.5^{\lfloor s/2\rfloor}},
\end{align}
provided that
\begin{align}
|\cS| \ge&~\frac{c_3}{(1-\gamma)^6}.
	\label{eq:assumption-S-card-LB}
\end{align}
\end{theorem}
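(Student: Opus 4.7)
The plan is to prove the claim by strong induction on $s$, using the super-linear recursion \eqref{eqn:jay-concert-3} of Lemma~\ref{lem:grand-final-victory} as the engine. Because that recursion operates at threshold $\tau_s$ whereas the theorem concerns threshold $1/2$, I would first establish the stronger statement
\[
t_s(\tau_s) \,\ge\, c_1\,\frac{|\cS|^{2/3}}{\eta}\,\bigl(c_2|\cS|\bigr)^{\frac{1}{3}\cdot 1.5^{\lfloor s/2\rfloor}}
\]
for all relevant $s$, and then deduce the theorem from the trivial monotonicity $t_s(0.5)\ge t_s(\tau_s)$, valid since $\tau_s=0.5\gamma^{2s/3}<1/2$.

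For the base case I would take $s=3$, the first value at which the recursion kicks in. Lemma~\ref{lem:init-step} supplies $t_1(\tau_1)\gtrsim |\cS|/\eta$, which simultaneously (i) verifies the input hypothesis \eqref{eq:lower-bound-t-sminus2-Stage1} of Lemma~\ref{lem:grand-final-victory} once $|\cS|\ge c_3/(1-\gamma)^6$, and (ii) feeds \eqref{eqn:jay-concert-3} to yield $t_3(\tau_3)\gtrsim (1-\gamma)^2|\cS|^{3/2}/\eta$. Comparing this with the target $c_1c_2^{1/2}|\cS|^{7/6}/\eta$ reduces to $(1-\gamma)^2|\cS|^{1/3}\gtrsim 1$, which is exactly the standing assumption on $|\cS|$. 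For the inductive step, assume the bound at $s-2$, let $B=\tfrac13\cdot 1.5^{\lfloor s/2\rfloor-1}$, and plug the hypothesis into \eqref{eqn:jay-concert-3}:
\[
t_s(\tau_s) \,\gtrsim\, \eta^{1/2}(1-\gamma)^2\bigl(t_{s-2}(\tau_{s-2})\bigr)^{1.5} \,\ge\, C c_1^{1.5}(1-\gamma)^2\,\frac{|\cS|}{\eta}\,\bigl(c_2|\cS|\bigr)^{1.5B}.
\]
Matching against the target $c_1|\cS|^{2/3}/\eta\cdot(c_2|\cS|)^{1.5B}$ -- which is consistent because $1.5B=\tfrac13\cdot 1.5^{\lfloor s/2\rfloor}$ -- reduces to the scalar inequality $Cc_1^{1/2}(1-\gamma)^2|\cS|^{1/3}\ge 1$. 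Choosing $c_3\ge (Cc_1^{1/2})^{-3}$ then recovers the claim with the same constants $c_1,c_2$ propagating through every round, exhausting all $s\in\{3,\dots,H\}$.

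The main obstacle I anticipate is not the algebra above but certifying the side conditions that make Lemma~\ref{lem:grand-final-victory} applicable at each step. In particular its proof leans on Lemma~\ref{lem:inter}, whose hypothesis \eqref{eqn:assmp-all-intermediate-s} demands $t_{s-1}(\tau_{s-1})-t_{\overline{s-2}}(\tau_{s-1})\gtrsim s/(\eta(1-\gamma)^2)$. This forces me to run the induction \emph{jointly} over both the primary crossing times $\{t_{s'}(\tau_{s'})\}_{s'\le s}$ and their adjoint counterparts $\{t_{\overline{s'}}(\gamma\tau_{s'})\}$, invoking the ordering \eqref{eq:monotonicity-t-s-sbar} together with the observation that the super-linear blow-up \eqref{eqn:jay-concert-3} dwarfs any additive slack of order $O(s/(\eta(1-\gamma)^2))=O(1/(\eta(1-\gamma)^3))$ once $|\cS|\ge c_3/(1-\gamma)^6$ is in force. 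Once these side conditions are discharged, passing from $t_s(\tau_s)$ to $t_s(0.5)$ costs nothing and Theorem~\ref{thm:is-it-the-final-result} follows.
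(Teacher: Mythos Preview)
Your proposal is correct and follows essentially the same approach as the paper: reduce to a bound on $t_s(\tau_s)$, drive the induction with \eqref{eqn:jay-concert-3} from Lemma~\ref{lem:grand-final-victory}, and carry the side condition \eqref{eqn:assmp-all-intermediate-s} of Lemma~\ref{lem:inter} along in a joint induction (the paper phrases this second hypothesis as $t_s(\tau_s)>t_{\overline{s-1}}(\tau_s)+\tfrac{2444(s+1)}{\cm\gamma\eta(1-\gamma)^2}$, established from the same observation you make that the super-linear gain dominates the additive slack).

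The one cosmetic difference is the form of the intermediate induction hypothesis. The paper keeps the $(1-\gamma)$ dependence explicit, proving
\[
t_s(\tau_s)\ \ge\ C_1\,\frac{|\cS|}{\eta}\,\Bigl(C_2(1-\gamma)^4|\cS|\Bigr)^{1.5^{\lfloor (s-1)/2\rfloor}-1},
\]
so that in the inductive step the $(1-\gamma)^2$ prefactor from \eqref{eqn:jay-concert-3} is absorbed \emph{algebraically} by the half-power of $(1-\gamma)^4$ that the exponent gains, with no appeal to the size of $|\cS|$; the assumption $|\cS|\ge c_3(1-\gamma)^{-6}$ is invoked only at the very end (to pass to the $(1-\gamma)$-free statement) and to discharge the side conditions. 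Your version instead removes the $(1-\gamma)$ factor at every step via $(1-\gamma)^2|\cS|^{1/3}\gtrsim 1$. Both routes are valid; the paper's bookkeeping is slightly cleaner because it separates the recursion from the largeness assumption on $|\cS|$, but the substance is the same.
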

%


\begin{proof}[Proof of Theorem~\ref{thm:is-it-the-final-result}]
Let us define two universal constants $C_1 \defn \frac{\log 3}{1+17\cm/\cbone}$ and $C_2 \defn \frac{10^{-20}\cp^2\cm\log 3}{1+17\cm/\cbone}$.
We claim that if one can show that
\begin{align}
\label{eqn:american-quartet}
t_{s}(\taun_s) \ge C_1\frac{|\cS|}{\eta}\Big(C_2(1-\gamma)^4|\cS| \Big)^{1.5^{\lfloor (s-1)/2\rfloor}-1},
\end{align}
then the desired bound~\eqref{eq:ts-bound} holds true directly. 
In order to see this, recall that $\tau_{s} \leq 1/2$ by definition, and therefore, 
\begin{align*}
t_{s}(0.5) \ge t_{s}(\taun_s) 
\stackrel{(\mathrm{i})}{\ge} C_1\frac{|\cS|}{\eta}\Big(C_2\sqrt[3]{|\cS|} \Big)^{1.5^{\lfloor (s-1)/2\rfloor}-1} 
\stackrel{(\mathrm{ii})}{\ge} 
c_1\frac{|\cS|^{\frac{2}{3}}}{\eta}\Big(c_2|\cS| \Big)^{\frac{1}{3}\cdot 1.5^{\lfloor s/2\rfloor}}.
\end{align*}
Here, (i) follows from \eqref{eqn:american-quartet} in conjunction with the assumption \eqref{eq:assumption-S-card-LB}, whereas (ii) holds true by setting $c_1 = C_1/C_2$ and $c_2 = C_2^3$.

It is then sufficient to prove the inequality~\eqref{eqn:american-quartet}, towards which we shall resort to mathematical induction in conjunction with the following induction hypothesis
\begin{align}
\label{eqn:putting-together-t-diff}
t_{s}(\taun_{s}) > t_{\overline{s-1}}(\tau_{s}) + \frac{2444(s+1)}{\cm\gamma\eta(1-\gamma)^2}, 
\qquad \text{for } s\geq 3.
\end{align}

\begin{itemize}
		
\item
We start with the cases with $s=1,2,3$. It follows from Lemma~\ref{lem:init-step} that
\begin{equation}
	t_2(\taun_2) \ge t_1(\taun_1) \ge \frac{\log 3}{1+17\cm/\cbone} \frac{|\cS|}{\eta} = C_1 \frac{|\cS|}{\eta},
	\label{eq:t2-t1-taus-lower-bound-S}
\end{equation}
which validates the above claim~\eqref{eqn:american-quartet} for $s = 1$ and $s=2$.
In addition, Lemma~\ref{lem:grand-final-victory} 
ensures that 
\begin{align}
	\notag t_{3}(\taun_3) - 
	\max \Big\{ t_{\overline{1}}(\gamma^{3}-1/4),~ t_{\overline{2}}(\tau_{3}) \Big\}
	&\ge 10^{-10}\cp\cm^{0.5}\eta^{0.5}(1-\gamma)^2\Big(t_{1}(\taun_{1}) \Big)^{1.5}\\
	&\ge \frac{9776}{\cm\gamma\eta(1-\gamma)^2},
\end{align}
where the last inequality is guaranteed by \eqref{eq:t2-t1-taus-lower-bound-S} and the assumption $|\cS| \geq \max\left\{\frac{4888}{C_1\cm\gamma(1-\gamma)^2}, \frac{4}{C_2(1-\gamma)^4}\right\}$.
This implies that the inequality~\eqref{eqn:putting-together-t-diff} is satisfied when $s = 3.$

\item
Next, suppose that the inequality~\eqref{eqn:american-quartet} holds true up to state $s-1$ and the inequality~\eqref{eqn:putting-together-t-diff} holds up to $s$  for some $3\leq s\leq H$. 
To invoke the induction argument, it suffices to show that the inequality~\eqref{eqn:american-quartet} continues to hold for state $s$ and the inequality~\eqref{eqn:putting-together-t-diff} remains valid for $s+1$. 
This will be accomplished by taking advantage of Lemma~\ref{lem:grand-final-victory}.

Given that the inequality~\eqref{eqn:american-quartet} holds true for every state up to $s-1$, one has
\begin{align*}
t_{s-1}(\taun_{s-1}) \geq t_{s-2}(\taun_{s-2}) \ge C_1\frac{|\cS|}{\eta}\Big(C_2(1-\gamma)^4|\cS| \Big)^{1.5^{\lfloor (s-3)/2\rfloor}-1}
\geq 	
\Big( \frac{6300 e}{\cp(1-\gamma)} \Big)^4\frac{1}{\frac{\cm\gamma}{35} \eta(1-\gamma)^2},
\end{align*}
where the last inequality is satisfied provided that $|\cS| > \max\left\{\big( \frac{6300 e}{\cp} \big)^4\frac{35}{C_1\cm\gamma(1-\gamma)^6}, \frac{4}{C_2(1-\gamma)^4}\right\}.$ 
Therefore, Lemma~\ref{lem:grand-final-victory} is applicable for both $s$ and $s+1$, thus leading to 
\begin{align*}
t_{s}(\taun_{s}) - t_{\overline{s-1}}(\tau_{s}) 
&\ge 10^{-10}\cp\cm^{0.5}\eta^{0.5}(1-\gamma)^2\Big(t_{s-2}(\taun_{s-2}) \Big)^{1.5} \\
& \ge 10^{-10}\cp\cm^{0.5}\eta^{0.5}(1-\gamma)^2 \left(C_1\frac{|\cS|}{\eta}\Big(C_2(1-\gamma)^4|\cS| \Big)^{1.5^{\lfloor (s-3)/2\rfloor}-1}\right)^{1.5} \\
& \ge C_1\frac{|\cS|}{\eta}\Big(C_2(1-\gamma)^4|\cS| \Big)^{1.5^{\lfloor (s-1)/2\rfloor}-1}. 
\end{align*}
Here, the last step relies on the condition $10^{-10}\cp\cm^{0.5}\eta^{0.5}(1-\gamma)^2 (C_1\frac{|\cS|}{\eta})^{0.5}\geq 1.$ This in turn establishes the property \eqref{eqn:american-quartet} for state $s$ (given that $t_{\overline{s-1}}(\tau_{s})\geq 0$).  
In addition, Lemma~\ref{lem:grand-final-victory} --- when applied to $s+1$ --- gives 
\begin{align*}
t_{s+1}(\taun_{s+1}) - t_{\overline{s}}(\tau_{s+1}) 
&\ge 10^{-10}\cp\cm^{0.5}\eta^{0.5}(1-\gamma)^2\Big(t_{s-1}(\taun_{s-1}) \Big)^{1.5} \\
& \ge 10^{-10}\cp\cm^{0.5}\eta^{0.5}(1-\gamma)^2 \left(C_1\frac{|\cS|}{\eta}\Big(C_2(1-\gamma)^4|\cS| \Big)^{1.5^{\lfloor (s-2)/2\rfloor}-1}\right)^{1.5} \\
& \ge C_1\frac{|\cS|}{\eta}\Big(C_2(1-\gamma)^4|\cS| \Big)^{1.5^{\lfloor s/2\rfloor}-1}\\
& \ge \frac{2444(s+2)}{\cm\gamma\eta(1-\gamma)^2}, 
\end{align*}
where the last step follows as long as $|\cS| > \max\left\{\frac{4888}{C_1\cm\gamma(1-\gamma)^2}, \frac{4}{C_2(1-\gamma)^4}\right\}$. 
We have thus established the property \eqref{eqn:putting-together-t-diff} for state $s+1$. 
\end{itemize}

Putting all the above pieces together, we arrive at the inequality \eqref{eqn:american-quartet}, thus establishing Theorem~\ref{thm:is-it-the-final-result}.
\end{proof}

\subsection*{Step 7: adapting the proof to establish Theorem~\ref{thm:unregularized-main}}

Thus far, we have established Theorem~\ref{thm:unregularized}, and are well equipped to return to the proof of Theorem~\ref{thm:unregularized-main}. 
As a remark,  Theorem~\ref{thm:unregularized} and its analysis posits that for a large fraction of the key primary states (as well as their associated adjoint states), softmax PG methods can take a prohibitively large number of iterations to converge.  The issue, however, is that there are in total only $O(H)$ key primary states and adjoint states, accounting for a vanishingly small fraction of all $|\cS|$ states. In order to extend Theorem~\ref{thm:unregularized} to Theorem~\ref{thm:unregularized-main} (the latter of which is concerned with the error averaged over the entire state space), we would need to show that the value functions associated with those booster states --- which account for a large fraction of the state space --- also converge slowly.

In the MDP instance constructed in Section~\ref{sec:mdp-construction}, however, the action space associated with the booster states is a singleton set, meaning that the action is always optimal. As a result, we would first need to modify/augment the action space of  booster states, so as to ensure that their learned actions remain suboptimal before the algorithm converges for the associated key primary states and adjoint states.


\paragraph{A modified MDP instance.}

We now augment the action space for all booster states in the MDP $\mathcal{M}$ constructed in in Section~\ref{sec:mdp-construction}, leading to a \emph{slightly modified MDP} denoted by $\mathcal{M}_{\mathsf{modified}}$:
\begin{itemize}
\item 
for any key primary state $s\in \{3,\cdots, H\}$ and any associated booster state $\widehat{s}\in\widehat{\mathcal{S}}_{s}$, take the action space of $\widehat{s}$ to be $\{a_0, a_1\}$ and let
\begin{align}
P(0\mymid \widehat{s},a_0) =  0.9, ~~ P(s\mymid \widehat{s},a_0) =  0.1, ~~ r(\widehat{s},a_0)&=0.9\gamma\tau_s,~~
P(s\mymid \widehat{s},a_{1}) = 1,  ~~ r(\widehat{s},a_1)=0;   
\end{align}
\item
for any key adjoint state $\overline{s} \in \{\overline{1},\cdots,\overline{H}\}$ and any associated booster state $\widehat{s}\in\widehat{\mathcal{S}}_{\overline{s}}$,  
take the action space of $\widehat{s}$ to be $\{a_0, a_1\}$ and let
\begin{align}  
P(0\mymid \widehat{s},a_0) =  0.9, ~~P(\overline{s}\mymid \widehat{s},a_0) =  0.1,~~ r(\widehat{s},a_0)=0.9\gamma^2\tau_s,~~
P(\overline{s}\mymid \widehat{s},a_{1}) = 1,  ~~ r(\widehat{s},a_1)&=0;  
\end{align}
\item all other comoponents of $\mathcal{M}_{\mathsf{modified}}$ remain identical to those of the original $\mathcal{M}$. 
\end{itemize}

\paragraph{Analysis for the new booster states.} 
Given that the dynamics of non-booster states are un-affected by the booster states, it suffices to perform analysis for the booster states. 
Let us first consider any key primary state $s$ and any associated booster state $\widehat{s}$. 
\begin{itemize}
	\item As can be easily seen, 
\begin{align*}
Q^{(t)}(\widehat{s},a_{0}) & =r(\widehat{s},a_{0})+\gamma P(s\mymid\widehat{s},a_{0})V^{(t)}(s)+\gamma P(0\mymid\widehat{s},a_{0})V^{(t)}(0)=0.9\gamma\tau_{s}+0.1\gamma V^{(t)}(s),\\
Q^{(t)}(\widehat{s},a_{1}) & =r(\widehat{s},a_{1})+\gamma P(s\mymid\widehat{s},a_{0})V^{(t)}(s)=\gamma V^{(t)}(s),
\end{align*}
where we have used the basic fact $V^{(t)}(0)=0$ (see \eqref{eq:V-star-0-0-analysis-pi} in Lemma~\ref{lem:basic-properties-MDP-pi}). 
Given that $V^{(t)}(\widehat{s})$ is a convex combination of $Q^{(t)}(\widehat{s},a_{0})$ and $Q^{(t)}(\widehat{s},a_{1})$,
one can easily see that: 
if $V^{(t)}(s) < \tau_s$, then one necessarily has $V^{(t)}(\widehat{s}) < \gamma\tau_s$

\item 
Similarly, the optimal Q-function w.r.t.~$\widehat{s}$ is given by 
\begin{align*}
Q^{\star}(\widehat{s},a_{0}) & =r(\widehat{s},a_{0})+\gamma P(s\mymid\widehat{s},a_{0})V^{\star}(s)+\gamma P(0\mymid\widehat{s},a_{0})V^{\star}(0)=0.9\gamma\tau_{s}+0.1\gamma V^{\star}(s),\\
Q^{\star}(\widehat{s},a_{1}) & =r(\widehat{s},a_{1})+\gamma P(s\mymid\widehat{s},a_{0})V^{\star}(s)=\gamma V^{\star}(s),
\end{align*}
which together with Lemma~\ref{lem:basic-properties-MDP-Vstar} and the definition \eqref{eq:key-parameters-defn} of $\tau_s$ indicates that $V^{\star}(\widehat{s})=Q^{\star}(\widehat{s},a_{1})=\gamma^{2s+1}$.

\item The above facts taken collectively imply that: if $V^{(t)}(s) < \tau_s$, then
\begin{align}
	V^{\star}(\widehat{s}) - V^{(t)}(\widehat{s}) > \gamma^{2s+1} - \gamma \tau_s
	= \gamma \big(\gamma^{2s} - 0.5\gamma^{\frac{2s}{3}}\big) > 0.22,
\end{align}
provided that $\gamma$ is sufficiently large (which is satisfied under the condition \eqref{eq:assumptions-constants}). 
\end{itemize}
Similarly, for any key adjoint state $\overline{s}$ and any associated booster state $\widehat{s}$,  if $V^{(t)}(\overline{s}) < \gamma\tau_s$, then one must have
\begin{align}
	V^{\star}(\widehat{s}) - V^{(t)}(\widehat{s}) > 0.22.
\end{align}

Repeating the same proof as for Theorem~\ref{thm:unregularized}, one can easily show that (with slight adjustment of the universal constants)   
\begin{align} 
	\label{eq:tbound-main}
	t_s(\tau_s) = t_{\overline{s}}(\gamma\tau_s) \ge \frac{1}{\eta} |\mathcal{S}|^{ 2^{ \Omega(\frac{1}{1-\gamma})}} 
	\qquad\text{for all } s >  0.1H. 
\end{align}
This taken together with the above analysis suffices to establish Theorem~\ref{thm:unregularized-main}, given the following two simple facts: 
(i) there are $2 H\cm(1-\gamma)|\cS|= 2\cm\ch |\cS|$ booster states, and (ii) more than $90\%$ of them need a prohibitively large number of iterations~(cf.~\eqref{eq:tbound-main}) to reach $0.22$-optimality. 
Here, we can take $\cm\ch > 0.18$ which satisfies \eqref{eq:assumptions-constants}. The proof is thus complete.

\section{Discussion}
\label{sec:discussion}

This paper has developed an algorithm-specific lower bound on the iteration complexity of the softmax policy gradient method, obtained by analyzing its trajectory on a carefully-designed hard MDP instance.  We have shown that the iteration complexity of softmax PG methods can scale pessimistically, in fact (super-)exponentially, with the dimension of the state space and the effective horizon of the discounted MDP of interest. Our finding makes apparent the potential {\em inefficiency} of softmax PG methods in solving large-dimensional and long-horizon problems. In turn, this suggests the necessity of carefully adjusting update rules and/or enforcing proper regularization in accelerating policy gradient methods.

Our work relies heavily on proper exploitation of the structural properties of the MDP in algorithm-dependent analysis,  which might shed light on lower bound construction for other algorithms as well. 
For instance, if the objective function (i.e., the value function) is augmented by a regularization term, 
how does the choice of regularization affect the global convergence behavior? 
While \citet{agarwal2019optimality} demonstrated polynomial-time convergence of PG methods in the presence of  log-barrier regularization, non-asymptotic analysis of PG methods with other popular regularization --- particularly entropy regularization --- remains unavailable in existing literature. How to understand the (in)-effectiveness of entropy-regularized PG methods is of fundamental importance in the theory of policy optimization. 
Additionally, the current paper concentrates on the use of constant learning rates; 
it falls short of accommodating more adaptive learning rates, which might be a potential solution to accelerate vanilla PG methods. Furthermore, our strategy for lower bound construction might be extended to unveil algorithmic bottlenecks of policy optimization in multi-agent Markov games as well. 
All this is worthy of future investigation.


\section*{Acknowledgements}
 
Y.~Wei is supported in part by the the NSF grants CCF-2106778 and  CAREER award DMS-2143215. 
Y.~Chi is supported in part
by the grants ONR N00014-18-1-2142 and N00014-19-1-2404, ARO W911NF-18-1-0303,
 NSF CCF-1806154, CCF-2007911 and CCF-2106778.
Y.~Chen is supported in part by the Alfred P.~Sloan Research Fellowship, the Google Research Scholar Award, 
the AFOSR grant FA9550-22-1-0198, 
the ONR grant N00014-22-1-2354,  
and the NSF grants CCF-2221009, CCF-1907661, IIS-2218713 and IIS-2218773.

\bibliography{bibfileRL_2021}

\begin{thebibliography}{}

\bibitem[Agarwal et~al., 2021]{agarwal2019optimality}
Agarwal, A., Kakade, S.~M., Lee, J.~D., and Mahajan, G. (2021).
\newblock On the theory of policy gradient methods: Optimality, approximation,
  and distribution shift.
\newblock {\em Journal of Machine Learning Research}, 22(98):1--76.

\bibitem[Agazzi and Lu, 2021]{agazzi2020global}
Agazzi, A. and Lu, J. (2021).
\newblock Global optimality of softmax policy gradient with single hidden layer
  neural networks in the mean-field regime.
\newblock {\em International Conference on Learning Representations (ICLR)}.

\bibitem[Alacaoglu et~al., 2022]{alacaoglu2022natural}
Alacaoglu, A., Viano, L., He, N., and Cevher, V. (2022).
\newblock A natural actor-critic framework for zero-sum {M}arkov games.
\newblock In {\em International Conference on Machine Learning}, pages
  307--366. PMLR.

\bibitem[Asadi and Littman, 2017]{asadi2017alternative}
Asadi, K. and Littman, M.~L. (2017).
\newblock An alternative softmax operator for reinforcement learning.
\newblock In {\em International Conference on Machine Learning}, pages
  243--252.

\bibitem[Azar et~al., 2013]{azar2013minimax}
Azar, M.~G., Munos, R., and Kappen, H.~J. (2013).
\newblock Minimax {PAC} bounds on the sample complexity of reinforcement
  learning with a generative model.
\newblock {\em Machine learning}, 91(3):325--349.

\bibitem[Beck, 2017]{beck2017first}
Beck, A. (2017).
\newblock {\em First-order methods in optimization}.
\newblock SIAM.

\bibitem[Bhandari, 2020]{bhandari2020optimization}
Bhandari, J. (2020).
\newblock {\em Optimization Foundations of Reinforcement Learning}.
\newblock PhD thesis, Columbia University.

\bibitem[Bhandari and Russo, 2019]{bhandari2019global}
Bhandari, J. and Russo, D. (2019).
\newblock Global optimality guarantees for policy gradient methods.
\newblock {\em arXiv preprint arXiv:1906.01786}.

\bibitem[Bhandari and Russo, 2021]{bhandari2020note}
Bhandari, J. and Russo, D. (2021).
\newblock On the linear convergence of policy gradient methods for finite
  {MDPs}.
\newblock In {\em International Conference on Artificial Intelligence and
  Statistics}, pages 2386--2394. PMLR.

\bibitem[Cai et~al., 2020]{cai2019provably}
Cai, Q., Yang, Z., Jin, C., and Wang, Z. (2020).
\newblock Provably efficient exploration in policy optimization.
\newblock In {\em International Conference on Machine Learning}, pages
  1283--1294.

\bibitem[Cen et~al., 2022a]{cen2020fast}
Cen, S., Cheng, C., Chen, Y., Wei, Y., and Chi, Y. (2022a).
\newblock Fast global convergence of natural policy gradient methods with
  entropy regularization.
\newblock {\em Operations Research}, 70(4):2563--2578.

\bibitem[Cen et~al., 2022b]{cen2022faster}
Cen, S., Chi, Y., Du, S.~S., and Xiao, L. (2022b).
\newblock Faster last-iterate convergence of policy optimization in zero-sum
  {M}arkov games.
\newblock {\em arXiv preprint arXiv:2210.01050}.

\bibitem[Cen et~al., 2021]{cen2021fast}
Cen, S., Wei, Y., and Chi, Y. (2021).
\newblock Fast policy extragradient methods for competitive games with entropy
  regularization.
\newblock {\em Advances in Neural Information Processing Systems},
  34:27952--27964.

\bibitem[Daskalakis et~al., 2020]{daskalakis2020independent}
Daskalakis, C., Foster, D.~J., and Golowich, N. (2020).
\newblock Independent policy gradient methods for competitive reinforcement
  learning.
\newblock {\em Advances in neural information processing systems},
  33:5527--5540.

\bibitem[Ding et~al., 2020]{ding2020natural}
Ding, D., Zhang, K., Basar, T., and Jovanovic, M. (2020).
\newblock Natural policy gradient primal-dual method for constrained markov
  decision processes.
\newblock {\em Advances in Neural Information Processing Systems}, 33.

\bibitem[Domingues et~al., 2021]{domingues2021episodic}
Domingues, O.~D., M{\'e}nard, P., Kaufmann, E., and Valko, M. (2021).
\newblock Episodic reinforcement learning in finite {MDP}s: Minimax lower
  bounds revisited.
\newblock In {\em Algorithmic Learning Theory}, pages 578--598.

\bibitem[Du et~al., 2017]{du2017gradient}
Du, S., Jin, C., Jordan, M., P{\'o}czos, B., Singh, A., and Lee, J. (2017).
\newblock Gradient descent can take exponential time to escape saddle points.
\newblock In {\em Advances in Neural Information Processing Systems}, pages
  1068--1078.

\bibitem[Fazel et~al., 2018]{fazel2018global}
Fazel, M., Ge, R., Kakade, S., and Mesbahi, M. (2018).
\newblock Global convergence of policy gradient methods for the linear
  quadratic regulator.
\newblock In {\em International Conference on Machine Learning}, pages
  1467--1476.

\bibitem[Jansch-Porto et~al., 2020]{jansch2020convergence}
Jansch-Porto, J.~P., Hu, B., and Dullerud, G.~E. (2020).
\newblock Convergence guarantees of policy optimization methods for {M}arkovian
  jump linear systems.
\newblock In {\em American Control Conference}, pages 2882--2887. IEEE.

\bibitem[Kakade, 2002]{kakade2002natural}
Kakade, S.~M. (2002).
\newblock A natural policy gradient.
\newblock In {\em Advances in neural information processing systems}, pages
  1531--1538.

\bibitem[Khamaru et~al., 2021]{khamaru2020temporal}
Khamaru, K., Pananjady, A., Ruan, F., Wainwright, M.~J., and Jordan, M.~I.
  (2021).
\newblock Is temporal difference learning optimal? an instance-dependent
  analysis.
\newblock {\em SIAM Journal on Mathematics of Data Science}, 3(4):1013--1040.

\bibitem[Khodadadian et~al., 2022]{khodadadian2021finite}
Khodadadian, S., Doan, T.~T., Romberg, J., and Maguluri, S.~T. (2022).
\newblock Finite sample analysis of two-time-scale natural actor-critic
  algorithm.
\newblock {\em IEEE Transactions on Automatic Control}.

\bibitem[Konda and Tsitsiklis, 2000]{konda2000actor}
Konda, V.~R. and Tsitsiklis, J.~N. (2000).
\newblock Actor-critic algorithms.
\newblock In {\em Advances in neural information processing systems}, pages
  1008--1014.

\bibitem[Lan, 2022a]{lan2021policy}
Lan, G. (2022a).
\newblock Policy mirror descent for reinforcement learning: Linear convergence,
  new sampling complexity, and generalized problem classes.
\newblock {\em Mathematical programming}, pages 1--48.

\bibitem[Lan, 2022b]{lan2022policy}
Lan, G. (2022b).
\newblock Policy optimization over general state and action spaces.
\newblock {\em arXiv preprint arXiv:2211.16715}.

\bibitem[Lee et~al., 2016]{lee2016gradient}
Lee, J.~D., Simchowitz, M., Jordan, M.~I., and Recht, B. (2016).
\newblock Gradient descent only converges to minimizers.
\newblock In {\em Conference on learning theory}, pages 1246--1257.

\bibitem[Li et~al., 2021]{li2021q}
Li, G., Cai, C., Chen, Y., Wei, Y., and Chi, Y. (2021).
\newblock Is {Q}-learning minimax optimal? a tight sample complexity analysis.
\newblock {\em arXiv preprint arXiv:2102.06548}.

\bibitem[Li et~al., 2022a]{li2022settling}
Li, G., Shi, L., Chen, Y., Chi, Y., and Wei, Y. (2022a).
\newblock Settling the sample complexity of model-based offline reinforcement
  learning.
\newblock {\em arXiv preprint arXiv:2204.05275}.

\bibitem[Li et~al., 2022b]{li2022first}
Li, Y., Zhao, T., and Lan, G. (2022b).
\newblock First-order policy optimization for robust {M}arkov decision process.
\newblock {\em arXiv preprint arXiv:2209.10579}.

\bibitem[Liu et~al., 2019]{liu2019neural}
Liu, B., Cai, Q., Yang, Z., and Wang, Z. (2019).
\newblock Neural proximal/trust region policy optimization attains globally
  optimal policy.
\newblock In {\em Neural Information Processing Systems}, pages 10565--10576.

\bibitem[Liu et~al., 2020]{liu2020improved}
Liu, Y., Zhang, K., Basar, T., and Yin, W. (2020).
\newblock An improved analysis of (variance-reduced) policy gradient and
  natural policy gradient methods.
\newblock {\em Advances in Neural Information Processing Systems}, 33.

\bibitem[Mei et~al., 2021]{mei2021leveraging}
Mei, J., Gao, Y., Dai, B., Szepesvari, C., and Schuurmans, D. (2021).
\newblock Leveraging non-uniformity in first-order non-convex optimization.
\newblock In {\em International Conference on Machine Learning}, pages
  7555--7564.

\bibitem[Mei et~al., 2020a]{mei2020escaping}
Mei, J., Xiao, C., Dai, B., Li, L., Szepesv{\'a}ri, C., and Schuurmans, D.
  (2020a).
\newblock Escaping the gravitational pull of softmax.
\newblock {\em Advances in Neural Information Processing Systems}, 33.

\bibitem[Mei et~al., 2020b]{mei2020global}
Mei, J., Xiao, C., Szepesvari, C., and Schuurmans, D. (2020b).
\newblock On the global convergence rates of softmax policy gradient methods.
\newblock In {\em International Conference on Machine Learning}, pages
  6820--6829.

\bibitem[Mnih et~al., 2015]{mnih2015human}
Mnih, V., Kavukcuoglu, K., Silver, D., Rusu, A.~A., Veness, J., Bellemare,
  M.~G., Graves, A., Riedmiller, M., Fidjeland, A.~K., Ostrovski, G., et~al.
  (2015).
\newblock Human-level control through deep reinforcement learning.
\newblock {\em Nature}, 518(7540):529--533.

\bibitem[Pananjady and Wainwright, 2020]{pananjady2020instance}
Pananjady, A. and Wainwright, M.~J. (2020).
\newblock Instance-dependent $\ell_{\infty}$-bounds for policy evaluation in
  tabular reinforcement learning.
\newblock {\em IEEE Transactions on Information Theory}, 67(1):566--585.

\bibitem[Peters and Schaal, 2008]{peters2008natural}
Peters, J. and Schaal, S. (2008).
\newblock Natural actor-critic.
\newblock {\em Neurocomputing}, 71(7-9):1180--1190.

\bibitem[Schulman et~al., 2015]{schulman2015trust}
Schulman, J., Levine, S., Abbeel, P., Jordan, M., and Moritz, P. (2015).
\newblock Trust region policy optimization.
\newblock In {\em International conference on machine learning}, pages
  1889--1897.

\bibitem[Schulman et~al., 2017]{schulman2017proximal}
Schulman, J., Wolski, F., Dhariwal, P., Radford, A., and Klimov, O. (2017).
\newblock Proximal policy optimization algorithms.
\newblock {\em arXiv preprint arXiv:1707.06347}.

\bibitem[Shani et~al., 2020]{shani2019adaptive}
Shani, L., Efroni, Y., and Mannor, S. (2020).
\newblock Adaptive trust region policy optimization: Global convergence and
  faster rates for regularized {MDP}s.
\newblock In {\em AAAI Conference on Artificial Intelligence}, volume~34, pages
  5668--5675.

\bibitem[Silver et~al., 2016]{silver2016mastering}
Silver, D., Huang, A., Maddison, C.~J., Guez, A., Sifre, L., Van Den~Driessche,
  G., Schrittwieser, J., Antonoglou, I., Panneershelvam, V., Lanctot, M.,
  et~al. (2016).
\newblock Mastering the game of {G}o with deep neural networks and tree search.
\newblock {\em nature}, 529(7587):484--489.

\bibitem[Sutton, 1984]{sutton1984temporal}
Sutton, R.~S. (1984).
\newblock Temporal credit assignment in reinforcement learning.
\newblock {\em PhD thesis, University of Massachusetts}.

\bibitem[Sutton et~al., 2000]{sutton2000policy}
Sutton, R.~S., McAllester, D.~A., Singh, S.~P., and Mansour, Y. (2000).
\newblock Policy gradient methods for reinforcement learning with function
  approximation.
\newblock In {\em Advances in neural information processing systems}, pages
  1057--1063.

\bibitem[Tu and Recht, 2019]{tu2019gap}
Tu, S. and Recht, B. (2019).
\newblock The gap between model-based and model-free methods on the linear
  quadratic regulator: An asymptotic viewpoint.
\newblock In {\em Conference on Learning Theory}, pages 3036--3083.

\bibitem[Wang et~al., 2019]{wang2019neural}
Wang, L., Cai, Q., Yang, Z., and Wang, Z. (2019).
\newblock Neural policy gradient methods: Global optimality and rates of
  convergence.
\newblock In {\em International Conference on Learning Representations}.

\bibitem[Wei et~al., 2021]{wei2021last}
Wei, C.-Y., Lee, C.-W., Zhang, M., and Luo, H. (2021).
\newblock Last-iterate convergence of decentralized optimistic gradient
  descent/ascent in infinite-horizon competitive {M}arkov games.
\newblock In {\em Conference on learning theory}, pages 4259--4299. PMLR.

\bibitem[Williams, 1992]{williams1992simple}
Williams, R.~J. (1992).
\newblock Simple statistical gradient-following algorithms for connectionist
  reinforcement learning.
\newblock {\em Machine learning}, 8(3-4):229--256.

\bibitem[Wu et~al., 2020]{wu2020finite}
Wu, Y.~F., Zhang, W., Xu, P., and Gu, Q. (2020).
\newblock A finite-time analysis of two time-scale actor-critic methods.
\newblock {\em Advances in Neural Information Processing Systems},
  33:17617--17628.

\bibitem[Xie et~al., 2020]{xie2020provable}
Xie, Q., Yang, Z., Wang, Z., and Minca, A. (2020).
\newblock Provable fictitious play for general mean-field games.
\newblock {\em arXiv preprint arXiv:2010.04211}.

\bibitem[Xu et~al., 2020a]{xu2020primal}
Xu, T., Liang, Y., and Lan, G. (2020a).
\newblock A primal approach to constrained policy optimization: Global
  optimality and finite-time analysis.
\newblock {\em arXiv preprint arXiv:2011.05869}.

\bibitem[Xu et~al., 2020b]{xu2020non}
Xu, T., Wang, Z., and Liang, Y. (2020b).
\newblock Non-asymptotic convergence analysis of two time-scale (natural)
  actor-critic algorithms.
\newblock {\em arXiv preprint arXiv:2005.03557}.

\bibitem[Yan et~al., 2022]{yan2022model}
Yan, Y., Li, G., Chen, Y., and Fan, J. (2022).
\newblock Model-based reinforcement learning is minimax-optimal for offline
  zero-sum {M}arkov games.
\newblock {\em arXiv preprint arXiv:2206.04044}.

\bibitem[Yang et~al., 2020]{yang2020finding}
Yang, W., Li, X., Xie, G., and Zhang, Z. (2020).
\newblock Finding the near optimal policy via adaptive reduced regularization
  in {MDP}s.
\newblock {\em arXiv preprint arXiv:2011.00213}.

\bibitem[Zhan et~al., 2021]{zhan2021policy}
Zhan, W., Cen, S., Huang, B., Chen, Y., Lee, J.~D., and Chi, Y. (2021).
\newblock Policy mirror descent for regularized reinforcement learning: A
  generalized framework with linear convergence.
\newblock {\em arXiv preprint arXiv:2105.11066}.

\bibitem[Zhang et~al., 2021a]{zhang2020sample}
Zhang, J., Kim, J., O'Donoghue, B., and Boyd, S. (2021a).
\newblock Sample efficient reinforcement learning with {REINFORCE}.
\newblock In {\em AAAI Conference on Artificial Intelligence}, volume~35, pages
  10887--10895.

\bibitem[Zhang et~al., 2020a]{zhang2020variational}
Zhang, J., Koppel, A., Bedi, A.~S., Szepesvari, C., and Wang, M. (2020a).
\newblock Variational policy gradient method for reinforcement learning with
  general utilities.
\newblock {\em Advances in Neural Information Processing Systems}, 33.

\bibitem[Zhang et~al., 2021b]{zhang2021convergence}
Zhang, J., Ni, C., Szepesvari, C., and Wang, M. (2021b).
\newblock On the convergence and sample efficiency of variance-reduced policy
  gradient method.
\newblock {\em Advances in Neural Information Processing Systems},
  34:2228--2240.

\bibitem[Zhang et~al., 2020b]{zhang2019policy}
Zhang, K., Hu, B., and Basar, T. (2020b).
\newblock Policy optimization for $\mathcal{H}_2$ linear control with
  $\mathcal{H}_{\infty}$ robustness guarantee: Implicit regularization and
  global convergence.
\newblock In {\em Learning for Dynamics and Control}, pages 179--190.

\bibitem[Zhang et~al., 2020c]{zhang2020global}
Zhang, K., Koppel, A., Zhu, H., and Basar, T. (2020c).
\newblock Global convergence of policy gradient methods to (almost) locally
  optimal policies.
\newblock {\em SIAM Journal on Control and Optimization}, 58(6):3586--3612.

\bibitem[Zhang et~al., 2021c]{zhang2021robust}
Zhang, X., Chen, Y., Zhu, X., and Sun, W. (2021c).
\newblock Robust policy gradient against strong data corruption.
\newblock In {\em International Conference on Machine Learning}, pages
  12391--12401.

\bibitem[Zhao et~al., 2021]{zhao2021provably}
Zhao, Y., Tian, Y., Lee, J., and Du, S. (2021).
\newblock Provably efficient policy gradient methods for two-player zero-sum
  {M}arkov games.
\newblock {\em International Conference on Artificial Intelligence and
  Statistics}.

\end{thebibliography}
\bibliographystyle{apalike}


\newpage
\appendix


\begin{table}
\begin{center}
\begin{tabular}{ll}
\toprule 
\hline 
$\cS$, $\cA_s$, $\gamma$ \vphantom{$\frac{1^{7}}{1}$}		& state space, action space associated with state $s$, discount factor				\tabularnewline
$P(s'\mymid s,a)$ 			& probability of transitioning from state $s$ to state $s'$ upon execution of action $a$ 	\tabularnewline
$r(s,a)$				& immediate reward gained in state $s$ when action $a$ is taken; $r(s,a)\in[-1,1]$ 			\tabularnewline
$\eta$					& stepsize or learning rate									\tabularnewline
$\pi^{(t)}$, $\theta^{(t)}$ 		& policy estimate and its associated parameterization in the $t$-th iteration 			\tabularnewline
$V^{\pi}$, $V^{(t)}$, $V^{\star}$ 	& value function of $\pi$, value function of $\pi^{(t)}$, optimal value function 		\tabularnewline
$Q^{\pi}$, $Q^{(t)}$, $Q^{\star}$ 	& Q-function of $\pi$, Q-function of $\pi^{(t)}$, optimal Q-function 				\tabularnewline
$A^{\pi}$, $A^{(t)}$		 	& advantage function of $\pi$, advantage function of $\pi^{(t)}$				\tabularnewline 
$\mu$					& initial state distribution (used in defining the objective function \eqref{eq:value_max})	\tabularnewline
$d^{\pi}_{\mu}$, $d^{(t)}_{\mu}$ 	& discounted state visitation distribution of $\pi$ and $\pi^{(t)}$ from initial state distribution $\mu$ \tabularnewline
$\tau_s$, $p$, $r_s$ 			& useful quantities: $\tau_s=0.5\gamma^{\frac{2s}{3}}$, $p=\cp (1-\gamma)$ and $r_s = 0.5\gamma^{\frac{2s}{3}+\frac{5}{6}}$\tabularnewline
$H$ 					& number of key primary states: $H=\frac{\ch}{1-\gamma}$ 					\tabularnewline
$\cS_1$, $\cS_2$ 			& buffer state subsets: $|\cS_1|=c_{\mathrm{b},1} (1-\gamma)|\cS|$, $|\cS_2|=c_{\mathrm{b},2} (1-\gamma) |\cS|$		\tabularnewline
$\widehat{\cS}_s$ 			& booster state sets w.r.t.~state $s$: $|\widehat{\cS}_s| =\cm(1-\gamma)|\cS|$			\tabularnewline
$\cS_{\mathsf{primary}}$, $\cS_{\mathsf{adj}}$ & set of key primary states, set of key adjoint states 					\tabularnewline
$\overline{s}$				& adjoint state associated with primary state $s$						\tabularnewline
$\mathcal{M}$, $\mathcal{M}_{\mathsf{modified}}$ & MDPs constructed to prove Theorem~\ref{thm:unregularized} and Theorem~\ref{thm:unregularized-main} \tabularnewline
$t_{s}(\tau)$ 	\vphantom{$\frac{1}{1^{7^{7^7}}}$}			& crossing time: $\arg\min\{t\mymid V^{(t)}(s)\geq \tau\}$					\tabularnewline
\toprule \hline 
\end{tabular}\caption{Summary of notation and parameters.\label{tab:notation}}
\end{center}
\end{table}

\section{Preliminary facts}
\label{sec:preliminaries}

\subsection{Basic properties of the constructed MDP}
\label{sec:preliminary-basic-properties}

In this section, we provide more basic properties about the MDP we have constructed (see Section~\ref{sec:mdp-construction}).  
Specifically, we present a miscellaneous collection of basic relations regarding more general policies, 
postponing the proof to Appendix~\ref{sec:proof-lemma:basic-properties-MDP-pi}.
\begin{lemma}
	\label{lem:basic-properties-MDP-pi}
	Consider any policy $\pi$, and recall the quantities defined in \eqref{eq:key-parameters-defn}. Suppose that $\gamma^{2H}\geq 1/2$ and $0<\cp\leq 1/6$.
	\begin{itemize}
		\item[(i)] For any state $s\in \{3,\cdots, H\}$, one has
	\begin{subequations}	\label{eq:Q-s2-pi-LB-UB-a}
\begin{align}
	\gamma^{\frac{3}{2}}\tau_{s-1}  \leq Q^{\pi}(s,a_{0}) &= r_{s}+\gamma^{2}p\tau_{s-2} \leq\gamma^{\frac{1}{2}}\tau_{s}, 	\label{eq:Q-s2-pi-LB-UB-a0}\\
	 Q^{\pi}(s,a_{1})& =\gamma V^{\pi}(\overline{s-1}), 	\label{eq:Q-s2-pi-LB-UB-a1}\\
	Q^{\pi}(s,a_{2})& =r_{s}+\gamma pV^{\pi}(\overline{s-2})  \leq\gamma^{\frac{1}{2}}\tau_{s}. 	\label{eq:Q-s2-pi-LB-UB-a2}
\end{align}
\end{subequations}
	%
	If one further has  $V^{\pi}(\overline{s-2})\geq 0$, then $Q^{\pi}(s,a_{2}) \geq\gamma^{\frac{3}{2}}\tau_{s-1}$.

	\item[(ii)] If $V^{\pi}(s) \geq \tau_s$ for some $s\in \{3,\cdots, H\}$, then we necessarily have
	\begin{align}
		\pi(a_{1}\mymid s) \geq \frac{1-\gamma}{2}. 
		\label{eq:pi-a2-s-LB-V-large-8923}
	\end{align}

	\item[(iii)] For any $\overline{s}\in \{\overline{1},\cdots,\overline{H}\}$, one has
\begin{align}
	Q^{\pi}(\overline{s},a_{0}) & =\gamma\tau_{s}  \qquad\text{and} \qquad Q^{\pi}(\overline{s},a_{1})  = \gamma V^{\pi}(s),  
	\label{eq:Qpi-s-adj}
\end{align}
 where we recall the definition of $V^{\pi}(1)$ and $V^{\pi}(2)$ in \eqref{eq:convenient-notation-V1-V2-Q1-Q2}. 
 In addition, if  $\pi(a_1\mymid \overline{s})>0$, then  
	\begin{align}
		V^{\pi}(\overline{s}) \geq \gamma\tau_s
		\qquad \text{holds if and only if} \qquad
		V^{\pi}(s) \geq \tau_{s}. 
		\label{eq:Vpi-s-lower-bound-2683}
	\end{align}
	This means that: if $\pi^{(t)}(a_1\mymid \overline{s})>0$ holds for all $t\geq 0$, then one necessarily has
	\begin{align}
		t_{\overline{s}}(\gamma \tau_s) = t_{s}(\tau_s).
		\label{eq:equivalence-ts-tsbar}
	\end{align}
			
	\item[(iv)] For any  policy $\pi$, we have
\begin{subequations}
\label{eq:Q-pi-s12-a0-S1S2}
\begin{align}
 & Q^{\pi}(1,a_{0})=-\gamma^{2},\quad Q^{\pi}(1,a_{1})=\gamma^{2}, \quad V^{\pi}(1)=-\gamma^{2}\pi(a_{0}\mymid 1)+\gamma^{2}\pi(a_{1}\mymid 1),\\
 & Q^{\pi}(2,a_{0})=-\gamma^{4},\quad Q^{\pi}(2,a_{1})=\gamma^{4}, \quad V^{\pi}(2)=-\gamma^{4}\pi(a_{0}\mymid 2)+\gamma^{4}\pi(a_{1}\mymid 2).
\end{align}
\end{subequations}
%

	\item[(v)]	Consider any policy $\pi$ obeying $\min_{a,s}\pi(a\mymid s)>0$. For every $s\in \{3,\cdots, H\}$, 
if $V^{\pi}(s) \geq \gamma^{\frac{1}{2}}\tau_{s}$ occurs, then one necessarily has $V^{\pi}(s-1) \geq \tau_{s-1}.$

	\item[(vi)] If $V^{\pi}(s-2) < \tau_{s-2}$ and $\pi(a_1 \mymid \overline{s-2}) >0$, then 
		\[
			\Qpi(s,a_0) - \Qpi(s,a_2) = \gamma p \big(\gamma\tau_{s-2} - V^\pi(\overline{s-2}) \big) > 0. 
		\]
		If $V^{\pi}(s-1) \leq \tau_{s-1}$ and $V^{\pi}(\overline{s-2})\geq 0$, then
		\[
			\min\big\{Q^{\pi}(s, a_0), Q^{\pi}(s, a_2) \big\} - Q^{\pi}(s, a_1) \geq (1-\gamma)/8.
		\]
	\item[(vii)] Consider the softmax PG update rule \eqref{eq:PG-update-all}. One has for any $s \in \cS$ and any $\theta$,
	\begin{align} \label{eq:zero-sum}
		\sum_a \frac{\partial V^{\pi_{\theta}}(\mu)}{\partial \theta(s, a)} = 0
		\qquad \text{and}\qquad 
		\sum_a \theta^{(t)}(s,a) = 0
	\end{align}
	\end{itemize}

\end{lemma}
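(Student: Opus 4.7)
}
The plan is to address each item by unfolding the Bellman equation once under the transition kernel and reward defined in Section~\ref{sec:mdp-construction}, using Lemma~\ref{lem:basic-properties-MDP-Vstar} to control crude value magnitudes and the parameter choices in \eqref{eq:key-parameters-defn}. For part (i), I would write $Q^{\pi}(s,a_{0}) = r_s+\gamma^2 p\tau_{s-2}$ (since $a_0$ leads to the absorbing state $0$ with $V^{\pi}(0)=0$), $Q^{\pi}(s,a_1)=\gamma V^{\pi}(\overline{s-1})$, and $Q^{\pi}(s,a_2)=r_s+\gamma p V^{\pi}(\overline{s-2})$ directly from \eqref{eq:P-r-primary-3-H}. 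The inequalities then boil down to comparing exponents of $\gamma$: for instance $r_s = 0.5\gamma^{2s/3+5/6}$ and $\gamma^{1/2}\tau_s=0.5\gamma^{2s/3+1/2}$, so one checks $5/6>1/2$, while $\gamma^2 p\tau_{s-2} \le \gamma^{1/2}\tau_s - r_s$ follows from $p=\cp(1-\gamma)$ being small. The lower bound $\gamma^{3/2}\tau_{s-1}$ is similarly an exponent comparison. For part (iii) the same one-step unfolding of \eqref{eq:P-r-adjoint-states} gives $Q^{\pi}(\overline{s},a_0)=\gamma\tau_s$ and $Q^{\pi}(\overline{s},a_1)=\gamma V^{\pi}(s)$, after which $V^{\pi}(\overline{s})$ is a convex combination of these two numbers; the equivalence in \eqref{eq:Vpi-s-lower-bound-2683} is then immediate because $V^{\pi}(\overline{s})\ge\gamma\tau_s$ holds iff the coefficient multiplying the larger quantity $Q^{\pi}(\overline{s},a_1)$ is put on something $\geq\gamma\tau_s$, which (given $\pi(a_1\mymid\overline{s})>0$) is precisely $V^{\pi}(s)\ge\tau_s$. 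Part (iv) is an identical one-step computation from \eqref{eq:P-r-state-S1S2}. Part (vii) follows immediately from the gradient expression \eqref{eq:policy-grad-softmax-expression}: $\sum_a\pi_\theta(a\mymid s)A^{\pi_\theta}(s,a)=V^{\pi_\theta}(s)-V^{\pi_\theta}(s)=0$, and therefore induction on $t$ starting from the uniform $\theta^{(0)}\equiv 0$ preserves $\sum_a\theta^{(t)}(s,a)=0$.

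For part (ii), the idea is to write $V^{\pi}(s)=\sum_a\pi(a\mymid s)Q^{\pi}(s,a)$ and bound the non-$a_1$ contributions by $\gamma^{1/2}\tau_s$ via part (i), while $Q^{\pi}(s,a_1)=\gamma V^{\pi}(\overline{s-1})\le\gamma$. If $\pi(a_1\mymid s)<(1-\gamma)/2$, this yields
\[
V^{\pi}(s)\le(1-\pi(a_1\mymid s))\gamma^{1/2}\tau_s+\pi(a_1\mymid s)\gamma,
\]
and the hypothesis $V^{\pi}(s)\ge\tau_s$ forces a contradiction after factoring $1-\gamma=(1-\gamma^{1/2})(1+\gamma^{1/2})$ and using $\tau_s\ge\tau_H\gtrsim 1/2$. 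For part (v), combining parts (i) and (ii) shows that whenever $V^{\pi}(s)\ge\gamma^{1/2}\tau_s$, the convex-combination inequality forces $Q^{\pi}(s,a_1)\ge\gamma^{1/2}\tau_s$ (since both other Q-values are $\le\gamma^{1/2}\tau_s$ and $\pi(a_1\mymid s)>0$). Translating this back via $Q^{\pi}(s,a_1)=\gamma V^{\pi}(\overline{s-1})$ gives $V^{\pi}(\overline{s-1})\ge\gamma^{-1/2}\tau_s\ge\gamma\tau_{s-1}$ by the monotonicity $\gamma^{2s/3-1/2}\ge\gamma^{2s/3+1/3}$; then part (iii) promotes this to $V^{\pi}(s-1)\ge\tau_{s-1}$. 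Part (vi) is then a direct consequence of the Bellman computations already performed: the first claim follows from $Q^{\pi}(s,a_0)-Q^{\pi}(s,a_2)=\gamma p(\gamma\tau_{s-2}-V^{\pi}(\overline{s-2}))$ together with part (iii) applied at level $s-2$, and the second claim follows by upper-bounding $Q^{\pi}(s,a_1)=\gamma V^{\pi}(\overline{s-1})\le\gamma^2\tau_{s-1}$ using $V^{\pi}(s-1)\le\tau_{s-1}$ and $V^{\pi}(\overline{s-1})\le\gamma\tau_{s-1}$, and lower-bounding $Q^{\pi}(s,a_0),Q^{\pi}(s,a_2)\ge\gamma^{3/2}\tau_{s-1}$ from part (i); the difference then dominates $\tau_{s-1}\gamma^{3/2}(1-\gamma^{1/2})\ge(1-\gamma)/8$ once $\gamma^{2H}\ge 1/2$ is invoked to show $\tau_{s-1}\ge 0.5\cdot 2^{-1/3}$.

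The only step I expect to require care is the constant tracking in (ii) and in the final inequality of (vi), since both rely on the hypothesis $\gamma^{2H}\ge 1/2$ to keep $\tau_s$ bounded away from $0$, and on the split $1-\gamma=(1-\gamma^{1/2})(1+\gamma^{1/2})$ to convert an $\gamma^{1/2}$-gap into a $\gamma$-gap; everything else is routine algebra that can be written out in a single pass. Parts (i), (iii), (iv), and (vii) should be nearly immediate given the transition kernel; parts (ii), (v), (vi) chain together and can be handled in that order, each invoking only the preceding ones plus the parameter normalization from \eqref{eq:assumptions-constants}.
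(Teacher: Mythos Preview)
Your approach matches the paper's almost exactly: each part is handled by a one-step Bellman expansion together with the optimal-value bounds of Lemma~\ref{lem:basic-properties-MDP-Vstar}, and parts~(v)--(vi) are chained through~(i) and~(iii) just as the paper does (the paper phrases~(v) as a contrapositive, but the content is identical).

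There is one genuine constant gap in part~(ii). You bound $Q^{\pi}(s,a_{1})=\gamma V^{\pi}(\overline{s-1})\le\gamma$ and then argue by contradiction that $\pi(a_{1}\mymid s)<(1-\gamma)/2$ forces $V^{\pi}(s)<\tau_{s}$. This does not close: from
\[
\tau_{s}\le(1-x)\gamma^{1/2}\tau_{s}+x\gamma
\qquad\Longrightarrow\qquad
x\ge\frac{\tau_{s}(1-\gamma)}{(1+\gamma^{1/2})(\gamma-\gamma^{1/2}\tau_{s})},
\]
the right-hand side is strictly smaller than $(1-\gamma)/2$ once $\tau_{s}$ drops noticeably below $1/2$ (as it does under the hypothesis $\gamma^{2H}\ge 1/2$, which only guarantees $\tau_{s}\ge 0.5\cdot 2^{-1/3}\approx 0.397$). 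The paper fixes this by using the sharper bound $V^{\pi}(\overline{s-1})\le V^{\star}(\overline{s-1})=\gamma^{2s-1}$ from Lemma~\ref{lem:basic-properties-MDP-Vstar}, so that $Q^{\pi}(s,a_{1})\le\gamma^{2s}$; the denominator then becomes $\gamma^{2s}-\gamma^{1/2}\tau_{s}\le\gamma^{4s/3}$ after dividing through by $\tau_{s}$, and the resulting lower bound $\frac{1-\gamma}{\gamma^{4s/3}(1+\gamma^{1/2})}\ge\frac{1-\gamma}{2}$ follows immediately. You already flagged Lemma~\ref{lem:basic-properties-MDP-Vstar} at the top of your plan, so this is a one-line fix; everything else goes through as you describe.
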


\begin{remark}
\label{remark:monotonicity-crossing-times}
As it turns out,  invoking Part (v) of Lemma~\ref{lem:basic-properties-MDP-pi} recursively reveals 
that: for any $2\leq s \leq H$ and any $t< t_s(\taun_s)$, we have 
\begin{align}
	V^{(t)}(s^{\prime})< \gamma^{1/2}\tau_{s^{\prime}} < \taun_{s^{\prime}} \qquad  \text{for all }s^{\prime}\text{ obeying }s \le s^{\prime} \leq H. 
	\label{eq:V-t-sprime-3-H}
\end{align}
This in turn implies that $t_2(\taun_2)\leq t_3(\taun_3)\leq  \cdots \leq t_H(\taun_H)$ according to the definition \eqref{eqn:approx-optimal-time}.  
\end{remark}

%

%

Let us point out some implications of Lemma~\ref{lem:basic-properties-MDP-pi} that help guide our lower bound analysis. 
Once again, it is helpful to look at the results of this lemma when $\gamma \approx 1$ and $\gamma^{H} \approx 1$. 
In this case, the quantities defined in \eqref{eq:key-parameters-defn} obey $\tau_s\approx r_s \approx 1/2$, allowing us to obtain the following messages: 
\begin{itemize}
\item Lemma~\ref{lem:basic-properties-MDP-pi}(i) implies that, under mild conditions, 
\begin{align*}
	Q^{\pi}(s,a_0)\approx Q^{\pi}(s,a_2) \approx 1/2
\end{align*}
holds any $s\in \{3,\cdots, H\}$ and any policy $\pi$. 
In comparison to the optimal values \eqref{eq:optimal-values-approx-1}, 
this result uncovers the strict sub-optimality of actions $a_0$ and $a_2$, and indicates that one cannot possibly approach the optimal values unless $\pi(a_{1}\mymid s)\approx 1$. 

\item  As further revealed by Lemma~\ref{lem:basic-properties-MDP-pi}(ii), one needs to ensure  a sufficiently large  $\pi(a_{1}\mymid s)$ --- i.e., $\pi(a_{1}\mymid s) \geq (1- \gamma)/2$ --- in order to achieve $V^{\pi}(s) \gtrapprox 1/2$.

\item  Lemma~\ref{lem:basic-properties-MDP-pi}(iii) establishes an intimate connection between $V^{\pi}(s)$ and $V^{\pi}(\overline{s})$: if we hope to attain $V^{\pi}(\overline{s})\gtrapprox 1/2$ for an adjoint state $\overline{s}$, then one needs to first ensure that its associated primary state achieves  $V^{\pi}(s)\gtrapprox 1/2$. The equivalence property \eqref{eq:equivalence-ts-tsbar} allows one to propagate the crossing time of state $s$ to that of state $\overline{s}$.  

\item In Lemma~\ref{lem:basic-properties-MDP-pi}(iv), we make clear that the Q-functions w.r.t.~the buffer states are independent of the policy in use. 

\item Lemma~\ref{lem:basic-properties-MDP-pi}(v) further establishes an intriguing connection between the crossing time of state $s$ and that of the preceding state $s-1$. 

\item Lemma~\ref{lem:basic-properties-MDP-pi}(vi) uncovers that: (a) if $V^{\pi}(s-2)$ is not sufficiently large, then the Q-value associated with $(s,a_0)$ dominates the one associated with $(s,a_2)$; (b)  if $V^{\pi}(s-1)$ is not large enough, then the Q-value associated with $(s,a_1)$ is dominated by that of the other two. 

\item As indicated by Lemma~\ref{lem:basic-properties-MDP-pi}(vii), the sum of the iterate $\theta^{(t)}(s,a)$ over $a$ remains unchanged throughout the execution of the algorithm. 

\end{itemize}

Another key feature that permeates our analysis is a certain monotonicity property of value function estimates as the iteration count $t$ increases, 
which we discuss in the sequel. 
To begin with, akin to the monotonicity properties of gradient descent \citep{beck2017first}, the softmax PG update is known to achieve monotonic performance improvement in a pointwise manner, as summarized in the following lemma. The interested reader is referred to \citet[Lemma C.2]{agarwal2019optimality} for details.
%
\begin{lemma}
	\label{lem:ascent-lemma-PG}
	Consider the softmax PG method \eqref{eq:PG-update-all}. One has
	\begin{align*}
		V^{(t+1)}(s) \geq V^{(t)}(s)
		\qquad \text{and}\qquad 
		Q^{(t+1)}(s,a) \geq Q^{(t)}(s,a)
	\end{align*}
	for any state-action pair $(s,a)$ and any $t\geq 0$, provided that $0<\eta< (1-\gamma)^2 / 5$. 
\end{lemma}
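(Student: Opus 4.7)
The plan is to first establish pointwise monotonicity of $V^{(t)}$ across all states, and then derive the Q-function monotonicity by applying the Bellman equation
\begin{equation*}
Q^{(t+1)}(s,a) - Q^{(t)}(s,a) \,=\, \gamma \sum_{s'} P(s' \mymid s, a)\big[V^{(t+1)}(s') - V^{(t)}(s')\big] \,\geq\, 0.
\end{equation*}
For the value function, the natural tool is Kakade--Langford's performance difference lemma, which gives
\begin{equation*}
V^{(t+1)}(s) - V^{(t)}(s) \,=\, \frac{1}{1-\gamma} \, \mathbb{E}_{s' \sim d_s^{(t+1)}} \bigg[\sum_{a} \pi^{(t+1)}(a \mymid s')\, A^{(t)}(s', a) \bigg].
\end{equation*}
Since the discounted visitation weights are non-negative, it then suffices to prove, for every state $s'$,
\begin{equation*}
\phi(s') \,\coloneqq\, \sum_{a} \pi^{(t+1)}(a \mymid s')\, A^{(t)}(s', a) \,\geq\, 0.
\end{equation*}

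Fix an arbitrary state $s'$ and view $\phi$ as a scalar function of only the local block of coordinates $\theta_{s'} \coloneqq \{\theta(s', a)\}_{a \in \cA_{s'}}$; this is legitimate because the softmax parameterization is separable across states, so $\pi^{(t+1)}(\cdot \mymid s')$ depends solely on $\theta^{(t+1)}(s', \cdot)$. Two structural facts at the base point $\theta_{s'}^{(t)}$ are immediate: (i) $\phi(\theta_{s'}^{(t)}) = 0$, because the advantage has mean zero under its own policy, and (ii) a direct softmax-derivative calculation yields $\partial \phi / \partial \theta(s', a)\big|_{\theta^{(t)}} = \pi^{(t)}(a \mymid s')\, A^{(t)}(s', a)$. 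Plugging in the softmax PG increment $\theta_{s'}^{(t+1)} - \theta_{s'}^{(t)} = \frac{\eta \, d_\mu^{(t)}(s')}{1 - \gamma}\, \pi^{(t)}(\cdot \mymid s') \odot A^{(t)}(s', \cdot)$, the first-order directional change of $\phi$ evaluates to
\begin{equation*}
\big\langle \nabla \phi(\theta_{s'}^{(t)}),\, \theta_{s'}^{(t+1)} - \theta_{s'}^{(t)} \big\rangle \,=\, \frac{\eta \, d_\mu^{(t)}(s')}{1 - \gamma}\, \sum_a \big[\pi^{(t)}(a \mymid s')\, A^{(t)}(s', a)\big]^2 \,\geq\, 0.
\end{equation*}

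The remaining task is to ensure that the Taylor remainder does not overturn this first-order improvement. I would bound $\|\nabla^2_{\theta_{s'}} \phi\|_{\mathrm{op}} \leq M$ with $M$ of order $1/(1-\gamma)$, using $|A^{(t)}(s', a)| \leq 2/(1-\gamma)$ and standard softmax-Hessian identities. Taylor's inequality then yields
\begin{equation*}
\phi(\theta_{s'}^{(t+1)}) \,\geq\, \frac{\eta \, d_\mu^{(t)}(s')}{1 - \gamma}\, Y \,-\, \frac{M\, \eta^2\, [d_\mu^{(t)}(s')]^2}{2(1-\gamma)^2}\, Y,
\end{equation*}
where $Y \coloneqq \sum_a [\pi^{(t)}(a \mymid s')\, A^{(t)}(s', a)]^2 \geq 0$ factors out of both terms---and this common non-negative factor is precisely what makes the two pieces directly comparable. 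The right-hand side is non-negative whenever $\eta \leq 2(1-\gamma)/(M \cdot d_\mu^{(t)}(s'))$, and tracking the absolute constant in $M$ together with $d_\mu^{(t)}(s') \leq 1$ reduces this exactly to the hypothesized range $\eta < (1-\gamma)^2/5$.

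The main obstacle is pinning down the smoothness constant $M$ tightly enough. A naive bound that treats every entry of $\nabla^2 \phi$ in absolute value, or that appeals instead to the smoothness of the full value function $V^{\pi_\theta}$, would produce a smoothness scaling like $1/(1-\gamma)^2$ or worse, forcing the much more restrictive range $\eta \lesssim (1-\gamma)^3$ used elsewhere in the literature (for instance, in the convergence guarantees for softmax PG and NPG). Exploiting the zero-mean cancellation $\sum_a \pi_\theta(a)(A(a) - \phi(\theta)) = 0$ hidden inside the Hessian---rather than bounding every entry individually---is what recovers the sharper $(1-\gamma)^2/5$ threshold and constitutes the technical heart of \citet[Lemma C.2]{agarwal2019optimality}, whose details the paper defers to.
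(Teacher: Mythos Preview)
Your proposal is correct and follows essentially the same route as \citet[Lemma~C.2]{agarwal2019optimality}, which is precisely what the paper does here: it does not give its own proof but simply defers to that reference. The performance-difference reduction to a per-state inequality, the observation that $\phi(\theta_{s'}^{(t)})=0$ with first-order directional increase equal to $\frac{\eta\,d_\mu^{(t)}(s')}{1-\gamma}\sum_a[\pi^{(t)}(a\mymid s')A^{(t)}(s',a)]^2$, and the second-order control via a Hessian bound on the local softmax surrogate are exactly the ingredients of that lemma; your closing paragraph already identifies the one non-routine step (extracting the sharp $O(1/(1-\gamma))$ smoothness constant rather than the cruder $O(1/(1-\gamma)^2)$) and correctly attributes it to the cited source.
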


The preceding monotonicity feature, in conjunction with the uniform initialization scheme, ensures non-negativity of value function estimates throughout the execution of the algorithm. 
\begin{lemma}
	\label{lem:non-negativity-PG}
	Consider the  softmax PG method \eqref{eq:PG-update-all}, and suppose the initial policy $\pi^{(0)}(\cdot \mymid s)$ for any $s\in \cS$ is given by a uniform distribution over the action space $\cA_s$ and $0< \eta < (1-\gamma)^2/5$. Then one has
	\begin{align*}
		\forall t\geq 0, ~ \forall s\in \cS: \qquad  V^{(t)}(s) \geq 0.
	\end{align*}
\end{lemma}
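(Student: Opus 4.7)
The plan is to reduce the claim to a one-time non-negativity check on the initial iterate, using the per-step monotonicity already in hand. Lemma~\ref{lem:ascent-lemma-PG} guarantees, under exactly the hypothesis $0<\eta<(1-\gamma)^2/5$, that $V^{(t+1)}(s) \geq V^{(t)}(s)$ for every $s$ and $t$; iterating gives $V^{(t)}(s) \geq V^{(0)}(s)$. It therefore suffices to show $V^{(0)}(s) \geq 0$ for every $s \in \cS$ under the uniform policy initialization.

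To verify this, I would induct outward along the chain, starting from the absorbing end. The base case is a direct Bellman computation: $V^{(0)}(0) = 0$ is immediate; for any $s_1 \in \cS_1$, the uniform policy puts mass $1/2$ on each of $a_0, a_1$, both of which transition deterministically to state $0$, so that \eqref{eq:P-r-state-S1S2} gives $V^{(0)}(s_1) = \tfrac12(-\gamma^2) + \tfrac12(\gamma^2) + \gamma V^{(0)}(0) = 0$; an identical computation yields $V^{(0)}(s_2) = 0$ for $s_2 \in \cS_2$. For the boundary adjoint states, \eqref{eq:P-r-adjoint-12} together with $V^{(0)}(\cS_1) = V^{(0)}(\cS_2) = 0$ yields $V^{(0)}(\overline{1}) = \tfrac12 \gamma \tau_1 \geq 0$ and $V^{(0)}(\overline{2}) \geq 0$.

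The inductive step runs over $s \in \{3, \dots, H\}$ and simultaneously handles the primary state $s$ and its adjoint $\overline{s}$. By \eqref{eq:P-r-primary-3-H} every immediate reward at $s$ is non-negative, and the only downstream values appearing in the Bellman expansion of $V^{(0)}(s)$ are $V^{(0)}(0)$, $V^{(0)}(\overline{s-1})$ and $V^{(0)}(\overline{s-2})$, all $\geq 0$ by the inductive hypothesis; hence $V^{(0)}(s) \geq 0$. Then \eqref{eq:P-r-adjoint-states} gives $V^{(0)}(\overline{s}) = \tfrac12 \gamma \tau_s + \tfrac12 \gamma V^{(0)}(s) \geq 0$. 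Finally, each booster state $s'$ transitions deterministically to its associated primary or adjoint state with zero reward under its singleton action $a_1$, so $V^{(0)}(s') = \gamma V^{(0)}(s) \geq 0$ or $\gamma V^{(0)}(\overline{s}) \geq 0$; the modified booster states from Section~\ref{sec:proof-outline-weak-thm} are handled analogously since their extra action $a_0$ also has non-negative reward and transitions only to state $0$ or back to $s$ (resp.~$\overline{s}$).

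There is no serious obstacle here; the argument is essentially bookkeeping. The one subtlety worth flagging is that the $\cS_1, \cS_2$ base case must yield \emph{exactly} zero, not merely a non-negative bound --- this is what allows $V^{(0)}(\overline{1})$ to inherit the sign of $\gamma\tau_1\geq 0$ from the reward alone, so that the recursion can then climb up the chain unimpeded through the non-negative immediate rewards at all primary and adjoint states.
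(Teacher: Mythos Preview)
Your proposal is correct and follows essentially the same route as the paper: invoke Lemma~\ref{lem:ascent-lemma-PG} to reduce to $V^{(0)}(s)\ge 0$, then exploit that the only negative immediate rewards in $\mathcal{M}$ sit at the buffer states $\cS_1,\cS_2$, where uniform averaging yields $V^{(0)}=0$ exactly. The paper's write-up is slightly more economical---rather than inducting state-by-state up the chain at $t=0$, it observes once that the Bellman expansion of $V^{(t)}(s)$ for any non-buffer $s$ is a non-negative combination of non-negative rewards and buffer-state values (already shown $\ge 0$ for all $t$ via monotonicity)---but your explicit induction is equally valid and arguably more transparent about where the chain structure is used. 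One small quibble: your ``one subtlety'' remark overstates the role of exact-zero at the buffer states; since $\tau_1>0$, any non-negative $V^{(0)}(1)$ would already suffice for $V^{(0)}(\overline{1})\ge 0$.
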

\begin{proof}
	The only negative rewards in our constructed MDP are $r(s_1,a_0)$ for $s_1\in \cS_1$ and $r(s_2,a_0)$ for $s_2\in \cS_2$. When $\pi^{(0)}(\cdot\mymid s_1)$ is uniformly distributed, the MDP specification \eqref{eq:P-r-state-S1S2} gives
	\[
		\forall s_1\in \cS_1: \qquad V^{(0)}(s_1) = 0.5 r(s_1,a_0) + 0.5 r(s_1,a_1) = 0.
	\]
	Similarly, one has $V^{(0)}(s_2)=0$ for all $s_2\in \cS_2$. Applying Lemma~\ref{lem:ascent-lemma-PG}, we can demonstrate that $V^{(t)}(s)\geq V^{(0)}(s) \geq 0$ for any $s\in \cS_1 \cup \cS_2$ and any $t\geq 0$. From the Bellman equation, it is easily seen that the  value function $V^{(t)}$ of any other state is a linear combination of $\{r(s,a)\mymid s\notin \cS_1, s\notin \cS_2\}$, $\{V^{(t)}(s_1) \mymid s_1\in \cS_1 \}$ and $\{V^{(t)}(s_2)\mymid s_2\in \cS_2\}$, which are all non-negative. It thus follows that $V^{(t)}(s)\geq 0$ for any $s\in \cS$ and any $t\geq 0$.
\end{proof}

\subsection{A type of recursive relations}

In addition, we make note of a sort of recursive relations that appear commonly when studying the dynamics of gradient descent \citep{beck2017first}. The proof of the following lemma can be found in Appendix~\ref{sec:proof-lem:opt-lemma}. 
\begin{lemma}
\label{lem:opt-lemma}

Consider a positive sequence $\{x_t\}_{t\geq 0}$.
\begin{enumerate}
\item[(i)] Suppose that $x_t\leq x_{t-1}$ for all $t>0$. If there exists some quantity $c_{\mathrm{l}} >0$ obeying  $c_{\mathrm{l}} x_0\leq 1/2$ and
\begin{subequations}
\begin{align}
	x_t \geq x_{t-1} - c_{\mathrm{l}} x_{t-1}^2 \qquad \text{for all }t > 0,
	\label{eq:xt-xtminus1-relation-LB}
\end{align}
then one has
\begin{align}
	x_{t}\geq\frac{1}{2c_{\mathrm{l}} t + \frac{1}{x_{0}}} \qquad \text{for all }t \geq 0. 
	\label{eq:xt-sequence-LB-123}
\end{align}
\end{subequations}

\item[(ii)] If  there exists some quantity $c_{\mathrm{u}} >0$ obeying
\begin{subequations}
\begin{align}
	x_t \leq x_{t-1} - c_{\mathrm{u}} x_{t-1}^2 \qquad \text{for all }t > 0, 
	\label{eq:xt-xtminus1-relation-UB}
\end{align}
then it follows that
\begin{align}
	x_{t}\leq\frac{1}{c_{\mathrm{u}} t + \frac{1}{x_{0}}} \qquad \text{for all }t \geq 0. 
	\label{eq:xt-sequence-UB-123}
\end{align}
\end{subequations}

\item[(iii)]  Suppose that $0<x_{t} < c_{x}$ for all $t<t_{0}$ and $x_{t_0}\ge c_{x}$ for some quantity $c_x>0$. Assume that
\begin{subequations}
\begin{equation}
	x_{t}\geq x_{t-1}+c_{-}x_{t-1}^{2}\qquad\text{for all }0<t\leq t_0
	\label{eq:xt-x-tminus1-lower-bound-plus}
\end{equation}
for some quantity $c_- >0$. Then one necessarily has
\begin{align}
	t_0 \leq\frac{1+c_{-}c_x}{c_- x_{0}}.  \label{eq:t0-UB-opt-lemma}
\end{align}

\end{subequations}

\item[(iv)] Suppose that 
\begin{subequations}
\begin{equation}
	0 \leq x_{t}\leq x_{t-1}+c_{+}x_{t-1}^{2}\qquad\text{for all }0 < t \leq t_{0}
	\label{eq:xt-x-tminus1-upper-bound-plus}
\end{equation}
for some quantity $c_+ >0$. Then one necessarily has
\begin{align}
	t_0 \geq\frac{\frac{1}{x_0}-\frac{1}{x_{t_0}}}{c_+}.  \label{eq:t0-LB-opt-lemma}
\end{align}
\end{subequations}

\end{enumerate}

\end{lemma}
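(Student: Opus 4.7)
}

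All four parts concern the same basic idea: pass to the reciprocal sequence $y_t := 1/x_t$, derive a per-step inequality of the form $y_{t-1} - y_t \gtrless \mathrm{const}$, and then telescope from $t=0$ to the relevant stopping index. The required manipulations are elementary, so the proof should be short; the main bookkeeping issue is keeping the direction of inequalities and the signs of increments straight (in particular part (iv) must handle both $x_t \ge x_{t-1}$ and $x_t < x_{t-1}$), and part (iii) needs a slightly cleaner one-step estimate than the others to recover the claimed $(1 + c_- c_x)$ factor.

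For part (i), I would start from the hypothesis rewritten as $x_t \ge x_{t-1}(1 - c_{\mathrm l} x_{t-1})$. Taking reciprocals and using the elementary inequality $1/(1-u) \le 1 + 2u$ valid for $0 \le u \le 1/2$, which applies since $c_{\mathrm l} x_{t-1} \le c_{\mathrm l} x_0 \le 1/2$ (the monotonicity $x_t \le x_{t-1}$ propagates the smallness assumption), I get
\[
\frac{1}{x_t} \le \frac{1}{x_{t-1}}(1 + 2 c_{\mathrm l} x_{t-1}) = \frac{1}{x_{t-1}} + 2 c_{\mathrm l}.
\]
Telescoping gives $1/x_t \le 1/x_0 + 2 c_{\mathrm l} t$, i.e. the desired \eqref{eq:xt-sequence-LB-123}. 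Part (ii) is the mirror image: from $x_t \le x_{t-1}(1 - c_{\mathrm u} x_{t-1})$ and $1/(1-u) \ge 1 + u$ one gets $1/x_t \ge 1/x_{t-1} + c_{\mathrm u}$ and telescoping yields \eqref{eq:xt-sequence-UB-123}.

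For part (iii), the sharper estimate I need is
\[
\frac{1}{x_{t-1}} - \frac{1}{x_t} \;\ge\; \frac{1}{x_{t-1}} - \frac{1}{x_{t-1}(1 + c_- x_{t-1})} \;=\; \frac{c_-}{1 + c_- x_{t-1}} \;\ge\; \frac{c_-}{1 + c_- c_x},
\]
where the last step uses $x_{t-1} < c_x$ (which holds for every $1 \le t \le t_0$ because $x_{t-1}$ is non-decreasing and $x_{t-1} \in \{x_0, x_1, \dots, x_{t_0 - 1}\} \subset [0, c_x)$). Telescoping and using $1/x_{t_0} \ge 0$ gives $1/x_0 \ge c_- t_0/(1 + c_- c_x)$, which rearranges to \eqref{eq:t0-UB-opt-lemma}. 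For part (iv), I would split into cases: if $x_t \ge x_{t-1}$ then
\[
\frac{1}{x_{t-1}} - \frac{1}{x_t} = \frac{x_t - x_{t-1}}{x_{t-1} x_t} \;\le\; \frac{c_+ x_{t-1}^2}{x_{t-1} x_t} = \frac{c_+ x_{t-1}}{x_t} \;\le\; c_+,
\]
while if $x_t < x_{t-1}$ the left-hand side is negative and the bound holds trivially. Telescoping over $1 \le t \le t_0$ gives $1/x_0 - 1/x_{t_0} \le c_+ t_0$, which is \eqref{eq:t0-LB-opt-lemma}.

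The only place I expect anything delicate is part (iii): one must avoid bounding $1/x_{t-1} - 1/x_t$ by $c_- x_{t-1}/x_t$ (which requires an upper bound on $x_t$ that is not in the hypotheses) and instead use the exact reciprocal identity above. Once the per-step estimate is in the clean form $c_-/(1 + c_- x_{t-1})$, the rest is bookkeeping. None of the four parts needs induction beyond the obvious telescoping, and no auxiliary lemmas from the paper are required.
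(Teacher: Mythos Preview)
Your proposal is correct and follows the same reciprocal-and-telescope strategy as the paper. Your handling of parts (iii) and (iv) is in fact slightly cleaner than the paper's: there the authors introduce auxiliary sequences $y_t$ and $z_t$ satisfying the recursion with \emph{equality} (so that ratios like $y_t/y_{t-1}$ can be controlled exactly) and then compare to $x_t$, whereas you work directly with $x_t$ --- in (iii) by bounding $1/x_t$ first and then using the exact identity $\tfrac{1}{x_{t-1}} - \tfrac{1}{x_{t-1}(1+c_- x_{t-1})} = \tfrac{c_-}{1+c_- x_{t-1}}$, and in (iv) by case-splitting on the sign of $x_t - x_{t-1}$ --- which avoids the auxiliary construction altogether.
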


\subsection{Proof of Lemma~\ref{lem:basic-properties-MDP-Vstar}}
\label{sec:proof-lemma:basic-properties-MDP-Vstar}

(i) Let us start with state $0$. Given that this is an absorbing state and that $r(0,a_0)=0$, we have
$ V^{\star}(0) = 0$.

 \medskip
\noindent
(ii) Next, we turn to the buffer states in $\cS_1$ and $\cS_2$. For any $s_1\in \cS_1$, the Bellman equation gives
\begin{subequations}
\label{eq:Q-s1-a0-a1}
\begin{align}
Q^{\star}(s_{1},a_{0}) & =r(s_{1},a_{0})+\gamma V^{\star}(0)=-\gamma^{2};\\
	Q^{\star}(s_{1},a_{1}) & =r(s_{1},a_{1})+\gamma V^{\star}(0) = \gamma^{2} .
\end{align}
\end{subequations}
This in turn implies that $V^{\star}(s_1)= Q^{\star}(s_{1},a_{1})=\gamma^2$. 
Repeating the same argument, we arrive at $V^{\star}(s_2) = Q^{\star}(s_2,a_1) = r(s_{2},a_{1}) = \gamma^4$ for any $s_2\in \cS_2$.


\medskip
\noindent
(iii) We then move on to the adjoint states $\overline{1}$ and $\overline{2}$. From the construction \eqref{eq:P-r-adjoint-12}, 
the Bellman equation yields
\begin{align*}
Q^{\star}(\overline{1},a_{0}) & =r(\overline{1},a_{0})+\gamma V^{\star}(0)=\gamma\tau_{1} < \gamma /2,\\
Q^{\star}(\overline{1},a_{1}) & =r(\overline{1},a_{1})+\frac{\gamma}{|\mathcal{S}_{1}|}\sum_{s_{1}\in\mathcal{S}_{1}}V^{\star}(s_{1})= \frac{\gamma}{|\mathcal{S}_{1}|}\sum_{s_{1}\in\mathcal{S}_{1}}V^{\star}(s_{1}) = \gamma^3,
\end{align*}
where the last identity follows since $V^{\star}(s_{1})=\gamma^2$. 
This in turn indicates that $V^{\star}(\overline{1})=\max\{Q^{\star}(\overline{1},a_{0}), Q^{\star}(\overline{1},a_{1})\} =\gamma^3$, provided that $\gamma^2 \geq 1/2$.
Similarly, repeating this argument shows that  $V^{\star}(\overline{2})=\gamma^5$, as long as $\gamma^4 \geq 1/2$. As before, the optimal action in state $\overline{1}$ (resp.~$\overline{2}$) is $a_1$.

\medskip
\noindent
(iv) The next step is to determine $V^{\star}(s)$ for  any $s\in \{3,\cdots,H\}$. Suppose that $V^{\star}(\overline{s-2})=\gamma^{2s-3}$ and  $V^{\star}(\overline{s-1})=\gamma^{2s-1}$. Then the construction \eqref{eq:P-r-primary-3-H} together with the Bellman equation yields
\begin{align*}
Q^{\star}(s,a_{0}) & =r(s,a_{0})+\gamma V^{\star}(0)=r_{s}+\gamma^{2}p\tau_{s-2} < 2/3;\\
Q^{\star}(s,a_{1}) & =r(s,a_{1})+\gamma V^{\star}(\overline{s-1})=\gamma\gamma^{2s-1}=\gamma^{2s}; \\
Q^{\star}(s,a_{2}) & =r(s,a_{2})+\gamma(1-p)V^{\star}(0)+\gamma pV^{\star}(\overline{s-2})=r_{s}+p\gamma^{2s-2} < 2/3.
\end{align*}
Consequently, one has  $V^{\star}(s)=Q^{\star}(s,a_{1})=\gamma^{2s}$ --- namely, $a_1$ is the optimal action --- as long as $\gamma^{2s}\geq 2/3$.

\medskip
\noindent
(v) We then turn attention to $V^{\star}(\overline{s})$ for any $\overline{s}\in \{\overline{3},\cdots,\overline{H}\}$.
Suppose that $V^{\star}(s)= \gamma^{2s}$. In view of the construction \eqref{eq:P-r-adjoint-states} and the Bellman equation, one has
\begin{align*}
Q^{\star}(\overline{s},a_{0}) & =r(\overline{s},a_{0})+\gamma V^{\star}(0)=\gamma\tau_{s} < 1/2;\\
Q^{\star}(\overline{s},a_{1}) & =r(\overline{s},a_{1})+\gamma V^{\star}(s)=\gamma^{2s+1}. 
\end{align*}
Hence, we have $V^{\star}(\overline{s})=Q^{\star}(\overline{s},a_{1})=\gamma^{2s+1}$ --- with the optimal action being $a_1$ --- provided that $\gamma^{2s+1} \geq 1/2$. 

\medskip
\noindent
(vi) Applying an induction argument based on Steps (iii), (iv) and (v), we conclude that
\begin{align}
	V^{\star}(s)= \gamma^{2s} 
	\qquad \text{and} \qquad V^{\star}(\overline{s})=\gamma^{2s+1}
	\label{eq:Vstar-overline-s-gamma-2s}
\end{align}
for all $3\leq s\leq H$, with the proviso that $\gamma^{2H}\geq 2/3$ and $\gamma^{2H+1}\geq 1/2$.

\medskip
\noindent
(vii) In view of our MDP construction,  a negative immediate reward (which is either $-\gamma^2$ or $-\gamma^4$) is accrued only when the current state lies in the buffer sets $\cS_1$ and $\cS_2$ and when action $a_0$ is executed. However, once $a_0$ is taken, the MDP will transition to the absorbing state 0, with all subsequent rewards frozen to 0. 
In conclusion, the entire MDP trajectory cannot receive negative immediate rewards more than once, thus indicating that $Q^{\pi}(s,a)\geq \min\{-\gamma^2, -\gamma^4\}=-\gamma^2$ irrespective of $\pi$ and  $(s,a)$.

\subsection{Proof of Lemma~\ref{lem:basic-properties-MDP-pi}}
\label{sec:proof-lemma:basic-properties-MDP-pi}

\paragraph{Proof of Part (i).}
Before proceeding, we make note of a straightforward fact
\begin{align}
	\label{eq:V-star-0-0-analysis-pi} 
	V^{\pi}(0) = 0,
\end{align}
given that state 0 is an absorbing state and $r(0,a_0)=0$.

For any $s\in\{ 3, \cdots, H\}$, 
the construction \eqref{eq:P-r-primary-3-H} together with \eqref{eq:V-star-0-0-analysis-pi} and the Bellman equation yields
\begin{subequations}
\label{eq:Q-pi-s-a012-identity}
\begin{align}
	Q^{\pi}(s,a_{0}) & =r(s,a_{0})+\gamma V^{\pi}(0)=r_{s}+\gamma^{2}p\tau_{s-2} ;
	\label{eq:Qpi-s-a0-identity}\\
	Q^{\pi}(s,a_{1}) & =r(s,a_{1})+\gamma V^{\pi}(\overline{s-1})=\gamma V^{\pi}(\overline{s-1}); \label{eq:Qpi-s-a1-identity} \\
	Q^{\pi}(s,a_{2}) & =r(s,a_{2})+\gamma(1-p)V^{\pi}(0)+\gamma pV^{\pi}(\overline{s-2})=r_{s}+\gamma pV^{\pi}(\overline{s-2}). 
	\label{eq:Qpi-s-a2-identity}
\end{align}
\end{subequations}
Recalling the choices of $\tau_s$, $r_s$ and $p$ in \eqref{eq:key-parameters-defn},  
we can continue the derivation in \eqref{eq:Qpi-s-a0-identity} to reach
\[
	Q^{\pi}(s,a_{0})= 0.5 \gamma^{\frac{2s}{3}+\frac{5}{6}}+\cp (1-\gamma) \gamma^{\frac{2s}{3}+\frac{2}{3}}
\]
\[
\Longrightarrow\qquad
	\gamma^{\frac{3}{2}}\tau_{s-1}= 0.5 \gamma^{\frac{2s}{3}+\frac{5}{6}}
	\leq Q^{\pi}(s,a_{0})
	\leq 0.5 \gamma^{\frac{2s}{3}+\frac{1}{2}}
	=\gamma^{\frac{1}{2}}\tau_{s} .
\]
Here, the last inequality is valid when $\cp \leq 1/6$, given that $\gamma^{\frac{1}{3}}+\frac{1-\gamma}{6}\gamma^{\frac{1}{6}} \leq 1$ holds for any $\gamma <1$.

In addition, combining \eqref{eq:Qpi-s-a2-identity} with \eqref{eq:Vstar-overline-s-gamma-2s}, we arrive at
\[
	Q^{\pi}(s,a_{2})\leq r_{s}+\gamma pV^{\star}(\overline{s-2})
	= 0.5 \gamma^{\frac{2s}{3}+\frac{5}{6}}+ \cp (1-\gamma) \gamma^{2s-2}
	\leq  0.5 \gamma^{\frac{2s}{3}+\frac{1}{2}}
	=\gamma^{\frac{1}{2}}\tau_{s} .
\]
This is guaranteed to hold when $\cp \leq 1/6$, given that  $\gamma^{\frac{1}{3}}+\frac{1-\gamma}{3}\gamma^{\frac{4s}{3}-\frac{5}{2}}\leq \gamma^{\frac{1}{3}}+\frac{1-\gamma}{3}\gamma^{\frac{3}{2}}\leq 1$ is valid for all $\gamma<1$ and $s\geq 3$.
Moreover, if one further has  $V^{\pi}(\overline{s-2})\geq 0$, then it is seen from \eqref{eq:Qpi-s-a2-identity} that
\begin{align}
	Q^{\pi}(s,a_{2}) \geq r_{s} = 0.5 \gamma^{\frac{2s}{3}+\frac{5}{6}} = \gamma^{\frac{3}{2}}\tau_{s-1}.
\end{align}

\paragraph{Proof of Part (ii).}

By virtue of the construction \eqref{eq:P-r-primary-3-H}, we can invoke the Bellman equation to show that
\begin{align}
 V^{\pi}(s) & =\pi(a_{0}\mymid s)Q^{\pi}(s,a_{0}) +\pi(a_{1}\mymid s)Q^{\pi}(s,a_{1}) +\pi(a_{2}\mymid s)Q^{\pi}(s,a_{2})  \notag\\
 	& = \pi(a_{1}\mymid s)\cdot\gamma V^{\pi}(\overline{s-1})+\pi(a_{0}\mymid s)Q^{\pi}(s,a_{0}) +\pi(a_{2}\mymid s)Q^{\pi}(s,a_{2}) \notag\\
	& \leq\pi(a_{1}\mymid s) \gamma^{2s}+\big\{ \pi(a_{0}\mymid s)+\pi(a_{2}\mymid s)\big\} \gamma^{\frac{1}{2}}\tau_{s} \notag\\
	& = \gamma^{2s} \pi(a_{1}\mymid s) + \gamma^{\frac{1}{2}}\tau_{s} \big(1-\pi(a_{1}\mymid s)\big). \label{eq:V-pi-s-UB-567}
\end{align}
Here, the second identity comes from  \eqref{eq:Qpi-s-a1-identity}, 
 the penultimate line follows from \eqref{eq:Q-s2-pi-LB-UB-a}, \eqref{eq:Vstar-overline-s-gamma-2s}, 
as well as the facts $ V^{\pi}(\overline{s-1})\leq  V^{\star}(\overline{s-1})$,  
while the last inequality exploits the fact $\pi(a_{0}\mymid s)+\pi(a_{2}\mymid s)=1-\pi(a_{1}\mymid s)$. 

If $V^{\pi}(s)\geq \tau_s$, 
then this together with the upper bound \eqref{eq:V-pi-s-UB-567} necessarily requires that
\[
	\tau_s \leq \gamma^{2s}\pi(a_{1}\mymid s)+\gamma^{\frac{1}{2}}\tau_{s}\big(1-\pi(a_{1}\mymid s)\big),
\]
which is equivalent to saying that
\begin{equation}
\pi(a_{1}\mymid s)\geq\frac{\tau_s-\gamma^{\frac{1}{2}}\tau_{s}}{\gamma^{2s}-\gamma^{\frac{1}{2}}\tau_{s}}
	=\frac{1-\gamma^{\frac{1}{2}}}{2\gamma^{\frac{4s}{3}}-\gamma^{\frac{1}{2}}} 
\ge \frac{1-\gamma^{\frac{1}{2}}}{\gamma^{\frac{4s}{3}}}
	= \frac{1-\gamma}{\gamma^{\frac{4s}{3}}(1+\gamma^{\frac{1}{2}})}\geq\frac{1-\gamma}{2} .
	\label{eq:lower-bound-pi-a2-s-5689}
\end{equation}
%
Putting these arguments together establishes the advertised result \eqref{eq:pi-a2-s-LB-V-large-8923}.

\paragraph{Proof of Part (iii).}
For any $\overline{s}\in \{\overline{3},\cdots,\overline{H}\}$, in view of the construction \eqref{eq:P-r-adjoint-states} and the Bellman equation, one has
\begin{align*}
Q^{\pi}(\overline{s},a_{0}) & =r(\overline{s},a_{0})+\gamma V^{\pi}(0)=\gamma\tau_{s} ;\\
Q^{\pi}(\overline{s},a_{1}) & =r(\overline{s},a_{1})+\gamma V^{\pi}(s)=\gamma V^{\pi}(s). 
\end{align*}
Regarding state $\overline{1}$, we have
\begin{align*}
	Q^{\pi}(\overline{1},a_{0})  &=  r(\overline{1},a_{0}) + \gamma V^{\pi}(0) =\gamma\tau_{1} ;  \\
	Q^{\pi}(\overline{1},a_{1})  &=  r(\overline{1},a_{1}) + \gamma \frac{1}{|\cS_1|} \sum\nolimits_{s'\in \cS_1} V^{\pi}(s') =\gamma V^{\pi}(1). 
\end{align*}
Similarly, one obtains $Q^{\pi}(\overline{2},a_{0})=\gamma\tau_2$ and $Q^{\pi}(\overline{2},a_{1}) = \gamma V^{\pi}(2)$. 

Next, let us decompose $V^{\pi}(\overline{s})$ as follows:
\begin{align*}
	V^{\pi}(\overline{s}) & =\pi(a_{0}\mymid\overline{s}) Q^{\pi}(\overline{s}, a_0) +\pi(a_{1}\mymid\overline{s})Q^{\pi}(\overline{s}, a_1)\\
 & =\gamma\tau_{s}\pi(a_{0}\mymid\overline{s})+\gamma\pi(a_{1}\mymid\overline{s})V^{\pi}(s)=\gamma\tau_{s}+\gamma\pi(a_{1}\mymid\overline{s})\big(V^{\pi}(s)-\tau_{s}\big),
\end{align*}
where we have used $\pi(a_{0}\mymid\overline{s})+\pi(a_{1}\mymid\overline{s})=1$. 
From this relation and the assumption $\pi( a_{1}\mymid\overline{s} )>0$, it is straightforward to see that $V^{\pi}(\overline{s})\geq \gamma\tau_s$ if and only if $V^{\pi}(s) \geq \tau_{s}$. 
The claim \eqref{eq:equivalence-ts-tsbar} regarding $t_s(\tau_s)$ and $t_{\overline{s}}(\gamma\tau_s)$ then follows directly from the definition of $t_s$ (see \eqref{eqn:approx-optimal-time} and \eqref{eqn:approx-optimal-time-buffer}).

\paragraph{Proof of Part (iv).}

For any $s_{1}\in\mathcal{S}_{1}$, the Bellman equation yields
\begin{align*}
Q^{\pi}(s_{1},a_{0}) & =r(s_{1},a_{0})+\gamma V^{\pi}(0)=-\gamma^{2}+0=-\gamma^{2},\\
Q^{\pi}(s_{1},a_{1}) & =r(s_{1},a_{1})+\gamma V^{\pi}(0)=\gamma^{2}+0=\gamma^{2},
\end{align*}
and hence
\[
	V^{\pi}(s_{1})=\pi(a_{0}\mymid s_{1})Q^{\pi}(s_{1},a_{0})+\pi(a_{1}\mymid s_{1})Q^{\pi}(s_{1},a_{1})=-\gamma^{2}\pi(a_{0}\mymid s_{1})+\gamma^{2}\pi(a_{1}\mymid s_{1}).
\]
A similar argument immediately yields that for any $s_{2}\in\mathcal{S}_{2}$,
\[
Q^{\pi}(s_{2},a_{0})=-\gamma^{4},\qquad Q^{\pi}(s_{2},a_{1})=\gamma^{4},
\qquad\text{and}\qquad 
V^{\pi}(s_{2})=-\gamma^{4}\pi(a_{0}\mymid s_{2})+\gamma^{4}\pi(a_{1}\mymid s_{2}).
\]
These together with our notation convention \eqref{eq:convenient-notation-V1-V2-Q1-Q2} establish \eqref{eq:Q-pi-s12-a0-S1S2}.

%

\paragraph{Proof of Part (v).}

Suppose instead that $V^{\pi}(s-1) < \tau_{s-1}$. In view  of the basic property~\eqref{eq:Vpi-s-lower-bound-2683} in Lemma~\ref{lem:basic-properties-MDP-pi}, 
this necessarily requires that 
\begin{align}
	V^{\pi}(\overline{s-1}) < \gamma\tau_{s-1}.
	\label{eq:Vpi-overline-s-gamma-tau-56}
\end{align}
Taking \eqref{eq:Vpi-overline-s-gamma-tau-56}  together with the relation~\eqref{eq:Q-s2-pi-LB-UB-a1} allows us to reach
\begin{align}
\label{eqn:qpi-a1-dvorak}
	\Qpi(s, a_1) = \gamma V^{\pi}(\overline{s-1}) < \gamma^{2}\tau_{s-1} = \gamma^{\frac{4}{3}}\tau_s.
\end{align}
In addition, the properties~\eqref{eq:Q-s2-pi-LB-UB-a0} and \eqref{eq:Q-s2-pi-LB-UB-a2} imply that
\begin{align*}
	Q^\pi(s,a_0) < \gamma^{\frac{1}{2}}\tau_s \qquad \text{and } \qquad Q^\pi(s,a_2)< \gamma^{\frac{1}{2}}\tau_s.
\end{align*}
Putting everything together implies that 
\begin{align*}
	V^{\pi}(s) & \leq \max\Big\{ \Qpi(s,a_{0}) , \Qpi(s,a_{1}) , \Qpi(s,a_{2}) \Big\} < \gamma^{\frac{1}{2}}\tau_s,
\end{align*}
which contracticts the assumption  $V^{\pi}(s) \geq \gamma^{\frac{1}{2}}\tau_s$. This  establishes the claimed result for any $s\in \{3,\cdots, H\}$.

\paragraph{Proof of Part (vi).}

First, due to explicit expressions of the Q functions~\eqref{eq:Qpi-s-a0-identity} and \eqref{eq:Qpi-s-a2-identity}, one has 
\begin{align*}
	Q^{\pi}(s,a_{0}) - Q^{\pi}(s,a_{2})& = \gamma^{2}p\tau_{s-2} - \gamma pV^{\pi}(\overline{s-2})
	= 
	\gamma p \big(\gamma\tau_{s-2} - V^\pi(\overline{s-2}) \big) > 0,
\end{align*}
where the last relation holds since $V^\pi(\overline{s-2})<\gamma\tau_{s-2}$ when $V^\pi(s-2)<\tau_{s-2}$
(see \eqref{eq:Vpi-s-lower-bound-2683}).

In addition, following the same derivation as for \eqref{eqn:qpi-a1-dvorak}, 
we see that the condition $V^{\pi}(s-1) \leq \tau_{s-1}$ implies 
\begin{align*}
	\Qpi(s, a_1) \leq \gamma^{2}\tau_{s-1}. 
\end{align*}
It is also seen from Part (i) of this lemma that
\begin{align*}
	Q^{\pi}(s,a_{0}) \geq \gamma^{3/2}\tau_{s-1} 
	\qquad \text{and} \qquad
	Q^{\pi}(s,a_{2}) \geq \gamma^{3/2}\tau_{s-1}, 
\end{align*}
provided that $V^{\pi}(\overline{s-2}) \geq 0$.
Combining these two inequalities, we arrive at the claimed bound
\begin{align*}
	\min\big\{Q^{\pi}(s, a_0), Q^{\pi}(s, a_2)\big\} - Q^{\pi}(s, a_1) \geq \gamma^{3/2}\tau_{s-1} - \gamma^{2}\tau_{s-1} 
	=\frac{\gamma^{3/2}\left(1-\gamma\right)\tau_{s-1}}{1+\gamma^{1/2}} \geq (1-\gamma)/8,
\end{align*}
where the last inequality holds if $\gamma^{2s/3+5/6} \geq \gamma^{s} \geq 1/2$.  


\paragraph{Proof of Part (vii).}

According to the update rule \eqref{eq:PG-update-all}, we have --- for any policy $\pi$ --- that
\begin{align*}
\sum_a \frac{\partial V^{\pi_{\theta}}(\mu)}{\partial \theta(s, a)} =&~\sum_a \frac{1}{1-\gamma}d^{\pi_{\theta}}_{\mu}(s) \pi_{\theta}(a \mymid s)\big(Q^{\pi_{\theta}}(s, a) - V^{\pi_{\theta}}(s)\big) \\
=&~\frac{1}{1-\gamma}d^{\pi_{\theta}}_{\mu}(s) \Big(\sum_a \pi_{\theta}(a \mymid s)Q^{\pi_{\theta}}(s, a) - V^{\pi_{\theta}}(s)\sum_a \pi_{\theta}(a \mymid s)\Big) = 0,
\end{align*}
where we have used the identities  $\sum_a \pi_{\theta}(a \mymid s)=1$ and $V^{\pi}(s) = \sum_a \pi(a \mymid s)Q^{\pi}(s, a) $. 
As a result, if $\sum_a \theta^{(0)}(s,a)=0$, then it follows from the PG update rule that $\sum_a \theta^{(t)}(s,a)=0$. 

\subsection{Proof of Lemma~\ref{lem:opt-lemma}}
\label{sec:proof-lem:opt-lemma}

\paragraph{Proof of Part (i).} Dividing both sides of (\ref{eq:xt-xtminus1-relation-LB}) by $x_{t}x_{t-1}$,
we obtain
\[
\frac{1}{x_{t-1}}\geq\frac{1}{x_{t}}-\frac{c_{\mathrm{l}} x_{t-1}}{x_{t}}.
\]
If $c_1x_{0}\leq1/2$, then the monotonicity assumption gives $c_{\mathrm{l}} x_{t}\leq1/2$
for all $t\geq0$. It then follows that
\[
\frac{x_{t}}{x_{t-1}} \geq 1-c_{\mathrm{l}} x_{t-1}\geq\frac{1}{2} \qquad
\Longrightarrow\qquad\frac{1}{x_{t-1}}\geq\frac{1}{x_{t}}-\frac{c_{\mathrm{l}} x_{t-1}}{x_{t}}\geq\frac{1}{x_{t}}-2c_{\mathrm{l}}.
\]
Apply this relation recursively to deduce that
\[
\frac{1}{x_{t}}\leq\frac{1}{x_{t-1}}+2c_{\mathrm{l}} \leq\cdots\leq\frac{1}{x_{0}}+2c_{\mathrm{l}} t. 
\]
This readily concludes the proof of \eqref{eq:xt-sequence-LB-123}.

\paragraph{Proof of Part (ii).} 
Similarly, divide both sides of (\ref{eq:xt-xtminus1-relation-UB})
by $x_{t}x_{t-1}$ to derive
\[
\frac{1}{x_{t-1}}\leq\frac{1}{x_{t}}-\frac{c_{\mathrm{u}}x_{t-1}}{x_{t}}\leq\frac{1}{x_{t}}-c_{\mathrm{u}},
\]
given the monotonicity and positivity assumption $0< x_{t}\leq x_{t-1}$. Invoking
this inequality recursively gives
\[
\frac{1}{x_{t}}\geq\frac{1}{x_{t-1}}+c_{\mathrm{u}}\geq\cdots\geq\frac{1}{x_{0}}+c_{\mathrm{u}}t,
\]
thus establishing the advertised bound \eqref{eq:xt-sequence-UB-123}.

\paragraph{Proof of Part (iii).} 
We now turn attention to \eqref{eq:t0-UB-opt-lemma}. 
As is clearly seen, the non-negative sequence $\{x_t\}$ majorizes another sequence  $\{y_t\}$ generated as follows (in the sense that $x_t\geq y_t$ for all $0<t\leq t_0$)
\begin{equation}
	y_0 = x_0 \qquad \text{and} \qquad
	y_{t}= y_{t-1}+c_{-}y_{t-1}^{2}\quad\text{for all }0<t\leq t_0
	\label{eq:xt-x-tminus1-lower-bound-plus-yt}
\end{equation}
Dividing both sides of the second equation of \eqref{eq:xt-x-tminus1-lower-bound-plus-yt}
by $y_{t-1}y_{t}$, we reach
\[
	\frac{1}{y_{t-1}} = \frac{1}{y_{t}}+c_{-}\frac{y_{t-1}}{y_{t}} \geq \frac{1}{y_{t}}+\frac{c_{-}}{1+c_{-} c_x}.
\]
To see why the last inequality holds, note that, according to the
first equation of \eqref{eq:xt-x-tminus1-lower-bound-plus-yt} and the
assumption $x_{t-1}< c_{x}$ (and hence $y_{t-1}\leq x_{t-1} < c_{x}$), we have
\[
	\frac{y_{t}}{y_{t-1}} = 1+c_{-}y_{t-1} \leq 1+c_{-}c_{x}.
\]
As a result, we can apply the preceding inequalities recursively to
derive
\[
\frac{1}{y_{0}}\geq\frac{1}{y_{1}}+\frac{c_{-}}{1+c_{-}c_{x}}\geq\cdots\geq\frac{1}{y_{t_0}}+\frac{c_{-}}{1+c_{-}c_{x}}t_0
\geq\frac{c_{-}}{1+c_{-}c_{x}}t_0,
\]
and hence we arrive at \eqref{eq:t0-UB-opt-lemma},
\[
	t_0 \leq \frac{1+c_-c_{x}}{c_-y_0} = \frac{1+c_-c_{x}}{c_-x_0}.
\]

\paragraph{Proof of Part (iv).} 
The proof of \eqref{eq:t0-LB-opt-lemma} is quite similar to that of \eqref{eq:t0-UB-opt-lemma}. 
Let us construct another non-negative sequence $\{z_t\}$ as follows
\begin{equation}
	z_0 = x_0 \qquad \text{and} \qquad
	z_{t}= z_{t-1}+c_{+}z_{t-1}^{2}\quad\text{for all }0<t\leq t_0.
	\label{eq:xt-x-tminus1-upper-bound-plus-zt}
\end{equation}
Comparing this with \eqref{eq:xt-x-tminus1-upper-bound-plus} clearly reveals that $z_{t} \geq x_t$. 
Divide both sides of \eqref{eq:xt-x-tminus1-upper-bound-plus-zt} by $z_tz_{t-1}$ to reach
\[
\frac{1}{z_{t-1}} = \frac{1}{z_{t}}+c_{+}\frac{z_{t-1}}{z_{t}}\leq\frac{1}{z_{t}}+c_{+},
\]
where the last inequality is valid since, by construction, $z_t\geq z_{t-1}$. 
Applying this relation recursively yields
\[
\frac{1}{z_{0}}\leq\frac{1}{z_{t_{0}}}+c_{+}t_{0},
\]
which taken together with the fact  $z_0 = x_0 $ and $z_{t_0}\geq x_{t_0}$ leads to
\[
	t_{0}\geq \frac{\frac{1}{z_{0}}-\frac{1}{z_{t_{0}}}}{c_{+}} \geq \frac{\frac{1}{x_{0}}-\frac{1}{x_{t_{0}}}}{c_{+}}. 
\]

\section{Discounted state visitation probability (Lemmas~\ref{lem:facts-d-pi-s-LB}-\ref{lem:facts-d-pi-s})}
\label{sec:estimate-state-visitation}

In this section, we establish our bounds concerning the discounted state visitation probability, 
as claimed in Lemma~\ref{lem:facts-d-pi-s-LB} and Lemma~\ref{lem:facts-d-pi-s}.
Throughout this section, we denote by $\mathbb{P}(\cdot \mymid \pi)$ the probability distribution when policy $\pi$ is adopted. 
Also, we recall that $\mu$ is taken to be a uniform distribution over all states.

\subsection{Lower bounds: proof of Lemma~\ref{lem:facts-d-pi-s-LB}}
\label{sec:proof-lem:facts-d-pi-s-LB}

Consider an arbitrary policy $\pi$, and let $\{s^k\}_{k\geq 0}$ represent an MDP trajectory. 
For any $s\in \{3,\cdots,H\}$, it follows from the definition \eqref{eq:defn-d-rho} of $d_{\mu}^{\pi}$ that
\begin{align}
	d_{\mu}^{\pi}(s) & =(1-\gamma)\sum_{k=0}^{\infty}\gamma^{k}\mathbb{P}\big(s^{k}=s\mymid s^{0}\sim\mu,\pi\big) \label{eq:d-mu-ts-expansion-sk}\\
 & \geq(1-\gamma)\gamma\mathbb{P}\big(s^{1}=s\mymid s^{0}\sim\mu,\pi\big)
   \geq (1-\gamma)\gamma\sum_{s^{\prime}\in\widehat{\mathcal{S}}_{s}}\mathbb{P}\big(s^{1}=s\mymid s^{0}=s^{\prime},\pi \big)\mathbb{P}(s_{0}=s^{\prime} \mymid s^0 \sim \mu) \notag\\
 & =  (1-\gamma)\gamma \cdot \frac{|\widehat{\mathcal{S}}_{s}|}{|\mathcal{S}|} = \cm \gamma (1-\gamma)^{2}. \notag
\end{align}
Here, the penultimate identity is valid due to the construction \eqref{eq:merging-states-definition-P} and the assumption that $\mu$ is uniformly distributed, 
whereas the last identity results from the assumption \eqref{eq:hatS1-hatSH-equal-sizes}. This establishes  \eqref{eq:dmu-t-s-order-range-LB}. Repeating the same argument also reveals that 
\begin{align}
	d_{\mu}^{\pi}(\overline{s}) \geq  \cm \gamma (1-\gamma)^{2}  \notag
\end{align}
for any $\overline{s}\in \{\overline{1},\cdots, \overline{H}\}$, thus validating the lower bound  \eqref{eq:dmu-t-s-bar-order-range-LB}.

In addition, for any $s\in\mathcal{S}_{1}$, the MDP construction \eqref{eq:merging-states-definition-P} allows one to derive
\begin{align*}
d_{\mu}^{\pi}(s) & =(1-\gamma)\sum_{k=0}^{\infty}\gamma^{k}\mathbb{P}\big(s^{k}=s\mymid s^{0}\sim\mu,\pi\big)\geq\gamma(1-\gamma)\mathbb{P}\big(s^{1}=s\mymid s^{0}\sim\mu,\pi\big)\\
 & \ge\gamma(1-\gamma)\mathbb{P}\big(s^{1}=s\mymid s^{0}\in\widehat{\mathcal{S}}_{1}\big)\,\mathbb{P}\big(s_{0}\in\widehat{\mathcal{S}}_{1}\mymid s^{0}\sim\mu\big)\\
 & =\gamma(1-\gamma)\cdot\frac{1}{|\mathcal{S}_{1}|}\cdot\frac{|\widehat{\mathcal{S}}_{1}|}{|\mathcal{S}|}=\gamma(1-\gamma)\frac{\cm}{\cbone}\cdot\frac{1}{|\mathcal{S}|}.
\end{align*}
Here, the last line holds due to the fact that $\mu$ is uniformly distributed and the assumptions \eqref{eq:S1-S2-equal-sizes} and \eqref{eq:hatS1-hatSH-equal-sizes}.  
 We have thus concluded the proof for \eqref{eq:dmu-t-s-12-order-range-LB}.  The proof for \eqref{eq:dmu-t-s-12-order-range-LB-S2} follows from an identical argument and is hence omitted.

\subsection{Upper bounds: proof of Lemma~\ref{lem:facts-d-pi-s}}
\label{sec:proof-lem:facts-d-pi-s}

\subsubsection{Preliminary facts}
\label{sec:proof-lem:facts-d-pi-s-preliminary}

Before embarking on the proof,  we collect several basic yet useful properties that happen when $t<  t_s(\taun_s)$. The first-time readers can proceed directly to Appendix~\ref{sec:proof-upper-bounds-dmu-s}. 
%


\paragraph{Properties about $\soft{Q}^{(t)}(\overline{s},a)$.}

Combine the property \eqref{eq:Qpi-s-adj} in Lemma~\ref{lem:basic-properties-MDP-pi} with \eqref{eq:V-t-sprime-3-H-s} to yield that: for any $1\leq s \leq H$ and any $t< t_s(\taun_s)$, one has
\begin{align}
	& Q^{(t)}(\overline{s}, a_1) = \gamma V^{(t)}(s) < \gamma \taun_s = Q^{(t)}(\overline{s}, a_0) 
	 \label{eq:Qt-monotone-sprime-t-a10} 
\end{align}
%
%
%
In addition, combining the property \eqref{eq:Qpi-s-adj} in Lemma~\ref{lem:basic-properties-MDP-pi} with \eqref{eq:V-t-sprime-3-H} yields: for any $2\leq s \leq H$, 
\begin{align}
	V^{(t)}(\overline{s^{\prime}}) \leq \max\Big\{ Q^{(t)}\big(\overline{s^{\prime}},a_{0}\big),Q^{(t)}\big(\overline{s^{\prime}},a_{1}\big)\Big\}
	= \max\Big\{\gamma\tau_{s^{\prime}},\gamma V^{(t)}(s^{\prime})\Big\} = \gamma\tau_{s^{\prime}}
	\label{eq:Vt-sprime-bar-UB5928}
\end{align}
holds for all $s^{\prime}$ obeying $s \le s^{\prime} \leq H$ and all $t< t_s(\taun_s)$. 
As a remark, \eqref{eq:Qt-monotone-sprime-t-a10} indicates that $a_1$ remains unfavored (according to the current estimate $Q^{(t)}$) 
before the iteration number hits  $t_s(\taun_s)$.


\paragraph{Properties about $\soft{Q}^{(t)}(s+1,a)$ and $\soft{Q}^{(t)}(s+2,a)$.}
First, combining \eqref{eq:Vt-sprime-bar-UB5928} with the relation \eqref{eq:Q-pi-s-a012-identity} reveals that: for any $2\leq s \leq H-1$ and any $t< t_s(\taun_s)$,
\begin{subequations}
\label{eq:Qt-splus1-a012-bounds-123}
\begin{align}
	& Q^{(t)}(s+1,a_{0})  =r_{s+1}+\gamma^{2}p\tau_{s-1} \geq r_{s+1} ,  \\
	& Q^{(t)}(s+1,a_{1})  =\gamma V^{(t)}(\overline{s}) \leq \gamma^{2}\tau_{s} = \gamma^{1/2} r_{s+1}, \\
	& Q^{(t)}(s+1,a_{2})  =r_{s+1}+\gamma pV^{(t)}(\overline{s-1})\geq r_{s+1}  
\end{align}
\end{subequations}
hold as long as $V^{(t)}(\overline{s-1})\geq 0$ (which is guaranteed by Lemma~\ref{lem:non-negativity-PG}). 
Similarly, \eqref{eq:Vt-sprime-bar-UB5928} and \eqref{eq:Q-pi-s-a012-identity} also give
\begin{align*}
	& Q^{(t)}(s+2,a_{0})  =r_{s+2}+\gamma^{2}p\tau_{s} \\
	& Q^{(t)}(s+2,a_{1})  =\gamma V^{(t)}(\overline{s+1}) = \gamma^{2}\tau_{s+1} = \gamma^{1/2} r_{s+2},\\
	& Q^{(t)}(s+2,a_{2})  =r_{s+2}+\gamma pV^{(t)}(\overline{s}) \leq r_{s+2}+\gamma^2 p \tau_s 
\end{align*}
for any $1\leq s \leq H-2$ and any $t< t_s(\taun_s)$. 
Consequently,  we have
\begin{subequations}
\label{eq:Qt-s1-s2-a1-a2-order}
\begin{align}
	Q^{(t)}(s+1,a_{1}) & \leq\min\big\{ Q^{(t)}(s+1,a_{0}),Q^{(t)}(s+1,a_{2}) \big\},\quad &\text{if }2\leq s\leq H-1 \\
	Q^{(t)}(s+2,a_{2}) & \leq Q^{(t)}(s+2,a_{0}), \quad &\text{if }1\leq s\leq H-2
\end{align}
\end{subequations}
for all  $t< t_s(\taun_s)$. 
In other words, the above two inequalities reveal that actions $a_1$ and $a_2$ are perceived as suboptimal (based on the current Q-function estimates) before the iteration count surpasses $t_s(\taun_s)$.

Next, consider any $2\leq s\leq H-1$ and any $ t< t_s(\taun_s)$. 
It has already been shown above that
\begin{subequations}
\label{C1:lem-monotone-relation-pi-q}
\begin{align}
	{Q}^{(t)}(s+1, a) \geq  {Q}^{(t)}(s+1, a_1) ,\qquad a \in \{ a_0, a_2\}.
\end{align}
A similar argument also implies that, for any $ t< t_s(\taun_s)$,
\begin{align}
	{Q}^{(t)}(s+2, a_0) \geq  {Q}^{(t)}(s+2, a_2) , 
\end{align}
\end{subequations}
which forms another property useful for our subsequent analysis.

\subsubsection{Proof of the upper bounds \eqref{eq:dmu-t-s-order-range} and \eqref{eq:dmu-t-s-order-range-bar}}
\label{sec:proof-upper-bounds-dmu-s}


We now turn attention to upper bounding $d_{\mu}^{(t)}(s)$ for any $s\in \{3,\cdots,H\}$. 
%
By virtue of the expansion \eqref{eq:d-mu-ts-expansion-sk}, 
upper bounding $d_{\mu}^{(t)}(s)$ requires controlling $\mathbb{P}\big(s^{k}=s\mymid s^{0}\sim\mu,\pi^{(t)}\big)$ for all $k\geq 0$.   
In light of this, our analysis consists of (i) developing upper bounds on the inter-related quantities $\mathbb{P}\big(s^{k}=s\mymid s^{0}\sim\mu,\pi^{(t)}\big)$ and $\mathbb{P}\big(s^{k}= \overline{s} \mymid s^{0}\sim\mu,\pi^{(t)}\big)$ for any $k\geq 0$, and (ii) combining these upper bounds to control $d_{\mu}^{(t)}(s)$. 
At the core of our analysis is the following upper bounds on the $t$-th policy iterate, which will be established in Appendix~\ref{sec:proof-lem:pi-t-UB-monotone}. 
\begin{lemma}
	\label{lem:pi-t-UB-monotone}
Under the assumption \eqref{eq:assumptions-constants}, for any   $2\leq s\leq H$ and any $t < t_s(\taun_s)$, one has
	\begin{subequations}
	\label{eq:pi-t-UB-monotone}
	\begin{align}
		\pi^{(t)}(a_1\mymid \overline{s})\leq \pi^{(t)}(a_0\mymid \overline{s}) 
		\qquad \text{and} \qquad
		\pi^{(t)}(a_1\mymid \overline{s})\leq 1/2 .
		\label{eq:pi-t-UB-monotone-sbar} 
	\end{align}
	Furthermore,  
	\begin{align}
		\pi^{(t)}(a_1 \mymid s+1) \le \min \big\{ \pi^{(t)}(a_0 \mymid s+1), \pi^{(t)}(a_2 \mymid s+1)  \big\}   
		\quad \text{and} \quad \pi^{(t)}(a_1 \mymid s+1) \le 1/3  \label{eq:pi-t-UB-monotone-splus1} 
	\end{align}
	hold if $2\leq s \leq H-1$, and
	\begin{align}
		\pi^{(t)}(a_2 \mymid s+2) \le  \pi^{(t)}(a_0 \mymid s+2) 
		\qquad \text{and} \qquad
		\pi^{(t)}(a_2 \mymid s+2) \le 1/2  \label{eq:pi-t-UB-monotone-splus2}
	\end{align}
		hold if $1\leq s \leq H-2$. 
	\end{subequations}
	%
\end{lemma}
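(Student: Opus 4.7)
The plan is to proceed by induction on $t$, with the base case $t=0$ holding trivially because the uniform policy initialization assigns each action probability $1/|\mathcal{A}_s|$, which is at most $1/2$ for binary action sets and at most $1/3$ for ternary sets. The driving observation will be that the action $a_1$ at states $\overline{s}$ and $s+1$, and the action $a_2$ at state $s+2$, have the smallest Q-value among the available actions (as recorded in \eqref{eq:Qt-monotone-sprime-t-a10}, \eqref{eq:Qt-s1-s2-a1-a2-order}, and \eqref{C1:lem-monotone-relation-pi-q}); hence their advantage $A^{(t)}(\cdot,\,\cdot)$ is automatically non-positive, which via the gradient formula \eqref{eq:policy-grad-softmax-expression} forces the corresponding $\theta^{(t)}(\cdot,\,\cdot)$ to be monotonically non-increasing in $t$ and therefore bounded above by its initial value $0$.

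For the adjoint state $\overline{s}$, the action space is $\{a_0,a_1\}$, so Lemma~\ref{lem:basic-properties-MDP-pi}(vii) gives $\theta^{(t)}(\overline{s},a_0) = -\theta^{(t)}(\overline{s},a_1)$ and the previous observation immediately yields $\theta^{(t)}(\overline{s},a_1) \leq 0 \leq \theta^{(t)}(\overline{s},a_0)$, whence $\pi^{(t)}(a_1\mymid\overline{s}) \leq \pi^{(t)}(a_0\mymid\overline{s})$ and, since these sum to $1$, both claims in \eqref{eq:pi-t-UB-monotone-sbar} follow at once.

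For the three-action states, I would track the pairwise differences $\Delta_a^{(t)} \defn \theta^{(t)}(\cdot, a) - \theta^{(t)}(\cdot, a_{\star})$, where $a_{\star}$ is $a_1$ at state $s+1$ and $a_2$ at state $s+2$, and prove $\Delta_a^{(t)}\geq 0$ by induction. One step of the PG update gives, up to the positive factor $\eta d_\mu^{(t)}(\cdot)/(1-\gamma)$, the increment $\Delta_a^{(t+1)} - \Delta_a^{(t)} = \pi^{(t)}(a) A^{(t)}(\cdot,a) - \pi^{(t)}(a_\star) A^{(t)}(\cdot,a_\star)$; this is non-negative in the easy regime where $A^{(t)}(\cdot,a) \geq 0$ (since then both terms have the right sign) and also in the regime where both advantages are non-negative (where the inductive hypothesis $\pi^{(t)}(a_\star) \leq \pi^{(t)}(a)$ combines with $A^{(t)}(\cdot,a_\star) \leq A^{(t)}(\cdot,a)$). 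Combining $\theta^{(t)}(\cdot, a_\star) \leq 0$ with the zero-sum property $\sum_a \theta^{(t)}(\cdot,a) = 0$ will then yield $\pi^{(t)}(a_\star \mymid \cdot) \leq 1/3$.

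The hard part is the case where both $A^{(t)}(\cdot,a_\star)$ and $A^{(t)}(\cdot,a)$ are simultaneously negative; the naive sign comparison breaks down, and one instead needs $\pi^{(t)}(a_\star)|A^{(t)}(\cdot,a_\star)| \geq \pi^{(t)}(a)|A^{(t)}(\cdot,a)|$, which superficially cuts against the inductive hypothesis $\pi^{(t)}(a_\star) \leq \pi^{(t)}(a)$. I expect this to be resolved by exploiting the quantitative gap structure: from \eqref{eq:Qt-splus1-a012-bounds-123} the spread between $Q^{(t)}(s+1,a_0)$ and $Q^{(t)}(s+1,a_2)$ is at most $\gamma p = O(1-\gamma)$, while each exceeds $Q^{(t)}(s+1,a_1)$ by a constant-order margin $r_{s+1}(1-\gamma^{1/2}) = \Omega(1)$. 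Expanding $|A^{(t)}(a_1)| \geq (1-\pi^{(t)}(a_1))\big(\min_{a\neq a_1} Q^{(t)}(s+1,a) - Q^{(t)}(s+1,a_1)\big)$ and $|A^{(t)}(a_j)| \leq \pi^{(t)}(a_{3-j}) |Q^{(t)}(s+1,a_{3-j}) - Q^{(t)}(s+1,a_j)|$ shows that the $\Omega(1)$ gap dominates the $O(1-\gamma)$ gap by a factor large enough to overturn the probability ratio. An entirely analogous argument handles state $s+2$, with $a_2$ playing the role that $a_1$ had at $s+1$ and the gap $Q^{(t)}(s+2,a_0) - Q^{(t)}(s+2,a_2) = \gamma p(\gamma\tau_s - V^{(t)}(\overline{s}))$ being controlled using \eqref{eq:Vt-sprime-bar-UB5928}.
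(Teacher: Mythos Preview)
Your plan is sound for the adjoint states $\overline{s}$ and also for state $s+2$ (there the ``hard case'' never arises, since $Q^{(t)}(s+2,a_0)=\max_a Q^{(t)}(s+2,a)$ implies $A^{(t)}(s+2,a_0)\geq 0$ throughout). The problem is your treatment of state $s+1$.

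First, a quantitative error: the gap $r_{s+1}(1-\gamma^{1/2})$ is \emph{not} $\Omega(1)$. Since $1-\gamma^{1/2}=(1-\gamma)/(1+\gamma^{1/2})$ and $r_{s+1}\le 1/2$, this gap is $\Theta(1-\gamma)$, the same order as $\gamma p=\gamma\cp(1-\gamma)$. So your ``large gap dominates small gap'' heuristic does not separate the two scales.

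More seriously, even after correcting the constants your scheme of showing that the increment $\Delta_{a}^{(t+1)}-\Delta_{a}^{(t)}\geq 0$ is simply false in the hard case. After dividing out the common $(1-\gamma)$ factor, you need roughly
\[
\pi^{(t)}(a_1\mymid s{+}1)\,\big(1-\pi^{(t)}(a_1\mymid s{+}1)\big)\cdot\tfrac{r_{s+1}}{1+\sqrt{\gamma}}
\;\ge\;
\pi^{(t)}(a_0\mymid s{+}1)\,\pi^{(t)}(a_2\mymid s{+}1)\cdot \gamma\cp ,
\]
which forces $\pi^{(t)}(a_1\mymid s{+}1)\gtrsim \cp$. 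But the whole point of the construction (cf.\ Lemma~\ref{lem:induc-theta-t-s-2}) is that $\pi^{(t)}(a_1\mymid s{+}1)$ decays like $1/t$ during this stage, so for large $t$ the inequality fails and $\Delta_{a_0}^{(t)}$ can genuinely decrease.

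The paper's argument (Lemma~\ref{lem:monotone-relation-pi-q}) does not try to make the increment non-negative. Instead it bounds how much $\Delta_{a_0}^{(t)}$ can \emph{drop in one step}: using $A^{(t-1)}(s{+}1,a_1)\leq A^{(t-1)}(s{+}1,a_0)$ one gets
\[
\Delta_{a_0}^{(t-1)}-\Delta_{a_0}^{(t)} \;\le\; \Big|\tfrac{\eta}{1-\gamma}\,d_\mu^{(t-1)}(s{+}1)\,A^{(t-1)}(s{+}1,a_0)\Big|\cdot\big(\pi^{(t-1)}(a_0)-\pi^{(t-1)}(a_1)\big).
\]
The stepsize restriction $\eta\le(1-\gamma)/2$ together with $|A|\le 2$ makes the first factor at most $1$, and the softmax inequality $\pi(a_0)-\pi(a_1)\le \theta(a_0)-\theta(a_1)=\Delta_{a_0}^{(t-1)}$ (valid when $\Delta_{a_0}^{(t-1)}\ge 0$) bounds the second factor by $\Delta_{a_0}^{(t-1)}$ itself. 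Hence $\Delta_{a_0}^{(t)}\ge \Delta_{a_0}^{(t-1)}-\Delta_{a_0}^{(t-1)}=0$. This self-bounding mechanism, not any comparison of Q-value gaps, is what closes the hard case.
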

In words, Lemma~\ref{lem:pi-t-UB-monotone} posits that, at the beginning, the policy iterate $\pi^{(t)}$ does not assign too much probability mass on actions that are currently perceived as suboptimal  (see the remarks in Appendix~\ref{sec:proof-lem:facts-d-pi-s-preliminary}).
With this lemma in place, we are positioned to establish the advertised upper bound.

\paragraph{Step 1: bounding $\mathbb{P}\big(s^{k}=s\mymid s^{0}\sim\mu,\pi^{(t)}\big)$.}
For any $t < t_s(\taun_s)$ and any $s\in \{3,\cdots,H\}$, making use of the upper bound \eqref{eq:pi-t-UB-monotone-sbar} and the MDP construction in Section~\ref{sec:mdp-construction} yields
\begin{subequations}
\begin{align*}
\mathbb{P}\big(s^{0}=s\mymid s^{0}\sim\mu,\pi^{(t)}\big) & =1/{|\cS|},\\
	\mathbb{P}\big(s^{1}=s\mymid s^{0}\sim\mu,\pi^{(t)}\big) & \leq \mathbb{P}\big(s^{0}\in\widehat{\mathcal{S}}_{s}\big) +\pi^{(t)}(a_{1}\mymid\overline{s})\,\mathbb{P}(s^{0}=\overline{s}) \leq \frac{|\widehat{\mathcal{S}}_{s}|}{|\cS|}+\frac{1}{2} \cdot \frac{1}{|\cS|} \leq \frac{2|\widehat{\mathcal{S}}_{s}|}{|\cS|}  = 2\cm (1-\gamma), \\
\mathbb{P}\big(s^{k}=s\mymid s^{0}\sim\mu,\pi^{(t)}\big) & =\pi^{(t)}(a_{1}\mymid\overline{s})\,\mathbb{P}\big(s^{k-1}=\overline{s}\mymid s^{0}\sim\mu,\pi^{(t)}\big)\leq\frac{1}{2}\mathbb{P}\big(s^{k-1}=\overline{s}\mymid s^{0}\sim\mu,\pi^{(t)}\big)
\end{align*}
\end{subequations}
for all $k\geq 2$. Note that the above calculation exploits the fact that $\mu$ is a uniform distribution.

\paragraph{Step 2: bounding $\mathbb{P}\big(s^{k}= \overline{s} \mymid s^{0}\sim\mu,\pi^{(t)}\big)$.}
Given that $\mu$ is a uniform distribution, one has
\begin{subequations}
\label{eq:P-s012-sbar-1234}
\begin{align}
\mathbb{P}\big(s^{0}=\overline{s}\mymid s^{0}\sim\mu,\pi^{(t)}\big) & =1/|\cS|   \label{eq:P-s0-sbar-1234}
\end{align}
for any $s\in \cS$. 
With \eqref{eq:pi-t-UB-monotone-splus1} and \eqref{eq:pi-t-UB-monotone-splus2} in mind,  
the MDP construction in Section~\ref{sec:mdp-construction} allows one to show that
\begin{align}
\mathbb{P}\big(s^{1}=\overline{s}\mymid s^{0}\sim\mu,\pi^{(t)}\big) & \leq\mathbb{P}\big(s^{0}\in\widehat{\mathcal{S}}_{\overline{s}}\big)+\pi^{(t)}(a_{1}\mymid s+1)\,\mathbb{P}(s^{0}=s+1)+\pi^{(t)}(a_{2}\mymid s+2)\,\mathbb{P}(s^{0}=s+2) \notag\\
	& \le\frac{|\widehat{\mathcal{S}}_{\overline{s}}|}{|\cS|}+\frac{1}{3|\cS|}+\frac{1}{2|\cS|} \leq \frac{2|\widehat{\mathcal{S}}_{\overline{s}}|}{|\cS|}  = 2\cm (1-\gamma)
	\label{eq:P-s1-sbar-1234}
\end{align}
holds for any  $2\leq s \leq H-2$ and any $t< t_s(\taun_s)$,  and in addition,
\begin{align}
& \mathbb{P}\big(s^{k} =\overline{s}\mymid s_{0}\sim\mu,\pi^{(t)}\big) \nonumber\\
&\qquad \le\pi^{(t)}(a_{1}\mymid s+1)\,\mathbb{P}\big(s^{k-1}=s+1\mymid s^{0}\sim\mu,\pi^{(t)}\big)+\pi^{(t)}(a_{2}\mymid s+2)\,\mathbb{P}\big(s^{k-1}=s+2\mymid s^{0}\sim\mu,\pi^{(t)}\big) \notag\\
 &\qquad \leq\frac{1}{3}\mathbb{P}\big(s^{k-1}=s+1\mymid s_{0}\sim\mu,\pi^{(t)}\big)+\frac{1}{2}\mathbb{P}\big(s^{k-1}=s+2\mymid s^{0}\sim\mu,\pi^{(t)}\big)
	\label{eq:P-s2-sbar-1234}
\end{align}
\end{subequations}
hold for any $k\geq 2$, $2\leq s \leq H-2$, and any $t< t_s(\taun_s)$. 
Moreover,  invoking \eqref{eq:pi-t-UB-monotone-splus1} and the MDP construction once again reveals that
\begin{align*}
\mathbb{P}\big(s^{k}=\overline{H-1}\mymid s^{0}\sim\mu,\pi^{(t)}\big) & \le\pi^{(t)}(a_{1}\mymid H)\,\mathbb{P}\big(s^{k-1}=H\mymid s^{0}\sim\mu,\pi^{(t)}\big)\leq\frac{1}{3}\mathbb{P}\big(s^{k-1}=H\mymid s^{0}\sim\mu,\pi^{(t)}\big) \\
\mathbb{P}\big(s^{k}=\overline{H}\mymid s^{0}\sim\mu,\pi^{(t)}\big) & =0
\end{align*}
hold for any $k\geq 2$ and any $t< t_s(\taun_s)$. In addition, it is seen that
\begin{align*}
\mathbb{P}\big(s^{1}=\overline{H-1}\mymid s^{0}\sim\mu,\pi^{(t)}\big) & \leq\mathbb{P}\big(s^{0}\in\widehat{\mathcal{S}}_{\overline{H-1}}\big)+\mathbb{P}(s^{0}=H)
	= \frac{\big|\widehat{\mathcal{S}}_{\overline{H-1}}\big|}{|\cS|}+\frac{1}{|\cS|}\leq\frac{2\big|\widehat{\mathcal{S}}_{\overline{H-1}}\big|}{|\cS|}=2\cm(1-\gamma),\notag\\
\mathbb{P}\big(s^{1}=\overline{H}\mymid s^{0}\sim\mu,\pi^{(t)}\big) & \leq\mathbb{P}\big(s^{0}\in\widehat{\mathcal{S}}_{\overline{H}}\big)=\frac{\big|\widehat{\mathcal{S}}_{\overline{H}}\big|}{|\cS|}=\cm(1-\gamma).\notag
\end{align*}

\paragraph{Step 3: putting all this together.}

Combining the preceding upper bounds on both $\mathbb{P}\big(s^{k}= s \mymid s^{0}\sim\mu,\pi^{(t)}\big)$ and $\mathbb{P}\big(s^{k}= \overline{s} \mymid s^{0}\sim\mu,\pi^{(t)}\big)$ $(k\geq 1$) and recognizing the monotonicity property \eqref{eq:monotonicity-t-s-tau-s}, we immediately arrive at the following crude bounds
\begin{align*}
\max_{3\leq s\leq H, t< t_s(\taun_s)}\Big\{\mathbb{P}\big(s^{0}=s\mymid s^{0}\sim\mu,\pi^{(t)}\big),\mathbb{P}\big(s^{1}=s\mymid s^{0}\sim\mu,\pi^{(t)}\big)\Big\} & \leq 1/|\cS| \leq 2\cm(1-\gamma)\\
\max_{2\leq s\leq H, t< t_s(\taun_s)}\Big\{\mathbb{P}\big(s^{0}=\overline{s}\mymid s^{0}\sim\mu,\pi^{(t)}\big),\mathbb{P}\big(s^{1}=\overline{s}\mymid s^{0}\sim\mu,\pi^{(t)}\big)\Big\} & \leq 2\cm(1-\gamma)\\
\max_{3\leq s\leq H, t< t_s(\taun_s)}\mathbb{P}\big(s^{k}=s\mymid s^{0}\sim\mu,\pi^{(t)}\big) & \leq\frac{5}{6}\max_{2\leq s\leq H, t< t_s(\taun_s)}\mathbb{P}\big(s^{k-1}=\overline{s}\mymid s^{0}\sim\mu,\pi^{(t)}\big)\\
\max_{2\leq s\leq H, t< t_s(\taun_s)}\mathbb{P}\big(s^{k}=\overline{s}\mymid s^{0}\sim\mu,\pi^{(t)}\big) & \leq\frac{5}{6}\max_{3\leq s\leq H, t< t_s(\taun_s)}\mathbb{P}\big(s^{k-1}=s\mymid s^{0}\sim\mu,\pi^{(t)}\big)
\end{align*}
for any $k\geq 2$. It is then straightforward to deduce  that 
\begin{subequations}
\begin{align}
	\max_{3\leq s\leq H, t< t_s(\taun_s)}\mathbb{P}\big(s^{k}=s\mymid s^{0}\sim\mu,\pi^{(t)}\big) & \leq\left(\frac{5}{6}\right)^{k-1} 2\cm(1-\gamma) \label{eq:P-3-s-H-Psk-s-UB}\\
	\max_{2\leq s\leq H, t< t_s(\taun_s)}\mathbb{P}\big(s^{k}=\overline{s}\mymid s^{0}\sim\mu,\pi^{(t)}\big) & \leq\left(\frac{5}{6}\right)^{k-1} 2\cm(1-\gamma)
	\label{eq:P-3-s-H-Psk-sbar-UB}
\end{align}
\end{subequations}
for any $k\geq 1$. In turn, these bounds give rise to
\begin{subequations}
\label{eq:dmu-ts-intermediate-UB-14}
\begin{align}
d_{\mu}^{(t)}(s) & =(1-\gamma)\sum_{k=0}^{\infty}\gamma^{k}\mathbb{P}\big(s^{k}=s\mymid s^{0}\sim\mu,\pi^{(t)}\big)
	\leq(1-\gamma)\left\{ 2\cm(1-\gamma)+\sum_{k=1}^{\infty} \left(\frac{5}{6}\right)^{k-1} 2\cm(1-\gamma) \right\} \notag\\
 	& \leq2\cm(1-\gamma)^{2}+\frac{1}{1-5/6}\cdot2\cm(1-\gamma)^{2}=14\cm(1-\gamma)^{2}
	\label{eq:dmu-ts-original-intermediate-UB-14}
\end{align}
for any $3\leq s \leq H$  and any $t< t_s(\taun_s)$. 
This establishes the claimed upper bound \eqref{eq:dmu-t-s-order-range} as long as Lemma~\ref{lem:pi-t-UB-monotone} is valid. 
Further, replacing $s$ with $\overline{s}$ in \eqref{eq:dmu-ts-intermediate-UB-14} also reveals that
\begin{align}
	d_{\mu}^{(t)}(\overline{s}) \leq 14\cm(1-\gamma)^{2} 
	\label{eq:dmu-tsbar-intermediate-UB-14}
\end{align}
\end{subequations}
for any $2\leq s \leq H$  and any $t< t_s(\taun_s)$, thus concluding the proof of \eqref{eq:dmu-t-s-order-range-bar}.

%


\subsubsection{Proof of the upper bound  \eqref{eq:dmu-t-s-2-order-range} }

We now consider any $s\in \mathcal{S}_2$. From our MDP construction, we have
\begin{subequations}
\begin{align*}
\mathbb{P}\big(s^{0}=s\mymid s^{0}\sim\mu,\pi^{(t)}\big) & = 1 / |\cS|,\\
	\mathbb{P}\big(s^{1}=s\mymid s^{0}\sim\mu,\pi^{(t)}\big) & \le \mathbb{P}\big(s^{1}=s\mymid s^{0}\in\widehat{\mathcal{S}}_{2},\pi^{(t)}\big) \mathbb{P}\big(s^{0}\in\widehat{\mathcal{S}}_{2}\big)+\mathbb{P}\big(s^{1}=s\mymid s^{0}=\overline{2},\pi^{(t)}\big)\,\mathbb{P}\big(s^{0}=\overline{2}\big) \\
	& \leq \frac{1}{|\cS_2|} \frac{|\widehat{\mathcal{S}}_{2}|}{|\cS|}+\frac{1}{|\cS_{2}|}\, \frac{1}{|\cS|}
	\leq \frac{2|\widehat{\mathcal{S}}_{2}|}{|\cS_2|\, |\cS|} 
	= \frac{2\cm}{\cbtwo |\cS|},\\
\mathbb{P}\big(s^{k}=s\mymid s^{0}\sim\mu,\pi^{(t)}\big) & \le\mathbb{P}\big(s^{k}=s\mymid s^{k-1}=\overline{2},\pi^{(t)}\big)\,\mathbb{P}\big(s^{k-1}=\overline{2}\mymid s^{0}\sim\mu,\pi^{(t)}\big)\\
 & \leq\frac{1}{2|\cS_{2}|} \mathbb{P}\big(s^{k-1}=\overline{2}\mymid s^{0}\sim\mu,\pi^{(t)}\big)
\end{align*}
\end{subequations}
for any $k\geq 2$ and any $s\in \cS_2$. 
In addition, our bound in \eqref{eq:P-3-s-H-Psk-sbar-UB} gives
%
\begin{align*}
	\mathbb{P}\big(s^{k-1}=\overline{2}\mymid s^{0}\sim\mu,\pi^{(t)}\big) 
	\leq \left(\frac{5}{6}\right)^{k-2} 2\cm (1-\gamma)
\end{align*}
for any $k\geq 2$ and any $t<t_2(\taun_2)$.
Consequently, we arrive at
\begin{align}
	\mathbb{P}\big(s^{k}=s\mymid s^{0}\sim\mu,\pi^{(t)}\big) 
 	& \leq \frac{1}{|\cS_{2}|} \mathbb{P}\big(s^{k-1}=\overline{2}\mymid s^{0}\sim\mu,\pi^{(t)}\big) 
	\leq \frac{\cm (1-\gamma)}{|\cS_{2}|} \left( \frac{5}{6} \right)^{k-2}  = \frac{\cm }{\cbtwo |\cS|} \left( \frac{5}{6} \right)^{k-2}.
\end{align}
Armed with the preceding inequalities, we can derive
\begin{align*}
d_{\mu}^{(t)}(s) & =(1-\gamma)\sum_{k=0}^{\infty}\gamma^{k}\mathbb{P}\big(s^{k}=s\mymid s^{0}\sim\mu,\pi^{(t)}\big)\\
 & \leq(1-\gamma)\left\{ \frac{1}{|\cS|}+\gamma\cdot\frac{2\cm}{\cbtwo |\cS|}+\sum_{k=2}^{\infty}\gamma^{k}\frac{\cm }{\cbtwo |\cS|}  \left(\frac{5}{6}\right)^{k-2}\right\} \\
	& \leq\frac{1-\gamma}{|\cS|}\left(1+\frac{2\cm}{\cbtwo}\right)+\frac{\cm(1-\gamma)}{(1-5/6) \cbtwo |\cS|}
	= \frac{1-\gamma}{|\cS|}\left(1+\frac{8\cm}{\cbtwo}\right)
\end{align*}
for any $s\in \cS_2$ and any $t< t_2(\taun_2)$, thus concluding the advertised upper bound for $s\in \cS_2$.

\subsubsection{Proof of the upper bound  \eqref{eq:dmu-t-s-1-order-range-bar} }

It follows from our MDP construction that
\begin{align*}
\mathbb{P}\big(s^{0}=\overline{1}\mymid s^{0}\sim\mu,\pi^{(t)}\big) & =1/|\cS| ,\\
\mathbb{P}\big(s^{1}=\overline{1}\mymid s^{0}\sim\mu,\pi^{(t)}\big) & \le\mathbb{P}\big(s^{0}\in\widehat{\cS}_{\overline{1}}\big)+\mathbb{P}\big(s^{0}=3\big)
	=\frac{|\widehat{\cS}_{\overline{1}}|}{|\cS|}+\frac{1}{|\cS|}.
\end{align*}
Moreover,  for any $k\geq 2$ and any $t<t_3(\taun_3)$, one can derive
\begin{align}
	\mathbb{P}\big(s^{k}=\overline{1}\mymid s^{0}\sim\mu,\pi^{(t)}\big) & =\pi^{(t)}(a_{2}\mymid3)\,\mathbb{P}\big(s^{k-1}=3\mymid s^{0}\sim\mu,\pi^{(t)}\big)
	\leq\left(\frac{5}{6}\right)^{k-2}2\cm(1-\gamma),
	\label{eq:P-sk-1bar-UB-5284}
\end{align}
where the last inequality arises from \eqref{eq:P-3-s-H-Psk-s-UB}. Putting these bounds together leads to
\begin{align*}
d_{\mu}^{(t)}(\overline{1}) & =(1-\gamma)\sum_{k=0}^{\infty}\gamma^{k}\mathbb{P}\big(s^{k}=\overline{1}\mymid s^{0}\sim\mu,\pi^{(t)}\big)\leq(1-\gamma)\left\{ \frac{1}{|\cS|}+\gamma\left(\frac{|\widehat{\cS}_{\overline{1}}|}{|\cS|}+\frac{1}{|\cS|}\right)+\sum_{k=2}^{\infty}\left(\frac{5}{6}\right)^{k-2}2\cm(1-\gamma)\right\} \notag\\
 & \leq(1-\gamma)\left\{ \frac{2|\widehat{\cS}_{\overline{1}}|}{|\cS|}+\frac{1}{1-5/6}2\cm(1-\gamma)\right\} =14\cm(1-\gamma)^{2},
\end{align*}
where we have used the assumption that $|\widehat{\cS}_{\overline{1}}|=\cm (1-\gamma)|\cS|$. 
When $t<t_2(\taun_2)$, the monotonicity property \eqref{eq:monotonicity-t-s-tau-s} indicates that $t<t_3(\taun_3)$, thus concluding the proof of \eqref{eq:dmu-t-s-1-order-range-bar}.

\subsubsection{Proof of the upper bound  \eqref{eq:dmu-t-s-1-order-range} }

In view of our MDP construction, for any $s \in \cS_1$ and any $t<\min\{t_1(\taun_1),t_2(\taun_2) \}$ we have
\begin{subequations}
\begin{align*}
\mathbb{P}\big(s^{0}=s\mymid s^{0}\sim\mu,\pi^{(t)}\big) & =1/|\cS|,\\
\mathbb{P}\big(s^{1}=s\mymid s^{0}\sim\mu,\pi^{(t)}\big) & \le\mathbb{P}\big(s^{1}=s\mymid s^{0}\in\widehat{\mathcal{S}}_{1},\pi^{(t)}\big)\mathbb{P}\big(s^{0}\in\widehat{\mathcal{S}}_{1}\big)+\mathbb{P}\big(s^{1}=s\mymid s^{0}=\overline{1},\pi^{(t)}\big)\,\mathbb{P}\big(s^{0}=\overline{1}\big)\\
 & \leq\frac{1}{|\cS_{1}|}\frac{|\widehat{\mathcal{S}}_{1}|}{|\cS|}+\frac{1}{|\cS_{1}|}\,\frac{1}{|\cS|}\leq\frac{1}{|\cS_{1}|}\frac{2|\widehat{\mathcal{S}}_{1}|}{|\cS|}=\frac{2\cm}{\cbone|\cS|},\\
\mathbb{P}\big(s^{k}=s\mymid s^{0}\sim\mu,\pi^{(t)}\big) & \le\mathbb{P}\big(s^{k}=s\mymid s^{k-1}=\overline{1},\pi^{(t)}\big)\,\mathbb{P}\big(s^{k-1}=\overline{1}\mymid s^{0}\sim\mu,\pi^{(t)}\big)\\
 & \leq\frac{1}{|\cS_{1}|}\mathbb{P}\big(s^{k-1}=\overline{1}\mymid s^{0}\sim\mu,\pi^{(t)}\big)\leq\frac{2\cm}{\cbone|\cS|}\left(\frac{5}{6}\right)^{k-3},
\end{align*}
\end{subequations}
where $k$ is any integer obeying $k\geq 2$. 
Here, the last inequality comes from \eqref{eq:P-sk-1bar-UB-5284}. 
These bounds taken collectively demonstrate that
\begin{align*}
d_{\mu}^{(t)}(s) & =(1-\gamma)\sum_{k=0}^{\infty}\gamma^{k}\mathbb{P}\big(s^{k}=s\mymid s^{0}\sim\mu,\pi^{(t)}\big)\\
 & \leq(1-\gamma)\left\{ \frac{1}{|\cS|}+\gamma\cdot\frac{2\cm}{\cbone |\cS|}+\sum_{k=2}^{\infty}\gamma^{k}\frac{2\cm}{\cbone|\cS|}  \left(\frac{5}{6}\right)^{k-3}\right\} \\
	& \leq\frac{1-\gamma}{|\cS|}\left(1+\frac{2\cm}{\cbone}\right)+\frac{\frac{6}{5}\cdot 2\cm(1-\gamma)}{(1-5/6)\cbone |\cS|}
	\le \frac{1-\gamma}{|\cS|}\left(1+\frac{17\cm}{\cbone}\right)
\end{align*}
for any $s \in \cS_1$ and any $t<\min\{t_1(\taun_1),t_2(\taun_2) \}$. This completes the proof.

%

%
%
%
%
%

\subsubsection{Proof of Lemma~\ref{lem:pi-t-UB-monotone}}
\label{sec:proof-lem:pi-t-UB-monotone}

In order to prove this lemma, we are in need of the following auxiliary result, whose proof can be found in Appendix~\ref{sec:proof-lemma:monotone-relation-pi-q}.
\begin{lemma}
\label{lem:monotone-relation-pi-q}
	Consider any state $1\leq s \leq H $. Suppose that $ 0 <\eta \leq (1-\gamma)/2$. 
	%
	\begin{itemize}
		\item[(i)] If the following conditions
		%
		\begin{align*}
			\soft{Q}^{(t)}(s, a_0) &- \soft{Q}^{(t)}(s, a_1) \ge 0,\quad \quad\quad  \soft{Q}^{(t)}(s, a_2) - \soft{Q}^{(t)}(s, a_1) \ge 0 \\
			 & \pi^{(t-1)}(a_{1} \mymid s ) \leq  \min \big\{  \pi^{(t-1)}(a_{0} \mymid s ), \pi^{(t-1)}(a_{2} \mymid s )  \big\}
		\end{align*}
		hold, then one has $\pi^{(t)}(a_{1} \mymid s )\leq 1/3$ and $\pi^{(t)}(a_{1} \mymid s ) \leq  \min \big\{  \pi^{(t)}(a_{0} \mymid s ), \pi^{(t)}(a_{2} \mymid s )  \big\}$. 
		\item[(ii)] If the following conditions
		%
		\begin{align*}
			\soft{Q}^{(t)}(s, a_0) - \soft{Q}^{(t)}(s, a_2) \ge 0
			\quad \text{and} \quad
			\pi^{(t-1)}(a_{2} \mymid s ) \leq  \pi^{(t-1)}(a_{0} \mymid s )
		\end{align*}
		hold, then one has $\pi^{(t)}(a_{2} \mymid s )\leq 1/2$ and $\pi^{(t)}(a_{2} \mymid s ) \leq  \pi^{(t)}(a_{0} \mymid s )$. 
		\item[(iii)] If the following conditions
		%
		\begin{align*}
			\soft{Q}^{(t)}(\overline{s}, a_0) - \soft{Q}^{(t)}(\overline{s}, a_1) \ge 0
			\quad \text{and} \quad
			\pi^{(t-1)}(a_{1} \mymid \overline{s} ) \leq  \pi^{(t-1)}(a_{0} \mymid \overline{s} )
		\end{align*}
		hold, then one has $\pi^{(t)}(a_{1} \mymid \overline{s} )\leq 1/2$ and $\pi^{(t)}(a_{1} \mymid \overline{s} ) \leq  \pi^{(t)}(a_{0} \mymid \overline{s} )$.			
	\end{itemize}
\end{lemma}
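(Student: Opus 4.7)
The plan is to analyze the one-step softmax PG update
\[
\Delta \theta(s,a) \defn \theta^{(t)}(s,a) - \theta^{(t-1)}(s,a) = \frac{\eta}{1-\gamma}\, d_{\mu}^{(t-1)}(s)\, \pi^{(t-1)}(a\mymid s)\, A^{(t-1)}(s,a)
\]
supplied by the closed-form gradient~\eqref{eq:policy-grad-softmax-expression}, and show that this update preserves the relevant orderings of the $\theta$-coordinates. Because the softmax is strictly monotone in each coordinate, an ordering on $\theta^{(t)}(s,\cdot)$ is equivalent to the same ordering on $\pi^{(t)}(\cdot\mymid s)$; hence the ordering statements reduce to sign comparisons of $\Delta\theta(s,a)-\Delta\theta(s,a')$. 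Once the required orderings hold, the quantitative bounds $\pi^{(t)}(a_1\mymid s)\le 1/3$ (resp.~$\pi^{(t)}(a_2\mymid s)\le 1/2$) follow mechanically: the smallest of three (resp.~smaller of two) nonnegative probabilities summing to one is at most $1/3$ (resp.~$1/2$).

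Part~(iii) is the cleanest and I would do it first. Writing $p_i=\pi^{(t-1)}(a_i\mymid \overline{s})$, $q_i=Q^{(t-1)}(\overline{s},a_i)$ and $V=p_0q_0+p_1q_1$, the two-action advantage decomposition $A(a_0)=p_1(q_0-q_1)$, $A(a_1)=-p_0(q_0-q_1)$ yields
\[
p_0 A(a_0) - p_1 A(a_1) = 2p_0 p_1 (q_0-q_1) \ge 0,
\]
from which $\Delta\theta(a_0)\ge \Delta\theta(a_1)$ and hence the ordering $\pi^{(t)}(a_1\mymid\overline{s})\le \pi^{(t)}(a_0\mymid\overline{s})$ is preserved.

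For parts (i)--(ii), I would derive the three-action analogue by using $V=\sum_i p_i q_i$ and the advantage zero-sum $\sum_i p_i A(a_i)=0$ to obtain
\[
p_0 A(a_0) - p_1 A(a_1) = 2p_0 p_1(q_0-q_1) + p_0 p_2 (q_0-q_2) + p_1 p_2 (q_2-q_1),
\]
together with the analogous identities for $p_2 A(a_2)-p_1 A(a_1)$ and $p_0 A(a_0)-p_2 A(a_2)$. Under the hypothesis of part~(i) ($q_0,q_2\ge q_1$ and $p_1\le \min\{p_0,p_2\}$), the symmetric identity $p_2 A(a_2)-p_1 A(a_1)=2p_1 p_2(q_2-q_1)+p_0 p_2(q_2-q_0)+p_0 p_1(q_0-q_1)$ has all terms manifestly nonnegative once one splits on the sign of $q_0-q_2$, so the companion ordering $\pi^{(t)}(a_1\mymid s)\le \pi^{(t)}(a_2\mymid s)$ is immediate. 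I would then combine the case analysis with the stepsize bound $\eta\le(1-\gamma)/2$ and the induction hypothesis $\theta^{(t-1)}(a_0)\ge \theta^{(t-1)}(a_1)$ to keep the gap $\theta^{(t)}(a_0)-\theta^{(t)}(a_1)$ nonnegative in the remaining case. Part~(ii) is handled analogously using the identity for $p_0 A(a_0)-p_2 A(a_2)$; the assumption $p_2\le p_0$ together with $q_0\ge q_2$ ensures that when the middle term is unfavorable one can absorb it against $p_0 p_1(q_0-q_1)$ via $p_0\ge p_2$.

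The main obstacle will be the three-action cases, since the sign of $\Delta\theta(a_0)-\Delta\theta(a_1)$ is not directly forced by the listed hypotheses alone: a careful case split on the relative order of $q_0, q_2$, using the symmetric identity to borrow ordering and using the stepsize bound to control the worst-case one-step drift in the $\theta$-gap, appears unavoidable. The final step in each part simply invokes $\sum_a \pi^{(t)}(a\mymid s)=1$ and the just-established orderings to conclude the $1/3$ and $1/2$ bounds.
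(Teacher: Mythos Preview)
Your approach for Part~(iii) is correct and in fact cleaner than the paper's: with two actions, the identity $p_0A_0-p_1A_1=2p_0p_1(q_0-q_1)\ge 0$ shows the $\theta$-gap is nondecreasing, and no stepsize restriction is needed for the ordering. The paper instead reuses the three-action argument verbatim for this part.

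For Parts~(i) and~(ii), however, your sign-analysis via the identities does not close. Take Part~(i) with $q_0=q_1<q_2$ and $p_1<p_0$: your own formula gives $p_0A_0-p_1A_1=p_2(p_1-p_0)(q_2-q_1)<0$, so the gap $\theta(s,a_0)-\theta(s,a_1)$ genuinely \emph{shrinks}. Likewise in Part~(ii) with $q_1>q_0=q_2$ and $p_2<p_0$ one gets $p_0A_0-p_2A_2=-p_1(p_0-p_2)(q_1-q_0)<0$; your ``absorb against $p_0p_1(q_0-q_1)$'' suggestion cannot work here because that term is itself negative. So in neither part is one of the two orderings ``immediate'' after a case split --- whichever side of the split you are on, one of the gaps can move the wrong way, and you must quantify the drift and compare it to the existing gap. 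You gesture at this (``use the stepsize bound to control the worst-case one-step drift''), but the missing mechanism is precisely the crux.

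The paper avoids all case analysis by a different device. Writing $\alpha=\frac{\eta}{1-\gamma}d_{\mu}^{(t-1)}(s)$, it uses only $q_0\ge q_1$ to replace $A_1$ by $A_0$ in the update for $\theta(s,a_1)$:
\[
\theta^{(t)}(s,a_1)\le \theta^{(t-1)}(s,a_1)+\alpha\,p_1 A_0,
\]
so that
\[
\theta^{(t)}(s,a_1)-\theta^{(t)}(s,a_0)\le \big[\theta^{(t-1)}(s,a_1)-\theta^{(t-1)}(s,a_0)\big]+(p_0-p_1)\,|\alpha A_0|.
\]
Then two elementary bounds finish: first $|\alpha A_0|\le \tfrac{2\eta}{1-\gamma}\le 1$ (this is exactly where the hypothesis $\eta\le(1-\gamma)/2$ enters, via $|A^{(t-1)}(s,a_0)|\le 2$ from Lemma~\ref{lem:basic-properties-MDP-Vstar}); second the softmax inequality
\[
p_0-p_1 \;\le\; \theta^{(t-1)}(s,a_0)-\theta^{(t-1)}(s,a_1),
\]
which the paper derives from $\tfrac{e^x-1}{e^x+1}\le x$ for $x\ge 0$. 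These combine to give $\theta^{(t)}(s,a_1)-\theta^{(t)}(s,a_0)\le 0$ directly, and the same argument with $(a_0,a_1)$ replaced by $(a_2,a_1)$ or $(a_0,a_2)$ handles the other comparisons uniformly. Your identity-based route can be patched along exactly these lines, but it is the replacement $p_1A_1\le p_1A_0$ together with the policy--parameter inequality above (not the three-term identities) that does the work.
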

	\begin{remark} In words, Lemma~\ref{lem:monotone-relation-pi-q} develops nontrivial upper bounds on the policy associated with actions that are currently perceived as suboptimal. As we shall see, such upper bounds --- which are strictly below 1 --- translate to some contraction factors that enable the advertised result of this lemma. 
	\end{remark}

 With Lemma~\ref{lem:monotone-relation-pi-q} in place, we proceed to prove Lemma~\ref{lem:pi-t-UB-monotone} by induction. 
Let us start from the base case with $t=0$.  
Given that the initial policy is chosen to be uniformly distributed, we have 
\begin{align*}
	\pi^{(0)}(a_1\mymid s) & = \pi^{(0)}(a_0\mymid s) = \pi^{(0)}(a_2\mymid s),   && 3\leq s\leq H; \\
	\pi^{(0)}(a_1\mymid \overline{s}) & = \pi^{(0)}(a_0\mymid \overline{s}),  && 1\leq s\leq H.
\end{align*}
Therefore, the claim~\eqref{eq:pi-t-UB-monotone} trivially holds for $t=0$.

Next, we move on to the induction step. 
Suppose that the induction hypothesis~\eqref{eq:pi-t-UB-monotone} holds for the $t$-th iteration, 
and we intend to establish it for the $(t+1)$-th iteration.  
Apply Lemma~\ref{lem:monotone-relation-pi-q} with Conditions \eqref{eq:Qt-monotone-sprime-t-a10} and \eqref{eq:pi-t-UB-monotone-sbar} to yield
\[
	\pi^{(t+1)}(a_1\mymid \overline{s})\leq \pi^{(t+1)}(a_0\mymid \overline{s}) 
\]
%
with the proviso that $0<\eta \leq (1-\gamma)/2$. Clearly, this also implies that $\pi^{(t+1)}(a_1\mymid \overline{s})\leq 1/2$.  
Further, invoke Lemma~\ref{lem:monotone-relation-pi-q} once again with Condition~\eqref{C1:lem-monotone-relation-pi-q} and the induction hypothesis~\eqref{eq:pi-t-UB-monotone} 
to arrive at
\begin{align*}
	&\pi^{(t+1)}(a_1 \mymid s+1) \le \min \big\{ \pi^{(t+1)}(a_0 \mymid s+1), \pi^{(t+1)}(a_2 \mymid s+1)  \big\} , && \text{if }2\leq s\leq H-1;\\
	&\pi^{(t+1)}(a_2 \mymid s+2) \le  \pi^{(t+1)}(a_0 \mymid s+2) , && \text{if }1\leq s\leq H-2.
\end{align*}
A straightforward consequence is $\pi^{(t+1)}(a_1 \mymid s+1) \le 1/3$ and $\pi^{(t+1)}(a_2 \mymid s+2) \le 1/2$. 
The proof is thus complete by induction.

\subsubsection{Proof of Lemma~\ref{lem:monotone-relation-pi-q}}
\label{sec:proof-lemma:monotone-relation-pi-q}

First of all, suppose that $\soft{Q}^{(t)}(s, a_0) - \soft{Q}^{(t)}(s, a_1) \ge 0$ and $\pi^{(t-1)}(a_{0}\mymid s)\geq\pi^{(t-1)}(a_{1}\mymid s)$ hold true. 
%
Combining this result with the PG update rule \eqref{eq:PG-update-all}  gives 
\begin{align}
\theta^{(t)}(s,a_{1}) & =\theta^{(t-1)}(s,a_{1})+\frac{\eta}{1-\gamma}d_{\mu}^{(t-1)}(s)\,\pi^{(t-1)}(a_{1}\mymid s)\,\soft{A}^{(t-1)}(s,a_{1})\nonumber\\
 & \leq\theta^{(t-1)}(s,a_{1})+\frac{\eta}{1-\gamma}d_{\mu}^{(t-1)}(s)\,\pi^{(t-1)}(a_{1}\mymid s)\,\soft{A}^{(t-1)}(s,a_{0}). \notag
\end{align}
%
%
%
%
Consequently, applying this inequality  and using the PG update rule \eqref{eq:PG-update-all} yield
\begin{align}
 & \theta^{(t)}(s,a_{1})-\theta^{(t)}(s,a_{0})\nonumber\\
 & \qquad \leq\theta^{(t-1)}(s,a_{1})+\frac{\eta}{1-\gamma}d_{\mu}^{(t-1)}(s)\,\pi^{(t-1)}(a_{1}\mymid s)\,\soft{A}^{(t-1)}(s,a_{0})\nonumber\\
 & \qquad\quad\qquad-\theta^{(t-1)}(s,a_{0})-\frac{\eta}{1-\gamma}d_{\mu}^{(t-1)}(s)\,\pi^{(t-1)}(a_{0}\mymid s)\,\soft{A}^{(t-1)}(s,a_{0})\nonumber\\
 & \qquad \leq \Big\{\theta^{(t-1)}(s,a_{1})-\theta^{(t-1)}(s,a_{0})\Big\}+\Big\{\pi^{(t-1)}(a_{0}\mymid s)-\pi^{(t-1)}(a_{1}\mymid s)\Big\}\,\Big|\frac{\eta}{1-\gamma}d_{\mu}^{(t-1)}(s)\soft{A}^{(t-1)}(s,a_{0})\Big|,
	\label{eq:theta-t-s-a10-bound}
\end{align}
%
where the last line arises by combining terms and invoking the assumption $\pi^{(t-1)}(a_{0}\mymid s) \geq \pi^{(t-1)}(a_{1}\mymid s)$.


Additionally, it is seen from the definition of the advantage function that
\begin{equation}
	\big|A^{(t-1)}(s,a_{0})\big| \leq \max_{\pi,a}\big|Q^{\pi}(s,a)\big|+  \max_{\pi}\big|V^{\pi}(s)\big|\leq 2, 
	\label{eq:advantage-function-t-1-UB}
\end{equation}
where the last inequality follows from Lemma~\ref{lem:basic-properties-MDP-Vstar}. 
Recognizing that $d^{(t-1)}_{\mu}(s) \leq 1$, one obtains
%
\begin{align} \label{eq:d-eta-A-UB-12}
	\Big|\frac{\eta}{1-\gamma}d_{\mu}^{(t-1)}(s)A^{(t-1)}(s,a_{0})\Big|\leq\frac{\eta}{1-\gamma}\cdot 2  \leq 1,
\end{align}
with the proviso that $0<\eta\leq (1-\gamma)/2$. 

Substituting \eqref{eq:d-eta-A-UB-12} into \eqref{eq:theta-t-s-a10-bound} then yields
%
\begin{align}
	\eqref{eq:theta-t-s-a10-bound} 
	& \leq \Big\{ \theta^{(t-1)}(s,a_{1})-\theta^{(t-1)}(s,a_{0}) \Big\} + \Big\{\pi^{(t-1)}(a_{0}\mymid s)-\pi^{(t-1)}(a_{1}\mymid s)\Big\} \notag\\
	& \leq \Big\{ \theta^{(t-1)}(s,a_{1})-\theta^{(t-1)}(s,a_{0}) \Big\} - \Big\{ \theta^{(t-1)}(s, a_{0})-\theta^{(t-1)}(s, a_{1})\Big\}=0,
	\label{eq:theta-t-s-a10-bound7984}
\end{align}
where both the first line and the last identity rely on the fact that $\theta^{(t-1)}(s,a_{1})\leq \theta^{(t-1)}(s,a_{0})$ --- an immediate consequence of the assumption $\pi^{(t-1)}(a_{1} \mymid s)\leq \pi^{(t-1)}(a_{0}\mymid s)$. 
To see why the inequality \eqref{eq:theta-t-s-a10-bound7984} holds, it suffices to make note of the following consequence of softmax parameterization: 
\begin{align*}
\pi^{(t-1)}(a_{0}\mymid s)-\pi^{(t-1)}(a_{1}\mymid s) & =\pi^{(t-1)}(a_{1}\mymid s)\Big\{\exp\big[\theta^{(t-1)}(s,a_{0})-\theta^{(t-1)}(s,a_{1})\big]-1\Big\}\\
	& \overset{\mathrm{(a)}}{\leq} \frac{\exp\big[\theta^{(t-1)}(s,a_{0})-\theta^{(t-1)}(s,a_{1})\big]-1}{\exp\big[\theta^{(t-1)}(s,a_{0})-\theta^{(t-1)}(s,a_{1})\big]+1}\\
 & \overset{\mathrm{(b)}}{\leq} \theta^{(t-1)}(s,a_{0})-\theta^{(t-1)}(s,a_{1}), 
\end{align*}
where (b) follows since $\frac{e^x-1}{e^x+1}\leq x$ for all $x\geq 0$, and the validity of (a) is guaranteed since
\begin{align*}
\pi^{(t-1)}(a_{1}\mymid s) & =\frac{\exp\big(\theta^{(t-1)}(s,a_{1})\big)}{\sum_{a}\exp\big(\theta^{(t-1)}(s,a)\big)}\leq\frac{\exp\big(\theta^{(t-1)}(s,a_{1})\big)}{\exp\big(\theta^{(t-1)}(s,a_{0})\big)+\exp\big(\theta^{(t-1)}(s,a_{1})\big)}\\
 & =\frac{1}{\exp\big[\theta^{(t-1)}(s,a_{0})-\theta^{(t-1)}(s,a_{1})\big]+1}. 
\end{align*}
To conclude, the above result \eqref{eq:theta-t-s-a10-bound7984} implies that
\begin{align}
	\pi^{(t)}(a_{0}\mymid s) \geq \pi^{(t)}(a_{1}\mymid s). 
	\label{eq:pi-t-s-a10-bound7984}
\end{align}

Repeating the above argument immediately reveals that: if 
\[
	 Q^{(t-1)}(s,a_{2}) \geq Q^{(t-1)}(s,a_{1})
	\quad \text{and} \quad 
	\pi^{(t-1)}(a_{2} \mymid s ) \geq \pi^{(t-1)}(a_{1} \mymid s ),
\]
then one has $\pi^{(t)}(a_{2} \mymid s )  \geq \pi^{(t)}(a_{1} \mymid s ) $, which together with \eqref{eq:pi-t-s-a10-bound7984} indicates that 
\[
	\pi^{(t)}(a_{1} \mymid s ) \leq \min \big\{ \pi^{(t)}(a_{0} \mymid s ) , \pi^{(t)}(a_{2} \mymid s ) \big\}
\]
\[
	\Longrightarrow \qquad \pi^{(t)}(a_{1} \mymid s )\leq \frac{ \pi^{(t)}(a_{0} \mymid s )+ \pi^{(t)}(a_{1} \mymid s )+ \pi^{(t)}(a_{2} \mymid s )}{3}= \frac{1}{3}.
\]
This establishes Part (i) of Lemma~\ref{lem:monotone-relation-pi-q}.

The proofs of Parts (ii) and (iii) follow from exactly the same argument as for Part (i), and are hence omitted for the sake of brevity.

\section{Crossing times of the first few states (Lemma~\ref{lem:init-step})}
\label{sec:proof-lem:init-step}

This section presents the proof of Lemma~\ref{lem:init-step} regarding the crossing times w.r.t.~$\cS_1$, $\cS_2$, and state $\overline{1}$.

\subsection{Crossing times for the buffer states in $\cS_1$ and $\cS_2$}

We first present the proof of the relation \eqref{eqn:t1-t2-scaling} regarding several quantities about $t_1$ and $t_2$. 

\paragraph{Step 1: characterize the policy gradients.}

Our analysis largely relies on understanding the policy gradient dynamics, towards which we need to first characterize the gradient. 
Recalling that the gradient of $V^{(t)}$ w.r.t.~$\theta_t(1, a_1)$ (cf.~\eqref{eq:policy-grad-softmax-expression}) is given by
\begin{align}
\notag \frac{\partial V^{(t)}(\mu)}{\partial \theta(1, a_1)} 
&= \frac{1}{1-\gamma} d^{(t)}_{\mu}(1) \pi^{(t)}(a_1 \mymid 1) \Big\{ Q^{(t)}(1, a_1) - V^{(t)}(1) \Big\} \\
&= \frac{1}{1-\gamma} d^{(t)}_{\mu}(1) \pi^{(t)}(a_1 \mymid 1) \Big\{ Q^{(t)}(1, a_1) - \pi^{(t)}(a_0 \mymid 1) Q^{(t)}(1, a_0) - \pi^{(t)}(a_1 \mymid 1) Q^{(t)}(1, a_1) \Big\} \notag\\
\notag 
&= \frac{1}{1-\gamma}d^{(t)}_{\mu}(1)\pi^{(t)}(a_1 \mymid 1)\pi^{(t)}(a_0 \mymid 1) \Big\{ Q^{(t)}(1, a_1) - Q^{(t)}(1, a_0) \Big\}\\
&= \frac{2\gamma^2}{1-\gamma}d^{(t)}_{\mu}(1)\pi^{(t)}(a_1 \mymid 1)\pi^{(t)}(a_0 \mymid 1) > 0, 
\label{eqn:gradient-simple-dvorak}
\end{align}
where in the last step we use $Q^{(t)}(1, a_1) - Q^{(t)}(1, a_0) = 2\gamma^{2}$ (see \eqref{eq:Q-pi-s12-a0-S1S2}). 
The same calculation also yields
\begin{align}
\label{eqn:gradient-simple-dvorak-V2}
\frac{\partial V^{(t)}(\mu)}{\partial \theta(2, a_1)} 
&= \frac{2\gamma^4}{1-\gamma}d^{(t)}_{\mu}(2)\pi^{(t)}(a_1 \mymid 2)\pi^{(t)}(a_0 \mymid 2) > 0.  
\end{align}
As an immediate consequence, the PG update rule~\eqref{eq:PG-update} reveals that
both $\theta^{(t)}(1,a_1)$ (resp.~$\pi^{(t)}(1,a_1)$) and $\theta^{(t)}(2,a_1)$ (resp.~$\pi^{(t)}(2,a_1)$) 
are monotonically increasing with $t$ throughout the execution of the algorithm, 
which together with the initial condition $\pi^{(0)}(a_0 \mymid 1) = \pi^{(0)}(a_1\mymid 1)=\pi^{(0)}(a_0 \mymid 2) = \pi^{(0)}(a_1\mymid 2)$ 
as well as the identities $\theta^{(t)}(1,a_1)=-\theta^{(t)}(1,a_0)$ and $\theta^{(t)}(2,a_1)=-\theta^{(t)}(2,a_0)$ (due to~\eqref{eq:zero-sum}) gives 
\begin{align}
	\pi^{(t)}(a_0 \mymid 1) \leq \pi^{(t)}(a_1 \mymid 1) \qquad \text{and} \qquad
	\pi^{(t)}(a_0 \mymid 2) \leq \pi^{(t)}(a_1 \mymid 2) \qquad \text{for all }t\geq 0.
	\label{eq:policy-buffer-state-order-a1-a0}
\end{align}

\paragraph{Step 2: determine the range of $\pi^{(t)}(\cdot\mymid 1)$ and $\pi^{(t)}(\cdot\mymid 2)$.}

From the basic property~\eqref{eq:Q-pi-s12-a0-S1S2}, the value function of the buffer states in $\cS_1$ --- abbreviated by $V^{(t)}(1)$ as in the notation convention \eqref{eq:convenient-notation-V1-V2-Q1-Q2} --- satisfies 
\begin{align}
	 V^{(t)}(1) = -\gamma^2 \pi^{(t)}(a_0 \mymid 1) + \gamma^2 \pi^{(t)}(a_1 \mymid 1)
		= -\gamma^2 + 2 \gamma^2 \pi^{(t)}(a_1 \mymid 1), 
\end{align}
given that $\pi^{(t)}(a_0 \mymid 1) + \pi^{(t)}(a_1 \mymid 1) = 1$. 
Therefore, for any $t < t_{1}(\gamma^2-1/4)$ --- which means $V^{(t)}(1)< \gamma^2-1/4$ according to the definition \eqref{eqn:approx-optimal-time-buffer} --- one has the following upper bound:
\begin{align*}
	 V^{(t)}(1) 
	= -\gamma^2 + 2 \gamma^2 \pi^{(t)}(a_1 \mymid 1) < \gamma^2- 1 /4.
\end{align*}
This is equivalent to requiring that
\begin{align}
	\pi^{(t)}(a_1 \mymid 1) < 1 - (8\gamma^2)^{-1} \leq 7/8
	\label{eq:pi-t-a1-1-UB-gamma}
\end{align}
and, consequently, $\pi^{(t)}(a_0 \mymid 1) = 1 - \pi^{(t)}(a_1 \mymid 1) \geq 1/8$ for any $t < t_{1}(\gamma^2-1/4)$. 
Putting this and \eqref{eq:policy-buffer-state-order-a1-a0} together further implies --- for every $t < t_{1}(\gamma^2-1/4)$ --- that:  
\begin{align}
\label{eqn:initi-pi-sandwich}
	1/8 \leq \pi^{(t)}(a_0 \mymid 1 )\leq \pi^{(t)}(a_1 \mymid 1)\leq 7/8. 
\end{align}

\paragraph{Step 3:  determine the range of policy gradients.}

In addition to showing the non-negativity of $\frac{\partial V^{(t)}(\mu)}{\partial \theta(1, a_1)}$ and $\frac{\partial V^{(t)}(\mu)}{\partial \theta(2, a_1)}$ for all $t\geq 0$,
we are also in need of bounding their magnitudes. 
Towards this, invoke the property~\eqref{eqn:initi-pi-sandwich} to bound the derivative \eqref{eqn:gradient-simple-dvorak} by
\begin{align}
\frac{7\gamma^2}{32(1-\gamma)}d^{(t)}_{\mu}(1) \leq \frac{\partial V^{(t)}(\mu)}{\partial \theta(1, a_1)} 
\leq \frac{\gamma^2}{2(1-\gamma)}d^{(t)}_{\mu}(1) 
	\label{eq:V-grad-range-S1-buffer}
\end{align}
for any $t < t_{1}(\gamma^2-1/4)$, where we have used the elementary facts 
$$\min_{1/8\leq x\leq7/8}x(1-x)=7/64 \qquad \text{and} \qquad \max_{0\leq x\leq 1}x(1-x)=1/4.$$
Similarly, repeating the above argument with the gradient expression \eqref{eqn:gradient-simple-dvorak-V2} leads to
\begin{subequations}
\label{eq:V-grad-range-S2-buffer}
\begin{align}
\frac{\partial V^{(t)}(\mu)}{\partial \theta(2, a_1)} 
	&\leq \frac{\gamma^4}{2(1-\gamma)}d^{(t)}_{\mu}(2)   &&\text{for all }t\geq 0; \\
\frac{\partial V^{(t)}(\mu)}{\partial \theta(2, a_1)} 
	&\geq \frac{7\gamma^4}{32(1-\gamma)}d^{(t)}_{\mu}(2)  &&\text{for all }0\leq t < t_{2}(\gamma^4-1/4).
\end{align}
\end{subequations}
Further, note that Lemma~\ref{lem:facts-d-pi-s-LB} and Lemma~\ref{lem:facts-d-pi-s}  deliver upper and lower bounds on the quantities $d^{(t)}_{\mu}(1)$ and $d^{(t)}_{\mu}(2)$,
which allow us to deduce that
\begin{subequations}
\label{eqn:init-bounded-gradient}
\begin{align}
	\frac{\partial V^{(t)}(\mu)}{\partial\theta(1,a_{1})} & \geq \frac{7\gamma^{3}\cm}{32\cbone|\cS|}  && \text{for all } t<t_{1}\big(\gamma^{2}-1/4\big);\\
	\frac{\partial V^{(t)}(\mu)}{\partial\theta(2,a_{1})} & \geq \frac{7\gamma^{5}\cm}{32\cbtwo|\cS|} && \text{for all }  t<t_{2}(\gamma^4 - 1/4) ;\\
	\frac{\partial V^{(t)}(\mu)}{\partial\theta(1,a_{1})} & \leq \frac{\gamma^{2}(1+17\cm/\cbone)}{2|\cS|}  &&\text{for all }  t<\min\big\{ t_{1}(\tau_{1}),t_{2}(\tau_{2})\big\};\\
	\frac{\partial V^{(t)}(\mu)}{\partial\theta(2,a_{1})} & \leq \frac{\gamma^{4}(1+8\cm/\cbtwo)}{2|\cS|} && \text{for all }  t<t_{2}(\tau_{2}) .
\end{align}
\end{subequations}
%

\paragraph{Step 4: develop an upper bound on $t_{1}(\gamma^2-1/4)$.} 

The preceding bounds allow us to develop an upper bound on $t_{1}(\gamma^2-1/4)$. 
To do so, it is first observed from the fact $\theta^{(t)}(1,a_{0})=-\theta^{(t)}(1,a_{1})$ (due to~\eqref{eq:zero-sum}) that
\[
\pi^{(t)}(a_{1}\mymid1)=\frac{\exp\left(\theta^{(t)}(1,a_{1})\right)}{\exp\left(\theta^{(t)}(1,a_{0})\right)+\exp\left(\theta^{(t)}(1,a_{1})\right)}
= 1 - \frac{1}{1+\exp\left(2\theta^{(t)}(1,a_{1})\right)}.
\]
Recognizing that $V^{(t)}(1) < \gamma^2-1/4$ occurs if and only if $\pi^{(t)}(a_1 \mymid 1) < 1- (8\gamma^2)^{-1}$ (see \eqref{eq:pi-t-a1-1-UB-gamma}),
we can easily demonstrate that
\begin{align}
	\theta^{(t)}(1,a_{1})\leq\frac{1}{2}\log\left(8\gamma^{2}-1\right) \leq \frac{1}{2} \log 7 \qquad \text{for all }t < t_1\big( \gamma^2 - 1/4 \big).
	\label{eq:theta-t-1-a1-UB-log7}
\end{align}
If $t_{1}\big(\gamma^{2}-1/4\big) \geq \big\lceil \frac{32\log(7)\cbone|\cS|}{7\gamma^{3}\cm\eta} \big\rceil$, 
then taking $t= \big\lceil \frac{32\log(7)\cbone|\cS|}{7\gamma^{3}\cm\eta} \big\rceil $ together with \eqref{eqn:init-bounded-gradient} and \eqref{eq:PG-update} yields
\[
	\theta^{(t)}(1,a_{1})\geq\theta^{(0)}(1,a_{1})+\eta\frac{7\gamma^{3}\cm}{32\cbone|\cS|}t=\eta\frac{7\gamma^{3}\cm}{32\cbone|\cS|}t \geq \log 7,
\]
thus leading to contradiction with \eqref{eq:theta-t-1-a1-UB-log7}. As a result, one arrives at the following upper bound:
\begin{align}
	t_1(\taun_1)\leq t_{1}\big(\gamma^{2}-1/4\big) \leq  \frac{32\log(7)\cbone|\cS|}{7\gamma^{3}\cm\eta} \leq \frac{15\cbone|\cS|}{\cm\eta}, 
	\label{eq:t1-gamma2-025-UB}
\end{align}
with the proviso that $\gamma \geq 0.85$ (so that $\tau_1\leq \gamma^{2}-1/4$).

An upper bound on $t_2(\gamma^4 - 1/4)$ (and hence $t_2(\tau_2)$) can be obtained in a completely analogous manner
\[
	t_2(\taun_2)\leq t_{2}\big(\gamma^{4}-1/4\big) \leq  \frac{15\cbtwo|\cS|}{\cm\eta},
\]
provided that $\gamma\geq 0.95$ (so that $\tau_2\leq \gamma^{4}-1/4$). We omit the proof of this part for the sake of brevity.

\paragraph{Step 5: develop a lower bound on $t_{2}(\taun_2)$.}

Repeating the argument in \eqref{eq:pi-t-a1-1-UB-gamma} and \eqref{eq:theta-t-1-a1-UB-log7}, 
we see that $V^{(t)}(2) \geq  \taun_2$ if and only if $\pi^{(t)}(a_1\mymid 2) \geq \frac{1}{2}+\frac{\taun_2}{2\gamma^4}$, which is also equivalent to
\[
	\theta^{(t)}(2,a_{1}) \geq \frac{1}{2}\log\left(\frac{1}{\frac{1}{2}-\frac{\taun_2}{2\gamma^{4}}}-1\right)
	> \frac{1}{2} \log 3,
\]
as long as $2\taun_2 > \gamma^{4}$. 
Of necessity, this implies that $\theta^{(t)}(2,a_{1})>\frac{1}{2} \log 3$ when $t=t_2 (\taun_2)$. 
If $t_{2}(\taun_2) \leq \frac{|\cS|\log3}{2\eta\gamma^{4}\left(1+8\cm/\cbtwo\right)}$, 
then invoking \eqref{eqn:init-bounded-gradient} and \eqref{eq:PG-update} and taking $t= t_{2} (\taun_2)$ yield
\[
	\theta^{(t)}(2,a_{1}) \leq \theta^{(0)}(2,a_{1}) + \eta \frac{\gamma^{4}}{2|\cS|}\left(1+\frac{8\cm}{\cbtwo}\right)  t
		= \frac{\eta \gamma^{4}t}{2|\cS|}\left(1+\frac{8\cm}{\cbtwo}\right)   \leq \frac{1}{2}\log 3,
\]
thus resulting in contradiction. We can thus conclude that
\begin{align}
	t_{2}( \taun_2 ) >  \frac{|\cS|\log3}{\eta\gamma^{4}\left(1+8\cm/\cbtwo\right)}  > \frac{|\cS| \log 3}{\eta (1+8\cm/\cbtwo)}.
	\label{eq:t2-tau2-LB-17}
\end{align}
As an important byproduct, comparing \eqref{eq:t2-tau2-LB-17} with \eqref{eq:t1-gamma2-025-UB} immediately reveals that
\begin{align}
	t_{2}( \taun_2 ) \geq t_{1}\big(\gamma^{2}-1/4\big)  \geq t_{1}\big(\taun_1) ,
	\label{eq:t2-t1-order-proof}
\end{align}
with the proviso that $\frac{\log3}{1+8\cm/\cbtwo}\geq\frac{15\cbone}{\cm}$ and $\gamma \geq 0.87$ (so that $\gamma^{2}-1/4>\taun_1$).

\paragraph{Step 6: develop a lower bound on $t_{1}(\taun_1)$.}
   
Repeat the analysis in \eqref{eq:pi-t-a1-1-UB-gamma} and \eqref{eq:theta-t-1-a1-UB-log7} to show that: $V^{(t)}(1) \geq \taun_1$ if and only if 
\[
	\theta^{(t)}(1,a_{1}) \geq \frac{1}{2}\log\left(\frac{1}{\frac{1}{2}-\frac{\taun_1}{2\gamma^{2}}}-1\right)
	> \frac{1}{2} \log 3.
\]
Clearly, this lower bound should hold if $t=t_{1} (\taun_1)$. 
In addition, in view of \eqref{eq:t2-t1-order-proof}, one has $\min\{t_1(\taun_1), t_2(\taun_2)\}=t_1(\taun_1)$.
If $t_1(\taun_1) \leq \frac{|\cS|\log3}{\eta\gamma^{2}(1+17\cm/\cbone)}$, 
then setting $t= t_1(\taun_1) =\min\{t_1(\taun_1), t_2(\taun_2)\} $ and applying \eqref{eqn:init-bounded-gradient} and \eqref{eq:PG-update} lead to
\[
	\theta^{(t)}(1,a_{1}) \leq \theta^{(0)}(1,a_{1})+\eta\frac{\gamma^{2}(1+17\cm/\cbone)}{2|\cS|} t
		= \frac{\eta\gamma^{2} t (1+17\cm/\cbone)}{2|\cS|} \leq \frac{1}{2}\log 3,
\]
which is contradictory to the preceding lower bound. This in turn implies that 
\begin{align}
	t_{1}( \tau_1 ) \geq  \frac{|\cS|\log3}{\eta\gamma^{2}(1+17\cm/\cbone)} > \frac{|\cS| \log 3}{\eta (1+17\cm/\cbone)}. 
	\label{eq:t1-tau1-LB-17}
\end{align}

\subsection{Crossing times for the adjoint state  $\overline{1}$}

We now move on to the proof of \eqref{eqn:t1-t2-bar-scaling}. 
Note that we have developed a lower bound on $t_{2}(\taun_2)$ in \eqref{eq:t2-tau2-LB-17}. 
In order to justify the advertised result $t_{2}(\taun_2) > t_{\overline{1}}\big(\gamma^3-1/4\big)$, it thus suffices to demonstrate that
\begin{align}
	\label{eqn:init-t1-bar-upper}
	t_{\overline{1}}\big(\gamma^3-1/4\big) \leq \frac{|\cS| \log 3}{\eta (1+8\cm/\cbtwo)},	
\end{align}
a goal we aim to accomplish in this subsection.

To do so, we divide into two cases. 
In the scenario where $t_{1} (\taun_1 ) \ge t_{\overline{1}}\big(\gamma^3-1/4\big)$,  
the bound \eqref{eq:t1-gamma2-025-UB} derived previously immediately leads to the desired bound:  
\[
	t_{\overline{1}}\big(\gamma^3-1/4\big) \leq t_{1} (\taun_1 )  \leq  \frac{15\cbone|\cS|}{\cm\eta} \leq \frac{|\cS| \log 3}{\eta (1+8\cm/\cbtwo)},
\]
with the proviso that $\frac{15\cbone}{\cm}\leq\frac{\log3}{1+8\cm/\cbtwo}$. 
Consequently, the subsequent analysis concentrates on establishing \eqref{eqn:init-t1-bar-upper} for the case where 
$$ t_{1} (\taun_1 ) < t_{\overline{1}} \big(\gamma^3-1/4\big). $$
In what follows, we divide into three stages and investigate each one separately, after presenting some basic gradient calculations that shall be invoked frequently.

\paragraph{Gradient characterizations.}

To begin with, observe from \eqref{eq:PG-update-all} that
\begin{subequations}
\begin{align}
\frac{\partial V^{(t)}(\mu)}{\partial\theta(\overline{1},a_{1})} & =\frac{1}{1-\gamma}d_{\mu}^{(t)}(\overline{1})\pi^{(t)}\big(a_{1}|\overline{1}\big)\Big(Q^{(t)}(\overline{1},a_{1})-V^{(t)}(\overline{1})\Big) \notag\\
 & =\frac{1}{1-\gamma}d_{\mu}^{(t)}(\overline{1})\pi^{(t)}\big(a_{1}\mymid\overline{1}\big)\Big(Q^{(t)}(\overline{1},a_{1})-\sum_{a\in\{a_{0},a_{1}\}}\pi^{(t)}\big(a\mymid\overline{1}\big)Q^{(t)}(\overline{1},a)\Big) \notag\\
 & =\frac{1}{1-\gamma}d_{\mu}^{(t)}(\overline{1})\pi^{(t)}\big(a_{1}\mymid\overline{1}\big)\pi^{(t)}\big(a_{0}\mymid\overline{1}\big)\Big(Q^{(t)}(\overline{1},a_{1})-Q^{(t)}(\overline{1},a_{0})\Big),
 \label{eq:derivative-Vt-1bar-LB-123}
\end{align}
which makes use of  the fact $\pi^{(t)}\big(a_{0}\mymid\overline{1}\big)+ \pi^{(t)}\big(a_{1}\mymid\overline{1}\big) =1$.
Analogously, we have
\begin{align}
	\frac{\partial V^{(t)}(\mu)}{\partial\theta(\overline{1},a_{0})}
	= - \frac{1}{1-\gamma}d_{\mu}^{(t)}(\overline{s})\pi^{(t)}(a_{1}\mymid\overline{1})\pi^{(t)}(a_{0}\mymid\overline{1})\Big(Q^{(t)}(\overline{1},a_{1})-Q^{(t)}(\overline{1},a_{0})\Big).
	\label{eq:derivative-Vt-1bar-LB-123-a0}
\end{align}
\end{subequations}

\paragraph{Stage 1: any $t$ obeying $t <  t_{1} (\taun_1 )$.} 
 
We start by looking at each term in the gradient expression \eqref{eq:derivative-Vt-1bar-LB-123} separately. 
First, note that when $t <  t_{1} (\taun_1 )$, one has $V^{(t)}(1)<\taun_1$, 
which combined with \eqref{eq:Qpi-s-adj} in Lemma~\ref{lem:basic-properties-MDP-pi} indicates that
$Q^{(t)}(\overline{1}, a_1) = \gamma V^{(t)}(1)  < \gamma\tau_1 = Q^{(t)}(\overline{1}, a_0)$. 
In fact, from the definition \eqref{defn:tau-s} of $\tau_1$, the property \eqref{eq:Qpi-s-adj} and Lemma~\ref{lem:non-negativity-PG}, we have
\[
	1/2 \geq Q^{(t)}(\overline{1}, a_0) > Q^{(t)}(\overline{1}, a_1) = \gamma V^{(t)}(1) \geq 0.
\]
Additionally,  recall that $t_{1} (\taun_1 ) < t_{2} (\taun_2 )$ (see \eqref{eq:t2-t1-order-proof}). 
Lemma~\ref{lem:facts-d-pi-s} then tells us that $d^{(t)}_{\mu}(\overline{1}) \le 14\cm(1-\gamma)^2$ during this stage. 
Substituting these into \eqref{eq:derivative-Vt-1bar-LB-123} and using $\pi^{(t)}(a_0 \mymid \overline{1})\leq 1$, we arrive at
\begin{align}
	0 \geq \frac{\partial V^{(t)}(\mu)}{\partial\theta(\overline{1},a_{1})} \geq - 7\cm(1-\gamma)\pi^{(t)}\big(a_{1}\mymid\overline{1}\big),
	\label{eq:Vt-derivative-1bar-a1-UB-LB-1}
\end{align}
%
which together with the PG update rule \eqref{eq:PG-update-all} also indicates that $\theta^{(t)}(\overline{1},a_{1})$ (and hence $\pi^{(t)}\big(a_{1}\mymid\overline{1}\big)$) 
is monotonically non-increasing with $t$ in this stage. 
Invoke the auxiliary fact in Lemma~\ref{lem:init-pi-t-diff} to reach
\begin{align*}
\pi^{(t+1)}\big(a_{1}\mymid\overline{1}\big)-\pi^{(t)}\big(a_{1}\mymid\overline{1}\big) 
	\geq 2\eta\pi^{(t)}\big(a_{1}\mymid\overline{1}\big)\frac{\partial \soft{V}^{(t)}(\mu)}{\partial\theta(\overline{1},a_{1})}
  	\geq -14 \eta\cm(1-\gamma)\Big[\pi^{(t)}\big(a_{1}\mymid\overline{1}\big)\Big]^{2}.
\end{align*}
Taking the preceding recursive relation together with Lemma~\ref{lem:opt-lemma} and recalling the initialization $\pi^{(0)}\big(a_{1}\mymid\overline{1}\big)=1/2$, 
we can guarantee that
\begin{align}
	\pi^{(t)}\big(a_{1}\mymid\overline{1}\big)\geq\frac{1}{28\eta\cm(1-\gamma)t+2} 
	\qquad \text{for all } t \leq t_1(\taun_1)
	\label{eqn:init-first-stage-pi-first}
\end{align}
provided that $14\eta\cm(1-\gamma)\leq 1$. 
In conclusion, the above calculation precludes $\pi^{(t)}\big(a_{1}\mymid\overline{1}\big)$ from decaying to zero too quickly, 
an observation that is particularly useful for our analysis in Stage 3.

\paragraph{Stage 2: any $t$ obeying $t_{1}(\taun_1) \le t < t_{1}(\gamma^2-1/4)$. }


The only step lies in extending the lower bound \eqref{eqn:init-first-stage-pi-first} to this stage. 
From the definition \eqref{eqn:approx-optimal-time-buffer} of $t_{1}(\taun_1)$ as well as the monotonicity of $V^{(t)}(1)$  (see Lemma~\ref{lem:ascent-lemma-PG}), we know that
\[
	V^{(t)}(1) \geq V^{(t_1(\taun_1))}(1) \geq  \taun_1 \qquad \text{for all }t\geq t_1(\taun_1),
\]
provided that $\eta < (1-\gamma)^2/5$. This taken together with the property \eqref{eq:Qpi-s-adj} in Lemma~\ref{lem:basic-properties-MDP-pi} reveals that 
\[
Q^{(t)}(\overline{1}, a_1) - Q^{(t)}(\overline{1}, a_0) \ge 0
\qquad \text{for all }t\geq t_1(\taun_1),
\] 
and hence $\pi^{(t)}\big(a_{1}\mymid\overline{1}\big)$ is non-decreasing in $t$ during this stage. 
Therefore, we have
\begin{align}
	\pi^{(t)}\big(a_{1}\mymid\overline{1}\big) \geq \pi^{(t_1(\tau_1))}\big(a_{1}\mymid\overline{1}\big) \geq
	\frac{1}{28\eta\cm(1-\gamma)t_1(\tau_1)+2}, 
	\qquad    t \geq t_1(\tau_1), 
	\label{eqn:init-first-stage-pi}
\end{align}
where the first inequality follows from the non-decreasing property established above, and the second inequality follows from the lower bound \eqref{eqn:init-first-stage-pi-first}. %
In fact, we have established a lower bound on $\pi^{(t)}\big(a_{1}\mymid\overline{1}\big)$ that holds for the entire trajectory of the algorithm.

\paragraph{Stage 3: any $t$ obeying $t_{1}(\gamma^2-1/4) \le t \le t_{\overline{1}}(\gamma^3-1/4)$. }  
To facilitate analysis, we single out a time threshold $t^{\prime}$ as follows:
\begin{align}
	t^{\prime} \coloneqq \min \left\{ t \mymid \pi^{(t)} (a_0 \mymid \overline{1}) < 1/2 \right\}.
	\label{eq:defn-tprime-pi-t-a0}
\end{align}
We begin by developing an upper bound on $\pi^{(t)}\big(a_{0}\mymid\overline{1}\big) $ for any $t \geq \max\{t^{\prime}, t_{1}(\gamma^2-1/4)\}$. 
Towards this, with the help of \eqref{eq:Qpi-s-adj} in Lemma~\ref{lem:basic-properties-MDP-pi} we make the observation that: for any $t \geq t_{1}(\gamma^2-1/4)$, one has
\begin{align}
	Q^{(t)}(\overline{1}, a_1) - Q^{(t)}(\overline{1}, a_0)  = \gamma V^{(t)}(1) - \gamma \tau_1 \geq 
	\gamma \big(\gamma^2-1/4\big) - \gamma \tau_1 \ge 0.1
	\label{eq:Qt-a1-a0-gap-tau1}
\end{align}
as long as $\gamma\geq 0.92$, which combined with \eqref{eq:derivative-Vt-1bar-LB-123-a0} indicates that
\begin{align}
	\frac{\partial V^{(t)}(\mu)}{\partial\theta(\overline{1},a_{0})}
	< 0 \qquad \text{for all } t \geq t_{1}(\gamma^2-1/4).
	\label{eq:V-derivative-a0-1bar-negative}
\end{align}
Recognizing that $d^{\pi}_{\mu}(\overline{1}) \ge \cm\gamma(1-\gamma)^2$ (see Lemma \ref{lem:facts-d-pi-s-LB}), 
we can continue the derivation \eqref{eq:derivative-Vt-1bar-LB-123-a0} to derive
\begin{align*}
\frac{\partial V^{(t)}(\mu)}{\partial\theta(\overline{1},a_{0})} & \leq-\frac{1}{1-\gamma}\gamma\cm(1-\gamma)^{2}\cdot\frac{1}{2}\cdot\pi^{(t)}(a_{0}\mymid\overline{1})\cdot0.1=-0.05\cm\gamma(1-\gamma)\pi^{(t)}(a_{0}\mymid\overline{1})
\end{align*}
%
for any $t \geq \max\{t^{\prime}, t_{1}(\gamma^2-1/4)\}$, which implies
%
\[
	\pi^{(t)} (a_1 \mymid \overline{1}) \geq \pi^{(t')} (a_1 \mymid \overline{1}) = 1 - \pi^{(t')} (a_0 \mymid \overline{1}) \geq  1/2 \qquad \text{for any }t\geq t^{\prime}. 
\]
Invoke Lemma~\ref{lem:init-pi-t-diff} to arrive at 
\[
	\pi^{(t+1)}\big(a_{0}\mymid\overline{1}\big)-\pi^{(t)}\big(a_{0}\mymid\overline{1}\big)
	\leq \frac{\eta}{2}\pi^{(t)}\big(a_{0}\mymid\overline{1}\big)  \frac{\partial \soft{V}^{(t)}(\mu)}{\partial\theta(\overline{1},a_{0})}
	\leq - \frac{\eta}{40}\cm\gamma(1-\gamma)\Big[\pi^{(t)}(a_{0}\mymid\overline{1})\Big]^{2},
\]
provided that $2\eta\frac{\partial \soft{V}^{(t)}(\mu)}{\partial\theta(\overline{1},a_{0})}\geq -1$, which is guaranteed by $\eta < (1-\gamma)/2$.
Recalling that $\pi^{(t)}\big(a_{0}\mymid\overline{1}\big) \leq 1/2$ for this entire stage, 
one can apply Lemma~\ref{lem:opt-lemma} to obtain
\begin{equation}
	\pi^{(t)}\big(a_{0}\mymid\overline{1}\big) \leq \frac{1}{\frac{\eta}{40}\cm\gamma(1-\gamma)\Big(t-\max\big\{t^{\prime}, t_{1}(\gamma^2-1/4)\big\} \Big)  +  2}
	\label{eq:pit-a0-1bar-decay-rate-UB}
\end{equation}
for any $t \geq \max\{t^{\prime}, t_{1}(\gamma^2-1/4)\}$.

With the above upper bound \eqref{eq:pit-a0-1bar-decay-rate-UB} in place, we are capable of showing that the target quantity $t_{\overline{1}}\big( \gamma^{3}-1/4 \big)$ is not much larger than $\max\{t^{\prime}, t_{1}(\gamma^2-1/4)\}$. 
To show this, we first note that the value function of the adjoint state $\overline{1}$ obeys (see Part (iii) in Lemma~\ref{lem:basic-properties-MDP-pi}) 
\begin{align*}
V^{(t)}(\overline{1}) & =\pi^{(t)}(a_{0}\mymid\overline{1})Q^{(t)}(\overline{1},a_{0})+\pi^{(t)}(a_{1}\mymid\overline{1})Q^{(t)}(\overline{1},a_{1})
=\gamma\tau_{1}\pi^{(t)}(a_{0}\mymid\overline{1})+\gamma\pi^{(t)}(a_{1}\mymid\overline{1})V^{(t)}(1)\\
 & =\gamma\tau_{1}\pi^{(t)}(a_{0}\mymid\overline{1})+\gamma V^{(t)}(1)\Big\{1-\pi(a_{0}\mymid\overline{1})\Big\}\geq\gamma\tau_{1}\pi^{(t)}(a_{0}\mymid\overline{1})+\gamma\left(\gamma^{2}-1/4\right)\Big\{1-\pi^{(t)}(a_{0}\mymid\overline{1})\Big\}\\
 & =\gamma\left\{ \tau_{1}-\gamma^{2}+1/4\right\} \pi^{(t)}(a_{0}\mymid\overline{1})+\gamma^{3}-\gamma/4,
\end{align*}
where the inequality holds since $V^{(t)}(1)\geq\gamma^{2}-1/4$ in this stage (given that $t\geq t_1(\gamma^2-1/4)$). 
Recognizing that $0.5\gamma^{2/3}-\gamma^{2}+1/4<0$ for any $\gamma \geq 0.85$, we can rearrange terms to demonstrate that $V^{(t)}(\overline{1})\geq \gamma^3 - 1/4$ holds once
\[
	\pi^{(t)}(a_{0}\mymid\overline{1})\leq\frac{1-\gamma}{4\gamma\left(\gamma^{2}-1/4-0.5\gamma^{2/3}\right)}.
\]
In fact, for any $\gamma \geq 0.85$, the above inequality is guaranteed to hold as long as $\pi^{(t)}(a_{0}\mymid\overline{1})\leq 1-\gamma$ since $4\gamma\left(\gamma^{2}-1/4-0.5\gamma^{2/3}\right)<1$. 
In view of \eqref{eq:pit-a0-1bar-decay-rate-UB}, we can achieve $\pi^{(t)}(a_0 \mymid \overline{1}) \le 1-\gamma$ as soon as
$t - \max\left\{t^{\prime}, t_{1}(\gamma^2-1/4)\right\}$ surpasses $\frac{40}{\cm\gamma\eta(1-\gamma)^2}$. 
As a consequence, we reach
\begin{align}
	t_{\overline{1}}\big( \gamma^{3}-1/4 \big) \leq  \max\left\{t^{\prime}, t_{1}(\gamma^2-1/4)\right\} + \frac{40}{\cm\gamma\eta(1-\gamma)^2} .
	\label{eq:t-bar1-gamma3-relation-tprime-1}
\end{align}

Armed with the relation \eqref{eq:t-bar1-gamma3-relation-tprime-1}, 
the goal of upper bounding $t_{\overline{1}}\big( \gamma^{3}-1/4 \big)$ can be accomplished by 
controlling $t^{\prime}$. To this end,  we claim for the moment that 
\begin{align}
\label{eqn:init-upper-bound-t-prime}
	t^{\prime} \le \frac{1121t_{1}(\gamma^2-1/4)}{\gamma}.
\end{align}
If this claim holds, then combining it with \eqref{eq:t-bar1-gamma3-relation-tprime-1} and \eqref{eq:t1-gamma2-025-UB} would result in the advertised bound \eqref{eqn:init-t1-bar-upper}:
\begin{align*}
	t_{\overline{1}}\big( \gamma^{3}-1/4 \big) \le \frac{9972\cbone|\cS|}{\gamma^4\cm\eta} + \frac{40}{\cm\gamma\eta(1-\gamma)^2}
	\leq 
	\frac{|\cS|}{4\gamma^4\eta}
	\leq \frac{|\cS|\log3}{\eta(1+8\cm/\cbtwo)},
\end{align*}
where the penultimate inequality relies on the assumptions $\frac{\cbone}{\cm} \le \frac{1}{79776}$ and $|\cS| \geq \frac{320\gamma^3}{\cm(1-\gamma)^{2}}$, 
and the last one holds as long as $\frac{1}{4\gamma^{4}}\leq\frac{\log3}{1+8\cm/\cbtwo}$.
To finish up, it suffices to establish the claim \eqref{eqn:init-upper-bound-t-prime}.

\paragraph{Proof of the claim~\eqref{eqn:init-upper-bound-t-prime}.}

It is sufficient to consider the case where $t^{\prime} > t_{1}(\gamma^2-1/4)$;
otherwise the inequality \eqref{eqn:init-upper-bound-t-prime} is trivially satisfied. 
Since Lemma \ref{lem:facts-d-pi-s-LB} tells us that $d^{\pi}_{\mu}(\overline{1}) \ge \cm\gamma(1-\gamma)^2$, we can see from \eqref{eq:derivative-Vt-1bar-LB-123} that,  for any $t$ with $t_{1}(\gamma^2-1/4) \leq t < t^{\prime}$, 
\begin{align*}
\frac{\partial V^{(t)}(\mu)}{\partial\theta(\overline{1},a_{1})} & =\frac{1}{1-\gamma}d_{\mu}^{(t)}(\overline{1})\pi^{(t)}(a_{1}\mymid\overline{1})\pi^{(t)}(a_{0}\mymid\overline{1})\Big\{ Q^{(t)}(\overline{1},a_{1})-Q^{(t)}(\overline{1},a_{0})\Big\}\\
 & \ge0.05\cm\gamma(1-\gamma)\pi^{(t)}(a_{1}\mymid\overline{1}) >0,
\end{align*}
where the last line follows by combining \eqref{eq:Qt-a1-a0-gap-tau1} and the fact that $\pi^{(t)}(a_{0}\mymid\overline{1})\geq 1/2$ for any $t<t^{\prime}$ (see the definition \eqref{eq:defn-tprime-pi-t-a0} of $t^{\prime}$). 
According to Lemma~\ref{lem:init-pi-t-diff}, we can demonstrate that
\begin{align*}
\pi^{(t+1)}\big(a_{1}\mymid\overline{1}\big)-\pi^{(t)}\big(a_{1}\mymid\overline{1}\big) & \geq\eta\pi^{(t)}\big(a_{1}\mymid\overline{1}\big)\frac{\partial \soft{V}^{(t)}(\mu)}{\partial\theta(\overline{1},a_{1})}\geq 0.05\eta\cm\gamma(1-\gamma)\Big[\pi^{(t)}\big(a_{1}\mymid\overline{1}\big)\Big]^{2}
\end{align*}
for any $t$ obeying $t_{1}(\gamma^2-1/4) \leq t < t^{\prime}$. Invoking Lemma~\ref{lem:opt-lemma}, we then have
\begin{align*}
	t^{\prime} \leq \frac{1+0.025\eta\cm\gamma(1-\gamma)}{0.05\eta\cm\gamma(1-\gamma)\pi^{( t_{1}(\gamma^2-1/4))}\big(a_{1}\mymid\overline{1}\big)}+ t_{1}(\gamma^2-1/4) &<\frac{40}{ \eta\cm\gamma(1-\gamma)\pi^{( t_{1}(\gamma^2-1/4))}\big(a_{1}\mymid\overline{1}\big)}+ t_{1}(\gamma^2-1/4) \\
	&\leq \frac{40\big(28t_{1}(\tau_1)+2\big)}{\gamma} + t_{1}(\gamma^2-1/4) \\
	&\leq  \frac{1121 t_1(\gamma^2-1/4)}{\gamma}
\end{align*}
as claimed, where the second line follows from \eqref{eqn:init-first-stage-pi}. 
%

\subsection{Auxiliary facts}
\label{sec:covering-time-auxiliary}

In this subsection, we collect some elementary facts that have been used multiple times in the proof of Lemma~\ref{lem:init-step}. 
Specifically, the lemma below makes clear an explicit link between the gradient  $\nabla_{\theta} V^{(t)}(\mu)$ and the difference between two consecutive policy iterates. 
\begin{lemma}
\label{lem:init-pi-t-diff}
Consider any $s$ whose associated action space is $\{a_0,a_1\}$. 
\begin{itemize}
	\item  If $\frac{\partial V^{(t)}(\mu)}{\partial\theta(s,a_{1})}\leq 0$, then one has 
\begin{align}
	\pi^{(t+1)}\big(a_{1}\mymid s\big)-\pi^{(t)}\big(a_{1}\mymid s\big) 
	\geq 2 \eta \pi^{(t)}\big(a_{1}\mymid s\big) \frac{\partial V^{(t)}(\mu)}{\partial\theta(s,a_{1})}. \label{eqn:init-pi-t-diff}
\end{align}
	\item If $\pi^{(t+1)}\big(a_{0}\mymid s\big)\geq 1/2$ and $-1\leq 2\eta \frac{\partial V^{(t)}(\mu)}{\partial\theta(s,a_{1})}\leq 0$, then we have
\begin{align}	
	\pi^{(t+1)}\big(a_{1}\mymid s\big)-\pi^{(t)}\big(a_{1}\mymid s\big) 	
	\leq\frac{\eta}{2}\pi^{(t)}\big(a_{1}\mymid s\big)\frac{\partial V^{(t)}(\mu)}{\partial\theta(s,a_{1})}. \label{eqn:init-pi-t-diff-UB}
\end{align}
	\item If $\frac{\partial V^{(t)}(\mu)}{\partial\theta(s,a_{1})}\geq 0$ and if $\pi^{(t+1)}\big(a_{0}\mymid s\big)\geq 1/2$, then one has 
\begin{align}
	\pi^{(t+1)}\big(a_{1}\mymid s\big)-\pi^{(t)}\big(a_{1}\mymid s\big) 
	\geq  \eta \pi^{(t)}\big(a_{1}\mymid s\big) \frac{\partial V^{(t)}(\mu)}{\partial\theta(s,a_{1})}. \label{eqn:init-pi-t-diff-pos}
\end{align}
\end{itemize}
\end{lemma}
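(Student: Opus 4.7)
Since the state $s$ has only two actions, I will reduce all three inequalities to one-dimensional calculus on the logistic sigmoid and then apply the mean value theorem with a careful analysis of $\sigma'$. The zero-sum identities $\sum_a \theta^{(t)}(s,a)=0$ and $\sum_a \partial V^{(t)}(\mu)/\partial \theta(s,a)=0$ from Lemma~\ref{lem:basic-properties-MDP-pi}(vii) let me introduce the scalar $u^{(t)} := \theta^{(t)}(s,a_1)-\theta^{(t)}(s,a_0)$ so that $\pi^{(t)}(a_1\mymid s)=\sigma(u^{(t)})$ with $\sigma(x)=1/(1+e^{-x})$, and the update rule \eqref{eq:PG-update} becomes $u^{(t+1)} = u^{(t)} + 2\eta g$ where $g:=\partial V^{(t)}(\mu)/\partial \theta(s,a_1)$. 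Writing $p:=\pi^{(t)}(a_1\mymid s)$ and $p':=\pi^{(t+1)}(a_1\mymid s)$, the mean value theorem yields
\begin{equation}\label{eqn:plan-mvt}
p' - p \;=\; 2\eta g\,\sigma'(\xi), \qquad \sigma'(\xi) = \sigma(\xi)(1-\sigma(\xi)),
\end{equation}
for some $\xi$ between $u^{(t)}$ and $u^{(t+1)}$. Each of the three claims \eqref{eqn:init-pi-t-diff}--\eqref{eqn:init-pi-t-diff-pos} becomes an upper or lower bound on $\sigma'(\xi)$ in terms of $p$.

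For Part (i), since $g\le 0$ we have $\xi\le u^{(t)}$, so by monotonicity of $\sigma$ and the trivial estimate $\sigma'(\xi)\le \sigma(\xi)$ we obtain $\sigma'(\xi) \le \sigma(u^{(t)}) = p$; multiplying \eqref{eqn:plan-mvt} by $2\eta g\le 0$ reverses the inequality and gives \eqref{eqn:init-pi-t-diff}. For Part (iii), the hypotheses $g\ge 0$ and $p'\le 1/2$ force $u^{(t)}\le u^{(t+1)}\le 0$, hence every $\xi$ in the interval satisfies $\sigma(\xi)\le 1/2$, so $\sigma'(\xi)\ge \sigma(\xi)/2\ge \sigma(u^{(t)})/2 = p/2$; plugging this into \eqref{eqn:plan-mvt} with $2\eta g\ge 0$ yields $p'-p\ge \eta g p$, which is \eqref{eqn:init-pi-t-diff-pos}.

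Part (ii) is the main obstacle, and needs the reverse-direction lower bound $\sigma'(\xi)\ge p/4$ for $\xi\in[u^{(t+1)},u^{(t)}]$ under the hypotheses $g\le 0$, $c:=-2\eta g\in[0,1]$, and $p'\le 1/2$. The constraint $p'\le 1/2$ forces $u^{(t)}\le c\le 1$ and $u^{(t+1)}\ge -c\ge -1$, so the interval is contained in $[-1,1]$. I plan to split on the sign of $u^{(t)}$. If $u^{(t)}\le 0$, i.e.\ $p\le 1/2$, the interval lies in $(-\infty,0]$ where $\sigma'$ is monotonically increasing, so $\sigma'(\xi)\ge \sigma'(u^{(t+1)}) = p'(1-p')\ge p'/2$; a short explicit calculation from the identity $p' = p/\bigl(p+(1-p)e^c\bigr)$ shows $p'/p \ge 2/(1+e) > 1/2$ whenever $p\le 1/2$ and $c\le 1$, giving $\sigma'(\xi)\ge p/4$. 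If instead $u^{(t)}\in(0,1]$, so $p\in(1/2,\sigma(1)]$, then $\sigma'$ restricted to $[-1,1]$ is bounded below by its value at the endpoints, namely $\sigma(1)(1-\sigma(1))\approx 0.197$, which exceeds $p/4\le \sigma(1)/4\approx 0.183$; so again $\sigma'(\xi)\ge p/4$. Combining the two cases and substituting back into \eqref{eqn:plan-mvt} (multiplying by $2\eta g\le 0$, which flips the inequality) delivers \eqref{eqn:init-pi-t-diff-UB}. The trickiest piece is verifying the case-B bound $\sigma'(1)\ge \sigma(1)/4$, which is tight and really uses both the step-size constraint $c\le 1$ and the assumption $p'\le 1/2$.
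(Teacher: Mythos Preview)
Parts~(i) and (iii) of your argument are correct. Part~(ii), Case~A, contains a genuine error. You assert that $p'/p\ge 2/(1+e)>1/2$ whenever $p\le 1/2$ and $c:=-2\eta g\le 1$. But from your own identity $p'/p=1/\big(p+(1-p)e^c\big)$ the denominator equals $e^c+p(1-e^c)$, which for $c>0$ is \emph{decreasing} in $p$; hence $p'/p$ is minimized as $p\to 0$, where it tends to $e^{-c}\ge 1/e\approx 0.368$. (The number $2/(1+e)$ you quote is the value at $p=1/2$, i.e.\ the \emph{maximum} of the ratio, not the minimum.) Chaining the correct bound with $\sigma'(\xi)\ge p'/2$ gives only $\sigma'(\xi)\ge p/(2e)\approx 0.184\,p$, which falls short of the required $p/4$. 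The target inequality $\sigma'(u^{(t+1)})\ge p/4$ is in fact true, but establishing it requires bounding $p'(1-p')/p=(1-p)e^c/(p+(1-p)e^c)^2$ directly rather than passing through the intermediate estimate $p'\ge p/2$, which is simply false.

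The paper sidesteps all of this by not using the mean value theorem at all. It writes the exact factorization
\[
\pi^{(t+1)}(a_1\mymid s)-\pi^{(t)}(a_1\mymid s)
=\pi^{(t+1)}(a_0\mymid s)\,\pi^{(t)}(a_1\mymid s)\,\Big(\exp\big(2\eta g\big)-1\Big),
\]
and then applies $e^x-1\ge x$ together with $\pi^{(t+1)}(a_0\mymid s)\le 1$ for Part~(i), $e^x-1\le x/2$ on $[-1,0]$ together with $\pi^{(t+1)}(a_0\mymid s)\ge 1/2$ for Part~(ii), and $e^x-1\ge x$ with $\pi^{(t+1)}(a_0\mymid s)\ge 1/2$ for Part~(iii). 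Because the factorization cleanly separates the ``new'' factor $1-p'=\pi^{(t+1)}(a_0\mymid s)$ from the ``old'' factor $p=\pi^{(t)}(a_1\mymid s)$, no intermediate point $\xi$ or case analysis on the sign of $u^{(t)}$ is needed, and each part is a two-line computation.
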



\begin{proof}[Proof of Lemma~\ref{lem:init-pi-t-diff}]
We make note of the following elementary identity
\begin{align*}
\frac{e^{\theta_{1}}}{e^{\theta_{1}}+e^{-\theta_{1}}}-\frac{e^{\theta_{2}}}{e^{\theta_{2}}+e^{-\theta_{2}}} & =\frac{e^{\theta_{1}-\theta_{2}}-e^{-\theta_{1}+\theta_{2}}}{\big(e^{\theta_{1}}+e^{-\theta_{1}}\big)\big(e^{\theta_{2}}+e^{-\theta_{2}}\big)}=\frac{e^{-\theta_{1}}}{e^{\theta_{1}}+e^{-\theta_{1}}} \frac{e^{\theta_{2}}}{e^{\theta_{2}}+e^{-\theta_{2}}}\left(e^{2(\theta_{1}-\theta_{2})}-1\right)\\
 & =\left(1-\frac{e^{\theta_{1}}}{e^{\theta_{1}}+e^{-\theta_{1}}}\right) \frac{e^{\theta_{2}}}{e^{\theta_{2}}+e^{-\theta_{2}}}\left(e^{2(\theta_{1}-\theta_{2})}-1\right),
\end{align*}
which allows us to write 
\begin{align}
\pi^{(t+1)}\big(a_{1}\mymid s\big)-\pi^{(t)}\big(a_{1}\mymid s\big) & =\pi^{(t+1)}\big(a_{0}\mymid s\big)\pi^{(t)}\big(a_{1}\mymid s\big)\left\{ \exp\Big[2\theta_{t+1}(s,a_{1})-2\theta_{t}(s,a_{1})\Big]-1\right\} \nonumber \\
 & =\pi^{(t+1)}\big(a_{0}\mymid s\big)\pi^{(t)}\big(a_{1}\mymid s\big)\left\{ \exp\Big[2\eta\frac{\partial V^{(t)}(\mu)}{\partial\theta(s,a_{1})}\Big]-1\right\}  .
	\label{eq:pit-pi-tminus1-a1-diff}
\end{align}

\begin{itemize}
	\item If $\frac{\partial V^{(t)}(\mu)}{\partial\theta(s,a_{1})} \leq 0$, then one can deduce that
\begin{align*}
\eqref{eq:pit-pi-tminus1-a1-diff} 
 & \geq 2\eta \pi^{(t+1)}\big(a_{0}\mymid s\big)\pi^{(t)}\big(a_{1}\mymid s\big)  \frac{\partial V^{(t)}(\mu)}{\partial\theta(s,a_{1})} 
  \geq 2\eta \pi^{(t)}\big(a_{1}\mymid s\big) \frac{\partial V^{(t)}(\mu)}{\partial\theta(s,a_{1})},
\end{align*}
where the first inequality relies on the elementary fact $e^{x}-1\geq x$ for all $x\in \mathbb{R}$, and the second one holds since $\pi^{(t+1)}\big(a_{0}\mymid s\big) \leq 1$ and $\frac{\partial V^{(t)}(\mu)}{\partial\theta(s,a_{1})} \leq 0$.  

	\item If $-1\leq 2\eta \frac{\partial V^{(t)}(\mu)}{\partial\theta(s,a_{1})}\leq 0$  and $\pi^{(t+1)}\big(a_{0}\mymid s\big) \geq 1/2$, then one has
\begin{align*}	
	\eqref{eq:pit-pi-tminus1-a1-diff} \leq\eta\pi^{(t+1)}\big(a_{0}\mymid s\big)\pi^{(t)}\big(a_{1}\mymid s\big)\frac{\partial V^{(t)}(\mu)}{\partial\theta(s,a_{1})}
	\leq\frac{\eta}{2}\pi^{(t)}\big(a_{1}\mymid s\big)\frac{\partial V^{(t)}(\mu)}{\partial\theta(s,a_{1})},
\end{align*}
where the first inequality comes from the elementary inequality $e^{x}-1\leq 0.5 x$ for any $-1\leq x \leq 0$,
and the last inequality is valid since $\pi^{(t+1)}\big(a_{0}\mymid s\big) \geq 1/2$ and $\frac{\partial V^{(t)}(\mu)}{\partial\theta(s,a_{1})}\leq 0$.

	\item If $\frac{\partial V^{(t)}(\mu)}{\partial\theta(s,a_{1})} \geq 0$ and if $\pi^{(t+1)}\big(a_{0}\mymid s\big) \geq 1/2$, then it follows that
\begin{align*}
\eqref{eq:pit-pi-tminus1-a1-diff} & \geq2\eta\pi^{(t+1)}\big(a_{0}\mymid s\big)\pi^{(t)}\big(a_{1}\mymid s\big)\frac{\partial V^{(t)}(\mu)}{\partial\theta(s,a_{1})}\geq\eta\pi^{(t)}\big(a_{1}\mymid s\big)\frac{\partial V^{(t)}(\mu)}{\partial\theta(s,a_{1})}, 
\end{align*}
as claimed in \eqref{eqn:init-pi-t-diff-pos}.
\end{itemize}
\end{proof}

\section{Analysis for the initial stage (Lemma~\ref{lem:induc-theta-t-s-2})}
\label{sec:analysis-parameter-crossing}

This section establishes Lemma~\ref{lem:induc-theta-t-s-2}, which investigates the dynamics of $\theta^{(t)}(s, a)$ prior to the threshold $t_{s-2}(\taun_{s-2})$.
Before proceeding, let us introduce a rescaled version of $\pi^{(t)}(s, a)$ that is sometimes convenient to work with: 
\begin{align}
	\label{eqn:defn-widehat-pi-soft-soft}
	\widehat{\pi}^{(t)}(s, a) \defn 
	\exp \Big(\theta^{(t)}(s,a) - \max_{a' \in \mathcal{A}_s}\theta^{(t)}(s, a')\Big)
\end{align}
for any state-action pair $(s,a)$. 
This is orderwise equivalent to $\pi^{(t)}(s, a)$ since
\begin{subequations}
	\label{eq:relation-pihat-pi}
\begin{align}
\widehat{\pi}^{(t)}(s,a) & =\frac{\exp\big(\theta^{(t)}(s,a)\big)}{\max_{a'\in\mathcal{A}_{s}}\exp\left(\theta^{(t)}(s,a')\right)}
	\geq\frac{\exp\big(\theta^{(t)}(s,a)\big)}{\sum_{a' \in\mathcal{A}_{s}}\exp\left(\theta^{(t)}(s,a')\right)}=\pi^{(t)}(s,a);\\
\widehat{\pi}^{(t)}(s,a) & =\frac{\exp\big(\theta^{(t)}(s,a)\big)}{\max_{a'\in\mathcal{A}_{s}}\exp\left(\theta^{(t)}(s,a')\right)}
	\leq\frac{\exp\big(\theta^{(t)}(s,a)\big)}{\frac{1}{3}\sum_{a'\in\mathcal{A}_{s}}\exp\left(\theta^{(t)}(s,a')\right)}=3\pi^{(t)}(s,a).
\end{align}
\end{subequations}

\subsection{Two key properties}

Our proof is based on the following claim: in order to establish the advertised results of Lemma~\ref{lem:induc-theta-t-s-2},
it suffices to justify the following two properties
\begin{align}
&\frac{1}{1+56\cm \eta(1-\gamma)t}\le \widehat{\pi}^{(t)}(a_1 \mymid s) \leq \frac{1}{1+\frac{\cm\gamma}{35} \eta(1-\gamma)^2t} \label{induc-intermediate-phat} \\
&\qquad\text{and}\qquad Q^{(t)}(s, a_2) - V^{(t)}(s) \geq  0  \label{induc-intermediate-qa2-large}
\end{align}
%
hold for any $t\leq t_{s-2}(\taun_{s-2})$. 
In light of this claim, our subsequent analysis consists of validating these two inequalities separately,
which forms the main content of Section~\ref{sec:proof-induc-intermediate-phat}.

We now move on to justify the above claim, namely, Lemma~\ref{lem:induc-theta-t-s-2} is valid as long as the two key properties \eqref{induc-intermediate-phat} and \eqref{induc-intermediate-qa2-large} hold true. 
%
First, recall that Lemma~\ref{lem:pi-t-UB-monotone} together with \eqref{eq:monotonicity-t-s-tau-s} and Lemma~\ref{lem:init-step} tells us that 
\begin{align} \label{eq:Qt-ordering-a2-a0}
	\theta^{(t)}(s, a_0) \geq \theta^{(t)}(s, a_2) \geq \theta^{(t)}(s, a_1) 
	\qquad \text{for all }t < t_{s-2}(\taun_{s-2}) \leq t_{s-1}(\taun_{s-1}) 
\end{align}
for any $3\leq s\leq H$. 
Next, note that the gradient takes the following form (cf.~\eqref{eq:PG-update-all})
\begin{align}
	\frac{\partial V^{(t)}(\mu)}{\partial \theta(s, a)} 
	= \frac{1}{1-\gamma} d^{(t)}_{\mu}(s) \pi^{(t)}(a | s) \big( Q^{(t)}(s, a) - V^{(t)}(s) \big),
	\qquad a\in \{a_0,a_1,a_2\}
	\label{eq:grad-form-following}
\end{align}
%
which together with the assumption $\soft{Q}^{(t)}(s, a_2) - \soft{V}^{(t)}(s) \geq  0$ (cf.~\eqref{induc-intermediate-qa2-large}) implies that
\begin{align*}
	\frac{\partial \soft{V}^{(t)}(\mu)}{\partial \theta(s, a_2)} \geq 0
	\qquad \text{for all }t < t_{s-2}(\taun_{s-2}). 
\end{align*}
Consequently, $\theta^{(t)}(s, a_2)$ keeps increasing before $t$ exceeds $t_{s-2}(\taun_{s-2})$. 
This combined with the relation \eqref{eq:Qt-ordering-a2-a0}, the initialization $\theta^{(0)}(s, a_0)=\theta^{(0)}(s, a_2)=0$ and the constraint $\sum_a \theta^{(t)}(s, a)=0$ (see Part (vii) of Lemma~\ref{lem:basic-properties-MDP-pi}) reveals that
\begin{align}
	\theta^{(t)}(s, a_0) \geq \theta^{(t)}(s, a_2) \geq 0 \geq \theta^{(t)}(s, a_1) 
	\qquad \text{for all }t < t_{s-2}(\tau_{s-2}), 
	\label{eqn:theta-order-brahms-n1}
\end{align}
thereby confirming the desired property \eqref{eqn:theta-a2-positive}.

Further, given the non-negativity of $\theta^{(t)}(s, a_2)$ stated in \eqref{eqn:theta-order-brahms-n1}, one can readily derive
\begin{align*}
\widehat{\pi}^{(t)}(a_{1}\mymid s) & =\exp\Big(\theta^{(t)}(s,a_{1})-\max_{a'}\theta^{(t)}(s,a')\Big)=\exp\Big(\theta^{(t)}(s,a_{1})-\theta^{(t)}(s,a_{0})\Big)\\
 & =\exp\Big(2\theta^{(t)}(s,a_{1})+\theta^{(t)}(s,a_{2})\Big)\geq\exp\Big(2\theta^{(t)}(s,a_{1})\Big),
\end{align*}
where the last line also makes use of the identity $\theta^{(t)}(s,a_{0})=-\theta^{(t)}(s,a_{1})-\theta^{(t)}(s,a_{2})$ (see Part (vii) of Lemma~\ref{lem:basic-properties-MDP-pi}). 
With this observation in mind, the assumed property~\eqref{induc-intermediate-phat} directly leads to the advertised result~\eqref{eqn:theta-t-relation-a1}.

\subsection{Proof of the properties~\eqref{induc-intermediate-phat} and \eqref{induc-intermediate-qa2-large}}
\label{sec:proof-induc-intermediate-phat}

This subsection presents the proofs of the two key properties, which are somewhat intertwined and require a bit of induction. 
Before proceeding, we make note of the initialization $\widehat{\pi}^{(0)}(a_{1}\mymid s)=1$, which clearly satisfies the property \eqref{induc-intermediate-phat} for this base case. Our proof consists of two steps to be detailed below. As can be easily seen, combining these two steps in an inductive manner immediately establishes both properties \eqref{induc-intermediate-phat} and \eqref{induc-intermediate-qa2-large} for any $t\leq t_{s-2}(\taun_{s-2})$.

\subsubsection*{Step 1: justifying \eqref{induc-intermediate-qa2-large} for the $t$-th iteration if \eqref{induc-intermediate-phat}  holds for the $t$-th iteration}

We first turn to the proof of the inequality~\eqref{induc-intermediate-qa2-large}, assuming that \eqref{induc-intermediate-phat} holds for the $t$-th iteration. 
%
%
%
%
%
According to \eqref{eq:relation-pihat-pi} and \eqref{induc-intermediate-phat}, we have
\begin{align}
	\pi^{(t)}(a_1 \mymid s) \ge \frac{1}{3}\widehat{\pi}^{(t)}(a_1 \mymid s) \ge \frac{1}{3+168\cm \eta(1-\gamma)t}.
\end{align}
%
By virtue of the auxiliary fact \eqref{eqn:induc-differenct-a0-a1} in  Lemma~\ref{lem:induc-auxi-q-diff} (see Section~\ref{sec:auxiliary-facts-induction}), one has 
\begin{align}
Q^{(t)}(s, a_0) - Q^{(t)}(s, a_2) \le 
	\frac{\gamma p}{\frac{\cm\gamma}{2}\eta(1-\gamma) t + \frac{1}{\gamma \tau_{s-2}}}.
\end{align}
Given that $p \defn \cp (1-\gamma)$ for some small constant $0 < \cp < \frac{1}{2016}$, the above two inequalities allow one to ensure that
\begin{align}
	Q^{(t)}(s, a_0) - Q^{(t)}(s, a_2) < (\gamma^{3/2} - \gamma^2)\tau_{s-1} \pi^{(t)}(a_1 \mymid s).
	\label{eq:Qt-a0-a2-gap-UB-123}
\end{align}
With the above relation in mind, we are ready to control $Q^{(t)}(s, a_2) - V^{(t)}(s)$ as follows 
\begin{align*}
Q^{(t)}(s,a_{2})-V^{(t)}(s) & =\Big(\sum_{a}\pi^{(t)}(a\mymid s)\Big) Q^{(t)}(s,a_{2}) -\sum_{a}\pi^{(t)}(a\mymid s)Q^{(t)}(s,a)\\
 & =\pi^{(t)}(a_{1}\mymid s)\Big(Q^{(t)}(s,a_{2})-Q^{(t)}(s,a_{1})\Big)-\pi^{(t)}(a_{0}\mymid s)\Big(Q^{(t)}(s,a_{0})-Q^{(t)}(s,a_{2})\Big)\\
 & \ge\pi^{(t)}(a_{1}\mymid s)(\gamma^{3/2}-\gamma^{2})\tau_{s-1}-\Big(Q^{(t)}(s,a_{0})-Q^{(t)}(s,a_{2})\Big)\\
 & >0.
\end{align*}
Here,  the second lines arise from the auxiliary facts in Lemma~\ref{lem:induc-auxi-q-diff}, while the last inequality is a consequence of \eqref{eq:Qt-a0-a2-gap-UB-123}. 
Then we complete the proof of the inequality~\eqref{induc-intermediate-qa2-large}. 

%





\subsubsection*{Step 2: justifying \eqref{induc-intermediate-phat} for the $(t+1)$-th iteration if \eqref{induc-intermediate-qa2-large} holds up to the $t$-th iteration}

Suppose that the inequality \eqref{induc-intermediate-qa2-large} holds up to the $t$-th iteration. 
To validate \eqref{induc-intermediate-phat} for the $(t+1)$-th iteration,
we claim for the moment that
\begin{align}
\label{eqn:upper-gradient-induc-violin}
	-14\cm(1-\gamma)\widehat{\pi}^{(t)}(a_1 \mymid s) \leq 
	\frac{\partial V^{(t)}(\mu)}{\partial \theta(s, a_1)} \leq 
	-\frac{\cm\gamma}{24}(1-\gamma)^2\widehat{\pi}^{(t)}(a_1 \mymid s) < 0 
\end{align}
as long as $t\leq t_s(\taun_s)$. 
Let us take this claim as given, and return to prove it shortly.

Recall from \eqref{eq:Qt-ordering-a2-a0} that
\[
	\theta^{(t)}(s, a_0) \geq \theta^{(t)}(s, a_2) \geq \theta^{(t)}(s, a_1)
\]
and hence 
$\theta_{\mathsf{max}}^{(t)}(s) = \theta^{(t)}(s,a_{0})$ is increasing with $t$ during this stage, and as a result,
\[
\theta^{(t+1)}(s,a_{1})-\theta^{(t)}(s,a_{1})+\theta_{\mathsf{max}}^{(t)}(s)-\theta_{\mathsf{max}}^{(t+1)}(s) \le \theta^{(t+1)}(s,a_{1})-\theta^{(t)}(s,a_{1}) \leq 0.
\]
The gradient expression \eqref{eq:grad-form-following} combined with the satisfaction of \eqref{induc-intermediate-qa2-large} up to the $t$-th iteration implies that
$\theta^{(t)}(s,a_{2})$ is increasing up to the $t$-th iteration. 
Given that $\sum_a \theta^{(t)}(s,a) = 0$ (see Part (vii) of Lemma~\ref{lem:basic-properties-MDP-pi}), we can derive 
\begin{align*}
\theta_{\mathsf{max}}^{(t)}(s)-\theta_{\mathsf{max}}^{(t+1)}(s) & =\theta^{(t)}(s,a_{0})-\theta^{(t+1)}(s,a_{0})=\theta^{(t+1)}(s,a_{1})-\theta^{(t)}(s,a_{1})+\theta^{(t+1)}(s,a_{2})-\theta^{(t)}(s,a_{2})\\
 & \geq\theta^{(t+1)}(s,a_{1})-\theta^{(t)}(s,a_{1}),
\end{align*}
thus indicating that 
\[
\theta^{(t+1)}(s,a_{1})-\theta^{(t)}(s,a_{1})+\theta_{\mathsf{max}}^{(t)}(s)-\theta_{\mathsf{max}}^{(t+1)}(s) 
\ge 2\Big(\theta^{(t+1)}(s,a_{1})-\theta^{(t)}(s,a_{1})\Big).
\]
These combined with Lemma~\ref{lem:widehat-pi-du-pre} in Section~\ref{sec:auxiliary-facts-induction} guarantee that
\begin{align*}
\widehat{\pi}^{(t+1)}(a_{1}\mymid s)-\widehat{\pi}^{(t)}(a_{1}\mymid s) & \geq2\widehat{\pi}^{(t)}(a_{1}\mymid s)\left(\theta^{(t+1)}(s,a_{1})-\theta^{(t)}(s,a_{1})\right), \\
\widehat{\pi}^{(t+1)}(a_{1}\mymid s)-\widehat{\pi}^{(t)}(a_{1}\mymid s) & \leq0.7\widehat{\pi}^{(t)}(a_{1}\mymid s)\left(\theta^{(t+1)}(s,a_{1})-\theta^{(t)}(s,a_{1})\right),
\end{align*}
and as a consequence,
%
\begin{align*}
2\widehat{\pi}^{(t)}(a_{1}\mymid s)\cdot\eta\frac{\partial V^{(t)}(\mu)}{\partial\theta(s,a_{1})} & \le\widehat{\pi}^{(t+1)}(a_{1}\mymid s)-\widehat{\pi}^{(t)}(a_{1}\mymid s)\le 0.7\widehat{\pi}^{(t)}(a_{1}\mymid s)\cdot\eta\frac{\partial V^{(t)}(\mu)}{\partial\theta(s,a_{1})} .
\end{align*}
Taking this collectively with \eqref{eqn:upper-gradient-induc-violin}, we reach
%
\begin{align}
-28\cm\eta(1-\gamma)\Big[\widehat{\pi}^{(t)}(a_{1}\mymid s)\Big]^{2}\leq\widehat{\pi}^{(t+1)}(a_{1}\mymid s)-\widehat{\pi}^{(t)}(a_{1}\mymid s)\leq-\frac{\cm\gamma}{35}\eta(1-\gamma)^{2}\Big[\widehat{\pi}^{(t)}(a_{1}\mymid s)\Big]^{2}.
\end{align}
Apply Lemma~\ref{lem:opt-lemma} together with the initialization $\widehat{\pi}^{(0)}(a_1 \mymid s)=1$  to arrive at 
%
\begin{align}
	\frac{1}{1+56\cm \eta(1-\gamma)(t+1)}
\leq 
	\widehat{\pi}^{(t+1)}(a_1 \mymid s) \leq \frac{1}{1+\frac{\cm\gamma}{35} \eta(1-\gamma)^2(t+1)}.
\end{align}

%
%

\paragraph{Proof of the inequality \eqref{eqn:upper-gradient-induc-violin}.}

Recall the gradient expression \eqref{eq:grad-form-following}:
\begin{align}
	\frac{\partial V^{(t)}(\mu)}{\partial \theta(s, a_1)} 
	= \frac{1}{1-\gamma} d^{(t)}_{\mu}(s) \pi^{(t)}(a_1 \mymid s) \Big( Q^{(t)}(s, a_1) - V^{(t)}(s) \Big),
	\label{eq:grad-expression-V-s-a1-567}
\end{align}
each term of which will be bounded separately.
 
The first step is to control $Q^{(t)}(s, a_1) - V^{(t)}(s)$, towards which we start with the following decomposition
\begin{align}
	& Q^{(t)}(s,a_{1})-V^{(t)}(s)  =Q^{(t)}(s,a_{1})-\sum\nolimits_{a\in\{a_{0},a_{1},a_{2}\}}  \pi^{(t)}(a \mymid s)  Q^{(t)}(s,a) \notag\\
 & \qquad \qquad =-\pi^{(t)}(a_{0}\mymid s)\Big(Q^{(t)}(s,a_{0})-Q^{(t)}(s,a_{1})\Big)-\pi^{(t)}(a_{2}\mymid s)\Big(Q^{(t)}(s,a_{2})-Q^{(t)}(s,a_{1})\Big).
	\label{eq:decomposition-Qa1-Vs}
\end{align}
The auxiliary facts stated in Lemma~\ref{lem:induc-auxi-q-diff} (see Appendix~\ref{sec:auxiliary-facts-induction}) imply that
\begin{align*}
	 Q^{(t)}(s, a_0) - Q^{(t)}(s, a_1) 
	\geq 
	Q^{(t)}(s, a_2) - Q^{(t)}(s, a_1)
	\geq 
	(\gamma^{3/2} - \gamma^2)\tau_{s-1},
\end{align*}
while Lemma~\ref{lem:basic-properties-MDP-Vstar} and Lemma~\ref{lem:non-negativity-PG} tell us that
\begin{align*}
	Q^{(t)}(s, a_0) - Q^{(t)}(s, a_1) \leq  V^{\star}(s) - 0 = \gamma^{2s}.
\end{align*}
At the same time, the auxiliary fact \eqref{eqn:induc-q-order} in Lemma~\ref{lem:induc-auxi-q-diff} (see Appendix~\ref{sec:auxiliary-facts-induction}) taken together with the gradient expression \eqref{eq:policy-grad-softmax-expression} guarantees that
\[
	\frac{\partial V^{(t)}(\mu)}{\partial \theta(s, a_0)}  \geq \frac{\partial V^{(t)}(\mu)}{\partial \theta(s, a_2)}  
	\geq \frac{\partial V^{(t)}(\mu)}{\partial \theta(s, a_1)} 
\]
and hence $\theta^{(t)}(s, a_1)\leq \theta^{(t)}(s, a_2) \leq \theta^{(t)}(s, a_0)$ 
(or equivalently $\pi^{(t)}(a_1\mymid s)\leq \pi^{(t)}(a_2\mymid s) \leq \pi^{(t)}(a_0\mymid s)$) during this stage. As a result, 
%
%
\begin{align*}
	\pi^{(t)}(a_1 \mymid s) \leq 1/3 \qquad \text{and}\qquad
	1 \geq \pi^{(t)}(a_0 \mymid s) + \pi^{(t)}(a_2 \mymid s)   \geq 2/3.
\end{align*}
Substituting the preceding bounds into the decomposition \eqref{eq:decomposition-Qa1-Vs}, we arrive at 
\begin{align*}
Q^{(t)}(s,a_{1})-V^{(t)}(s) & \leq-\left(\pi^{(t)}(a_{0}\mymid s)+\pi^{(t)}(a_{2}\mymid s)\right)\min\left\{ Q^{(t)}(s,a_{0})-Q^{(t)}(s,a_{1}),Q^{(t)}(s,a_{2})-Q^{(t)}(s,a_{1})\right\} \\
 & \leq-\frac{2}{3}(\gamma^{\frac{3}{2}}-\gamma^{2})\tau_{s-1}=-\frac{2}{3}\frac{\gamma^{\frac{3}{2}}\tau_{s-1}}{1+\sqrt{\gamma}}(1-\gamma)\le-\frac{1-\gamma}{8},
\end{align*}
provided that $\gamma \geq 0.85$. Meanwhile, it follows from Lemma \ref{lem:basic-properties-MDP-Vstar} and Lemma \ref{lem:non-negativity-PG} that
\[
	Q^{(t)}(s,a_{1})-V^{(t)}(s) \geq 0 - V^{\star}(s) \geq -1. 
\]

Further, from Lemma~\ref{lem:facts-d-pi-s}, we have learned that $\cm\gamma(1-\gamma)^2 \le d^{(t)}_{\mu}(s) \le 14\cm(1-\gamma)^2$ for any $t\leq t_s(\tau_s)$.
Substituting the above bounds into \eqref{eq:grad-expression-V-s-a1-567} and invoking \eqref{eq:relation-pihat-pi}, 
we establish the desired inequality~\eqref{eqn:upper-gradient-induc-violin}.


%

\subsection{Auxiliary facts}
\label{sec:auxiliary-facts-induction}

We now gather a few basic facts that are useful throughout this section. 
The first lemma presents some preliminary facts regarding the difference of Q-function estimates across different actions in the current setting; the proof is deferred to Appendix~\ref{sec:proof-lem:induc-auxi-q-diff}. 
\begin{lemma}
\label{lem:induc-auxi-q-diff}
Consider any $t < t_{s-2}(\taun_{s-2})$. 
Under the assumption \eqref{eq:assumptions-constants}, the following are satisfied
\begin{subequations}
\begin{align}
	Q^{(t)}(s, a_0) > Q^{(t)}(s, a_2) &> Q^{(t)}(s, a_1) , \label{eqn:induc-q-order}\\
	Q^{(t)}(s, a_2) - Q^{(t)}(s, a_1) &\geq (\gamma^{3/2} - \gamma^2)\tau_{s-1} , \label{eqn:induc-Qa2-Qa1-gap} \\
	Q^{(t)}(s, a_0) - Q^{(t)}(s, a_2) 
	&\le \frac{\gamma p}{\frac{\cm\gamma}{2}\eta(1-\gamma) t + \frac{1}{\gamma \tau_{s-2}}} . \label{eqn:induc-differenct-a0-a1}
\end{align}	
%
\end{subequations}
\end{lemma}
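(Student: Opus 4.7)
}

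The plan is to split into the three claims, which are of quite different flavors. First, I will write down the explicit Bellman expressions provided by Lemma~\ref{lem:basic-properties-MDP-pi}(i):
\[
Q^{(t)}(s,a_0) = r_s + \gamma^2 p\tau_{s-2},\qquad Q^{(t)}(s,a_1) = \gamma V^{(t)}(\overline{s-1}),\qquad Q^{(t)}(s,a_2) = r_s + \gamma p V^{(t)}(\overline{s-2}).
\]
A crucial observation is that $Q^{(t)}(s,a_0)$ is a constant, while the other two are linear in $V^{(t)}(\overline{s-1})$ and $V^{(t)}(\overline{s-2})$ respectively. Because we are in the regime $t<t_{s-2}(\tau_{s-2})\le t_{s-1}(\tau_{s-1})$ (monotonicity \eqref{eq:monotonicity-t-s-tau-s}), Lemma~\ref{lem:basic-properties-MDP-pi}(iii) together with the positivity of $\pi^{(t)}(a_1\mymid \overline{s'})$ at any $t$ gives $V^{(t)}(\overline{s-1})<\gamma\tau_{s-1}$ and $V^{(t)}(\overline{s-2})<\gamma\tau_{s-2}$.

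For the ordering claim~\eqref{eqn:induc-q-order} and the gap~\eqref{eqn:induc-Qa2-Qa1-gap}, I will combine the above with Lemma~\ref{lem:non-negativity-PG} (which yields $V^{(t)}(\overline{s-2})\ge 0$) to get $Q^{(t)}(s,a_2)\ge r_s=\gamma^{3/2}\tau_{s-1}$, while $Q^{(t)}(s,a_1)<\gamma^2\tau_{s-1}$; subtracting delivers the lower bound $(\gamma^{3/2}-\gamma^2)\tau_{s-1}$ and in particular $Q^{(t)}(s,a_2)>Q^{(t)}(s,a_1)$. The remaining strict inequality $Q^{(t)}(s,a_0)>Q^{(t)}(s,a_2)$ follows since $Q^{(t)}(s,a_0)-Q^{(t)}(s,a_2)=\gamma p(\gamma\tau_{s-2}-V^{(t)}(\overline{s-2}))>0$. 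These two parts are essentially algebraic once the regime constraints are in place.

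The main obstacle is the quantitative upper bound~\eqref{eqn:induc-differenct-a0-a1}. Setting $x_t \defn \gamma\tau_{s-2}-V^{(t)}(\overline{s-2})\ge 0$, the identity above reduces the claim to
\[
x_t \le \frac{1}{\tfrac{\cm\gamma}{2}\eta(1-\gamma)t + \tfrac{1}{\gamma\tau_{s-2}}}.
\]
The plan is to establish the one-step recursion $x_{t+1}\le x_t - \tfrac{\cm\gamma}{2}\eta(1-\gamma) x_t^2$ and then invoke Lemma~\ref{lem:opt-lemma}(ii), using $x_0 \le \gamma\tau_{s-2}$ (from $V^{(0)}(\overline{s-2})\ge 0$) to obtain the $\tfrac{1}{\gamma\tau_{s-2}}$ term in the denominator. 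To derive the recursion, I will use the Bellman decomposition $V^{(t)}(\overline{s-2}) = \gamma\tau_{s-2}\,\pi^{(t)}(a_0\mymid\overline{s-2}) + \gamma V^{(t)}(s-2)\,\pi^{(t)}(a_1\mymid\overline{s-2})$, which gives
\[
x_t = \pi^{(t)}(a_1\mymid\overline{s-2})\bigl(\gamma\tau_{s-2}-\gamma V^{(t)}(s-2)\bigr).
\]
Since $V^{(t)}(s-2)<\tau_{s-2}$ in our regime, $Q^{(t)}(\overline{s-2},a_1)-Q^{(t)}(\overline{s-2},a_0)=-(\gamma\tau_{s-2}-\gamma V^{(t)}(s-2))<0$, so the PG update at $\theta(\overline{s-2},a_1)$ drives $\pi^{(t)}(a_1\mymid\overline{s-2})$ strictly down. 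Using the gradient expression~\eqref{eq:policy-grad-softmax-expression}, the lower bound $d_{\mu}^{(t)}(\overline{s-2})\ge\cm\gamma(1-\gamma)^2$ from Lemma~\ref{lem:facts-d-pi-s-LB}, and the same type of elementary analytic manipulation used in Lemma~\ref{lem:init-pi-t-diff} and in the proof of Lemma~\ref{lem:init-step} (to translate a $\theta$-update into a monotone $\pi$-update), I will show that the product $\pi^{(t)}(a_1\mymid\overline{s-2})(\gamma\tau_{s-2}-\gamma V^{(t)}(s-2))$ contracts by a factor proportional to itself squared; the subtle point is that $V^{(t)}(s-2)$ is also moving, but by Lemma~\ref{lem:ascent-lemma-PG} it moves \emph{upward} so the second factor only decreases, which helps rather than hurts the contraction.

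The hardest step is the last one: carefully turning the PG dynamics into the clean quadratic recursion on $x_t$ with the correct universal constant $\tfrac{\cm\gamma}{2}\eta(1-\gamma)$. Once that recursion is in hand, Lemma~\ref{lem:opt-lemma}(ii) closes the argument and multiplication by $\gamma p$ gives~\eqref{eqn:induc-differenct-a0-a1}.
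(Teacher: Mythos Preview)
Your proposal is correct and follows essentially the same route as the paper: the ordering and gap claims are handled exactly as you outline via Lemma~\ref{lem:basic-properties-MDP-pi}(i), Lemma~\ref{lem:non-negativity-PG}, and the regime constraints, while for \eqref{eqn:induc-differenct-a0-a1} the paper also sets $x_t=\gamma\tau_{s-2}-V^{(t)}(\overline{s-2})$, uses the decomposition $x_t=\pi^{(t)}(a_1\mymid\overline{s-2})\bigl(\gamma\tau_{s-2}-Q^{(t)}(\overline{s-2},a_1)\bigr)$, combines the $d_\mu$ lower bound with $\pi^{(t)}(a_0\mymid\overline{s-2})\ge 1/2$ and Lemma~\ref{lem:init-pi-t-diff} to get the $\pi$-recursion, invokes monotonicity of $Q^{(t)}$ (Lemma~\ref{lem:ascent-lemma-PG}) exactly as you suggest, and closes with Lemma~\ref{lem:opt-lemma}(ii). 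The only small item you leave implicit is checking the side condition of Lemma~\ref{lem:init-pi-t-diff} via an upper bound on $\bigl|\partial V^{(t)}(\mu)/\partial\theta(\overline{s-2},a_1)\bigr|$, which the paper gets from Lemma~\ref{lem:facts-d-pi-s}.
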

\begin{remark}
	Lemma~\ref{lem:induc-auxi-q-diff} makes clear that --- before $t$ exceeds $t_{s-2}(\taun_{s-2})$ --- action $a_0$ is perceived as the best choice, with $a_1$ being the least favorable one. In the meantime, it also reveals that (i) $Q^{(t)}(s, a_2)$ is considerably larger than $Q^{(t)}(s, a_1)$, while  
(ii) the gap between $Q^{(t)}(s, a_0)$  and $Q^{(t)}(s, a_2)$ decays at least as rapidly as $O(1/t)$ in this stage. 
\end{remark}

The second lemma is concerned with the consecutive difference between two rescaled policy iterates. 
The proof can be found in Appendix~\ref{sec:proof-lem:widehat-pi-du-pre}. 
\begin{lemma}
\label{lem:widehat-pi-du-pre}
Suppose that $0<\eta \le (1-\gamma) / 6$.
For any $t \ge 0$ and any $3 \leq s \leq H$, define $\theta_{\mathsf{max}}^{(t)}(s) \defn \max_a \theta^{(t)}(s, a)$. 
If we write
\begin{align}
\widehat{\pi}^{(t+1)}(a_{1}\mymid s)-\widehat{\pi}^{(t)}(a_{1}\mymid s) & =c\widehat{\pi}^{(t)}(a_{1}\mymid s) \Big(\theta^{(t+1)}(s,a_{1})-\theta^{(t)}(s,a_{1})+\theta_{\mathsf{max}}^{(t)}(s)-\theta_{\mathsf{max}}^{(t+1)}(s)\Big)
	\label{eq:hat-pit-consecutive-expression}
\end{align}
for some $c\in \mathbb{R}$, then we necessarily have
\begin{align*}
c\in[1,1.5)\qquad & \text{if }\text{\ensuremath{\theta}}^{(t+1)}(s,a_{1})\geq\theta^{(t)}(s,a_{1})~\text{ and }~\text{\ensuremath{\theta}}_{\mathsf{max}}^{(t)}(s)\geq\text{\ensuremath{\theta}}_{\mathsf{max}}^{(t+1)}(s);\\
c\in(0.72,1]\qquad & \text{if }\text{\ensuremath{\theta}}^{(t+1)}(s,a_{1})\leq\theta^{(t)}(s,a_{1})~\text{ and }~\text{\ensuremath{\theta}}_{\mathsf{max}}^{(t)}(s)\leq\text{\ensuremath{\theta}}_{\mathsf{max}}^{(t+1)}(s). 
\end{align*}
%
\end{lemma}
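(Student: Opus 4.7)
The plan is to reduce this to elementary calculus on the scalar function $c(\delta) = (e^{\delta}-1)/\delta$. Let me denote the log-ratio $\Delta_t \defn \theta^{(t)}(s,a_1) - \theta_{\mathsf{max}}^{(t)}(s) \le 0$, so that $\widehat{\pi}^{(t)}(a_1\mymid s) = e^{\Delta_t}$. The key identity is
\begin{align*}
\widehat{\pi}^{(t+1)}(a_1\mymid s) - \widehat{\pi}^{(t)}(a_1\mymid s) = e^{\Delta_t}\big(e^{\Delta_{t+1}-\Delta_t} - 1\big) = \widehat{\pi}^{(t)}(a_1\mymid s)\big(e^{\delta} - 1\big),
\end{align*}
where $\delta \defn \Delta_{t+1}-\Delta_t = \big(\theta^{(t+1)}(s,a_1)-\theta^{(t)}(s,a_1)\big) + \big(\theta_{\mathsf{max}}^{(t)}(s)-\theta_{\mathsf{max}}^{(t+1)}(s)\big)$. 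Matching this against the form~\eqref{eq:hat-pit-consecutive-expression} immediately identifies $c = (e^{\delta}-1)/\delta$. In the two cases of the lemma the two summands defining $\delta$ share the same sign, so $\delta \ge 0$ in the first case and $\delta \le 0$ in the second.

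Next I would bound $|\delta|$ uniformly. From the closed-form gradient~\eqref{eq:policy-grad-softmax-expression} combined with Lemma~\ref{lem:basic-properties-MDP-Vstar} (which guarantees $\|V^{\pi}\|_\infty, \|Q^{\pi}\|_\infty \le 1$, hence $|A^{(t)}(s,a)| \le 2$), each coordinate of the policy gradient satisfies $\big|\partial V^{(t)}(\mu)/\partial\theta(s,a)\big| \le 2/(1-\gamma)$. Therefore the PG update yields $\big|\theta^{(t+1)}(s,a_1)-\theta^{(t)}(s,a_1)\big| \le 2\eta/(1-\gamma)$. A standard maximum-stability argument (using that for every $a$, $\theta^{(t+1)}(s,a) = \theta^{(t)}(s,a) + \eta\cdot\partial V^{(t)}/\partial\theta(s,a)$, so taking $\max_a$ on both sides upper- and lower-bounds $\theta_{\mathsf{max}}^{(t+1)}(s)$ by $\theta_{\mathsf{max}}^{(t)}(s) \pm 2\eta/(1-\gamma)$) gives $\big|\theta_{\mathsf{max}}^{(t+1)}(s)-\theta_{\mathsf{max}}^{(t)}(s)\big| \le 2\eta/(1-\gamma)$ as well. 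Combining, $|\delta| \le 4\eta/(1-\gamma) \le 2/3$ under the standing assumption $\eta \le (1-\gamma)/6$.

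Finally, I would verify the two numerical ranges by monotonicity of $c(\delta)=(e^{\delta}-1)/\delta$, which is strictly increasing on $\mathbb{R}$ with $c(0)=1$. In case one, $\delta \in [0, 2/3]$ gives $c \in [1, c(2/3)]$ with $c(2/3)=\tfrac{3}{2}(e^{2/3}-1)\approx 1.422 < 1.5$, yielding $c \in [1, 1.5)$. In case two, $\delta \in [-2/3, 0]$ gives $c \in [c(-2/3), 1]$ with $c(-2/3)=\tfrac{3}{2}(1-e^{-2/3})\approx 0.730 > 0.72$, yielding $c \in (0.72, 1]$. The only slightly delicate point is the maximum-stability bound in paragraph two, since the argmax in $\theta_{\mathsf{max}}^{(t)}(s)$ may change between iterations; but because the PG update is a uniform (over $a$) additive perturbation of bounded size, this potential switch only helps, and the simple two-sided inequality above suffices.
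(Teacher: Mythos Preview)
Your proposal is correct and follows essentially the same approach as the paper: rewrite the difference as $\widehat{\pi}^{(t)}(a_1\mymid s)\big(e^{\delta}-1\big)$ with $\delta$ the increment of $\theta(s,a_1)-\theta_{\mathsf{max}}(s)$, bound $|\delta|\le 2/3$ via the PG update rule together with $|Q^{(t)}|,|V^{(t)}|\le 1$ and $\eta\le(1-\gamma)/6$, and then invoke the elementary ranges of $(e^{x}-1)/x$ on $[-2/3,2/3]$. The paper's proof is line-for-line the same argument, including the same numerical thresholds.
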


\subsubsection{Proof of Lemma~\ref{lem:induc-auxi-q-diff}}
\label{sec:proof-lem:induc-auxi-q-diff}
In view of Lemma~\ref{lem:non-negativity-PG}, one has $V^{(t)}(\overline{s-2}) \geq 0$ for all $t\geq 0$. 
Therefore, the relation \eqref{eq:Q-s2-pi-LB-UB-a} yields 
\begin{align*}
	Q^{(t)}(s,a_{2}) = r_s + \gamma p V^{(t)}(\overline{s-2})  \geq r_{s} = \gamma^{3/2}\tau_{s-1}.
\end{align*}
In addition, for any $t < t_{s-2}(\taun_{s-2}) \leq t_{s-1}(\taun_{s-1}) \leq t_{\overline{s-1}}(\gamma \tau_{s-1})$ (see Lemma~\ref{lem:basic-properties-MDP-pi} and Lemma~\ref{lem:init-step}), 
we have $V^{(t)}(\overline{s-1})< \gamma \tau_{s-1}$, and hence it is seen from the relation \eqref{eq:Q-s2-pi-LB-UB-a} that
\begin{align*}
	Q^{(t)}(s, a_2) - Q^{(t)}(s, a_1) = Q^{(t)}(s, a_2) - \gamma  V^{(t)}(\overline{s-1}) 
	\ge (\gamma^{3/2} - \gamma^2)\tau_{s-1} > 0,
\end{align*}
as claimed in \eqref{eqn:induc-Qa2-Qa1-gap}. Also, Part (i) of Lemma~\ref{lem:basic-properties-MDP-pi} tells us that
\begin{align*}
Q^{(t)}(s,a_{0})-Q^{(t)}(s,a_{2}) & =r_{s}+\gamma^{2}p\tau_{s-2}-r_{s}-\gamma pV^{(t)}(\overline{s-2})=\gamma p\Big(\gamma\tau_{s-2}-V^{(t)}(\overline{s-2})\Big) \geq0,
\end{align*}
where the last inequality holds for any $t < t_{s-2}(\taun_{s-2})$ (see Part (iii) of Lemma~\ref{lem:basic-properties-MDP-pi}). 
These taken together validate \eqref{eqn:induc-q-order}.



It remains to justify \eqref{eqn:induc-differenct-a0-a1}, which is the content of the rest of this proof.  
The main step lies in demonstrating that, for any $t < t_{s}(\taun_{s})$ and any $1\leq s\leq H$, 
\begin{align}
\label{eqn:induc-on-V}
\gamma \tau_{s} - V^{(t)}(\overline{s}) \le \frac{1}{\frac{\cm\gamma}{2}\eta(1-\gamma) t + \frac{1}{\gamma \tau_{s}}}. 
\end{align}
If this were true, than taking it together with the following property (which is a consequence of \eqref{eq:Q-s2-pi-LB-UB-a})
\begin{align}
	Q^{(t)}(s,a_{0})-Q^{(t)}(s,a_{2})=\gamma p\left(\gamma\tau_{s-2}-V^{(t)}(\overline{s-2})\right),
	\label{eq:Qt-a0-minus-a2-expression-123}
\end{align} 
would establish the inequality~\eqref{eqn:induc-differenct-a0-a1}.
It then boils down to justifying \eqref{eqn:induc-on-V}. Towards this, we first make the observation that 
\begin{align}
V^{(t)}(\overline{s})-\gamma\tau_{s} & =\pi^{(t)}(a_{0}\mymid\overline{s})Q^{(t)}(\overline{s},a_{0})+\pi^{(t)}(a_{1}\mymid\overline{s})Q^{(t)}(\overline{s},a_{1})-\gamma\tau_{s} \notag\\
 & =\pi^{(t)}(a_{0}\mymid\overline{s})\gamma\tau_{s}+\pi^{(t)}(a_{1}\mymid\overline{s})Q^{(t)}(\overline{s},a_{1})-\gamma\tau_{s} \notag\\
	& =\pi^{(t)}(a_{1}\mymid\overline{s})\Big(Q^{(t)}(\overline{s},a_{1})-\gamma\tau_{s}\Big), \label{eq:Vt-taus-diff-expression}
\end{align}
where the second line holds since $Q^{(t)}(\overline{s},a_{0}) =\gamma\tau_{s}$ (see \eqref{eq:Qpi-s-adj}). 
%
Additionally, recall from the definition that for any $t < t_{s}(\taun_{s}) $, one has $V^{(t)}(s)< \taun_{s}$ and hence
\begin{align}
\frac{\partial V^{(t)}(\mu)}{\partial\theta(\overline{s},a_{1})} & =\frac{1}{1-\gamma}d_{\mu}^{(t)}(\overline{s})\pi^{(t)}(a_{1}\mymid\overline{s})\Big(Q^{(t)}(\overline{s},a_{1})-\pi^{(t)}(a_{0}\mymid\overline{s})Q^{(t)}(\overline{s},a_{0})-\pi^{(t)}(a_{1}\mymid\overline{s})Q^{(t)}(\overline{s},a_{1})\Big) \notag\\
 & =\frac{1}{1-\gamma}d_{\mu}^{(t)}(\overline{s})\pi^{(t)}(a_{0}\mymid\overline{s})\pi^{(t)}(a_{1}\mymid\overline{s})\Big(Q^{(t)}(\overline{s},a_{1})-Q^{(t)}(\overline{s},a_{0})\Big)
	\label{eq:Vt-Qt-123-expression-45}\\
 & =\frac{1}{1-\gamma}d_{\mu}^{(t)}(\overline{s})\pi^{(t)}(a_{0}\mymid\overline{s})\pi^{(t)}(a_{1}\mymid\overline{s})\Big(\gamma V^{(t)}(s)-\gamma\tau_{s}\Big)<0, \nonumber
\end{align}
where the last line makes use of the identities in \eqref{eq:Qpi-s-adj}. 
This means that $\theta^{(t)}(\overline{s}, a_1)$ keeps decreasing, and hence $\theta^{(t)}(\overline{s}, a_1)\leq 0$ given the initialization $\theta^{(0)}(\overline{s}, a_1) =0$.
As an immediate consequence, one has $\theta^{(t)}( \overline{s} , a_0 ) = -\theta^{(t)}( \overline{s} , a_1 ) \ge 0$ and $\pi^{(t)}(a_0 \mymid \overline{s}) \ge 1/2$. 
Taking this observation together with \eqref{eq:Vt-Qt-123-expression-45} and Lemma~\ref{lem:facts-d-pi-s-LB} gives
\begin{align*}
\frac{\partial V^{(t)}(\mu)}{\partial\theta(\overline{s},a_{1})} 
 & =\frac{1}{1-\gamma}d_{\mu}^{(t)}(\overline{s})\pi^{(t)}(a_{0}\mymid\overline{s})\pi^{(t)}(a_{1}\mymid\overline{s})\Big(Q^{(t)}(\overline{s},a_{1})-Q^{(t)}(\overline{s},a_{0})\Big)\\
 & =\frac{1}{1-\gamma}d_{\mu}^{(t)}(\overline{s})\pi^{(t)}(a_{0}\mymid\overline{s})\pi^{(t)}(a_{1}\mymid\overline{s})\Big(Q^{(t)}(\overline{s},a_{1})-\gamma\tau_{s}\Big)\\
 & \le\frac{\cm\gamma}{2}(1-\gamma)\pi^{(t)}(a_{1}\mymid\overline{s})\Big(Q^{(t)}(\overline{s},a_{1})-\gamma\tau_{s}\Big)<0. 
\end{align*}
Moreover, combine \eqref{eq:Vt-Qt-123-expression-45} with Lemma~\ref{lem:facts-d-pi-s} and Lemma \ref{lem:basic-properties-MDP-Vstar} to yield
\begin{align*}
\left|\frac{\partial V^{(t)}(\mu)}{\partial\theta(\overline{s},a_{1})}\right| 
	& \leq\frac{1}{1-\gamma} d_{\mu}^{(t)}(\overline{s})\Big|Q^{(t)}(\overline{s},a_{1})-\gamma\tau_{s}\Big|
	\leq\frac{1}{1-\gamma} 14\cm (1-\gamma)^2 \Big(\big|Q^{(t)}(\overline{s},a_{1}) \big| +  \gamma\tau_{s} \Big)
	\leq 28\cm(1-\gamma),
\end{align*}
%
If $28\cm\eta (1-\gamma)<1/2$, then the above inequalities taken together with Lemma~\ref{lem:init-pi-t-diff} give
\begin{align}
	\pi^{(t+1)}(a_{1}\mymid\overline{s})-\pi^{(t)}(a_{1}\mymid\overline{s}) 
	& \leq\frac{\cm\gamma}{2}\eta(1-\gamma)\Big[\pi^{(t)}(a_{1}\mymid\overline{s})\Big]^{2}\Big(Q^{(t)}(\overline{s},a_{1})-\gamma\tau_{s}\Big) 
	\label{eqn:induc-pi-diff-tchaikovsky}
\end{align}
for all $t < t_{s}(\taun_{s})$. 
This combined with \eqref{eq:Vt-taus-diff-expression} and the monotonicity of $Q^{(t)}(\overline{s}, a_1)$ (see Lemma~\ref{lem:ascent-lemma-PG}) gives
\begin{align*}
\gamma\tau_{s}-V^{(t+1)}(\overline{s}) & =\pi^{(t+1)}(a_{1}\mymid\overline{s})\Big(\gamma\tau_{s}-Q^{(t+1)}(\overline{s},a_{1})\Big)\le\pi^{(t+1)}(a_{1}\mymid\overline{s})\Big(\gamma\tau_{s}-Q^{(t)}(\overline{s},a_{1})\Big)\\
 & \le\Big\{\pi^{(t)}(a_{1}\mymid\overline{s})-\frac{\cm\gamma}{2}\eta(1-\gamma)\Big[\pi^{(t)}(a_{1}\mymid\overline{s})\Big]^{2}\Big(\gamma\tau_{s}-Q^{(t)}(\overline{s},a_{1})\Big)\Big\}\Big(\gamma\tau_{s}-Q^{(t)}(\overline{s},a_{1})\Big)\\
 & =\Big\{1-\frac{\eta\cm\gamma(1-\gamma)}{2}\Big(\gamma\tau_{s}-V^{(t)}(\overline{s})\Big)\Big\}\Big(\gamma\tau_{s}-V^{(t)}(\overline{s})\Big),
\end{align*}
where the penultimate line follows from the inequality~\eqref{eqn:induc-pi-diff-tchaikovsky} for the iteration $t-1$,
and the last identity makes use of \eqref{eq:Vt-taus-diff-expression}.
In conclusion, we have arrived at the following inductive relation 
\begin{align*}
\gamma \tau_s - V^{(t+1)}(\overline{s}) \leq 
\gamma \tau_s - V^{(t)}(\overline{s}) - \frac{\eta\cm\gamma(1-\gamma)}{2} \Big(\gamma \tau_s - V^{(t)}(\overline{s}) \Big)^2,
\end{align*}
which bears resemblance to the recursive relations studied in Lemma \ref{lem:opt-lemma}. 
Recognizing that 
$\gamma \tau_{s} - V^{(0)}(\overline{s}) \leq \gamma \tau_{s-2}$ (since $V^{(0)}(\overline{s})\geq 0$ according to Lemma~\ref{lem:non-negativity-PG}), 
we can invoke Lemma \ref{lem:opt-lemma} to derive
\[
	\gamma\tau_{s}-V^{(t)}(\overline{s})\leq\frac{1}{\frac{\cm\gamma}{2}\eta(1-\gamma)t+\frac{1}{\gamma\tau_{s}}}. 
\]
%
Putting the above pieces together concludes the proof of  \eqref{eqn:induc-differenct-a0-a1}.



\subsubsection{Proof of Lemma~\ref{lem:widehat-pi-du-pre}}
\label{sec:proof-lem:widehat-pi-du-pre}
From the definition \eqref{eqn:defn-widehat-pi-soft-soft}, direct calculations lead to 
\begin{align*}
\widehat{\pi}^{(t+1)}(a_{1}\mymid s)-\widehat{\pi}^{(t)}(a_{1}\mymid s) & =\exp\Big(\theta^{(t+1)}(s,a_{1})-\theta_{\mathsf{max}}^{(t+1)}(s)\Big)-\exp\Big(\theta^{(t)}(s,a_{1})-\theta_{\mathsf{max}}^{(t)}(s)\Big)\\
 & =\widehat{\pi}^{(t)}(a_{1}\mymid s)\Big\{\exp\Big(\theta^{(t+1)}(s,a_{1})-\theta^{(t)}(s,a_{1})+\theta_{\mathsf{max}}^{(t)}(s)-\theta_{\mathsf{max}}^{(t+1)}(s)\Big)-1\Big\}.
\end{align*}
According to Lemma \ref{lem:basic-properties-MDP-Vstar}, we have $|{Q}^{(t)}(s, a)| \le 1$ and $|{V}^{(t)}(s)| \le 1$, which indicates --- for any action $a\in \{a_0,a_1,a_2\}$  --- that 
\[
	\Big|\theta^{(t+1)}(s,a)-\theta^{(t)}(s,a) \Big| = \Big| \eta \frac{\partial V^{(t)}(\mu)}{\partial\theta(s,a)} \Big|
	= \frac{\eta}{1-\gamma} d^{(t)}_{\mu}(s)\pi^{(t)}(a \mymid s) \Big| {Q}^{(t)}(s, a) - {V}^{(t)}(s) \Big| \le \frac{1}{3},
\]
provided that $\eta \le (1-\gamma) / 6$. An immediate consequence is that 
$|\theta_{\mathsf{max}}^{(t+1)}(s)-\theta_{\mathsf{max}}^{(t)}(s)| \le 1/3$ and hence 
\[
	\Big| \theta^{(t+1)}(s,a_{1})-\theta^{(t)}(s,a_{1})+\theta_{\mathsf{max}}^{(t)}(s)-\theta_{\mathsf{max}}^{(t+1)}(s) \Big| \leq 2/3.
\]
This taken together with the following elementary facts
\begin{align*}
	(e^{x}-1)/x\,\in[1,1.5) \quad\text{for}\ 0\leq x<2/3,\qquad\text{and}\qquad(e^{x}-1)/x\,\in(0.72,1]\quad\text{for}\ -2/3<x\leq0 
\end{align*}
establishes the claim~\eqref{eq:hat-pit-consecutive-expression}.

\section{Analysis for the intermediate stage (Lemma \ref{lem:inter})}
\label{sec:analysis-lem:inter}

We now turn attention to Lemma \ref{lem:inter}, which studies the dynamics during an intermediate stage between $t_{s-2}(\taun_{s-2})$ and $t_{\overline{s-1}}(\tau_{s})$.

%


\subsection{Main steps}

\paragraph{Key facts regarding crossing times.}

Our proof for Lemma \ref{lem:inter} relies on  several crucial properties regarding the crossing times for both the key primary states and the adjoint states, as stated in the following two lemmas. 
\begin{lemma}
\label{lem:sweets-for-new-year}
Suppose that \eqref{eq:assumptions-constants} holds.
There exists some constant $0<\czero \leq \frac{1222}{\cm \gamma}$ such that:
\begin{align}
\label{eqn:cake}
t_{s}\big(\gamma^{2s}-1/4 \big) - \max\Big\{t_{\overline{s-1}} \big(\gamma^{2s-1}-1/4 \big), \, t_{s}(\taun_s) \Big\} &\leq \frac{\czero}{\eta(1-\gamma)^2}
\end{align}
holds for every $3\leq s \leq H$, and 
\begin{align}
\label{eqn:ice-cream}
t_{\overline{s}}\big(\gamma^{2s+1}-1/4 \big) - \max\Big\{t_{s} \big(\gamma^{2s}-1/4 \big), \, t_{\overline{s}}(\tau_{s+1}) \Big\} &\leq \frac{\czero}{\eta (1-\gamma)^2}
\end{align}
holds for every $1\leq s \leq H$. 
\end{lemma}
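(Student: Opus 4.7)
The plan is to handle \eqref{eqn:cake} and \eqref{eqn:ice-cream} by a common mechanism: once both preconditions on the right-hand side of each bound are met, the optimal action $a_1$ at the relevant state acquires a strict and quantifiable advantage over the alternatives, so the policy component $\pi^{(t)}(a_1\mymid\cdot)$ grows via a logistic-type recursion powered by the lower bound $d^{(t)}_\mu\gtrsim (1-\gamma)^2$ from Lemma~\ref{lem:facts-d-pi-s-LB}, yielding the desired $O(1/(\eta(1-\gamma)^2))$ runtime.

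For \eqref{eqn:cake}, set $t_0:=\max\{t_{\overline{s-1}}(\gamma^{2s-1}-1/4),\,t_s(\tau_s)\}$ and consider any $t\geq t_0$. By the monotonicity in Lemma~\ref{lem:ascent-lemma-PG}, both $V^{(t)}(\overline{s-1})\geq\gamma^{2s-1}-1/4$ and $V^{(t)}(s)\geq\tau_s$ persist. Substituting the first into \eqref{eq:Q-s2-pi-LB-UB-a1} yields $Q^{(t)}(s,a_1)\geq \gamma^{2s}-\gamma/4$, whereas Part~(i) of Lemma~\ref{lem:basic-properties-MDP-pi} controls $\max\{Q^{(t)}(s,a_0),Q^{(t)}(s,a_2)\}\leq\gamma^{1/2}\tau_s$. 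Under the parameter regime \eqref{eq:assumptions-constants} (notably $\gamma>0.96$ and $\ch<0.19$, so $H\leq 4$ and $\gamma^{2s}\geq 0.72$ uniformly for $s\leq H$), the advantage gap $Q^{(t)}(s,a_1)-\max\{Q^{(t)}(s,a_0),Q^{(t)}(s,a_2)\}$ is bounded below by a positive universal constant. Moreover, Part~(ii) of the same lemma gives $\pi^{(t_0)}(a_1\mymid s)\geq (1-\gamma)/2$. Plugging into \eqref{eq:policy-grad-softmax-expression}, I obtain $\partial V^{(t)}(\mu)/\partial\theta(s,a_1)\gtrsim (1-\gamma)\,\pi^{(t)}(a_1\mymid s)(1-\pi^{(t)}(a_1\mymid s))$, which is strictly positive and makes $\pi^{(t)}(a_1\mymid s)$ monotonically increase past $t_0$.

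I would then translate this gradient estimate into a recursion for $\pi^{(t)}(a_1\mymid s)$, borrowing the consecutive-difference analysis of Lemma~\ref{lem:widehat-pi-du-pre} to convert a $\theta$-step into a $\widehat{\pi}$-step. Invoking Lemma~\ref{lem:opt-lemma} in two phases --- the linear-growth regime to push $\pi^{(t)}(a_1\mymid s)$ past, say, $1/2$, followed by the quadratic-decay regime applied to $1-\pi^{(t)}(a_1\mymid s)$ --- I aim to show $\pi^{(t)}(a_1\mymid s)\geq p^\star$ within $O(1/(\eta(1-\gamma)^2))$ steps, where $p^\star<1$ is chosen so that $p^\star Q^{(t)}(s,a_1)+(1-p^\star)\gamma^{3/2}\tau_{s-1}\geq\gamma^{2s}-1/4$; since $V^{(t)}(s)$ is bounded below by this convex combination, this closes \eqref{eqn:cake}. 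The bound \eqref{eqn:ice-cream} follows from the same argument applied to the two-action adjoint state $\overline{s}$: the gap $Q^{(t)}(\overline{s},a_1)-Q^{(t)}(\overline{s},a_0)=\gamma(V^{(t)}(s)-\tau_s)$ is bounded below by a constant once $V^{(t)}(s)\geq\gamma^{2s}-1/4$, while the identity $V^\pi(\overline{s})=\gamma\tau_s+\gamma\pi(a_1\mymid\overline{s})(V^\pi(s)-\tau_s)$ combined with $V^{(t)}(\overline{s})\geq\tau_{s+1}$ yields an initial lower bound on $\pi^{(t)}(a_1\mymid\overline{s})$ of order $1-\gamma$.

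The main obstacle I anticipate is verifying that the advantage gap stays genuinely bounded below by a positive \emph{universal} constant --- uniformly over $s\leq H$ and $\gamma$ in the admissible range --- given how tight the numerics become near $s=H$ and $\gamma$ just above $0.96$. Equally delicate is closing the logistic recursion in $O(1/(\eta(1-\gamma)^2))$ iterations without silently inheriting a $\log(1/(1-\gamma))$ factor from the phase transition in Lemma~\ref{lem:opt-lemma}. Both difficulties should yield to careful bookkeeping using the explicit formulas $\tau_s=0.5\gamma^{2s/3}$ and $r_s=0.5\gamma^{2s/3+5/6}$, combined with the observation that $\gamma>0.96$ is bounded away from $1$, so any $\log(1/(1-\gamma))$ incurred is itself a universal constant absorbable into $\czero$.
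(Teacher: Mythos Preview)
Your two-phase logistic analysis is essentially the paper's argument: the paper also splits at the moment $\theta^{(t)}(s,a_1)$ first equals $\max_a\theta^{(t)}(s,a)$, uses Lemma~\ref{lem:opt-lemma}(iii) for the growth phase starting from $\pi^{(t_0)}(a_1\mymid s)\geq(1-\gamma)/2$, and then Lemma~\ref{lem:opt-lemma}(ii) for the decay of the competing policy components down to $(1-\gamma)/8$. But two misconceptions about the parameter regime in your writeup would, if carried forward, break the argument.

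First, the claim ``$\ch<0.19$, so $H\leq 4$'' is false: $H=\ch/(1-\gamma)$ grows without bound as $\gamma\to 1$. The correct reason $\gamma^{2s}$ stays uniformly bounded below for $s\leq H$ is that $\gamma^{2H}=\gamma^{2\ch/(1-\gamma)}$ tends to $e^{-2\ch}$ as $\gamma\to 1$, which is a universal positive constant.

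Second, and more dangerously, your last paragraph asserts that ``$\gamma>0.96$ is bounded away from $1$, so any $\log(1/(1-\gamma))$ incurred is itself a universal constant.'' The hypothesis is $\gamma>0.96$, not $\gamma<1-\delta$; the lemma must hold for $\gamma$ arbitrarily close to $1$, so $\log(1/(1-\gamma))$ is unbounded. Fortunately the worry is unfounded: the quadratic recursions in Lemma~\ref{lem:opt-lemma} are governed by $1/(\text{initial value})$ rather than $\log(\text{target}/\text{initial})$, so the growth phase (initial value $\asymp 1-\gamma$, target $1$) and the decay phase (initial value $\leq 1$, target $\asymp 1-\gamma$) each cost $O(1/(\eta(1-\gamma)^2))$ with no logarithm. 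Relatedly, your threshold $p^\star$ cannot be a fixed constant below $1$: solving your displayed inequality forces $1-p^\star\asymp 1-\gamma$, which is exactly what the paper uses (namely $p^\star=1-(1-\gamma)/4$).
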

\begin{lemma}
\label{lem:hotpot-for-new-year}
Suppose that \eqref{eq:assumptions-constants} holds and
\begin{align} 
\label{eqn:assmp2-intermediate-lemma}
t_{3}(\taun_3) > t_2(\gamma^4-1/4). 
\end{align}
Then for every $3\leq s \leq H$, we have
\begin{subequations}
\begin{align}
\label{eqn:t-diff-part-I}
t_{s}\big(\gamma^{2s}-1/4) - t_{s}(\taun_s\big) &\leq  \frac{2s c_0}{\eta(1-\gamma)^2}, \\
%
\label{eqn:t-diff-part-II}
t_{\overline{s-1}} \big(\gamma^{2s-1}-1/4\big)- t_{\overline{s-1}}(\tau_{s})	&\leq 
\frac{2s c_0}{\eta(1-\gamma)^2}.
\end{align}
In addition, if we further have $t_{s-1}(\taun_{s-1}) > t_{\overline{s-2}}(\tau_{s-1}) + \frac{2sc_0}{\eta(1-\gamma)^2}$, then 
\begin{align}
	t_{\overline{s-2}}\big(\gamma^{2s-3}-1/4\big) &\le t_{\overline{s-1}}(\tau_{s}). \label{eqn:horn}
\end{align}
\end{subequations}	
Furthermore, \eqref{eqn:horn} still holds for $s = 3$ without requiring the assumption~\eqref{eqn:assmp2-intermediate-lemma}.
\end{lemma}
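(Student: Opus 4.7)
The plan is to prove the three claims by induction on $s\in\{3,\ldots,H\}$, driving everything by the two-step recursion supplied by Lemma~\ref{lem:sweets-for-new-year}. Throughout, abbreviate $D := c_0/[\eta(1-\gamma)^2]$. The key observation is that \eqref{eqn:cake} and \eqref{eqn:ice-cream} each provide a single ``hop'' up the chain that costs at most $D$: going from $\max\{t_{\overline{s-1}}(\gamma^{2s-1}-1/4),t_s(\tau_s)\}$ up to $t_s(\gamma^{2s}-1/4)$, and from $\max\{t_s(\gamma^{2s}-1/4), t_{\overline{s}}(\tau_{s+1})\}$ up to $t_{\overline{s}}(\gamma^{2s+1}-1/4)$. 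Telescoping these hops back down along the chain, the total accumulated excess should stay within $2sD$.

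For the joint induction of \eqref{eqn:t-diff-part-I} and \eqref{eqn:t-diff-part-II} at level $s$, I would first apply \eqref{eqn:ice-cream} at index $s-1$ to obtain $t_{\overline{s-1}}(\gamma^{2s-1}-1/4) \le \max\{t_{s-1}(\gamma^{2(s-1)}-1/4), t_{\overline{s-1}}(\tau_s)\} + D$; replace $t_{s-1}(\gamma^{2(s-1)}-1/4)$ by $t_{s-1}(\tau_{s-1}) + 2(s-1)D$ using the inductive hypothesis \eqref{eqn:t-diff-part-I} at $s-1$; and then use $\tau_s > \gamma\tau_{s-1}$ (a direct comparison of exponents of $\gamma$) together with \eqref{eq:equivalence-ts-tsbar} to conclude $t_{s-1}(\tau_{s-1}) = t_{\overline{s-1}}(\gamma\tau_{s-1}) \le t_{\overline{s-1}}(\tau_s)$. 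Both branches of the max are thus dominated by $t_{\overline{s-1}}(\tau_s) + 2(s-1)D$, delivering \eqref{eqn:t-diff-part-II} at $s$. Then applying \eqref{eqn:cake} at $s$ and invoking \eqref{eqn:t-diff-part-II} at $s$, plus the analogous monotonicity bookkeeping between $t_{\overline{s-1}}(\tau_s)$ and $t_s(\tau_s)$, yields \eqref{eqn:t-diff-part-I} at $s$. For the base case $s=3$, the recursion terminates at states $1,2,\overline{1},\overline{2}$, where Lemma~\ref{lem:init-step} supplies the necessary quantitative bounds; the extra hypothesis $t_3(\tau_3) > t_2(\gamma^4-1/4)$ is exactly what ensures the max in the $s=3$ instance of \eqref{eqn:ice-cream} is absorbed by $t_{\overline{2}}(\tau_3)$ rather than by the buffer-state time $t_2(\gamma^4-1/4)$.

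For \eqref{eqn:horn}, I plan to chain one application of \eqref{eqn:ice-cream} at index $s-2$ with \eqref{eqn:t-diff-part-I} at $s-2$ (already proved): together they give
\[
t_{\overline{s-2}}(\gamma^{2s-3}-1/4) \le \max\{t_{s-2}(\gamma^{2(s-2)}-1/4),\, t_{\overline{s-2}}(\tau_{s-1})\} + D \le t_{\overline{s-2}}(\tau_{s-1}) + (2s-3)D,
\]
where the last step uses $\gamma\tau_{s-2}<\tau_{s-1}$ (an exponent comparison) and \eqref{eq:equivalence-ts-tsbar} to absorb $t_{s-2}(\tau_{s-2})$ into $t_{\overline{s-2}}(\tau_{s-1})$. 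The additional hypothesis $t_{s-1}(\tau_{s-1}) > t_{\overline{s-2}}(\tau_{s-1}) + 2sD$, combined with $t_{\overline{s-1}}(\tau_s) \ge t_{\overline{s-1}}(\gamma\tau_{s-1}) = t_{s-1}(\tau_{s-1})$ (again via $\tau_s>\gamma\tau_{s-1}$ and \eqref{eq:equivalence-ts-tsbar}), then gives the desired $t_{\overline{s-2}}(\gamma^{2s-3}-1/4) < t_{\overline{s-1}}(\tau_s)$. The $s=3$ case without the extra hypothesis falls out of Lemma~\ref{lem:init-step}: \eqref{eqn:t1-t2-bar-scaling} supplies $t_{\overline{1}}(\gamma^3-1/4) < t_2(\tau_2)$, which chains with $t_2(\tau_2)=t_{\overline{2}}(\gamma\tau_2) \le t_{\overline{2}}(\tau_3)$.

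The main obstacle I anticipate is not conceptual but arithmetical: carrying through the many threshold comparisons ($\tau_s>\gamma\tau_{s-1}$, $\gamma\tau_{s-2}<\tau_{s-1}$, $t_{\overline{s-1}}(\tau_s)\lesssim t_s(\tau_s)$, and so on) without letting the telescoping lose more than the advertised $2sD$, and cleanly bridging at the base case from the chain structure (Lemma~\ref{lem:sweets-for-new-year}) to the buffer-state block (Lemma~\ref{lem:init-step}) where the generic recursion degenerates.
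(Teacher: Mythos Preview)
Your plan is essentially the paper's own proof: the paper also telescopes the two ``hops'' from Lemma~\ref{lem:sweets-for-new-year} down the chain, isolating the two ordering facts $t_{\overline{s-1}}(\tau_s)\le t_s(\tau_s)$ and $t_{s-1}(\tau_{s-1})\le t_{\overline{s-1}}(\tau_s)$ in a short auxiliary lemma (Lemma~\ref{lem:sunshine-etude}), and handles $s=3$ exactly via the hypothesis $t_3(\tau_3)>t_2(\gamma^4-1/4)$ and Lemma~\ref{lem:init-step}.

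Two places where your sketch needs sharpening. First, your ``analogous monotonicity bookkeeping'' for $t_{\overline{s-1}}(\tau_s)\le t_s(\tau_s)$ is \emph{not} analogous to the threshold comparison you used for the other direction: there is no relation of the form $t_s(\cdot)=t_{\overline{s-1}}(\cdot)$ available here. The paper instead argues structurally: since $\max\{Q^{(t)}(s,a_0),Q^{(t)}(s,a_2)\}\le\gamma^{1/2}\tau_s<\tau_s$ (Lemma~\ref{lem:basic-properties-MDP-pi}(i)), having $V^{(t)}(s)\ge\tau_s$ forces $Q^{(t)}(s,a_1)=\gamma V^{(t)}(\overline{s-1})>\tau_s$, whence $V^{(t)}(\overline{s-1})>\tau_s$. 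Second, your base-case claim that the hypothesis ``absorbs the max into $t_{\overline{2}}(\tau_3)$'' is not quite right: $t_3(\tau_3)>t_2(\gamma^4-1/4)$ does \emph{not} imply $t_{\overline{2}}(\tau_3)>t_2(\gamma^4-1/4)$ (indeed $t_{\overline{2}}(\tau_3)\le t_3(\tau_3)$ by the structural fact above). The paper instead enlarges the max to $\max\{t_2(\gamma^4-1/4),t_3(\tau_3)\}=t_3(\tau_3)$ and anchors the recursion at $t_3(\gamma^6-1/4)-t_3(\tau_3)\le 2D$, deriving \eqref{eqn:t-diff-part-II} afterwards from \eqref{eqn:t-diff-part-I} via the one-hop bound.
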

The proofs of the above two lemmas are postponed to  Appendix~\ref{sec:proof-lem:sweets-for-new-year} and Appendix~\ref{sec:proof-lem:hotpot-for-new-year}, respectively.
Let us take a moment to explain these two lemmas; to provide some intuitions, let us treat $\gamma^{2s}\approx1$. 
Lemma~\ref{lem:sweets-for-new-year} makes clear that: once the value function estimates for states $\overline{s-1}$ and $s$ are both sufficiently large 
(i.e., $V^{(t)}(\overline{s-1})\gtrapprox0.75$ and $V^{(t)}(s)\gtrapprox 0.5$), then it does not take long for $V^{(t)}(s)$ to (approximately) exceed $0.75$. 
A similar message holds true if we replace $s$ (resp.~$\overline{s-1}$) with $\overline{s}$ (resp.~$s$). 
Built upon this observation, Lemma~\ref{lem:hotpot-for-new-year} further reveals that: the time taken for $V^{(t)}(s)$ (resp.~$V^{(t)}(\overline{s-1})$) 
to rise from  $0.5$ to $0.75$ is fairly short.

\paragraph{Proof of Lemma \ref{lem:inter}.}

We are now in a position to present the proof of Lemma \ref{lem:inter}. 
To begin with,  recall from Lemma~\ref{lem:basic-properties-MDP-pi} that: for any $t \le t_{\overline{s-1}}(\tau_{s}) 
\leq t_{\overline{s-1}}(\tau_{s-1})$, one has
\begin{align}
\label{eqn:useful-fact-inter}
	Q^{(t)}(s, a_1) = \gamma V^{(t)}(\overline{s-1}) \le \gamma \tau_{s} \le \min\Big\{Q^{(t)}(s, a_0), Q^{(t)}(s, a_2) \Big\} .
\end{align}
%
Given that $V^{(t)}(s)$ is a convex combination of $\{Q^{(t)}(s, a)\}_{a\in \{a_0,a_1,a_2\}}$, 
one has $V^{(t)}(s)-Q^{(t)}(s, a_1)\geq 0$, which together with the gradient expression \eqref{eq:grad-form-following} indicates that 
\begin{align}
	\frac{\partial{V}^{(t)}(\mu)}{\partial\theta(s,a_{1})} \leq 0
	\label{eq:negative-gradient-Vt-s-a1-intermediate}
\end{align}
and hence  $\theta^{(t)}(s, a_1)$ is non-increasing with $t$ for any $t< t_{\overline{s-1}}(\tau_{s})$.
Additionally, we have learned from Lemma~\ref{lem:hotpot-for-new-year} that
\[
	t_{\overline{s-1}}(\tau_{s}) \geq t_{\overline{s-2}}\big(\gamma^{2s-3}-1/4\big) 
	\geq  t_{\overline{s-2}}\big(\gamma \taun_{s-2}\big) = t_{s-2}\big(\taun_{s-2}\big),
\]
where the second inequality holds since $\gamma^{2s-3}-1/4 \geq \gamma \taun_{s-2}$,
and the last identity results from Part (iii) of Lemma~\ref{lem:basic-properties-MDP-pi}. 
This combined with the non-increasing nature of $\theta^{(t)}(s, a_1)$ readily establishes the advertised inequality 
$\theta^{(t_{\overline{s-1}}(\tau_{s}))}(s, a_1) \le \theta^{(t_{s-2}(\taun_{s-2}))}(s, a_1)$.  

The next step is to justify $\theta^{(t_{\overline{s-1}}(\tau_{s}))}(s, a_2)  \ge 0$. 
Notice that for $t > t_{s-2}(\tau_{s-2})$, we have $V^{(t)}(s-2) > \tau_{s-2}$, and then $V^{(t)}(\overline{s-2}) > \gamma\tau_{s-2}$ by \eqref{eq:Vpi-s-lower-bound-2683}, which leads to $Q^{(t)}(s, a_2) > Q^{(t)}(s, a_0)$ by \eqref{eq:Q-s2-pi-LB-UB-a}  in Lemma~\ref{lem:basic-properties-MDP-pi}.
Recall~\eqref{eqn:useful-fact-inter} that $Q^{(t)}(s, a_1) \le \gamma \tau_{s} \le \min\Big\{Q^{(t)}(s, a_0), Q^{(t)}(s, a_2) \Big\}$.
Then, one has $Q^{(t)}(s, a_2)-V^{(t)}(s)\geq 0$, which together with the gradient expression \eqref{eq:grad-form-following} indicates that 
\begin{align}
	\frac{\partial{V}^{(t)}(\mu)}{\partial\theta(s,a_{2})} \geq 0
	\label{eq:negative-gradient-Vt-s-a2-intermediate}
\end{align}
and hence  $\theta^{(t)}(s, a_2)$ is non-decreasing with $t$ for any $t< t_{\overline{s-1}}(\tau_{s})$.
This  establishes $\theta^{(t_{\overline{s-1}}(\tau_{s}))}(s, a_2)  \ge 0$. 

\subsection{Proof of Lemma~\ref{lem:sweets-for-new-year}}
\label{sec:proof-lem:sweets-for-new-year}

For every $t \geq \max \big\{t_{\overline{s-1}}(\gamma^{2s-1}-1/4), t_{s}(\taun_s) \big\}$, 
we isolate the following properties that will prove useful. 
\begin{itemize}
	\item The definition \eqref{eqn:approx-optimal-time} of $t_{\overline{s-1}}(\cdot)$ together with the monotonicity property in Lemma~\ref{lem:ascent-lemma-PG} requires that
 $V^{(t)}(\overline{s-1}) \geq \gamma^{2s-1}-1/4$,
 and hence it is seen from \eqref{eq:Q-s2-pi-LB-UB-a} that 
	\begin{equation}
		Q^{(t)}(s, a_1) = \gamma V^{(t)}(\overline{s-1}) \ge \gamma^{2s}-\gamma /4.
		\label{eq:Qt-s-a1-LB-gamma-2s}
	\end{equation}
	\item In the meantime, since $t \geq t_{s}(\taun_s)$, Lemma~\ref{lem:basic-properties-MDP-pi} (cf.~\eqref{eq:pi-a2-s-LB-V-large-8923}) guarantees that  
	\begin{equation}
		\pi^{(t)}(a_{1} \mymid s) \geq (1-\gamma)/2.
		\label{eq:pit-a1-lower-bound-intermediate}
	\end{equation}
	\item Given that $t_{s}(\taun_s) \geq t_{s-2}(\tau_{s-2})$ (see \eqref{eq:monotonicity-t-s-tau-s}) and the monotonicity property in Lemma~\ref{lem:ascent-lemma-PG}, one has $V^{(t)}(\overline{s-2})\geq \tau_{s-2}$, and thus we can see from \eqref{eq:Q-s2-pi-LB-UB-a} that
	\begin{equation}
		Q^{(t)}(s,a_{2}) - Q^{(t)}(s,a_{0}) = \gamma p \big( V^{(t)}(\overline{s-2}) - \tau_{s-2} \big)  \geq 0.
		\label{eq:Qt-a2-Qt-a0-order-justification}
	\end{equation}
	\item In addition, Lemma~\ref{lem:basic-properties-MDP-pi} ensures that both $Q^{(t)}(s,a_{2}) $ and $Q^{(t)}(s,a_{0})$ are bounded above by $\gamma^{1/2}\tau_{s}$. 		Therefore, it is easily seen that 
	\begin{align}
		\label{eqn:inter-q-relation}
		Q^{(t)}(s,a_{1}) \geq \gamma^{2s} - \gamma / 4  > \gamma^{1/2}\tau_{s}
		\geq Q^{(t)}(s,a_{2}) \geq Q^{(t)}(s,a_{0}), 
	\end{align}
	where the first inequality comes from \eqref{eq:Qt-s-a1-LB-gamma-2s}, the second one holds when $\gamma^{2s}>0.75$, and the last inequality has been justified in \eqref{eq:Qt-a2-Qt-a0-order-justification}. 
	\item Moreover, given that $V^{(t)}(s)\geq \taun_s$ (since $t \geq t_{s}(\taun_s)$), one further has
	\begin{align}
		Q^{(t)}(s,a_{1})-\max\big\{Q^{(t)}(s,a_{2}), Q^{(t)}(s,a_{0}) \big\} 
		&> V^{(t)}(s) -\max\big\{Q^{(t)}(s,a_{2}), Q^{(t)}(s,a_{0}) \big\}  \notag \\
		&> \taun_s - \gamma^{1/2}\tau_{s} > 0.  \label{eq:Qt-a1-Vt-Qa02-gap-intermediate}
	\end{align}
	Here, the first inequality comes from \eqref{eqn:inter-q-relation}, while the penultimate inequality is a consequence of \eqref{eqn:inter-q-relation}.
	\item We have seen from the above bullet points that 
		\begin{align}
			Q^{(t)}(s,a_{1}) > V^{(t)}(s) > \max\big\{Q^{(t)}(s,a_{2}), Q^{(t)}(s,a_{0}) \big\} ,
			\label{eq:Qt-Vt-ordering-intermediate-stage-123}
		\end{align}
		which combined with the gradient expression~\eqref{eq:grad-form-following} reveals that
	\begin{align}
		\frac{\partial V^{(t)}(\mu)}{\partial\theta(s,a_{1})} >0 
		> \max\left\{ \frac{\partial V^{(t)}(\mu)}{\partial\theta(s,a_{0})}, \frac{\partial V^{(t)}(\mu)}{\partial\theta(s,a_{2})} \right\}.
		\label{eq:grad-ordering-intermediate-stage}
	\end{align}
	%
\end{itemize}
With the above properties in place, we are now ready to prove our lemma, for which we shall look at the key primary states $3\leq s \leq H$ and the adjoint states separately.

\paragraph{\bf Analysis for the key primary states.} 
Let us start with any state $3\leq s \leq H$, and control $t_s(\gamma^{2s}-1/4)$ as claimed in \eqref{eqn:cake}. 
As before,  define $$\theta_{\mathsf{max}}^{(t)}(s) \defn \max\nolimits_a \theta^{(t)}(s, a).$$ 
From the above fact \eqref{eq:grad-ordering-intermediate-stage}, we know that $\theta^{(t)}(s, a_1)$ keeps increasing with $t$ while $\theta^{(t)}(s, a_0), \theta^{(t)}(s, a_2)$ are both decreasing with $t$. As a result, once $\theta^{(t)}(s, a_1) = \theta_{\mathsf{max}}^{(t)}(s)$, then $\theta^{(t)}(s, a_1)$ will remain equal to $\theta_{\mathsf{max}}^{(t)}(s)$ for the subsequent iterations. This allows us to divide into two stages as follows. 
\begin{itemize}
\item {\bf Stage 1: the duration when $\theta^{(t)}(s, a_1) < \theta_{\mathsf{max}}^{(t)}(s)$.} 
	Our aim is to show that this stage contains at most $O\big(\frac{1}{ \eta  (1-\gamma)^2} \big)$ iterations.
In order to prove this, the starting point is again the gradient expression~\eqref{eq:grad-form-following}:  
\begin{align}
\label{eqn:gradient-relation-inter}
	\frac{\partial V^{(t)}(\mu)}{\partial \theta(s, a_1)} &= \frac{1}{1-\gamma} d^{(t)}_{\mu}(s) \pi^{(t)}(a_1 \mymid s)\big(Q^{(t)}(s, a_1) - V^{(t)}(s) \big) \notag\\
	&\geq \cm\gamma(1-\gamma)\pi^{(t)}(a_1 \mymid s)\big( Q^{(t)}(s, a_1) - V^{(t)}(s) \big) , 
\end{align}
where the last line relies on Lemma~\ref{lem:facts-d-pi-s-LB} and the fact $Q^{(t)}(s, a_1) > V^{(t)}(s)$ (cf.~\eqref{eq:Qt-Vt-ordering-intermediate-stage-123}). 
Regarding the size of $Q^{(t)}(s, a_1) - V^{(t)}(s)$, we make the observation that
\begin{align*}
	Q^{(t)}(s, a_1) - V^{(t)}(s) &= 
	\pi^{(t)}(a_ 0 \mymid s) \Big(Q^{(t)}(s, a_1) - Q^{(t)}(s, a_0) \Big) + \pi^{(t)}(a_2 \mymid s) \Big(Q^{(t)}(s, a_1) - Q^{(t)}(s, a_2) \Big)\\
	& \geq \big(\pi^{(t)}(a_ 0 \mymid s) + \pi^{(t)}(a_2 \mymid s) \big)
	\Big(Q^{(t)}(s, a_1) - \max_{a\in \{a_0,a_2\}} Q^{(t)}(s, a)\Big)\\
	&\stackrel{(\mathrm{i})}{\geq} \frac{1}{2} \Big(Q^{(t)}(s, a_1) - \max_{a\in \{a_0,a_2\}} Q^{(t)}(s, a)\Big)
	\stackrel{(\mathrm{ii})}{\geq}  \frac{1}{2} \big(\gamma^{2s}- \gamma /4 - \gamma^{1/2}\tau_s \big) 
	\stackrel{(\mathrm{iii})}{\geq} \frac{1}{16}.
\end{align*}
Here, (i) follows since $\theta^{(t)}(s, a_1) < \theta_{\mathsf{max}}^{(t)}(s)$ during this stage and, therefore, $\pi^{(t)}(a_1 \mymid s) \leq 1/2$;
(ii) arises from the relation~\eqref{eqn:inter-q-relation}; and (iii)
holds whenever $\gamma^{2s} - \gamma / 4   > 5/8$.
Substitution into \eqref{eqn:gradient-relation-inter} yields
\begin{align}
	\frac{\partial V^{(t)}(\mu)}{\partial \theta(s, a_1)} 
	\geq 
	 \frac{1}{16} \cm\gamma(1-\gamma)\pi^{(t)}(a_1 \mymid s)
	 \geq 
	 \frac{1}{48} \cm\gamma(1-\gamma) \widehat{\pi}^{(t)}(a_1 \mymid s),
	 \label{eq:V-partial-lower-bound-intermediate-123}
\end{align}
where the last inequality comes from \eqref{eq:relation-pihat-pi}. 
In addition, recall that $\theta^{(t)}(s, a_1)$ is increasing with $t$, while $\theta^{(t)}(s, a_0)$ and $\theta^{(t)}(s, a_2)$ are both decreasing (and hence $\theta_{\mathsf{max}}^{(t)}(s)$ is also decreasing).  
Invoking Lemma~\ref{lem:widehat-pi-du-pre} then yields
\begin{align*}
\widehat{\pi}^{(t+1)}(a_{1}\mymid s)-\widehat{\pi}^{(t)}(a_{1}\mymid s) & \ge\widehat{\pi}^{(t)}(a_{1}\mymid s)\Big(\text{\ensuremath{\theta}}^{(t+1)}(s,a_{1})-\theta^{(t)}(s,a_{1})+\text{\ensuremath{\theta}}_{\mathsf{max}}^{(t)}(s)-\text{\ensuremath{\theta}}_{\mathsf{max}}^{(t+1)}(s)\Big)\\
 & \geq\widehat{\pi}^{(t)}(a_{1}\mymid s)\Big(\text{\ensuremath{\theta}}^{(t+1)}(s,a_{1})-\theta^{(t)}(s,a_{1})\Big)\\
 & =\widehat{\pi}^{(t)}(a_{1}\mymid s)\cdot\eta\frac{\partial \soft{V}^{(t)}(\mu)}{\partial\theta(s,a_{1})} 
   \geq \frac{1}{48} \cm\eta\gamma(1-\gamma) \Big[ \widehat{\pi}^{(t)}(a_1 \mymid s) \Big]^2,
\end{align*}
where the last line arises from \eqref{eq:V-partial-lower-bound-intermediate-123}. 
Given this recursive relation, Lemma~\ref{lem:opt-lemma} implies that: 
if $\widehat{\pi}^{(t)}(a_{1}\mymid s) < 1$ (or equivalently, $\theta^{(t)}(s, a_1) < \theta_{\mathsf{max}}^{(t)}(s)$), then one necessairly has 
\begin{align*}
	t - t_{0,1} \leq
	\frac{1+ \frac{1}{48} \cm\eta\gamma(1-\gamma) }{  \frac{1}{48} \cm\eta\gamma(1-\gamma) \pi^{(t_0)}(a_{1} \mymid s) } 
	\leq 
	\frac{2 }{  \frac{1}{48} \cm\eta\gamma(1-\gamma) \pi^{(t_{0,1})}(a_{1} \mymid s) }
	\leq \frac{240}{\cm\eta\gamma(1-\gamma)^2}, 
\end{align*}
with $t_{0,1} \coloneqq \max \big\{t_{\overline{s-1}}(\gamma^{2s-1}-1/4), t_{s}(\taun_s) \big\}$. 
Here, the last inequality relies on the property \eqref{eq:pit-a1-lower-bound-intermediate}.

\item {\bf Stage 2: the duration when $\theta^{(t)}(s, a_1) = \theta_{\mathsf{max}}^{(t)}(s)$.} 
	For this stage, we intend to demonstrate that it takes at most $O\big( \frac{1}{\eta(1-\gamma)^2} \big)$ iterations to achieve $\max \big\{ \pi^{(t)}(a_0 \mymid s),\pi^{(t)}(a_2 \mymid s) \big\} \leq (1-\gamma)/8$.  
To this end, we again begin by studying the gradient as follows: 
\begin{align*}
	\frac{\partial V^{(t)}(\mu)}{\partial \theta(s, a_2)} 
	&=\frac{1}{1-\gamma} d^{(t)}_{\mu}(s) \pi^{(t)}(a_2 \mymid s) \Big( Q^{(t)}(s, a_2) - V^{(t)}(s) \Big) 
	\leq  \cm\gamma (1-\gamma)\pi^{(t)}(a_2 \mymid s) \Big( Q^{(t)}(s, a_2) - V^{(t)}(s) \Big) \\
	& \leq \frac{1}{3}\cm\gamma (1-\gamma)\widehat{\pi}^{(t)}(a_2 \mymid s) \Big(Q^{(t)}(s, a_2) - V^{(t)}(s) \Big).
\end{align*}
Here, the first inequality comes from Lemma~\ref{lem:facts-d-pi-s-LB} and the fact $Q^{(t)}(s, a_2) < V^{(t)}(s)$ (see \eqref{eq:Qt-Vt-ordering-intermediate-stage-123}),
whereas the last inequality is a consequence of \eqref{eq:relation-pihat-pi}.  
%
In order to control $Q^{(t)}(s, a_2) - V^{(t)}(s)$, we observe that
\begin{align*}
Q^{(t)}(s,a_{2})-V^{(t)}(s) & =\pi^{(t)}(a_{1}\mymid s)\Big(Q^{(t)}(s,a_{2})-Q^{(t)}(s,a_{1})\Big)+\pi^{(t)}(a_{0}\mymid s)\Big(Q^{(t)}(s,a_{2})-Q^{(t)}(s,a_{0})\Big)\\
 & \leq\pi^{(t)}(a_{1}\mymid s)\Big(\gamma^{1/2}\tau_{s}-\gamma^{2s}+\gamma/4\Big)+\pi^{(t)}(a_{2}\mymid s)\gamma p\left(V^{(t)}(\overline{s-2})-\tau_{s-2}\right)\\
 & \leq\frac{1}{3}\Big(\gamma^{1/2}\tau_{s}-\gamma^{2s}+\gamma/4\Big)+\gamma p\leq-\frac{1}{24},
\end{align*}
where the second line arises from \eqref{eqn:inter-q-relation} and \eqref{eq:Qt-a2-Qt-a0-order-justification},
and the last line holds since  $V^{(t)}(\overline{s-2})\leq 1$ as well as the fact $\pi^{(t)}(a_{1}\mymid s)\geq 1/3$ during this stage (since $\theta^{(t)}(s, a_1) = \theta_{\mathsf{max}}^{(t)}(s)$). Putting the above two bounds together leads to
\begin{align}
	\frac{\partial V^{(t)}(\mu)}{\partial \theta(s, a_2)} 
	 \leq - \frac{1}{72}\cm\gamma (1-\gamma)\widehat{\pi}^{(t)}(a_2 \mymid s) .
	 \label{eq:grad-Vt-a2-bound-intermediate}
\end{align}
Next, Lemma \ref{lem:widehat-pi-du-pre} tells us that
\begin{align*}
\widehat{\pi}^{(t+1)}(a_{2}\mymid s)-\widehat{\pi}^{(t)}(a_{2}\mymid s) & \leq0.72\widehat{\pi}^{(t)}(a_{2}\mymid s)\Big(\text{\ensuremath{\theta}}^{(t+1)}(s,a_{2})-\theta^{(t)}(s,a_{2})+\text{\ensuremath{\theta}}_{\mathsf{max}}^{(t)}(s)-\text{\ensuremath{\theta}}_{\mathsf{max}}^{(t+1)}(s)\Big)\\
 & \leq0.72\widehat{\pi}^{(t)}(a_{2}\mymid s)\Big(\text{\ensuremath{\theta}}^{(t+1)}(s,a_{2})-\theta^{(t)}(s,a_{2})\Big)=0.72\widehat{\pi}^{(t)}(a_{2}\mymid s)\cdot\eta\frac{\partial \soft{V}^{(t)}(\mu)}{\partial\theta(s,a_{2})}\\
 & \leq - 0.01\eta\cm\gamma(1-\gamma)\Big[\widehat{\pi}^{(t)}(a_{2}\mymid s)\Big]^{2},
\end{align*}
where the first inequality makes use of the facts $\theta^{(t+1)}(s,a_{2})\leq \theta^{(t)}(s,a_{2})$
and $\theta_{\mathsf{max}}^{(t)}(s)= \theta^{(t)}(s,a_{1}) \leq \theta^{(t+1)}(s,a_{1}) = \theta_{\mathsf{max}}^{(t+1)}(s)$ (see \eqref{eq:grad-ordering-intermediate-stage}). 
Denoting by $t_{0,2}$ the first iteration in this stage, we can invoke Lemma~\ref{lem:opt-lemma} to reach
\begin{align} \label{eq:pi-a2-intermediate}
\widehat{\pi}^{(t-t_{0,2})}(a_{2}\mymid s)\leq\frac{1}{0.01\eta\cm\gamma(1-\gamma)(t-t_{0,2})+1} .
\end{align}
As a consequence, once $t-t_{0,2}$ exceeds 
\[
	\frac{800}{\eta\cm\gamma(1-\gamma)^{2}}, 
\]
then one has $\pi^{(t)}(a_2 \mymid s) \leq (1-\gamma)/8$. The same conclusion holds for $a_0$ as well.

\end{itemize}


Combining the above analysis for the two stages, we see that: if 
\[
	t - t_{0,1} \geq   \frac{240}{\eta\cm\gamma(1-\gamma)^{2}} + \frac{800}{\eta\cm\gamma(1-\gamma)^{2}} = \frac{1040}{\eta\cm\gamma(1-\gamma)^{2}}
\]
with $t_{0,1} \coloneqq \max \big\{t_{\overline{s-1}}(\gamma^{2s-1}-1/4), t_{s}(\taun_s) \big\}$, then one has 
\[
	\pi^{(t)}(a_1 \mymid s) = 1 - \pi^{(t)}(a_0 \mymid s) - \pi^{(t)}(a_2 \mymid s) \geq 1 -  (1-\gamma)/4, 
\]
which combined with \eqref{eqn:inter-q-relation} leads to
\begin{align*}
	V^{(t)} (s) \geq \pi^{(t)}(a_1 \mymid s) Q^{(t)}(s,a_{1}) 
	\geq \big(1 -  (1-\gamma)/4 \big) \big( \gamma^{2s} - \gamma / 4\big) \geq \gamma^{2s} - 1/4.
\end{align*}
%
%
This means that one necessairly has $t \geq t_{s}(\gamma^{2s}-1/4)$. It then follows that  
\begin{align*}
	t_{s}\big(\gamma^{2s}-1/4\big) - \max\big\{t_{\overline{s-1}}\big(\gamma^{2s-1}-1/4 \big), t_{s}(\taun_s) \big\}
	= t_{s}\big(\gamma^{2s}-1/4\big)  - t_{0,1}
	\leq \frac{1040}{\eta\cm \gamma (1-\gamma)^2},
\end{align*}
thus concluding the proof of \eqref{eqn:cake}.

\paragraph{\bf Analysis for the adjoint states.} 
We then move forward to the adjoint states $\{\overline{1},\cdots,\overline{H}\}$ and control $t_{\overline{s}}(\gamma^{2s+1}-1/4)$ as desired in \eqref{eqn:ice-cream}. 
The proof consists of studying the dynamic for any $t$ obeying 
$$
	\max\Big\{t_{s} \big(\gamma^{2s}-1/4 \big), \, t_{\overline{s}}(\tau_{s+1}) \Big\} \leq t \leq t_{\overline{s}} \big( \gamma^{2s+1}-1/4 \big).
$$
%
Once again, we divide into two stages and analyze each of them separately.

\begin{itemize}
\item {\bf Stage 1: the duration where $\theta^{(t)}(\overline{s}, a_1) < \theta^{(t)}(\overline{s}, a_0)$.} 
We aim to demonstrate that it takes no more than $O\big( \frac{1}{ \eta  (1-\gamma)^2} \big)$ iterations for $\theta^{(t)}(\overline{s}, a_1)$ to surpass $\theta^{(t)}(\overline{s}, a_0)$.
In order to do so, note that
\begin{align}
\frac{\partial V^{(t)}(\mu)}{\partial\theta(\overline{s},a_{1})} & =\frac{1}{1-\gamma}d_{\mu}^{(t)}(\overline{s})\pi^{(t)}(a_{1}\mymid\overline{s})\pi^{(t)}(a_{0}\mymid\overline{s})\Big(Q^{(t)}(\overline{s},a_{1})-Q^{(t)}(\overline{s},a_{0})\Big) \notag\\
 & \geq\frac{1}{16}\cm\gamma(1-\gamma)\pi^{(t)}(a_{1}\mymid\overline{s}) > 0.
	\label{eq:grad-V-a1-LB-5283}
\end{align}
Here, the last line applies Lemma~\ref{lem:facts-d-pi-s-LB} and makes use of the fact
\begin{align}
	Q^{(t)}(\overline{s}, a_1) - Q^{(t)}(\overline{s}, a_0) = \gamma V^{(t)}(s) - \gamma \tau_s \ge 
	\gamma (\gamma^{2s}-1/4 - \tau_s) \geq 1/8.
	\label{eq:Qt-a1-a0-ga-LB-intermediate-adjoint}
\end{align}
where the inequality comes from the assumption $t\geq t_{s} \big(\gamma^{2s}-1/4 \big)$ as well as the monotonicity property in Lemma~\ref{lem:ascent-lemma-PG}.  
As a result, the PG update rule \eqref{eq:PG-update} implies that  $\theta^{(t)}(\overline{s},a_{1})$ is increasing in $t$,
and hence $\theta^{(t)}(\overline{s},a_{0})$ is decreasing in $t$ (since $\sum_{a} \theta^{(t)}(s, a) = 0$); these taken collectively mean that
$$
		\theta^{(t+1)}(\overline{s},a_{1}) - \theta^{(t)}(\overline{s},a_{1}) + \theta^{(t)}(\overline{s},a_{0}) - \theta^{(t+1)}(\overline{s},a_{0}) 
		\geq \theta^{(t+1)}(\overline{s},a_{1}) - \theta^{(t)}(\overline{s},a_{1}) \geq 0 .
$$
Invoking Lemma~\ref{lem:widehat-pi-du-pre} then reveals that
\begin{align*}
\widehat{\pi}^{(t+1)}(a_{1}\mymid\overline{s})-\widehat{\pi}^{(t)}(a_{1}\mymid\overline{s}) & \ge\widehat{\pi}^{(t)}(a_{1}\mymid\overline{s})\Big(\theta^{(t+1)}(\overline{s},a_{1})-\theta^{(t)}(\overline{s},a_{1})+\theta^{(t)}(\overline{s},a_{0})-\theta^{(t+1)}(\overline{s},a_{0})\Big)\\
 & \geq\widehat{\pi}^{(t)}(a_{1}\mymid\overline{s})\Big(\theta^{(t+1)}(\overline{s},a_{1})-\theta^{(t)}(\overline{s},a_{1})\Big)=\eta\widehat{\pi}^{(t)}(a_{1}\mymid\overline{s})\frac{\partial \soft{V}^{(t)}(\mu)}{\partial\theta(\overline{s},a_{1})}\\
	& \geq \frac{1}{48} \eta\cm\gamma(1-\gamma) \Big[ \widehat{\pi}^{(t)}(a_{1}\mymid\overline{s}) \Big]^2,
\end{align*}
where the last inequality relies on \eqref{eq:grad-V-a1-LB-5283} and \eqref{eq:relation-pihat-pi}. 
Given this recursive relation, Lemma~\ref{lem:opt-lemma} tells us that: one has $\widehat{\pi}^{(t)}(a_{1}\mymid\overline{s}) \geq 1$ (which means $a_1$ becomes the favored action by \eqref{eqn:defn-widehat-pi-soft-soft}) as soon as $t-t_{0, 3}$ exceeds
\begin{align*}
	\frac{2}{\frac{1}{48} \eta\cm\gamma(1-\gamma) \widehat{\pi}^{(t_{0,3})}(a_{1}\mymid\overline{s}) } 
	\leq \frac{96}{\eta\cm\gamma(1-\gamma) {\pi}^{(t_{0,3})}(a_{1}\mymid\overline{s}) }
	\leq \frac{1152}{\eta\cm\gamma(1-\gamma)^2  },
\end{align*}
where $t_{0, 3}\coloneqq \max\Big\{t_{s} \big(\gamma^{2s}-1/4 \big), \, t_{\overline{s}}(\tau_{s+1}) \Big\}$. 
Here, the last inequality is valid as long as 
\begin{align}
	\pi^{(t_{0,3})}(a_{1} \mymid \overline{s}) \geq (1-\gamma)/12
	\label{eq:pit-a1-bars-12-LB}
\end{align}
holds. It thus remains to justify \eqref{eq:pit-a1-bars-12-LB}. Towards this end, observe that for any $t\geq t_{\overline{s}}(\tau_{s+1})$,
\begin{align*}
\tau_{s+1} & \leq V^{(t)}(\overline{s})=\pi^{(t)}(a_{0}\mymid\overline{s})Q^{(t)}(\overline{s},a_{0})+\pi^{(t)}(a_{1}\mymid\overline{s})Q^{(t)}(\overline{s},a_{1})=\pi^{(t)}(a_{0}\mymid\overline{s})\gamma\tau_{s}+\pi^{(t)}(a_{1}\mymid\overline{s})\gamma V^{(t)}(s)\\
 & =\gamma\tau_{s}+\pi^{(t)}(a_{1}\mymid\overline{s})\gamma\left(V^{(t)}(s)-\tau_{s}\right)\leq\gamma\tau_{s}+\pi^{(t)}(a_{1}\mymid\overline{s})\gamma,
\end{align*}
and, as a result, 
\begin{align*}
\pi^{(t)}(a_{1}\mymid\overline{s}) & \geq\frac{\tau_{s+1}}{\gamma}-\tau_{s}=\frac{1}{2}\frac{\gamma^{\frac{2}{3}s+\frac{2}{3}}-\gamma^{\frac{2}{3}s+1}}{\gamma}=\frac{\gamma^{\frac{2}{3}s-1}}{2}\left(\gamma^{\frac{2}{3}}-\gamma\right)\geq\frac{1-\gamma}{12},
\end{align*}
provided that $\gamma \geq 0.9$ (so that $\gamma^{\frac{2}{3}}-\gamma\geq 0.3(1-\gamma)$) 
and $\gamma^{\frac{2}{3}H} \geq 0.7$.  This concludes the analysis of this stage.

\item {\bf Stage 2: the duration where $\theta^{(t)}(\overline{s}, a_1) \geq \theta^{(t)}(\overline{s}, a_0)$.}  
	Similar to the above argument, we intend to show that it takes at most $O\big( \frac{1}{\eta (1-\gamma)^2} \big)$ iterations for
$\pi^{(t)}(a_0 \mymid \overline{s}) \leq 1-\gamma$ to occur.
From the gradient expression and the property \eqref{eq:Qt-a1-a0-ga-LB-intermediate-adjoint}, we obtain
\begin{align*}
	\frac{\partial V^{(t)}(\mu)}{\partial \theta(\overline{s}, a_0)} 
	&= \frac{1}{1-\gamma} d^{(t)}_{\mu}(\overline{s}) \pi^{(t)}(a_0 \mymid  \overline{s})\pi^{(t)}(a_1 \mymid \overline{s})
	\Big( Q^{(t)}(\overline{s}, a_0) - Q^{(t)}(\overline{s}, a_1) \Big) \\
	& 
	\leq -\frac{1}{16}\cm\gamma (1-\gamma) {\pi}^{(t)}(a_0 \mymid \overline{s}  )
	\leq -\frac{1}{48}\cm\gamma (1-\gamma) \widehat{\pi}^{(t)}(a_0 \mymid \overline{s} ),
\end{align*}
where the first inequality uses Lemma~\ref{lem:facts-d-pi-s-LB} and the property $\pi^{(t)}(a_1 \mymid  \overline{s})\geq 1/2$ (since $\theta^{(t)}(\overline{s}, a_1) \geq \theta^{(t)}(\overline{s}, a_0)$), and the last inequality relies on \eqref{eq:relation-pihat-pi}. 
Repeating a similar argument as above, we can demonstrate that
\begin{align}
	\widehat{\pi}^{(t+1)}(a_0 \mymid \overline{s}  ) - \widehat{\pi}^{(t)}(a_0 \mymid \overline{s} ) 
	\leq - \frac{1}{70} \eta\cm\gamma(1-\gamma) \Big[\widehat{\pi}^{(t)}(a_0 \mymid \overline{s} )\Big]^2.
\end{align}
This combined with Lemma~\ref{lem:opt-lemma} implies that
\begin{align} \label{eq:pi-a2-bar-intermediate}
	\widehat{\pi}^{(t)}(a_0 \mymid \overline{s} ) \leq \frac{1}{\frac{1}{70} \eta\cm\gamma(1-\gamma) (t-t_{0,4}) +1},
\end{align}
with $t_{0,4}$ denoting the first iteration of this stage. 
Consequently, one has $\widehat{\pi}^{(t)}(a_0 \mymid \overline{s} ) \leq 1-\gamma$ --- and therefore $\pi^{(t)}(a_0 \mymid \overline{s} )\leq  1-\gamma$ according to \eqref{eq:relation-pihat-pi} --- as soon as $t-t_{0,4}$ exceeds
$$
	\frac{70}{ \eta\cm\gamma(1-\gamma)^2 }.
$$

\end{itemize}

Finally,  if $\pi^{(t)}(a_0 \mymid \overline{s} )\leq 1-\gamma$, then one has 
\begin{align*}
V^{(t)}(\overline{s}) & =\pi^{(t)}(a_{0}\mymid\overline{s})Q^{(t)}(\overline{s},a_{0})+\pi^{(t)}(a_{1}\mymid\overline{s})Q^{(t)}(\overline{s},a_{1})=\pi^{(t)}(a_{0}\mymid\overline{s})\gamma\tau_{s}+\pi^{(t)}(a_{1}\mymid\overline{s}) \gamma V^{(t)}(s)\\
 & \geq\pi^{(t)}(a_{0}\mymid\overline{s})\gamma\tau_{s}+\Big(1-\pi^{(t)}(a_{0}\mymid\overline{s})\Big)\gamma\left(\gamma^{2s}-1/4\right)=\pi^{(t)}(a_{0}\mymid\overline{s})\left(\gamma\tau_{s}-\gamma^{2s+1}-\gamma/4\right)+\gamma^{2s+1}-\gamma/4\\
 & \geq(1-\gamma)\left(\gamma\tau_{s}-\gamma^{2s+1}-\gamma/4\right)+\gamma^{2s+1}-\gamma/4\geq\gamma^{2s+1}-1/4,
\end{align*}
where the first inequality holds by recalling that $t\geq t_{s}(\gamma^{2s}-1/4)$. 
Consequently, putting the above pieces (regarding the duration of the two stages) together allows us to conclude that
\[
	t_{\overline{s}} \big( \gamma^{2s+1}-1/4 \big) - \max\Big\{t_{s} \big(\gamma^{2s}-1/4 \big), \, t_{\overline{s}}(\tau_{s+1}) \Big\} 
	\leq \frac{1152}{\eta\cm\gamma(1-\gamma)^2  } + \frac{70}{ \eta\cm\gamma(1-\gamma)^2 } = \frac{1222}{\eta\cm\gamma(1-\gamma)^2  }
\]
as claimed.

\subsection{Proof of Lemma~\ref{lem:hotpot-for-new-year}}
\label{sec:proof-lem:hotpot-for-new-year}

Before proceeding, we first single out two properties that play a crucial role in the proof of Lemma~\ref{lem:hotpot-for-new-year}. 
\begin{lemma}
\label{lem:sunshine-etude}
	The following basic properties hold true for any $2\leq s\leq H$: 
\begin{subequations}
\begin{align}
\label{eqn:catan:round1}
	t_{s}(\taun_s) &\geq t_{\overline{s-1}}(\tau_{s});\\
\label{eqn:catan:round2}
	t_{\overline{s-1}}(\tau_{s}) &\geq t_{s-1}(\taun_{s-1}).		
\end{align}		
\end{subequations}	
\end{lemma}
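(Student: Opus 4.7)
My plan is to derive both inequalities as short consequences of the structural results already recorded in Lemma~\ref{lem:basic-properties-MDP-pi}. The only subtle point is the boundary case $s = 2$ in \eqref{eqn:catan:round1}, which I will dispatch using Lemma~\ref{lem:init-step} separately.

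For \eqref{eqn:catan:round1} with $s \in \{3, \ldots, H\}$, I will argue by contrapositive. Suppose $V^{(t)}(\overline{s-1}) < \tau_s$ at some time $t$. Then Part (i) of Lemma~\ref{lem:basic-properties-MDP-pi} yields $Q^{(t)}(s, a_1) = \gamma V^{(t)}(\overline{s-1}) < \gamma \tau_s$, together with the universal upper bounds $Q^{(t)}(s, a_0), Q^{(t)}(s, a_2) \leq \gamma^{1/2}\tau_s$; since $V^{(t)}(s)$ is a convex combination of these three Q-values, we conclude $V^{(t)}(s) \leq \gamma^{1/2}\tau_s < \tau_s$. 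Contrapositively, $V^{(t)}(s) \geq \tau_s$ forces $V^{(t)}(\overline{s-1}) \geq \tau_s$, and evaluating at $t = t_s(\tau_s)$ gives $t_s(\tau_s) \geq t_{\overline{s-1}}(\tau_s)$. For the boundary case $s = 2$, state 2 lives in the buffer block $\cS_2$ and is transition-wise decoupled from state $\overline{1}$, so the clean recursion of Part (i) is not directly available. Instead I will invoke the bound $t_2(\tau_2) > t_{\overline{1}}(\gamma^3 - 1/4)$ from \eqref{eqn:t1-t2-bar-scaling} in Lemma~\ref{lem:init-step} and verify that $\tau_2 = 0.5\gamma^{4/3} \leq \gamma^3 - 1/4$ under the standing assumption $\gamma > 0.96$, which yields $t_{\overline{1}}(\tau_2) \leq t_{\overline{1}}(\gamma^3 - 1/4) < t_2(\tau_2)$.

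For \eqref{eqn:catan:round2}, the plan is to invoke the equivalence in Part (iii) of Lemma~\ref{lem:basic-properties-MDP-pi}: $V^{(t)}(\overline{s-1}) \geq \gamma\tau_{s-1}$ iff $V^{(t)}(s-1) \geq \tau_{s-1}$, where the required positivity $\pi^{(t)}(a_1 \mymid \overline{s-1}) > 0$ is automatic under softmax parameterization. From the definition \eqref{defn:tau-s}, $\gamma\tau_{s-1} = 0.5\gamma^{(2s+1)/3} < 0.5\gamma^{2s/3} = \tau_s$, so $V^{(t)}(\overline{s-1}) \geq \tau_s$ immediately implies $V^{(t)}(\overline{s-1}) \geq \gamma\tau_{s-1}$, and hence $V^{(t)}(s-1) \geq \tau_{s-1}$; evaluating at $t = t_{\overline{s-1}}(\tau_s)$ delivers $t_{\overline{s-1}}(\tau_s) \geq t_{s-1}(\tau_{s-1})$. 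This argument is uniform in $2 \leq s \leq H$, with the convention \eqref{eq:convenient-notation-V1-V2-Q1-Q2} absorbing the case $s - 1 = 1$ where $s-1$ refers to the buffer block $\cS_1$.

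I do not anticipate a substantive obstacle: both statements reduce to one-line monotone propagation claims once the pre-established structural lemmas are invoked. The only care required is the boundary handling for $s = 2$ in \eqref{eqn:catan:round1}, where the chain-like Q-value bound of Part (i) does not apply because states 1 and 2 sit in buffer blocks whose transitions go only to the absorbing state (see \eqref{eq:P-r-state-S1S2}); that gap is precisely what the already-proved inequality \eqref{eqn:t1-t2-bar-scaling} plugs.
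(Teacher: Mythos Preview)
Your proposal is correct and follows essentially the same route as the paper's own proof: for \eqref{eqn:catan:round1} with $s\ge 3$ both use the Q-value bounds of Lemma~\ref{lem:basic-properties-MDP-pi}(i) (the paper argues directly that $V^{(t)}(s)\ge\tau_s$ forces $Q^{(t)}(s,a_1)>\tau_s$ and hence $V^{(t)}(\overline{s-1})>\tau_s/\gamma\ge\tau_s$, you phrase the same implication contrapositively), the boundary case $s=2$ is handled in both via \eqref{eqn:t1-t2-bar-scaling}, and for \eqref{eqn:catan:round2} both use the structure of $V^{(t)}(\overline{s-1})$ from Lemma~\ref{lem:basic-properties-MDP-pi}(iii) together with $\gamma\tau_{s-1}\le\tau_s$. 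The only cosmetic difference is that the paper re-derives the one-sided implication from the Q-value expressions \eqref{eq:Qpi-s-adj} rather than quoting the if-and-only-if of \eqref{eq:Vpi-s-lower-bound-2683}.
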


The proof of this auxiliary lemma is deferred to the end of this subsection. 
Equipped with this result, we are positioned to present the proof of Lemma~\ref{lem:hotpot-for-new-year}. 
To begin with, we seek to bound the quantity $t_{s}(\gamma^{2s}-1/4) - t_{s}(\taun_s)$.  
Apply Lemma~\ref{lem:sweets-for-new-year} with a little algebra to yield
\begin{align}
\notag 
	t_{s}(\gamma^{2s}-1/4) - t_{s}(\taun_s) 
	&\le \max \Big\{t_{\overline{s-1}}(\gamma^{2s-1}-1/4), t_{s}(\taun_s) \Big\}  + \frac{c_0}{\eta(1-\gamma)^2} - t_{s}(\taun_s)\\
	&= \max\Big\{t_{\overline{s-1}}(\gamma^{2s-1}-1/4)- t_{s}(\taun_s), 0\Big\}  + \frac{c_0}{\eta(1-\gamma)^2}. 
\label{eqn:inter-t-diff-upp}
\end{align}
With the assistance of the bound \eqref{eqn:catan:round1} in Lemma~\ref{lem:sunshine-etude}, we can
continue the bound in \eqref{eqn:inter-t-diff-upp} to derive
\begin{align}
\notag t_{s}(\gamma^{2s}-1/4) - t_{s}(\taun_s) 
	& \leq  \max \Big\{t_{\overline{s-1}}(\gamma^{2s-1}-1/4)- t_{\overline{s-1}}(\tau_{s}), 0 \Big\}  + \frac{c_0}{\eta(1-\gamma)^2}\\
&= t_{\overline{s-1}}(\gamma^{2s-1}-1/4)- t_{\overline{s-1}}(\tau_{s})  + \frac{c_0}{\eta(1-\gamma)^2}.
\label{eqn:cosette}
\end{align}

To continue, we shall bound the quantity $t_{\overline{s-1}}(\gamma^{2s-1}-1/4)- t_{\overline{s-1}}(\tau_{s})$. 
Similar to the derivation of the inequality~\eqref{eqn:inter-t-diff-upp}, we can apply Lemma~\ref{lem:sweets-for-new-year} to show that 
\begin{align}
t_{\overline{s-1}}(\gamma^{2s-1}-1/4)- t_{\overline{s-1}}(\tau_{s})
	& \le \max\Big\{t_{s-1}(\gamma^{2s-2}-1/4), t_{\overline{s-1}}(\tau_{s}) \Big\}  + \frac{c_0}{\eta(1-\gamma)^2} - t_{\overline{s-1}}(\tau_{s}) \notag\\
	& = \max\Big\{t_{s-1}(\gamma^{2s-2}-1/4)- t_{\overline{s-1}}(\tau_{s}), 0 \Big\}  + \frac{c_0}{\eta(1-\gamma)^2} \notag\\
	& \leq \max\Big\{t_{s-1}(\gamma^{2s-2}-1/4)- t_{s-1}(\taun_{s-1}), 0 \Big\}  + \frac{c_0}{\eta(1-\gamma)^2} \notag\\
	& = t_{s-1}(\gamma^{2s-2}-1/4)- t_{s-1}(\taun_{s-1})   + \frac{c_0}{\eta(1-\gamma)^2}, \label{eqn:cosette2}
\end{align}
where the third line makes use of \eqref{eqn:catan:round2} in Lemma~\ref{lem:sunshine-etude}.

Applying the inequalities \eqref{eqn:cosette} and \eqref{eqn:cosette2} recursively, one arrives at 
\begin{align}
\notag 
	t_{s}(\gamma^{2s}-1/4) - t_{s}(\taun_s) & \leq  t_{s-1}(\gamma^{2s-2}-1/4)- t_{s-1}(\taun_{s-1}) + \frac{2c_0}{\eta(1-\gamma)^2} \leq \cdots \\
	& \leq  t_{3}(\gamma^{6}-1/4)- t_{3}(\taun_3) + \frac{2(s-3) c_0}{\eta(1-\gamma)^2} .\label{eq:friday}
\end{align}
To continue, note that Lemma~\ref{lem:sweets-for-new-year} and the bound \eqref{eqn:catan:round1} give
\begin{align*}
t_{3}(\gamma^{6}-1/4) \le&~\max \Big\{t_{\overline{2}}(\gamma^{5}-1/4), t_{3}(\tau_3) \Big\}  + \frac{c_0}{\eta(1-\gamma)^2}, \\
t_{\overline{2}}(\gamma^{5}-1/4) \le&~\max \Big\{t_{2}(\gamma^{4}-1/4), t_{\overline{2}}(\tau_3) \Big\}  + \frac{c_0}{\eta(1-\gamma)^2} \le \max \Big\{t_{2}(\gamma^{4}-1/4), t_{3}(\tau_3) \Big\}  + \frac{c_0}{\eta(1-\gamma)^2},
\end{align*}
which together leads to
\begin{align} \label{eq:eqn:cosette-3}
t_{3}(\gamma^{6}-1/4) \le \max \Big\{t_{2}(\gamma^{4}-1/4), t_{3}(\tau_3) \Big\}  + \frac{2c_0}{\eta(1-\gamma)^2}.
\end{align}
Plugging back to \eqref{eq:friday} leads to 
\begin{align}
\notag 	t_{s}(\gamma^{2s}-1/4) - t_{s}(\taun_s) &\leq \max\Big\{t_{2}(\gamma^{4}-1/4), t_{3}(\taun_3) \Big\}- t_{3}(\taun_3)  + \frac{2c_0}{\eta(1-\gamma)^2} + \frac{2(s-3) c_0}{\eta(1-\gamma)^2} \\
	&\leq  \frac{(2s-4) c_0}{\eta(1-\gamma)^2}, 
\end{align}
where the last step arises from the assumption \eqref{eqn:assmp2-intermediate-lemma}, that is, $t_{2}(\gamma^{4}-1/4) < t_{3}(\taun_3)$.

Further, the above inequality taken together with \eqref{eqn:cosette2} yields
\begin{align}
t_{\overline{s-1}}(\gamma^{2s-1}-1/4)- t_{\overline{s-1}}(\tau_{s})	
\leq  \frac{(2s-4) c_0}{\eta(1-\gamma)^2} + \frac{ c_0}{\eta(1-\gamma)^2}
= \frac{(2s-3) c_0}{\eta(1-\gamma)^2}.
\end{align} 
We have thus established \eqref{eqn:t-diff-part-I} and \eqref{eqn:t-diff-part-II}. 


Finally, we turn to the proof of \eqref{eqn:horn}. 
In view of \eqref{eqn:t-diff-part-II}, one has 
\begin{align*}
t_{\overline{s-2}}(\gamma^{2s-3}-1/4) - t_{\overline{s-2}}(\tau_{s-1}) 
\le \frac{2s c_0}{\eta(1-\gamma)^2}.
\end{align*}
In addition, 
\begin{align*}
t_{\overline{s-1}}(\tau_{s}) - t_{\overline{s-2}}(\tau_{s-1}) 
&\geq t_{\overline{s-1}}(\gamma \tau_{s-1}) - t_{\overline{s-2}}(\tau_{s-1}) = t_{s-1}(\tau_{s-1}) - t_{\overline{s-2}}(\tau_{s-1}) \\
&\ge t_{s-1}(\taun_{s-1}) - t_{\overline{s-2}}(\tau_{s-1}) > \frac{2s c_0}{\eta(1-\gamma)^2}, 
\end{align*}
where the identity in the first line comes from Part (iii) of Lemma~\ref{lem:basic-properties-MDP-pi},
and the last inequality uses the assumption $t_{s-1}(\taun_{s-1}) > t_{\overline{s-2}}(\tau_{s-1}) + \frac{2sc_0}{\eta(1-\gamma)^2}$. 
Combining the above two inequalities justifies the validity of the advertised inequality~\eqref{eqn:horn}.
Then, we establish \eqref{eqn:horn} for $s = 3$ through Lemma~\ref{lem:init-step}, which gives
\begin{align}
	t_{\overline{1}}\big(\gamma^{3}-1/4\big) \le t_{2}(\tau_2) \le t_{\overline{2}}(\tau_3), 
\end{align}
where the last inequality comes from~\eqref{eqn:catan:round2}.

\begin{proof}[Proof of Lemma~\ref{lem:sunshine-etude}]
To begin with, the claim \eqref{eqn:catan:round1} holds when $s=2$ as a result of  the inequality~\eqref{eqn:t1-t2-bar-scaling} in Lemma~\ref{lem:init-step}.
We now turn to the case with $3\leq s\leq H$. 
In view of the property \eqref{eq:Q-s2-pi-LB-UB-a} in Lemma~\ref{lem:basic-properties-MDP-pi}, we have
\begin{align*}
	\max\Big\{ Q^{(t)}(s,a_{0}), Q^{(t)}(s,a_{2}) \Big\} \leq \gamma^{\frac{1}{2}}\tau_{s} < \tau_s
	\qquad \text{and} \qquad
	Q^{(t)}(s,a_{1}) = \gamma V^{(t)}(\overline{s-1}).
\end{align*}
Recognizing that $V^{(t)}(s)$ is a convex combination of $\big\{ Q^{(t)}(s,a) \big\}_{a\in \{a_0,a_1,a_2\}}$, 
we know that if $V^{(t)}(s) \geq \taun_s$, then one necessarily has $Q^{(t)}(s,a_{1}) > \taun_s$, or equivalently, 
$V^{(t)}(\overline{s-1}) > \taun_s / \gamma \geq \tau_s $. 
This essentially means that $t_{s}(\taun_s) \geq t_{\overline{s-1}}(\tau_{s})$, thus establishing the claim \eqref{eqn:catan:round1}. 

Similarly,  Lemma~\ref{lem:basic-properties-MDP-pi} (cf.~\eqref{eq:Qpi-s-adj}) also tells us that
\begin{align*}
	Q^{(t)}(\overline{s},a_{0}) & =\gamma\tau_{s}
	\qquad\text{and} \qquad 
	Q^{(t)}(\overline{s},a_{1})  = \gamma V^{(t)}(s). 
\end{align*}
This means that if $V^{(t)}(s-1)\leq \taun_{s-1}$, 
then $$ V^{(t)}(\overline{s-1}) \leq \max\Big\{ Q^{(t)}(\overline{s-1},a_{0}),  Q^{(t)}(\overline{s-1},a_{1}) \Big\} \leq \gamma\taun_{s-1} \le \tau_{s} .$$ 
Consequently, we conclude that $t_{\overline{s-1}}(\tau_{s}) \geq t_{s-1}(\taun_{s-1})$, as claimed in \eqref{eqn:catan:round2}. 
\end{proof}

\section{Analysis for the blowing-up lemma (Lemma \ref{lem:grand-final-victory})}
\label{sec:proof-lem-grand-final}

In this section, we establish the blowing-up phenomenon as asserted in Lemma \ref{lem:grand-final-victory}. 
%

\subsection{Which reference point $\tprime$ shall we choose?}

Let us specify the time instance $\tprime$ as required in Lemma~\ref{lem:grand-final-victory} as follows
\begin{align}
	\label{eqn:defn-tprime}
	\tprime \defn \min~\Big\{t \in \big[\,\tlow, t_{s}(\tau_s) \,\big)~\mymid~ \cprime (1-\gamma)\pi^{(t)}(a_0 \mymid s) \leq \pi^{(t)}(a_1 \mymid s)\Big\},
\end{align}
where $\cprime \in (0,1/3)$ is some constant to be specified shortly.

\paragraph{Existence.} An important step is to justify that \eqref{eqn:defn-tprime} is well-defined, namely, there exists at least one time instance within 
$\big[\,\tlow, t_{s}(\tau_s) \,\big)$ that satisfies $\cprime (1-\gamma)\pi^{(t)}(a_0 \mymid s) \leq \pi^{(t)}(a_1 \mymid s)$. 
Towards this, we note that if the time instance $\tlow$ obeys
\begin{align*}
	\cprime (1-\gamma)\pi^{(t)}(a_0\mymid s) \leq \pi^{(t)}(a_1 \mymid s) \qquad \text{when }t=\tlow,
\end{align*}
then we simply have  $\tprime = \tlow$. We then move on to the complement case where 
\begin{align}
	\cprime (1-\gamma)\pi^{(\tlow)}(a_0\mymid s) &> \pi^{(\tlow)}(a_1 \mymid s), 
	\quad  \notag\\
	\text{or equivalently,} 
	\qquad
	\theta^{(\tlow)}(s, a_0) &> \theta^{(\tlow)}(s, a_1) - \log( \cprime (1-\gamma)).
	\label{eqn:yes-cheese-pls}
\end{align}
To justify that the construction \eqref{eqn:defn-tprime}  makes sense, 
it suffices to show that the endpoint $t_s(\taun_s)$ obeys
\begin{align}
\label{eqn:no-cheese-pls}
	\cprime (1-\gamma)\pi^{(t_s(\taun_s))}(a_0 \mymid s) < \pi^{(t_s(\taun_s))}(a_1 \mymid s).
\end{align}
In order to validate \eqref{eqn:no-cheese-pls}, recall that the inequality~\eqref{eq:pi-a2-s-LB-V-large-8923} in Lemma~\ref{lem:basic-properties-MDP-pi} ensures that 
\[
	\pi^{(t_s(\taun_s))}(a_1 \mymid s) \geq \frac{1-\gamma}{2},
\]
given that $V^{(t_s(\taun_s))}(s) \geq \taun_s$. 
Therefore, the inequality~\eqref{eqn:no-cheese-pls} must be satisfied when $\cprime<1/2$,  
given that the left-hand side of \eqref{eqn:no-cheese-pls} obeys 
\[
	\cprime (1-\gamma)\pi^{(t_s(\taun_s))}(a_0 \mymid s) \leq \cprime (1-\gamma) < \frac{1-\gamma}{2}.
\]
This in turn validates the existence of \eqref{eqn:no-cheese-pls} for this case.

\paragraph{Several immediate properties about $\tprime$ and $\tlow$.}

We pause to single out a couple of immediate properties about the $\tprime$ constructed above as well as $\tlow$.

Consider the case where $\tlow$ obeys
\begin{align*}
	\cprime (1-\gamma)\pi^{(\tlow)}(a_0\mymid s) &\leq \pi^{(\tlow)}(a_1 \mymid s), \\
	\text{or equivalently,} 
	\qquad 
	\theta^{(\tlow)}(s, a_0) &\leq \theta^{(\tlow)}(s, a_1) - \log\big(\cprime(1-\gamma) \big), 
\end{align*}
then one has $\tprime = \tlow$ (as discussed above).
As can be clearly seen, $\tlow$ satisfies the advertised inequality~\eqref{eqn:jay-concert-1} by taking $\cprime\geq \cp/8064$. 
%
%
%
Additionally, let us first recall from \eqref{eqn:horn} in  Lemma~\ref{lem:hotpot-for-new-year} that
\[
	 t_{\overline{s-1}}(\tau_s) = \max \Big\{ t_{\overline{s-2}}(\gamma^{2s-3}-1/4),~ t_{\overline{s-1}}(\tau_{s}) \Big\}.
\]
This combined with Lemma~\ref{lem:inter} (see \eqref{eqn:phantom}) tells us that
\begin{align}
	\theta^{(\tlow)}(s, a_1) =
	\theta^{(t_{\overline{s-1}}(\tau_s))}(s, a_1)
	\le \theta^{(t_{s-2}(\taun_{s-2}))}(s, a_1) 
	\leq -\frac{1}{2}\log\Big(1+\frac{\cm\gamma}{35} \eta(1-\gamma)^2t_{s-2}(\taun_{s-2}) \Big),
	\label{eq:theta-tlow-UB-Stage1}
\end{align}
where the last relation utilizes the bound \eqref{eqn:theta-t-relation-a1} in Lemma~\ref{lem:induc-theta-t-s-2}.
This leads to the advertised bound \eqref{eqn:jay-concert-2}.

As a result, the claims \eqref{eqn:jay-concert-1}-\eqref{eqn:jay-concert-2} only need to be justified under the assumption \eqref{eqn:yes-cheese-pls}.

\paragraph{Organization of the proof.} In light of the above basic facts, the subsequent proof focuses on the scenario where \eqref{eqn:yes-cheese-pls} is satisfied, namely, the case where 
\[
	\tlow < \tprime.
\]
We shall start by justifying that $\theta^{(t)}(s, a_1)$ has not increased much during $[\tlow, \tprime]$, as detailed in Appendix~\ref{sec:Stage-I-duration-a2-smaller-a0} and \ref{sec:Stage-II-duration-a2-larger-a0} (focusing on two separate stages respectively). 
This feature will then be used in Appendix~\ref{sec:proof-claim-jay-concert-12} to establish the claims \eqref{eqn:jay-concert-1}-\eqref{eqn:jay-concert-2},
and in Appendix~\ref{sec:proof-claim-jay-concert-3} to establish the claim \eqref{eqn:jay-concert-3}.

\subsection{Stage I: the duration where $\theta^{(t)}(s, a_2) < \theta^{(t)}(s, a_0)$}
\label{sec:Stage-I-duration-a2-smaller-a0}

Suppose that at the starting point we have $\theta^{(\tlow)}(s, a_2) < \theta^{(\tlow)}(s, a_0)$; otherwise the reader can proceed directly to Stage II in Appendix~\ref{sec:Stage-II-duration-a2-larger-a0}.  
The goal is to control the number of iterations taken to achieve  $\theta^{(t)}(s, a_2) \ge \theta^{(t)}(s, a_0)$. 
More specifically, let us define the transition point 
\begin{align}
	\that \defn \min ~\Big\{t \mymid \theta^{(t)}(s, a_2) \ge \theta^{(t)}(s, a_0) ,\, t\geq \tlow \Big\}.
	\label{eq:defn-that}
\end{align}
In this subsection, we seek to develop an upper bound on $\that-\tlow$, and to show that $\theta^{(t)}(s, a_1) - \theta^{(\tlow)}(s, a_1) \le 1/2$ holds throughout this stage.

%
%

\paragraph{Preparation: basic facts and rescaled policies.} 

Before moving forward, we first gather some basic facts. 
To begin with, from the definition \eqref{eqn:defn-tprime} of $\tprime$,  
we know that the inequality $\cprime (1-\gamma)\pi^{(t)}(a_0\mymid s) > \pi^{(t)}(a_1 \mymid s)$ holds true for every $t \in [\tlow,~\tprime)$, or equivalently,  
\begin{align}
	\theta^{(t)}(s, a_0) > \theta^{(t)}(s, a_1) - \log\big( \cprime(1-\gamma) \big)
	\qquad \text{for all }t \in [\tlow,~\tprime). 
	\label{eq:gap-thetaa0-thetaa1-diff-1357}
\end{align}
In the case considered here, we have --- according to \eqref{eq:gap-thetaa0-thetaa1-diff-1357} and \eqref{eq:defn-that} --- that
\begin{align}
	\theta^{(t)}(s, a_0) > \theta^{(t)}( s, a_1) - \log\big(\cprime(1-\gamma) \big)
	\qquad \text{and} \qquad 
	\theta^{(t)}(s, a_0) > \theta^{(t)}(s, a_2) 
	\label{eq:theta-t-a0-a1-a2-relative-ordering-Stage1}
\end{align}
for any $t$ obeying $\tlow\leq t < \min\{\that,\tprime\}$. 
This means that 
\begin{align}
	\theta^{(t)}(s, a_0) = \max_a\theta^{(t)}(s, a) \qquad \text{and hence} \qquad {\pi}^{(t)}(a_0 \mymid s) > 1/3
	\label{eq:theta-t-a0-a1-a2-relative-ordering-Stage1-123}
\end{align}
holds for any $t$ obeying $\tlow\leq t < \min\{\that,\tprime\}$, provided that $0< \cprime < 1$.

Moreover, let us introduce the rescaled policy $\widehat{\pi}^{(t)}(a\mymid s)$ as before 
\begin{align*}
	\widehat{\pi}^{(t)}(a \mymid s) \defn 
	\exp \Big(\theta^{(t)}(s,a) - \max_{a' \in \mathcal{A}_s}\theta^{(t)}(s, a')\Big). 
\end{align*}
In view of \eqref{eq:theta-t-a0-a1-a2-relative-ordering-Stage1-123}, the rescaled policy can therefore be written as   
\begin{align}
\label{eqn:chanchanchan}
\begin{array}{cc}
	&\widehat{\pi}^{(t)}(a_{2}\mymid s)  =\exp\big(\theta^{(t)}(s,a_{2})-\theta^{(t)}(s,a_{0})\big)=\exp\big(2\theta^{(t)}(s,a_{2})+\theta^{(t)}(s,a_{1})\big)\\
	&\widehat{\pi}^{(t)}(a_{1}\mymid s)  =\exp\big(\theta^{(t)}(s,a_{1})-\theta^{(t)}(s,a_{0})\big)=\exp\big(2\theta^{(t)}(s,a_{1})+\theta^{(t)}(s,a_{2})\big)
\end{array}
\end{align}
for any $t$ with $\tlow\leq t < \min\{\that,\tprime\}$, where we have used the constraint $\sum_{a} \theta^{(t)}(s,a) = 0$ (see Part (vii) of Lemma~\ref{lem:basic-properties-MDP-pi}).


\paragraph{Showing $\theta^{(t)}(s, a_1) - \theta^{(\tlow)}(s, a_1) \le 1/2$ by induction.} 

In the following, we seek to prove by induction the following key property
\begin{align}
	\theta^{(t)}(s, a_1) - \theta^{(\tlow)}(s, a_1) \le 1/2 
	\label{eq:claim-deviation-theta-a1-theta-a1-tlow-Stage1}
\end{align}
for any $t$ that obeys $\tlow\leq t \leq \min\{\that,\tprime\}$ and  
\begin{align}
	 t - \tlow \leq \frac{225}{\cp\cm\eta(1-\gamma)^2\exp\big(\theta^{(\tlow)}(s, a_1)\big)}
	 \eqqcolon \widetilde{t}.
	\label{eq:gap-t-tlow-UB-stage1}
\end{align}
We shall return to justify \eqref{eq:gap-t-tlow-UB-stage1} for all $t$ within this stage later on.  
In words, the claim \eqref{eq:claim-deviation-theta-a1-theta-a1-tlow-Stage1} essentially means that $\theta^{(t)}(s, a_1)$ does not deviate much from $\theta^{(\tlow)}(s, a_1)$ during this stage. 
With regards to the base case where $t = \tlow$, the hypothesis \eqref{eq:claim-deviation-theta-a1-theta-a1-tlow-Stage1} holds true trivially. 
Next, assuming that \eqref{eq:claim-deviation-theta-a1-theta-a1-tlow-Stage1} is satisfied for every integer less than or equal to $t-1$,   
we intend to establish this hypothesis for the $t$-th iteration, which is accomplished as follows.

First, Lemma~\ref{lem:basic-properties-MDP-Vstar} and Lemma~\ref{lem:non-negativity-PG} tell us that $Q^{(t)}(s, a_1) - V^{(t)}(s)\leq 1$. 
%
%
It then follows that 
\begin{align*}
\frac{\partial V^{(t)}(\mu)}{\partial \theta(s, a_1)} 
	=  \frac{1}{1-\gamma} d^{(t)}_{\mu}(s) \pi^{(t)}(a_1 \mymid s) \Big\{ Q^{(t)}(s, a_1) - V^{(t)}(s) \Big\}
\leq 14\cm\eta(1-\gamma)\pi^{(t)}(a_1 \mymid s),
\end{align*}
which relies on the bound $d_{\mu}^{(t)}(s) \leq 14 \cm (1-\gamma)^{2}$  stated in Lemma~\ref{lem:facts-d-pi-s}.
As a result, it can be derived from the PG update rule \eqref{eq:PG-update} that 
\begin{align}
\label{eqn:theta-diff-upp-viva}
	\notag \theta^{(t)}(s, a_1) - \theta^{(\tlow)}(s, a_1) 
	& =\sum_{j=\tlow}^{t-1} \eta\frac{\partial \soft{V}^{(j)}(\mu)}{\partial \theta(s, a_1)} 
\le \sum_{j=\tlow}^{t-1} 14\cm\eta(1-\gamma)\pi^{(j)}(a_1 \mymid s) \\
	& \le 14\cm\eta(1-\gamma)(t-\tlow) \max_{\tlow \leq j < t}\pi^{(j)}(a_1 \mymid s).
\end{align}
Regarding the term involving $\pi^{(j)}(a_1 \mymid s)$, we observe that for any $\tlow \leq j < t$, 
\begin{align}
 \pi^{(j)}(a_1 \mymid s)
& \overset{\mathrm{(i)}}{\leq} 
\widehat{\pi}^{(j)}(a_1 \mymid s)
	\overset{\mathrm{(ii)}}{\leq} \exp\Big(\frac{3}{2}\theta^{(j)}(s, a_1) \Big)  \label{eq:pij-a1-thetaj-a1-connection} \\
	& \overset{\mathrm{(iii)}}{\leq} \exp\Big(\frac{3}{2} \big( \theta^{(\tlow) }(s, a_1)+ 1/2 \big) \Big) .
\label{eqn:viva-brahms}
\end{align}
Here, (i) is a consequence of \eqref{eq:relation-pihat-pi}, 
(ii) holds since (in view of \eqref{eqn:chanchanchan}, $\theta^{(j)}(s,a_0)\geq 0$, and $\sum_a\theta^{(j)}(s,a)=0$)
\begin{align*}
\widehat{\pi}^{(j)}(a_{1}\mymid s) & =\exp\Big(2\theta^{(j)}(s, a_{1} )+\theta^{(j)}(s, a_{2})\Big)
  \leq\exp\Big(2\theta^{(j)}(s, a_{1})+0.5\theta^{(j)}(s, a_{2})+0.5\theta^{(j)}(s, a_{0}  )\Big) \\
	&=\exp\Big(1.5\theta^{(j)}(s, a_{1})\Big) , 
\end{align*}
whereas (iii) follows from the induction hypothesis \eqref{eq:claim-deviation-theta-a1-theta-a1-tlow-Stage1} for any $\tlow\leq j<t$.
Combine the inequalities~\eqref{eqn:theta-diff-upp-viva} and \eqref{eqn:viva-brahms} to reach
\begin{align*}
	\theta^{(t)}(s, a_1) - \theta^{(\tlow)}(s, a_1) \leq 
	14\cm\eta(1-\gamma) \big( t-\tlow \big) \exp\Big(\frac{3}{2} \big( \theta^{(\tlow) }(s, a_1)+ 1/2 \big) \Big).
\end{align*}
Consequently, under the constraint~\eqref{eq:gap-t-tlow-UB-stage1}, 
the preceding inequality implies that
\begin{align}
\label{eqn:laugh}
\notag \theta^{(t)}(s, a_1) - \theta^{(\tlow)}(s, a_1) &\leq~ 
	14\cm\eta(1-\gamma)\frac{225}{\cp\cm\eta(1-\gamma)^2\exp(\theta^{(\tlow)}(s, a_1))} \exp\Big(\frac{3}{2} \big( \theta^{(\tlow) }(s, a_1)+ 1/2 \big) \Big) \\
&=~\frac{3150 e\exp\big(\frac{1}{2}\theta^{(\tlow)}(s, a_1)\big)}{\cp(1-\gamma)} \le \frac{1}{2},
\end{align}
where the last inequality makes use of \eqref{eq:theta-tlow-UB-Stage1} and 
 the assumption~\eqref{eq:lower-bound-t-sminus2-Stage1}. 
%
%
These allow us to establish the induction hypothesis for the $t$-th iteration, namely, 
\begin{align}
\label{eqn:laugh2}
\theta^{(t)}(s, a_1) - \theta^{(\tlow)}(s, a_1)  
\le 1/2 .
\end{align}

\paragraph{Validating the constraint \eqref{eq:gap-t-tlow-UB-stage1} and upper bounding $\min\{\that,\tprime\} - \tlow$.}

It remains to justify the assumed condition~\eqref{eq:gap-t-tlow-UB-stage1} for all iteration within this stage. 
To this end, suppose instead that
\begin{align}
	\tlow + \widetilde{t} \leq \min\{\that,\tprime\} ,
	\label{eq:instead-assumption-tlow-that}
\end{align}
where $\widetilde{t}$ is defined in \eqref{eq:gap-t-tlow-UB-stage1}.
We claim that the following relation is satisfied 
\begin{align}
	\label{eqn:break-down-target}
	\widehat{\pi}^{(t)}(a_2 \mymid s) - \widehat{\pi}^{(t-1)}(a_2 \mymid s)  \ge&~\frac{\cp\cm}{150}\eta(1-\gamma)^2 \Big[\widehat{\pi}^{(t-1)}(a_2 \mymid s) \Big]^2 
\end{align}
for any $t$ obeying $\tlow \leq t \leq \tlow + \widetilde{t} \leq \min\{\that,\tprime\}$.  
Equipped with this recursive relation,
we can invoke Lemma~\ref{lem:opt-lemma} to develop a lower bound on $\widehat{\pi}^{(t)}(a_2 \mymid s)$, provided that an initial lower bound is available. 
In order to do so, in view of the expression~\eqref{eqn:chanchanchan}, we can deduce that
\begin{align*}
	\widehat{\pi}^{(\tlow)}(a_2 \mymid s) = \exp\Big(2\theta^{(\tlow)}(s, a_2)+\theta^{(\tlow)}(s, a_1) \Big) 
	\ge \exp\Big(\theta^{(\tlow)}(s, a_1) \Big),
\end{align*}
where the last relation is due to the bound $\theta^{(\tlow)}(s, a_2) \geq 0$ (see \eqref{eqn:phantom} in Lemma~\ref{lem:inter}). 
Combining the above two inequalities and applying Lemma~\ref{lem:opt-lemma} (see \eqref{eq:t0-UB-opt-lemma}), we arrive at 
$\pi^{(t)}(s, a_2) \geq 1/2$ --- and hence $\pi^{(t)}(s, a_2) \geq \pi^{(t)}(s, a_0) $ --- as soon as $t- \tlow$ exceeds 
\begin{align*}  
	  \frac{1 + \frac{\cp\cm}{100}\eta(1-\gamma)^2}{\frac{\cp\cm}{150}\eta(1-\gamma)^2 \widehat{\pi}^{(\tlow)}(a_2 \mymid s)} .
\end{align*}
This together with the definition of $\that$ thus indicates that
\begin{align*}
	 \that - \tlow 
	 & \leq \frac{1 + \frac{\cp\cm}{150}\eta(1-\gamma)^2}{\frac{\cp\cm}{150}\eta(1-\gamma)^2 \widehat{\pi}^{(\tlow)}(a_2 \mymid s)}
	  \leq 
	\frac{1 + \frac{\cp\cm}{150}\eta(1-\gamma)^2}{\frac{\cp\cm}{150}\eta(1-\gamma)^2 \exp\big( \theta^{(\tlow)}(s, a_1) \big)} \\
	& \leq 
	\frac{1.5}{\frac{\cp\cm}{150}\eta(1-\gamma)^2 \exp\big( \theta^{(\tlow)}(s, a_1) \big)},
\end{align*}
provided that $\frac{\cp\cm}{150}\eta(1-\gamma)^2 \leq 0.5$. 
This, however, contradicts the assumption \eqref{eq:instead-assumption-tlow-that}. 
As a consequence, we conclude that $\tlow + \widetilde{t} > \min\{\that,\tprime\}$, thus indicating that 
\begin{align}
	\min\{\that,\tprime\} - \tlow \leq \widetilde{t} \leq \frac{225}{\cp\cm\eta(1-\gamma)^2\exp\big(\theta^{(\tlow)}(s, a_1)\big)}. 
	\label{eq:min-that-tprime-tlow-UB}
\end{align}
%


\paragraph{Showing that $\that = \min\{\that,\tprime\} $.} 

We now justify that $\that< \tprime$, so that the upper bound \eqref{eq:min-that-tprime-tlow-UB} leads to an upper bound on $\that - \tlow$.  
Suppose instead that 
\[
	\that \geq \tprime, \qquad \text{or equivalently,} \qquad \tprime = \min\{\that,\tprime\}, 
\]
and we would like to show that this leads to contradiction. 
By definition of $\tprime$, we have
\[
	\theta^{(\tprime)}(s, a_0) \leq \theta^{(\tprime)}(s, a_1) - \log\big(\cprime(1-\gamma)\big).
\]
This further yields
\begin{align*}
	\max\Big\{ \theta^{(\tprime)}(s, a_0) , \theta^{(\tprime)}(s, a_1) \Big\}
	&\leq \theta^{(\tprime)}(s, a_1) - \log\big(\cprime(1-\gamma)\big) \notag\\
	&\leq \theta^{(\tlow)}(s, a_1) + 1/2 - \log\big(\cprime(1-\gamma)\big)  < 0 ,
\end{align*}
where the second inequality arises from \eqref{eq:claim-deviation-theta-a1-theta-a1-tlow-Stage1}, 
and the last one makes use of \eqref{eq:theta-tlow-UB-Stage1} as long as $t_{s-2}(\taun_{s-2})$.  
However, this together with the constraint $\sum_a \theta^{(\tprime)}(s, a)=0$ implies that
\[
	\theta^{(\tprime)}(s, a_2) = - \theta^{(\tprime)}(s, a_0) - \theta^{(\tprime)}(s, a_1) > 0 
	> \max\Big\{ \theta^{(\tprime)}(s, a_0) , \theta^{(\tprime)}(s, a_1) \Big\}. 
\]
which, however, implies that $\tprime > \that$ (according to the definition of $\that$) and leads to contradiction. 
As a result, we  conclude that 
\begin{align}
	\that< \tprime,
	\label{eq:that-smaller-than-tprime}
\end{align}
and the bound \eqref{eq:min-that-tprime-tlow-UB} then indicates that
\begin{align}
	\that - \tlow \leq \frac{225}{\cp\cm\eta(1-\gamma)^2\exp\big(\theta^{(\tlow)}(s, a_1)\big)}. 
	\label{eq:min-that-tlow-UB}
\end{align}

\subsubsection{Proof of the inequality~\eqref{eqn:break-down-target}}

From the relation \eqref{eqn:chanchanchan}, one can deduce that 
\begin{align}
\label{eqn:break-down-final}
\notag \widehat{\pi}^{(t)}(a_2 \mymid s) - \widehat{\pi}^{(t-1)}(a_2 \mymid s) 
=&~\exp\Big(2\theta^{(t)}(s, a_2) + \theta^{(t)}(s, a_1) \Big) - \exp\Big(2\theta^{(t-1)}(s, a_2) + \theta^{(t-1)}(s, a_1)\Big) \\
\notag =&~\widehat{\pi}^{(t-1)}(a_2 \mymid s) \Big\{ \exp \Big(2\theta^{(t)}(s, a_2) - 2\theta^{(t-1)}(s, a_2) + \theta^{(t)}(s, a_1)  - \theta^{(t-1)}(s, a_1) \Big) - 1 \Big\} \\
\notag \geq&~\widehat{\pi}^{(t-1)}(a_2 \mymid s) \Big\{ 2\theta^{(t)}(s, a_2) - 2\theta^{(t-1)}(s, a_2) + \theta^{(t)}(s, a_1)  - \theta^{(t-1)}(s, a_1)  \Big\} \\
=&~\widehat{\pi}^{(t-1)}(a_2 \mymid s) \cdot \eta\Big(2\frac{\partial \soft{V}^{(t-1)}(\mu)}{\partial \theta(s, a_2)} + \frac{\partial \soft{V}^{(t-1)}(\mu)}{\partial \theta(s, a_1)}\Big)
\end{align}
for any $t$ with $\tlow\leq t \leq \min\{\that,\tprime\}$, where the inequality above follows from the elementary fact $e^{x}-1\geq x$ for any $x\in \mathbb{R}$. 
Therefore, the difference between $\widehat{\pi}^{(t)}(a_2 \mymid s)$ and $\widehat{\pi}^{(t-1)}(a_2 \mymid s)$
depends on both $\frac{\partial \soft{V}^{(t-1)}(\mu)}{\partial \theta(s, a_2)}$ 
and $\frac{\partial \soft{V}^{(t-1)}(\mu)}{\partial \theta(s, a_1)}$,
motivating us to lower bound these two derivatives separately.

\paragraph{Step 1: bounding $\frac{\partial \soft{V}^{(t)}(\mu)}{\partial \theta(s, a_2)}$.}

First, we make the observation that for any $3 \le s < H$ and any $t \ge \tlow$, 
\begin{align}
	\label{eqn:Q-diff-a2-a0-final}
	Q^{(t)}(s, a_2) - Q^{(t)}(s, a_0) = \gamma p \big( V^{(t)}(\overline{s-2}) - \gamma \tau_{s-2} \big) 
	\ge \gamma p \big(\gamma^{2s-3} - 1/4 - \gamma \tau_{s-2} \big) \ge \frac{p}{8} = \frac{\cp(1-\gamma)}{8}
\end{align}
holds as long as $\gamma (\gamma^{2s-3} - 1/4 - \gamma \tau_s) \geq 1/8.$ 
Here, the first identity comes from \eqref{eq:Q-s2-pi-LB-UB-a} in Lemma~\ref{lem:basic-properties-MDP-pi}, 
and the first inequality holds for any $t\geq t_{\overline{s-2}}(\gamma^{2s-3}-1/4)$ --- a consequence of the monotonicity property in Lemma~\ref{lem:ascent-lemma-PG}. 
As a result, for any $t$ obeying $\tlow\leq t \leq \min\{\that,\tprime\}$ we have
\begin{align}
\notag	Q^{(t)}(s, a_2) - V^{(t)}(s) 
	&= \pi^{(t)}(a_0 \mymid s)\Big(Q^{(t)}(s, a_2) - Q^{(t)}(s, a_0)\Big) + \pi^{(t)}(a_1 \mymid s)\Big(Q^{(t)}(s, a_2) - Q^{(t)}(s, a_1)\Big)\\
	& \geq  \frac{\cp(1-\gamma)}{24} - \pi^{(t)}(a_1 \mymid s) 
	\geq \frac{\cp(1-\gamma)}{24} - \cprime (1-\gamma) \geq \frac{\cp(1-\gamma)}{48},
\end{align}
where the first inequality combines \eqref{eqn:Q-diff-a2-a0-final} with the facts that $\pi^{(t)}(a_0 \mymid s)\geq 1/3$ (see \eqref{eq:theta-t-a0-a1-a2-relative-ordering-Stage1-123}) and $0\leq Q^{(t)}(s, a_2) , Q^{(t)}(s, a_1)\leq 1$ (see Lemma~\ref{lem:basic-properties-MDP-Vstar}), and  the last line holds by observing (see \eqref{eqn:defn-tprime})
\[
	\pi^{(t)}(a_1 \mymid s) \leq \cprime (1-\gamma) \pi^{(\tlow)}(a_0 \mymid s) \leq \cprime (1-\gamma)
	\qquad \text{for all }t\in [\tlow, \tprime) 
\]
and using the assumption $\cprime \leq \cp/2$. 
Consequently, for any $t \geq \tlow$, the gradient w.r.t.~$\theta(s, a_2)$ satisfies 
%
\begin{align}
\notag 
\frac{\partial V^{(t)}(s)}{\partial \theta(s, a_2)} 
	=&~  \frac{1}{1-\gamma} d^{(t)}_{\mu}(s) \pi^{(t)}(a_2 \mymid s) \Big( Q^{(t)}(s, a_2) - V^{(t)}(s) \Big) \\
%
	\ge&~ \frac{\cp\cm\gamma}{48}(1-\gamma)^2{\pi}^{(t)}(a_2 \mymid s) \geq \frac{\cp\cm\gamma}{144}(1-\gamma)^2\widehat{\pi}^{(t)}(a_2 \mymid s)  > 0, 
	\label{eqn:brahms-forever}
\end{align}
where the first inequality above also makes use of the lower bound in Lemma~\ref{lem:facts-d-pi-s-LB}. 

 In fact, the above lower bound holds true regardless of $t$, as long as $t \geq \tlow$ where we
 have shown that $\frac{\partial V^{(t-1)}(\mu)}{\partial \theta(s, a_2)}$ is bounded from below by $0$. 
One can thus conclude that the iterate $\theta^{(t)}(s, a_2)$ increases with $t$.

\paragraph{Step 2: bounding $\frac{\partial \soft{V}^{(t)}(\mu)}{\partial \theta(s, a_1)}$.}  

Regarding the gradient w.r.t.~$\theta(s, a_1)$, we have
\begin{align*}
	 \frac{\partial V^{(t)}(\mu)}{\partial \theta(s, a_1)}
	&=  \frac{1}{1-\gamma} d^{(t)}_{\mu}(s) \pi^{(t)}(a_1 \mymid s) \Big( Q^{(t)}(s, a_1) - V^{(t)}(s) \Big) \\
	& \hspace{-0.4in} = \frac{1}{1-\gamma} d^{(t)}_{\mu}(s) \pi^{(t)}(a_1 \mymid s)
	\Big\{ \pi^{(t)}(a_0 \mymid s)\Big(Q^{(t)}(s, a_1) - Q^{(t)}(s, a_0)\Big) + \pi^{(t)}(a_2 \mymid s)\Big( Q^{(t)}(s, a_1) - Q^{(t)}(s, a_2) \Big) \Big\}\\
	& \hspace{-0.4in}  \geq  \frac{1}{1-\gamma} d^{(t)}_{\mu}(s) \pi^{(t)}(a_1 \mymid s)
	\Big(\pi^{(t)}(a_0 \mymid s) + \pi^{(t)}(a_2 \mymid s)\Big)(\gamma\tau_s - \gamma^{\frac{1}{2}}\tau_s),
\end{align*}
where the last line follows since (see Lemma~\ref{lem:basic-properties-MDP-pi} and the fact that $t\geq t_{\overline{s-1}}(\tau_s)$)
\begin{align*}
	\max\big\{ Q^{(t)}(s,a_{0}), Q^{\pi}(s,a_{2}) \big\} \leq \gamma^{\frac{1}{2}}\tau_{s}, 
	\qquad
	Q^{(t)}(s,a_{1}) = \gamma V^{(t)}(\overline{s-1}) \geq \gamma \tau_s. 
\end{align*}
In addition, recognizing that $\pi^{(t)}(a_0 \mymid s) + \pi^{(t)}(a_2 \mymid s) \leq 1$
and $d_{\mu}^{(t)}(s) \leq 14 \cm (1-\gamma)^{2}$ (see Lemma~\ref{lem:facts-d-pi-s}), 
we can continue the above bound to obtain
\begin{align}
\label{eqn:yanzi-first-day}
\frac{\partial V^{(t)}(\mu)}{\partial \theta(s, a_1)} 
\geq &~ - 14 \cm (1-\gamma)\pi^{(t)}(a_1 \mymid s)\tau_s\gamma^{\frac{1}{2}}  \frac{1-\gamma}{1+\sqrt{\gamma}} 
\geq  - 7 \cm (1-\gamma)^2 \widehat{\pi}^{(t)}(a_1 \mymid s),
\end{align}
where the last inequality is due to $\tau_{s} \leq 1/2$ and $0< \gamma < 1$ and the bound \eqref{eq:relation-pihat-pi}.

\paragraph{Step 3: connecting  $\widehat{\pi}^{(t)}(a_1 \mymid s)$ with $\widehat{\pi}^{(t)}(a_2 \mymid s)$.}

The above lower bound \eqref{eqn:yanzi-first-day} on $\frac{\partial \soft{V}^{(t)}(\mu)}{\partial \theta(s, a_1)}$ is dependent on 
$\widehat{\pi}^{(t)}(a_1 \mymid s)$. However, the desired lower bound  \eqref{eqn:break-down-target} is only a function of $\widehat{\pi}^{(t)}(a_2 \mymid s)$. 
This motivates us to investigate the connection between  $\widehat{\pi}^{(t)}(a_1 \mymid s)$ and $\widehat{\pi}^{(t)}(a_2 \mymid s)$.

To this end, let us write
\begin{align}
\label{eqn:pi-pi-relation}
	\widehat{\pi}^{(t-1)}(a_1 \mymid s)
	=
	\widehat{\pi}^{(t-1)}(a_2 \mymid s)\exp\Big(\theta^{(t-1)}(s, a_1) - \theta^{(t-1)}(s, a_2)\Big).
\end{align}
As a result, one only needs to control the quantity $\exp\big(\theta^{(t-1)}(s, a_1) - \theta^{(t-1)}(s, a_2)\big)$.
In order to do so, we make use of the induction hypothesis \eqref{eq:claim-deviation-theta-a1-theta-a1-tlow-Stage1} for the $(t-1)$-th iteration to show that 
\begin{align*}
\exp\Big(\theta^{(t-1)}(s, a_1) - \theta^{(t-1)}(s, a_2) \Big) 
&\le 
\exp\Big(\theta^{(\tlow)}(s, a_1)+ 1/2 - \theta^{(t-1)}(s, a_2)\Big) \\
	&\stackrel{\mathrm{(i)}}{\le}
\exp\Big(\theta^{(\tlow)}(s, a_1)+ 1/2 - \theta^{(\tlow)}(s, a_2)\Big) \\
	&\stackrel{\mathrm{(ii)}}{\le} 
\exp\Big(\theta^{(t_{s-2}(\taun_{s-2}))}(s, a_1)+ 1/2 \Big).
\end{align*}
Here, (i) follows from the fact that $\theta^{(t)}(s, a_2)$ increases with $t$ (see \eqref{eqn:brahms-forever}); 
and (ii) comes from the inequality \eqref{eqn:phantom} in Lemma~\ref{lem:inter} as well as \eqref{eq:theta-tlow-UB-Stage1}. 
Recalling Lemma~\ref{lem:induc-theta-t-s-2}, one has 
\begin{align}
\label{eqn:exp-diff}
\notag \exp\Big( \theta^{(t-1)}(s, a_1) - \theta^{(t-1)}(s, a_2) \Big) 
&\leq \exp\Big(\theta^{(t_{s-2}(\taun_{s-2}))}(s, a_1)+ 1/2 \Big) \\
&\leq \frac{\exp(1/2)}{\sqrt{1+\frac{\cm\gamma}{35} \eta(1-\gamma)^2 t_{s-2}(\taun_{s-2})}}
\le \frac{\gamma \cp}{1050}, 
\end{align}
where the last inequality is satisfied provided that 
$t_{s-2}(\taun_{s-2}) \geq \frac{1050^2 e}{\frac{\cm\gamma^3}{35} \eta(1-\gamma)^2 \cp^2}$.
Combining \eqref{eqn:yanzi-first-day} with \eqref{eqn:pi-pi-relation} and \eqref{eqn:exp-diff},
we arrive at 
\begin{align}
\label{eqn:gradient-part-ii-final}
\frac{\partial \soft{V}^{(t-1)}(\mu)}{\partial \theta(s, a_1)} 
\geq &~ - \frac{\cp\cm}{150} (1-\gamma)^2 \widehat{\pi}^{(t-1)}(a_2 \mymid s).	
\end{align}

\paragraph{Step 4: combining bounds.} 
Putting the above pieces together and invoking the expression~\eqref{eqn:break-down-final} yield for $\gamma > 0.96$,
\begin{align*}
\widehat{\pi}^{(t)}(a_2 \mymid s) - \widehat{\pi}^{(t-1)}(a_2 \mymid s) 
	&\geq \widehat{\pi}^{(t-1)}(a_2 \mymid s) \cdot \eta\Big(2\frac{\partial {V}^{(t-1)}(\mu)}{\partial \theta(s, a_2)} + \frac{\partial {V}^{(t-1)}(\mu)}{\partial \theta(s, a_1)}\Big) \\
	& \geq 
	\Big[ \widehat{\pi}^{(t-1)}(a_2 \mymid s) \Big]^2 \eta
\frac{\cp\cm}{150} (1-\gamma)^2 ,
\end{align*}
which concludes the proof of the advertised bound~\eqref{eqn:break-down-target}.

\subsection{Stage II: the duration where $\theta^{(t)}(s, a_2) \ge \theta^{(t)}(s, a_0)$}
\label{sec:Stage-II-duration-a2-larger-a0}

We now turn attention to the case where $t$ lies within  $[\that, \tprime)$, which is a non-empty interval according to \eqref{eq:that-smaller-than-tprime}.
In this case one has
\begin{align}
	\theta^{(t)}(s, a_2) \geq \theta^{(t)}(s, a_0), 
	\qquad \text{or equivalently,} \qquad
	\pi^{(t)}(s, a_2) \geq \pi^{(t)}(s, a_0),
	\label{eq:theta-t-a0-a0-relative-cprime-Stage2}
\end{align}
as a consequence of the definition \eqref{eq:defn-that} of $\that$. 
Again, from the definition \eqref{eqn:defn-tprime} of $\tprime$,
the inequality $\cprime (1-\gamma)\pi^{(t)}(a_0\mymid s) > \pi^{(t)}(a_1 \mymid s)$ holds true
for every $t \in [\that, ~\tprime)$, or equivalently,
\begin{align}
	\theta^{(t)}(s, a_0) \geq \theta^{(t)}(s, a_1) - \log\big(\cprime (1-\gamma) \big)
	\qquad \text{for all }t \in [\that, ~\tprime).
	\label{eq:theta-t-a0-a1-relative-cprime-Stage2}
\end{align}
The goal of this subsection is to show that 
$\theta^{(t)}(s, a_1) - \theta^{(\that)}(s, a_1) \le 1/2$ throughout this stage.

\paragraph{Preparation.} 
From the above conditions \eqref{eq:theta-t-a0-a0-relative-cprime-Stage2} and \eqref{eq:theta-t-a0-a1-relative-cprime-Stage2}, we have 
\begin{align}
	\pi^{(t)}(s, a_2)\geq \pi^{(t)}(s, a_0) \geq \pi^{(t)}(s, a_1) 
	\qquad \text{and hence}\qquad
	\pi^{(t)}(s, a_2)\geq 1/3.
	\label{eq:relative-order-pi-a2-a0-a1-Stage2-final}
\end{align}
We now look at the gradient w.r.t.~$\theta(s, a_0)$, for which we first observe that 
\begin{align}
	\notag Q^{(t)}(s, a_0) - V^{(t)}(s) =&~\pi^{(t)}(a_2 \mymid s) \Big( Q^{(t)}(s, a_0) - Q^{(t)}(s, a_2) \Big) 
	+ \pi^{(t)}(a_1 \mymid s) \Big( Q^{(t)}(s, a_0) - Q^{(t)}(s, a_1) \Big) \\
	 \stackrel{\mathrm{(i)}}{\le} &-\frac{\cp(1-\gamma)}{24} + \cprime (1-\gamma) 
	\stackrel{\mathrm{(ii)}}{\le}  -\frac{\cp(1-\gamma)}{36} < 0. \label{eqn:q-a0-final-stage}
\end{align}
Here, (i) follows from the inequalities~\eqref{eqn:Q-diff-a2-a0-final} and \eqref{eq:relative-order-pi-a2-a0-a1-Stage2-final}, whereas (ii) holds true as long as $\cprime\leq {\cp}/{72}$. Consequently, 
\[
	\frac{\partial {V}^{(t)}(\mu)}{\partial \theta(s, a_0)} =  \frac{1}{1-\gamma} d^{(t)}_{\mu}(s) \pi^{(t)}(a_0 \mymid s) \Big( Q^{(t)}(s, a_0) - V^{(t)}(s) \Big)
	< 0,
\]
%
thus indicating that $\theta^{(t)}(s, a_0)$ is decreasing with $t$.

\paragraph{Key induction hypotheses.}
Again, we seek to prove by induction that 
\begin{align}
	\theta^{(t)}(s, a_1) - \theta^{(\that)}(s, a_1) \le 1/2, 
	\qquad  t\in [\that, \tprime).
	\label{eq:theta-a1-t-that-gap-half-Stage2}
\end{align}
For the base case where $t = \that$, this claim trivially holds true. 
Now suppose that the induction hypothesis \eqref{eq:theta-a1-t-that-gap-half-Stage2} is satisfied for every iteration up to $t-1$,
and we would like to establish it for the $t$-th iteration. 
Towards this, we find it helpful to introduce another auxiliary induction hypothesis 
\begin{align}
\label{eqn:recursive-for-a0}
\widehat{\pi}^{(i)}(a_0 \mymid s) \le \frac{1}{1+\frac{\cp\cm}{288}\eta(1-\gamma)^2(i-\that)} 
	\qquad \text{for all } i\in [\that, t) .
\end{align}
As an immediate remark, this hypothesis trivially holds true when $t=\that +1$. In what follows, we shall first establish \eqref{eq:theta-a1-t-that-gap-half-Stage2} for the $t$-th iteration assuming satisfaction of \eqref{eqn:recursive-for-a0},  and then use to demonstrate that \eqref{eqn:recursive-for-a0} holds for $i=t$ as well.

\paragraph{Inductive step 1: showing that $\theta^{(t)}(s, a_1) - \theta^{(\that)}(s, a_1) \le 1/2$.}

Towards this, let us introduce for convenience another  time instance
\begin{align}
	\ttilde \defn \arg \max_{i:\,  \that \leq i < t} \theta^{(i)}(s, a_1),
	\label{eq:defn-ttilde-Stage2-finale}
\end{align}
which reflects the time when $\theta^{(i)}(s, a_1)$ reaches its maximum before iteration $t$. 
In order to establish the induction hypothesis \eqref{eq:theta-a1-t-that-gap-half-Stage2} for the $t$-th iteration, 
it is sufficient to demonstrate that 
\begin{align}
	\theta^{(\ttilde)}(s, a_1) - \theta^{(\that)}(s, a_1) \leq 1/2.
	\label{eq:theta-a1-t-that-gap-half-Stage2-tilde}
\end{align}
As before, let us employ the PG update rule \eqref{eq:PG-update} to expand  $\theta^{(\ttilde)}(s, a_1) - \theta^{(\that)}(s, a_1) $ as follows
\begin{align}
\label{eqn:standard-break-final}
\theta^{(\ttilde)}(s, a_1) - \theta^{(\that)}(s, a_1) 
=&~\sum_{i = \that}^{\ttilde -1} \eta\frac{\partial \soft{V}^{(i)}(\mu)}{\partial \theta(s, a_1)}.
\end{align}
For each gradient $\frac{\partial \soft{V}^{(i)}(\mu)}{\partial \theta(s, a_1)}$,  invoking Lemma~\ref{lem:facts-d-pi-s}, Lemma~\ref{lem:basic-properties-MDP-Vstar} and Lemma~\ref{lem:non-negativity-PG}  tells us that
\begin{align}
\label{eqn:schumann-melody}
\frac{\partial \soft{V}^{(i)}(\mu)}{\partial \theta(s, a_1)} 
	=&~  \frac{1}{1-\gamma} d^{(i)}_{\mu}(s) \pi^{(i)}(a_1 \mymid s) \big( \soft{Q}^{(i)}(s, a_1) - \soft{V}^{(i)}(s) \big)
\le 14\cm(1-\gamma)\pi^{(i)}(a_1 \mymid s).
\end{align}
In addition, a little algebra together with \eqref{eq:relative-order-pi-a2-a0-a1-Stage2-final} leads to 
\begin{align*}
	\pi^{(i)}(a_1 \mymid s) &\leq \widehat{\pi}^{(i)}(a_1 \mymid s) 
= \exp \Big(\theta^{(i)}(s, a_1) - \theta^{(i)}(s, a_2)\Big) 
	\stackrel{\mathrm{(i)}}{=}  \exp\Big( \frac{3}{2}\theta^{(i)}(s,a_{1}) + \frac{1}{2}\theta^{(i)}(s,a_{0})-\frac{1}{2}\theta^{(i)}(s,a_{2}) \Big)  \\
	&= \exp\Big(\frac{3}{2}\theta^{(i)}(s, a_1)\Big)\sqrt{\widehat{\pi}^{(i)}(a_0 \mymid s)} 
	\stackrel{\mathrm{(ii)}}{\leq} \exp\Big(\frac{3}{2}\theta^{(\ttilde)}(s, a_1)\Big)\frac{1}{\sqrt{1+ \frac{\cp\cm}{288}\eta(1-\gamma)^2(i-\that)}}
\end{align*}
for any $i$ obeying $\tlow \leq i < \ttilde$, 
where the first inequality comes from \eqref{eq:relation-pihat-pi}, (i) makes use of $\sum_a \theta^{(i)}(s, a) = 0$, and (ii) follows from the induction hypothesis~\eqref{eqn:recursive-for-a0} along 
with the definition \eqref{eq:defn-ttilde-Stage2-finale} of $\ttilde$. 
%


Putting the above bounds together with \eqref{eqn:standard-break-final} and \eqref{eqn:schumann-melody} guarantees that
\begin{align}
\notag \theta^{(\ttilde)}(s, a_1) - \theta^{(\that)}(s, a_1) 
%
\le&~\sum_{i = \that}^{\ttilde-1} 14\cm\eta(1-\gamma)\exp\Big(\frac{3}{2}\theta^{(\ttilde)}(s, a_1)\Big)\frac{1}{\sqrt{1+\frac{\cp\cm}{288}\eta(1-\gamma)^2(i-\that)}}\\
\notag =&~
\frac{14\cm\eta\exp\big(\frac{3}{2}\theta^{(\ttilde)}(s, a_1)\big)}{\sqrt{\frac{\cp\cm}{288}\eta}
	} \left\{ 1 + \sum_{i = \that+1}^{\ttilde-1} \frac{1}{\sqrt{i-\that}} \right\} \\
\label{eqn:schumann-piano-quartet}
\leq&~ 
\sqrt{\frac{225792 \cm\eta(\ttilde - \that)}{\cp}}\exp\Big(\frac{3}{2}\theta^{(\ttilde)}(s, a_1)\Big).
\end{align}
Given that $\theta^{(\ttilde)}(s, a_0) \geq \theta^{(\ttilde)}(s, a_1) - \log\big(\cprime(1-\gamma)\big)$ 
(see \eqref{eq:theta-t-a0-a1-relative-cprime-Stage2}) and $\sum_a \theta^{(\ttilde)}(s, a) = 0$, one obtains 
\begin{align*}
	\widehat{\pi}^{(\ttilde)}(a_0 \mymid s) &= \exp\Big(\theta^{(\ttilde)}(s, a_0) - \theta^{(\ttilde)}(s, a_2)\Big) 
	= \exp\Big(2\theta^{(\ttilde)}(s, a_0) + \theta^{(\ttilde)}(s, a_1)\Big)  \\
	&\ge \exp\Big(3\theta^{(\ttilde)}(s, a_1) - 2\log\big(\cprime(1-\gamma) \big) \Big),
\end{align*}
which combined with the inequality~\eqref{eqn:recursive-for-a0} thus implies that
\begin{align}
\label{eqn:exp-term-bound}
\exp\Big(\frac{3}{2}\theta^{(\ttilde)}(s, a_1) \Big) 
	\leq \cprime(1-\gamma)  \sqrt{\widehat{\pi}^{(\ttilde)}(a_0 \mymid s)} 
\leq 
\frac{\cprime(1-\gamma)}{\sqrt{\frac{\cp\cm}{288}\eta(1-\gamma)^2(\ttilde-\that)}}.
\end{align}
As a consequence of the inequalities~\eqref{eqn:schumann-piano-quartet} and \eqref{eqn:exp-term-bound}, we obtain
\begin{align}
\notag \theta^{(\ttilde)}(s, a_1) - \theta^{(\that)}(s, a_1) 
%
\le&~\sqrt{\frac{225792\cm\eta(\ttilde-\that)}{\cp}}\frac{\cprime (1-\gamma)}{\sqrt{\frac{\cp\cm}{288}\eta(1-\gamma)^2(\ttilde-\that)}} \\
\le&~\frac{8064\cprime}{\cp} < \frac{1}{2},
\end{align}
where the last line holds as long as $\cprime < \cp / 16128$. 
This in turn establishes our induction hypothesis \eqref{eq:theta-a1-t-that-gap-half-Stage2-tilde} --- and hence 
\eqref{eq:defn-ttilde-Stage2-finale} for the $t$-th iteration --- assuming satisfaction of the hypothesis \eqref{eqn:recursive-for-a0}.

\paragraph{Inductive step 2: establishing the upper bound \eqref{eqn:recursive-for-a0}.}

The next step is thus to justify the induction hypothesis \eqref{eqn:recursive-for-a0} when $i=t$. 
To do so, we first pay attention to the dynamics of $\theta^{(i)}(s, a_0)$ for any $\that \leq i\leq t$. 
Recognizing that  $\theta^{(i)}(s, a_2)=\max_a \theta^{(i)}(s, a)$ (see \eqref{eq:relative-order-pi-a2-a0-a1-Stage2-final}) and $\sum_a \theta^{(i)}(s, a)=0$,  we can express 
\begin{align*}
\widehat{\pi}^{(i)}(a_0 \mymid s) = \exp\Big(\theta^{(i)}(s, a_0) - \theta^{(i)}(s, a_2)\Big) = \exp\Big(2\theta^{(i)}(s, a_0) + \theta^{(i)}(s, a_1) \Big).
\end{align*}
This allows one to obtain 
\begin{align}
	& \widehat{\pi}^{(i)}(a_0 \mymid s) - \widehat{\pi}^{(i+1)}(a_0 \mymid s) 
	= \exp\Big(2\theta^{(i)}(s, a_0) + \theta^{(i)}(s, a_1) \Big) - \exp\Big(2\theta^{(i+1)}(s, a_0) + \theta^{(i+1)}(s, a_1) \Big) \notag\\
	& \qquad = \widehat{\pi}^{(i)}(a_0 \mymid s) \Big\{ 1 - \exp\Big( 2\theta^{(i+1)}(s, a_0) - 2\theta^{(i)}(s, a_0) + \theta^{(i+1)}(s, a_1)  - \theta^{(i)}(s, a_1) \Big) \Big\} \notag\\
	& \qquad = \widehat{\pi}^{(i)}(a_0 \mymid s) \Big\{ 1 - \exp\Big( 2\eta \frac{\partial \soft{V}^{(i)}(\mu)}{\partial \theta(s, a_0)} + \eta \frac{\partial \soft{V}^{(i)}(\mu)}{\partial \theta(s, a_1)} \Big) \Big\}.
	\label{eq:pi-i-pi-iplus1-a0-diff-Stage2}
\end{align}

With the above observation in mind, we claim for the moment the following recursive relation 
\begin{align}
\label{eqn:break-down-final-part2}
	\widehat{\pi}^{(i)}(a_0 \mymid s) - \widehat{\pi}^{(i+1)}(a_0 \mymid s) 
	\ge&~\frac{\cp\cm}{288}\eta(1-\gamma)^2 \Big[ \widehat{\pi}^{(i)}(a_0 \mymid s) \Big] ^2
\end{align}
for any $i$ obeying $\that \leq i < t$, 
whose proof is deferred to the end of this section. 
If this claim were true, then \eqref{eq:xt-sequence-LB-123} in Lemma~\ref{lem:opt-lemma} allows us to conclude the desired bound
\begin{align}
\label{eqn:recursive-for-a0-temp}
\widehat{\pi}^{(t)}(a_0 \mymid s) \le \frac{1}{1+\frac{\cp\cm}{288}\eta(1-\gamma)^2(t-\that)} .
\end{align}

\paragraph{Proof of the inequality~\eqref{eqn:break-down-final-part2}.} 

Combining~\eqref{eqn:q-a0-final-stage} and the lower bound on $d^{(i)}_{\mu}(s)$ in Lemma~\ref{lem:facts-d-pi-s-LB}, we have 
\begin{align*}
	\frac{\partial V^{(i)}(\mu)}{\partial \theta(s, a_0)} 
	\le&~\cm\gamma(1-\gamma)\pi^{(i)}(a_0 \mymid s)\Big(Q^{(i)}(s, a_0) - V^{(i)}(s)\Big) 
	\le -\frac{\cp\cm}{108}(1-\gamma)^2\widehat{\pi}^{(i)}(a_0 \mymid s),
\end{align*}
where the last inequality also makes use of \eqref{eq:relation-pihat-pi}. 
In addition, invoking the inequalities~\eqref{eqn:schumann-melody} and \eqref{eq:relation-pihat-pi} gives 
\begin{align}
\frac{\partial \soft{V}^{(i)}(\mu)}{\partial \theta(s, a_1)} 
	\le&~14\cm(1-\gamma)\pi^{(i)}(a_1 \mymid s) \le 14\cm(1-\gamma)\widehat{\pi}^{(i)}(a_1 \mymid s)   \label{eq:V-a1-UB-123456}\\
=&~ 14\cm(1-\gamma)\widehat{\pi}^{(i)}(a_0 \mymid s)\exp\Big( \theta^{(i)}(s, a_1) - \theta^{(i)}(s, a_0) \Big). \notag
\end{align}
Recall that for any $i \in [\that,~\tprime)$, one has $\theta^{(i)}(s, a_0) \geq \theta^{(i)}(s, a_1) - \log\big(\cprime(1-\gamma)\big)$,
or equivalently, 
$$
	\exp\Big(\theta^{(i)}(s, a_1) - \theta^{(i)}(s, a_0) \Big) \le \cprime (1-\gamma).
$$
%
It thus follows that 
\begin{align*}
\frac{\partial \soft{V}^{(i)}(\mu)}{\partial \theta(s, a_1)} 
\leq &~ 14\cprime \cm(1-\gamma)^2\widehat{\pi}^{(i)}(a_0 \mymid s).
\end{align*}
As a result, the above bounds taken collectively lead to
\begin{align*}
	2\frac{\partial \soft{V}^{(i)}(\mu)}{\partial \theta(s, a_0)} + \frac{\partial \soft{V}^{(i)}(\mu)}{\partial \theta(s, a_1)} 
	\leq \Big[-\frac{\cp\cm}{56}(1-\gamma)^2 + 14\cprime \cm(1-\gamma)^2 \Big]\widehat{\pi}^{(i)}(a_0 \mymid s)
	\leq -\frac{\cp\cm}{112}(1-\gamma)^2\widehat{\pi}^{(i)}(a_0 \mymid s), 
\end{align*}
provided that $\cprime / \cp < 1/1568$. 
In addition, similar to \eqref{eq:V-a1-UB-123456}, we can easily see that
\begin{subequations}
	\label{eq:v-grad-UB-13579}
\begin{align}
	\eta \Big|\frac{\partial \soft{V}^{(i)}(\mu)}{\partial \theta(s, a_1)}\Big| \le&~14\eta\cm(1-\gamma)\pi^{(i)}(a_1 \mymid s) \le 14\eta\cm(1-\gamma) \leq 1/3 ,\\
	\eta\Big|\frac{\partial \soft{V}^{(i)}(\mu)}{\partial \theta(s, a_0)}\Big| \le&~14\eta\cm(1-\gamma)\pi^{(i)}(a_0 \mymid s) \le 14\eta\cm(1-\gamma) \leq 1/3 
\end{align}
\end{subequations}
as long as $\eta\cm(1-\gamma) \leq 1/42$.

Substituting the preceding bounds into \eqref{eq:pi-i-pi-iplus1-a0-diff-Stage2}, 
we immediately arrive at
\begin{align}
\widehat{\pi}^{(i)}(a_{0}\mymid s)-\widehat{\pi}^{(i+1)}(a_{0}\mymid s) & =\widehat{\pi}^{(i)}(a_{0}\mymid s)\left\{ 1-\exp\Big(2\eta\frac{\partial \soft{V}^{(i)}(\mu)}{\partial\theta(s,a_{0})}+\eta\frac{\partial \soft{V}^{(i)}(\mu)}{\partial\theta(s,a_{1})}\Big)\right\} \notag\\
 & \ge\widehat{\pi}^{(i)}(a_{0}\mymid s)\frac{\eta}{2}\Big(-2\frac{\partial \soft{V}^{(i)}(\mu)}{\partial\theta(s,a_{0})}-\frac{\partial \soft{V}^{(i)}(\mu)}{\partial\theta(s,a_{1})}\Big)\notag\\
 & \geq\frac{\eta\cp\cm}{224}(1-\gamma)^{2}\Big[\widehat{\pi}^{(i)}(a_{0}\mymid s)\Big]^{2} , \notag
\end{align}
where the first inequality holds due to the fact $-1 \leq 2\eta\frac{\partial{V}^{(i)}(\mu)}{\partial\theta(s,a_{0})}+\eta\frac{\partial{V}^{(i)}(\mu)}{\partial\theta(s,a_{1})} \leq 0$ 
as well as the elementary inequality $1-e^x \geq -x/2$ as long as $-1\leq x\leq 0$. 
This establishes the inequality~\eqref{eqn:break-down-final-part2}. 


\subsection{Proof of the claims~\eqref{eqn:jay-concert-1} and \eqref{eqn:jay-concert-2}}
\label{sec:proof-claim-jay-concert-12}

We are now ready to justify the claims~\eqref{eqn:jay-concert-1} and \eqref{eqn:jay-concert-2}. 
Combining \eqref{eq:claim-deviation-theta-a1-theta-a1-tlow-Stage1} and \eqref{eq:theta-a1-t-that-gap-half-Stage2}, we reach
\begin{align*}
\theta^{(t)}(s,a_{1})-\theta^{(\tlow)}(s,a_{1}) & =\begin{cases}
\theta^{(t)}(s,a_{1})-\theta^{(\tlow)}(s,a_{1}), & \text{if }\tlow\leq t\leq\that\\
\Big(\theta^{(\that)}(s,a_{1})-\theta^{(\tlow)}(s,a_{1})\Big)+\Big(\theta^{(t)}(s,a_{1})-\theta^{(\that)}(s,a_{1})\Big), & \text{if }\that\leq t \leq \tprime
\end{cases}\\
 & \leq\max_{\tlow\leq i\leq\that}\Big(\theta^{(i)}(s,a_{1})-\theta^{(\tlow)}(s,a_{1})\Big)+\max_{\that\leq i<\tprime}\Big(\theta^{(i)}(s,a_{1})-\theta^{(\that)}(s,a_{1})\Big)\leq1.
\end{align*}
This taken collectively with (\ref{eq:theta-tlow-UB-Stage1}) leads to
\begin{align*}
\theta^{(\tprime)}(s,a_{1}) & \leq\theta^{(\tlow)}(s,a_{1})+1\leq-\frac{1}{2}\log\left(1+\frac{\cm\gamma}{35}\eta(1-\gamma)t_{s-2}(\tau_{s-2})\right)+1,
\end{align*}
as claimed in  \eqref{eqn:jay-concert-2}. 

In addition, recalling the definition \eqref{eqn:defn-tprime} of $\tprime$, we have
\[
	\theta^{(\tprime)}(s,a_{0})\leq\theta^{(\tprime)}(s,a_{1})-\log\big( \cprime(1-\gamma) \big), 
\]
which clearly satisfies \eqref{eqn:jay-concert-1} as long as $\cprime\geq \cp/16128$.

\subsection{Proof of the claim~\eqref{eqn:jay-concert-3}}
\label{sec:proof-claim-jay-concert-3}

Finally, we move on to analyze what happens after iteration $\tprime$, for which we focus on 
tracking the changes of $\widehat{\pi}^{(t)}(a_1 \mymid s)$. 
In this part, let us only consider the set of $t$ satisfying
$$
	\pi^{(t)}(a_1 \mymid s) \leq \pi^{(t)}(a_2 \mymid s).
$$
Note that at time $\tprime$, the inequalities~\eqref{eqn:jay-concert-1} and \eqref{eqn:jay-concert-2} 
are both satisfied, which together with the property $\pi^{(t)}(a_1 \mymid s) \leq \pi^{(t)}(a_2 \mymid s)$ yield
\begin{align*}
	\widehat{\pi}^{(t)}(a_1 \mymid s) \defn \exp\Big(\theta^{(t)}(s, a_1) - \max_a\theta^{(t)}(s, a)\Big)
	= \exp\Big(\theta^{(t)}(s, a_1) - \theta^{(t)}(s, a_2)\Big).
\end{align*}
Then, if $\cprime < \cp / 1000$, we have
\begin{align}
\notag \widehat{\pi}^{(t+1)}(a_1 \mymid s) - \widehat{\pi}^{(t)}(a_1 \mymid s) 
=&~\exp\Big(\theta^{(t+1)}(s, a_1) - \theta^{(t+1)}(s, a_2) \Big) - \exp\Big(\theta^{(t)}(s, a_1) - \theta^{(t)}(s, a_2)\Big) \\
	\notag =&~\widehat{\pi}^{(t)}(a_1 \mymid s) 
	\Big\{ \exp\Big(\theta^{(t+1)}(s, a_1) - \theta^{(t+1)}(s, a_2) - \theta^{(t)}(s, a_1) + \theta^{(t)}(s, a_2)\Big) - 1 \Big\} \\
	\notag =&~\widehat{\pi}^{(t)}(a_1 \mymid s) 
	\max \Big\{ \exp\Big(\eta \frac{\partial \soft{V}^{(t)}(\mu)}{\partial \theta(s, a_1)} - \eta \frac{\partial \soft{V}^{(t)}(\mu)}{\partial \theta(s, a_2)} \Big) - 1, ~0\Big\} \\
	\notag \leq &~\widehat{\pi}^{(t)}(a_1 \mymid s)\cdot 2\eta \max\left\{ \frac{\partial \soft{V}^{(t)}(\mu)}{\partial \theta(s, a_1)} - \frac{\partial \soft{V}^{(t)}(\mu)}{\partial \theta(s, a_2)}, ~0 \right\} \\
	\le&~56\cm\eta(1-\gamma)^2 \Big[ \widehat{\pi}^{(t)}(a_1 \mymid s) \Big]^2.
	\label{eq:finale-pihat-recursion-a1-proof-claim}
\end{align}
Here, the first inequality holds if $\eta \frac{\partial {V}^{(t)}(\mu)}{\partial \theta(s, a_1)} - \eta \frac{\partial {V}^{(t)}(\mu)}{\partial \theta(s, a_2)} \leq 1$ (given the elementary fact $e^x - 1 \leq 2x$ for any $0\leq x\leq 1$), and the last line is valid since
\begin{subequations}
\begin{align*}
\frac{\partial V^{(t)}(\mu)}{\partial\theta(s,a_{1})} & =\frac{1}{1-\gamma}d_{\mu}^{(t)}(s)\pi^{(t)}(a_{1}\mymid s)\Big(Q^{(t)}(s,a_{1})-V^{(t)}(s)\Big)\overset{(\mathrm{i})}{\leq}14\cm(1-\gamma)\pi^{(t)}(a_{1}\mymid s),\\
\frac{\partial V^{(t)}(\mu)}{\partial\theta(s,a_{2})} & =\frac{1}{1-\gamma}d_{\mu}^{(t)}(s)\left\{ \pi^{(t)}(a_{1}\mymid s)\Big(Q^{(t)}(s,a_{2})-Q^{(t)}(s,a_{1})\Big)+\pi^{(t)}(a_{0}\mymid s)\Big(Q^{(t)}(s,a_{2})-Q^{(t)}(s,a_{0})\Big)\right\} \\
 & \overset{(\mathrm{ii})}{\geq}\frac{1}{1-\gamma}d_{\mu}^{(t)}(s)\pi^{(t)}(a_{1}\mymid s)\Big(Q^{(t)}(s,a_{2})-Q^{(t)}(s,a_{1})\Big)\overset{(\mathrm{iii})}{\geq}-14\cm(1-\gamma)\pi^{(t)}(a_{1}\mymid s),
\end{align*}
\end{subequations}
where (ii) holds since $Q^{(t)}(s,a_{2})\geq Q^{(t)}(s,a_{0})$ (cf.~\eqref{eqn:Q-diff-a2-a0-final}), 
and (i) and (iii) make use of Lemma~\ref{lem:basic-properties-MDP-Vstar} and Lemma~\ref{lem:facts-d-pi-s}. 
In addition, these bounds also imply that $\eta \frac{\partial {V}^{(t)}(\mu)}{\partial \theta(s, a_1)} - \eta \frac{\partial {V}^{(t)}(\mu)}{\partial \theta(s, a_2)} \leq 1$ hold as long as $28\eta  \cm (1-\gamma)\leq 1$, thus validating the argument for the first inequality in \eqref{eq:finale-pihat-recursion-a1-proof-claim}.

Armed with the above recursive relation \eqref{eq:finale-pihat-recursion-a1-proof-claim}, 
we can invoke Lemma~\ref{lem:opt-lemma} to show that
\begin{align}
	t_{s}(\taun_s) - \tprime 
	\ge \frac{ \frac{1}{\widehat{\pi}^{(\tprime)}(a_1 \mymid s)} - \frac{1}{ \widehat{\pi}^{(t_{s}(\taun_s))}(a_1\mymid s) } }{56\cm\eta(1-\gamma)^2} 
	\ge \frac{\frac{1}{\widehat{\pi}^{(\tprime)}(a_1 \mymid s)}  - \frac{ 2}{ 1-\gamma} }{56\cm\eta(1-\gamma)^2}, 
	\label{eqn:t-diff}
\end{align}
where the last inequality holds since (in view of \eqref{eq:relation-pihat-pi} and \eqref{eq:pi-a2-s-LB-V-large-8923}).
\[
	\widehat{\pi}^{(t)}(a_1\mymid s) \geq {\pi}^{(t)}(a_1\mymid s) \geq (1-\gamma)/2
	\qquad \text{for any } t\geq t_s(\tau_s). 
\]
In order to control $t_{s}(\taun_s) - \tprime$ via \eqref{eqn:t-diff}, it remains to upper bound $\widehat{\pi}^{(\tprime)}(a_1 \mymid s)$. 
Towards this end, it is seen that
\begin{align}
\widehat{\pi}^{(\tprime)}(a_1 \mymid s) 
&= \exp\Big(\theta^{(\tprime)}(s, a_1) - \theta^{(\tprime)}(s, a_2) \Big) 
 = \exp\Big(2\theta^{(\tprime)}(s, a_1) + \theta^{(\tprime)}(s, a_0) \Big) \nonumber\\
	& \leq \exp\Big( 3\theta^{(\tprime)}(s, a_1) -\log \Big( \frac{\cp(1-\gamma)}{16128}\Big) \Big) \nonumber\\
& \le \frac{16128e^3}{\cp(1-\gamma)\Big(1+\frac{\cm\gamma}{35} \eta(1-\gamma)^2t_{s-2}(\taun_{s-2}) \Big)^{1.5}}
 \le \frac{16128e^3}{\cp(1-\gamma)\Big( \frac{\cm\gamma}{35} \eta(1-\gamma)^2t_{s-2}(\taun_{s-2}) \Big)^{1.5}}, \label{eq:pi-tref-a1}
\end{align}
where the first line uses $\sum_a \theta^{(\tprime)}(s, a)=0$, the second line relies on the inequality \eqref{eqn:jay-concert-1},
and the last one applies the inequality \eqref{eqn:jay-concert-2}. 
Substitution into the relation \eqref{eqn:t-diff} yields
\begin{align*}
t_{s}(\taun_s) - \tprime \ge 10^{-10}\cp\cm^{0.5}\eta^{0.5}(1-\gamma)^2\Big(t_{s-2}(\taun_{s-2}) \Big)^{1.5},
\end{align*}
thus establishing the advertised bound.

\end{document}